\documentclass[11pt, a4paper]{article}
\usepackage[utf8]{inputenc}
\usepackage{authblk}
\usepackage{enumitem}
\usepackage[babel=true]{csquotes}
\usepackage{caption}
\usepackage{subcaption}
\usepackage{color}
\usepackage{dsfont}
\usepackage{booktabs}
\usepackage{capt-of}
\usepackage{multicol}

\usepackage{appendix}
\usepackage{float}
\usepackage{siunitx}
\usepackage{url}
\usepackage[colorlinks=true,bookmarks=false,linkcolor=blue,citecolor=blue,urlcolor=blue]{hyperref}
\usepackage{array,tabu,multirow,makecell}
\usepackage{threeparttable}
\usepackage{array}
\usepackage{longtable}
\usepackage{rotating}
\usepackage{tabularx}
\usepackage{lastpage}
\usepackage{lscape}
\usepackage{stackengine}
\usepackage{amsmath} 
\usepackage{amsfonts}
\usepackage[T1]{fontenc}
\usepackage{amsthm}
\usepackage{geometry}
\usepackage{graphicx}
\usepackage{amssymb}
\usepackage{bbm}
\usepackage{moreverb}
\usepackage{color, pdfcolmk}
\usepackage{fancyhdr}
\usepackage{csquotes}  
\usepackage{etoolbox}
\usepackage{rotating}
\makeatletter
\def\@footnotecolor{blue}
\define@key{Hyp}{footnotecolor}{%
 \HyColor@HyperrefColor{#1}\@footnotecolor%
}
\patchcmd{\@footnotemark}{\hyper@linkstart{link}}{\hyper@linkstart{footnote}}{}{}
\makeatother
\hypersetup{footnotecolor=blue}
\theoremstyle{plain}
\newtheorem{theorem}{Theorem}    
\newtheorem{corollary}[theorem]{Corollary}      

\newtheorem{proposition}[theorem]{Proposition}   
\newtheorem{assumption}{Assumption}

\newtheorem{principle}{Principle}
\newtheorem{definition}{Definition}%

\usepackage[round]{natbib}
\usepackage[english]{babel}
\usepackage{algorithm}
\usepackage{algorithmic}
\usepackage{pgfplots}
\pgfplotsset{compat=1.18}
\usetikzlibrary{arrows.meta,decorations.pathreplacing,positioning}

\title{A Geometric Theory of Decision Boundaries in Structured Markov Decision Processes}

\author[1]{Fredy POKOU \thanks{\texttt{fredypokou@gmx.fr}}}

\affil[1]{Inria, CNRS, Univ. of Lille, Centrale Lille, UMR 9189 - CRIStAL, F-59000 Lille, France}

\date{\today}

\begin{document}

\maketitle

\begin{abstract}

Classical dynamic programming represents optimal sequential decisions through value functions and policies. While this functional representation is natural for computing optimal decisions, it does not directly identify the mathematical object governing policy reconstruction, representation complexity, or oracle-query complexity once an optimal policy is fixed. This paper addresses this question by developing a geometric theory of structured optimal policies in which the decision-boundary geometry induced by the policy becomes the primary object of analysis. We show that, under suitable structural regularity conditions, this geometry provides the minimal representation required for policy reconstruction and determines the statistical and computational complexity of the reconstruction problem. Building upon this representation, we establish structural properties of policy-induced decision geometry, introduce intrinsic notions of boundary and decision complexity, derive information-theoretic measures of decision compression, and obtain statistical guarantees for boundary estimation and policy reconstruction from black-box policy queries. Collectively, these results demonstrate that, for the structured decision problems considered here, the complexity of policy reconstruction is governed by the geometry of the decision boundary rather than by the cardinality of the ambient state space. Controlled numerical experiments examine the principal theoretical predictions and provide empirical evidence consistent with the proposed framework.

\end{abstract}

\begin{keywords}
Markov Decision Processes;
Policy Geometry;
Decision Complexity;
Black-Box Policy Reconstruction;
Information-Theoretic Compression.
\end{keywords}

\section{Introduction}
\label{sec:introduction}

The mathematical theory of dynamic programming has traditionally represented optimal sequential decision problems through functional objects, namely the optimal value function, the optimal state-action value function, and the optimal policy. This representation has provided the foundation for stochastic control, Markov decision processes, and reinforcement learning, and has led to a mature mathematical theory concerning optimality, approximation, convergence, and computational complexity. Virtually all existing analyses of sequential decision-making are therefore expressed in terms of functions defined over the state space.

For many questions, this functional representation is entirely appropriate. However, there exists an important class of questions for which it appears unnecessarily rich. Suppose that the objective is not to compute an optimal policy from a model, but rather to reconstruct an already optimal policy from black-box observations, to quantify its intrinsic representation complexity, to determine how many oracle queries are required for reliable recovery, or to understand which structural properties govern statistical learnability. For such questions, approximating an entire value function or policy over the state space may encode substantially more information than is actually required. 

\newpage

Indeed, once an optimal policy is fixed, only the interfaces at which the optimal decision changes determine the partition of the state space induced by that policy.
This observation suggests that the mathematical object governing policy reconstruction is not necessarily the policy viewed as a function, but rather the geometry of the decision regions that the policy induces. 

In structured stochastic optimization problems, optimal policies frequently exhibit monotonicity, threshold behavior, convexity, or other forms of regularity. 
Consequently, they partition the state space into relatively simple regions associated with distinct optimal actions. The boundaries separating these regions form a geometric object that is uniquely determined by the policy itself. Although these decision boundaries appear implicitly throughout the literature on structured dynamic programming, they are generally interpreted as consequences of optimal decision rules rather than as mathematical objects possessing their own statistical, computational, structural, and information-theoretic properties.
Adopting this geometric viewpoint fundamentally changes the mathematical questions that arise. Instead of studying approximation errors for functional representations, one may ask whether the decision-boundary geometry can itself be estimated consistently, whether its intrinsic complexity admits quantitative characterization, whether geometric regularity governs statistical sample complexity, and whether the complexity of policy representation should be measured through the ambient state space or through the geometry of the induced decision partition. These questions are largely orthogonal to the computation of optimal policies and concern instead the mathematical structure of policies that already exist.

The objective of this paper is to develop a geometric theory of structured optimal policies observed through black-box policy queries. The central mathematical object considered throughout the paper is the decision-boundary geometry induced by an optimal policy. Rather than treating this geometry as a secondary by-product of dynamic programming, we study it as an independent representation possessing its own structural properties, statistical estimators, complexity measures, reconstruction guarantees, and information-theoretic characteristics. Within this framework, policy reconstruction, oracle-query allocation, decision complexity, and information-theoretic compression are shown to arise from a common geometric representation.

The theoretical development proceeds progressively. We first formalize the geometry induced by structured optimal policies and establish conditions under which this geometry admits an ordered, low-complexity representation. We then investigate the mathematical properties of this representation, derive intrinsic notions of geometric and decision complexity, characterize the relationship between geometric complexity and information-theoretic compression, and develop statistical guarantees for boundary estimation and policy reconstruction from black-box observations. These results collectively show that, under suitable structural assumptions, the mathematical complexity governing policy reconstruction is determined by the geometry of the decision boundary rather than by the cardinality of the ambient state space.

The perspective proposed here is complementary to the classical functional theory of dynamic programming rather than a replacement for it. Functional representations remain the natural language for computing optimal decisions. Once an optimal policy has been obtained, however, its induced geometry provides an alternative mathematical representation that exposes properties largely invisible from the functional viewpoint alone. In particular, it reveals direct connections between structural regularity, statistical estimation, active oracle sampling, intrinsic complexity, and information-theoretic compression that are difficult to formulate using functional representations alone.

The numerical investigation is designed to examine these theoretical predictions. Rather than comparing competing learning algorithms, every numerical experiment estimates a mathematical quantity introduced by the theory and evaluates whether the corresponding theoretical prediction is supported under a controlled black-box protocol. The empirical study therefore serves as a validation of the proposed mathematical framework rather than as an independent performance comparison.

The remainder of the paper is organized as follows.
Section~\ref{sec:related} positions the geometric viewpoint within the
mathematical literature and classical structural dynamic programming.
Section~\ref{sec:policy_geometry} formalizes policy-induced geometry, while
Sections~\ref{sec:structural_dynamic_programs}-\ref{sec:learning_policy_geometry}
develop its structural, geometric, complexity, and statistical foundations.
Section~\ref{sec:num} provides controlled numerical validation of the theory,
and Section~\ref{sec:conclusion} discusses its scope and extensions.

\newpage

\section{Related Literature and Mathematical Positioning}
\label{sec:related}

The objective of this section is twofold. First, we position the present work within the mathematical literature on dynamic programming, stochastic optimization, computational geometry, and statistical learning. Second, we identify the mathematical objects that naturally emerge from these research directions and motivate the geometric formulation developed in the remainder of the paper. Accordingly, the discussion is organized around mathematical representations rather than research communities, application domains, or algorithmic paradigms.
The mathematical theory of sequential decision-making has traditionally been formulated through functional representations of optimality, most notably value functions, state-action value functions, and optimal policies. Structural optimization subsequently established qualitative properties of these policies, including monotonicity, threshold behavior, and lattice structures. Computational geometry provides a rigorous language for describing partitions and geometric organization, whereas statistical learning studies the estimation of geometric objects from partial observations. Although these research directions have developed largely independently, they progressively reveal increasingly rich mathematical representations of the same underlying decision process.
Viewed collectively, these developments suggest a common observation. In many structured stochastic optimization problems, an optimal policy induces a partition of the state space into regions associated with distinct optimal actions. The interfaces separating these regions define a geometric object that is completely determined by the optimal policy. Nevertheless, existing theory generally treats this geometry as a consequence of functional representations rather than as an independent mathematical object possessing its own statistical, structural, computational, and information-theoretic properties.

The present paper adopts precisely this alternative perspective. Instead of taking the value function or the policy as the primary object of analysis, the subsequent sections investigate the geometry induced by the optimal policy and examine the mathematical questions that naturally arise from this change of representation. As will become apparent throughout this review, this perspective progressively leads from functional representations to structured policies, from structured policies to decision-boundary geometry, from geometry to statistical inference, and ultimately from statistical inference to a unified theory of geometric complexity, information, and policy learning.

The remainder of this section is organized according to this mathematical progression. Section~\ref{subsec:classical_objects} reviews the classical functional objects underlying dynamic programming. Section~\ref{subsec:structured_policies} discusses the structural theory of optimal policies and the emergence of decision boundaries. Section~\ref{subsec:geometry_decision_regions} examines the geometry and intrinsic complexity of the induced state-space partitions. Section~\ref{subsec:policy_boundary_estimation}
 positions the statistical estimation of policy-induced decision boundaries with respect to existing boundary-estimation theory. Finally, Section~\ref{subsec:geometric_learning} explains how this geometric viewpoint naturally leads to a unified perspective on statistical complexity, information acquisition, and geometric policy learning.

\subsection{Classical Objects in Dynamic Programming}
\label{subsec:classical_objects}

Since the pioneering work of Bellman, the mathematical theory of sequential decision-making has been formulated around three closely related objects: the optimal value function
$V^\star$, the optimal state-action value function $Q^\star$, and the optimal policy $\pi^\star$.
These objects constitute the fundamental mathematical representations underlying Markov decision processes and stochastic dynamic programming, and they remain the basis of both classical optimal control theory and modern reinforcement learning \citep{bellman1957dynamic,puterman2014markov,bertsekas2012dynamic,hernandez2012discrete,feinberg2012handbook}.

Within this framework, optimal decision-making is characterized through recursive optimality equations whose solutions determine both the optimal expected return and the corresponding optimal decision rule. Consequently, the theoretical analysis of dynamic programming is expressed almost exclusively in terms of functional objects. Questions concerning existence, uniqueness, convergence, stability, approximation, and computational complexity are formulated with respect to either
$V^\star$, $Q^\star$, or the policy $\pi^\star$, leading to a mature mathematical theory encompassing value iteration, policy iteration, stochastic control, approximate dynamic programming, and large-scale optimization \citep{powell2007approximate,whittle1982optimization,bertsekas2025neuro}.

\newpage

As increasingly complex decision problems have been considered, the principal computational challenge has become the approximation of these functional representations. Approximate dynamic programming and reinforcement learning therefore focus primarily on constructing computationally tractable approximations of value functions, action-value functions, or policies while preserving near-optimal decision quality. The resulting notions of statistical efficiency and computational complexity are consequently defined relative to these functional objects.

This functional viewpoint has profoundly influenced the mathematical development of the field. At the same time, it implicitly determines the representation through which optimal decisions are analyzed. Once an optimal policy has been obtained, it induces a partition of the state space into regions associated with distinct optimal actions. While this partition is fundamental for understanding the qualitative structure of optimal decision-making, its geometric properties are generally regarded as secondary consequences of the policy itself rather than as primary mathematical objects. Existing theory therefore provides a comprehensive analysis of optimal values and optimal policies, but comparatively little attention has been devoted to the intrinsic geometry generated by the policy partition.

This observation suggests an alternative mathematical perspective. Rather than asking how accurately one can approximate
$V^\star$,
$Q^\star$, or $\pi^\star$, 
one may instead ask whether the geometric structure induced by the optimal policy can itself serve as the primary object of statistical inference. Adopting this viewpoint fundamentally changes the mathematical questions under consideration. Instead of studying approximation error for functional representations, the subsequent sections investigate geometric estimation, boundary reconstruction, structural complexity, information-theoretic compression, and the statistical properties of the geometry induced by structured optimal policies.

\subsection{Structured Optimal Policies and Decision Boundaries}
\label{subsec:structured_policies}

Beyond the existence of optimal policies, a major direction of stochastic dynamic programming has been devoted to identifying qualitative structural properties of optimal decision rules. Rather than viewing an optimal policy as an arbitrary mapping from states to actions, this line of research seeks conditions under which optimal decisions possess regularity properties that admit concise mathematical descriptions. Such structural characterizations have played a central role in operations research because they often reveal the intrinsic organization of optimal solutions independently of the numerical algorithms used to compute them.

The mathematical foundations of this theory are provided by the monotonicity and lattice-programming results established by Veinott, Topkis, and subsequent authors. Under suitable assumptions involving submodularity, supermodularity, stochastic monotonicity, or lattice orderings, optimal actions evolve monotonically with respect to the underlying state variables, leading naturally to threshold-type decision rules
\citep{veinott1965optimal,veinott1969discrete,topkis1978minimizing,topkis1998supermodularity}. Similar structural properties have subsequently been established for a broad range of stochastic optimization problems, including inventory control, queueing systems, partially observed Markov decision processes, and stochastic resource allocation
\citep{stidham1989monotonic,sobel1982optimality,lovejoy1987some,smith2002structural,krishnamurthy2016partially,koole2007monotonicity}.

From a mathematical viewpoint, these structural results admit a common interpretation. A threshold policy does not merely specify where the optimal action changes; it partitions the state space into subsets on which the optimal decision remains constant. In one-dimensional problems, this partition is determined by a finite collection of switching points. In higher-dimensional settings, the same principle extends naturally to switching curves, hypersurfaces, or more general decision interfaces separating neighboring action regions. Consequently, every structured optimal policy induces a partition of the state space whose organization is governed by a collection of decision boundaries.

This observation reveals an important shift of perspective. Classical structural optimization formulates its conclusions in terms of monotone policies, threshold values, or comparative statics. Yet these properties are mathematically equivalent to statements concerning the geometry of the induced partition. 

\newpage

Monotonicity determines the topology of the decision regions, thresholds determine the location of decision interfaces, and structural regularity constrains the geometric complexity of the resulting partition. In this sense, the geometry of the state-space partition is not an additional feature of the optimal policy; it is an equivalent representation of the structural information established by the classical theory.
Despite this close relationship, the existing literature almost exclusively treats these geometric objects as implicit consequences of structural optimality. The principal mathematical objects remain the optimal policy, the threshold parameters, or the monotone decision rule itself. Comparatively little attention has been devoted to the statistical estimation of decision boundaries, to quantitative measures of their geometric complexity, or to the role that these geometric structures play in determining the sample complexity of policy reconstruction and the information-theoretic complexity of policy representation.

The developments presented in the subsequent sections adopt precisely this alternative viewpoint. Rather than regarding decision boundaries as by-products of structural optimization, the paper considers them as primary mathematical objects that encode the organization of structured optimal policies. This change of representation naturally transforms several classical questions. Policy reconstruction becomes a problem of geometric estimation; statistical accuracy is measured through boundary approximation; structural complexity becomes a property of the induced partition; and information-theoretic compression is governed by the complexity of the decision-boundary geometry rather than by the size of the ambient state space.

\subsection{Geometry of Decision Regions and Structural Complexity}
\label{subsec:geometry_decision_regions}

The structural interpretation developed in the previous subsection naturally leads to a broader mathematical question. Once an optimal policy partitions the state space into regions associated with distinct optimal actions, the resulting partition becomes a mathematical object that may be studied independently of the optimization procedure that generated it. Consequently, the analysis of structured decision processes is no longer restricted to functional representations such as value functions or policies; it also involves the geometry induced by the partition itself.

The mathematical study of geometric structures has long occupied a central role in convex analysis and computational geometry. Convex analysis characterizes the geometry of feasible sets, supporting hyperplanes, variational representations, and convex decompositions, whereas computational geometry investigates partitions, arrangements, polytopes, cell complexes, and their combinatorial complexity
\citep{rockafellar1997convex,boyd2004convex,ziegler2012lectures,edelsbrunner1987algorithms,preparata2012computational}. Although these theories were not originally developed for stochastic dynamic programming, they provide a rigorous mathematical language for describing geometric objects that arise naturally from structured optimization problems.\\

From this perspective, a partition possesses intrinsic properties that are independent of the objective function from which it originates. Its connected components, decision interfaces, adjacency relations, topological organization, and combinatorial complexity are properties of the partition itself. These quantities remain well defined regardless of the particular numerical representation of the value function and therefore describe structural aspects of an optimal decision process that cannot be inferred solely from functional approximation.
For structured optimal policies, this distinction becomes particularly significant. The monotonicity and threshold properties discussed previously imply that neighboring decision regions are organized according to a relatively simple geometric structure. Consequently, the intrinsic complexity of the induced partition may remain substantially smaller than the apparent complexity suggested by the cardinality or dimensionality of the ambient state space. This observation indicates that geometric complexity should be regarded as an independent mathematical quantity rather than merely as a secondary consequence of value-function representations.

Despite these connections, the existing operations research literature rarely formulates complexity directly in terms of the geometry induced by optimal policies. Computational complexity is traditionally measured through the dimension of the state space, the complexity of functional approximation, or the computational cost of optimization algorithms. Likewise, statistical analyses typically quantify approximation error for value functions, policies, or estimators without explicitly characterizing the intrinsic complexity of the underlying decision partition. 

\newpage

As a consequence, no general mathematical framework currently relates the geometric organization of structured decision regions to statistical estimation, policy reconstruction, or information-theoretic representation.
The viewpoint adopted in the present paper is motivated by this gap. Rather than measuring the complexity of a decision problem exclusively through its functional representation, the subsequent sections investigate the intrinsic complexity of the geometry induced by the optimal policy. This change of mathematical representation provides the foundation for the notions of boundary complexity, decision complexity, and decision compression developed later in the paper. Under this perspective, statistical estimation, sample complexity, and information-theoretic compression are interpreted as consequences of the geometric organization of the decision partition rather than of the dimensionality of the ambient state space.

\subsection{Statistical Estimation of Policy-Induced Decision Boundaries}
\label{subsec:policy_boundary_estimation}

The geometric formulation developed in the previous subsections naturally transforms the underlying statistical inference problem. Once the partition induced by an optimal policy is regarded as the primary mathematical object, the objective is no longer to approximate a value function or a policy mapping, but rather to estimate the geometric interfaces separating neighboring optimal decision regions. Consequently, the statistical analysis shifts from functional approximation to geometric inference.
The estimation of geometric boundaries has been extensively investigated in several areas of mathematical statistics and statistical learning. Density level-set estimation considers the recovery of regions determined by unknown probability densities
\citep{tsybakov1997nonparametric,polonik1995measuring}, statistical learning theory studies classification boundaries generated by discriminant functions or supervised prediction models \citep{vapnik1999overview,devroye2013probabilistic,anthony2009neural,shalev2014understanding}, and free-boundary theory analyzes interfaces arising from variational inequalities and partial differential equations \citep{caffarelli2005geometric,petrosyan2012regularity}. Although these research directions rely on different mathematical assumptions, they all investigate geometric interfaces generated by an underlying probabilistic, analytical, or variational model.

From a mathematical perspective, however, the object considered in the present paper is fundamentally different. In the aforementioned settings, the boundary is induced by an observable statistical mechanism, such as a probability density, a regression function, a discriminant function, or the solution of a variational problem. By contrast, the boundary investigated here is generated implicitly by an unknown optimal policy associated with a stochastic decision process. The value function, the state-action value function, threshold parameters, gradients, and the boundary itself are all unobserved. The only available information consists of evaluations of the optimal action returned by a black-box decision oracle.
This distinction fundamentally changes the statistical formulation of the estimation problem. Classical boundary estimation relies on observations generated by an underlying statistical model, whereas policy-induced boundary estimation relies on adaptive interactions with an optimization oracle. 

The statistical object is therefore not a density contour, a classification surface, or a free boundary, but the collection of state-space interfaces at which the optimal decision changes. Likewise, statistical accuracy is naturally quantified through geometric discrepancies between estimated and true decision boundaries rather than exclusively through prediction or classification error.
The change of statistical object also modifies the notions of information acquisition and sample complexity. Since observations are obtained through adaptive oracle queries rather than passive sampling, the allocation of samples becomes an integral component of the inference problem. In particular, structural regularity may be exploited to concentrate observations near decision interfaces, suggesting that the statistical efficiency of boundary estimation is governed by the geometry of the induced partition rather than by the ambient state-space dimension alone.

The viewpoint adopted throughout this paper is motivated by these observations. Rather than interpreting policy learning as the approximation of functional representations, the subsequent analysis formulates it as the statistical estimation of a policy-induced geometric object observed exclusively through label-only oracle interactions. This formulation provides the mathematical foundation for the geometric estimators, Hausdorff convergence analysis, boundary sample complexity, and policy reconstruction guarantees developed in Section~\ref{sec:learning_policy_geometry}, where statistical inference is characterized directly in terms of the geometry generated by structured optimal policies.

\subsection{Complexity, Information, and Geometric Policy Learning}
\label{subsec:geometric_learning}

The statistical formulation developed in the previous subsection naturally leads to a final mathematical question. Once the decision-boundary geometry induced by an optimal policy is regarded as the primary object of statistical inference, what determines the intrinsic difficulty of the corresponding learning problem? Within the geometric perspective adopted throughout this paper, this question is no longer answered solely by the complexity of functional representations, but by the amount of information encoded in the geometry of the induced decision partition.
Classical learning theories quantify statistical complexity through functional approximation. Approximate dynamic programming studies the approximation of value functions and policies, reinforcement learning analyzes the statistical efficiency of policy optimization and value estimation, while statistical learning theory characterizes generalization through the complexity of hypothesis classes
\citep{powell2007approximate,sutton1998reinforcement,lagoudakis2003least,munos2008finite,lazaric2010analysis,vapnik1999overview}. Although these theories differ substantially in their mathematical formulation, they all measure learning complexity with respect to functional representations of the underlying decision problem.

The geometric viewpoint developed in the previous subsections suggests a complementary interpretation. Once an optimal policy induces a partition of the state space, the information required to reconstruct the policy is encoded by the organization of the induced decision boundaries rather than exclusively by the numerical representation of the value function. Consequently, the intrinsic difficulty of policy learning depends not only on the approximation of functional objects but also on the geometric regularity of the decision partition itself.
This observation establishes a direct connection between geometry and statistical complexity. Smooth decision interfaces, low-complexity partitions, and regular geometric organization require comparatively little information for accurate reconstruction, whereas fragmented or highly irregular partitions demand substantially larger observational effort. Under this interpretation, statistical complexity becomes an intrinsic property of the geometry induced by the optimal policy rather than a consequence of the dimensionality of the ambient state space alone.

From an information-theoretic perspective, this change of viewpoint has important implications. The amount of information required to represent, estimate, and reconstruct an optimal policy is governed by the structural organization of the induced decision partition. Representation complexity, statistical complexity, active information acquisition, and policy reconstruction therefore become different manifestations of the same underlying geometric object. Rather than constituting independent questions, they admit a unified interpretation through the intrinsic complexity of the decision-boundary geometry.
Despite the extensive literature on dynamic programming, reinforcement learning, statistical learning, and computational geometry, these different notions of complexity have largely been investigated independently. Existing theories typically analyze optimization, approximation, statistical estimation, or information representation in isolation. Comparatively little attention has been devoted to developing a unified mathematical framework in which geometric organization simultaneously determines representation complexity, statistical efficiency, information acquisition, and policy reconstruction.

The remainder of the paper develops precisely such a framework. Section~\ref{sec:policy_geometry} introduces the geometric representation of structured optimal policies. Sections~\ref{sec:structural_dynamic_programs}
 and~\ref{sec:structural_geometry} establish the corresponding structural and geometric foundations. Section~\ref{sec:geometric_complexity_compression} develops a theory of boundary complexity, decision complexity, and information-theoretic decision compression. Section~\ref{sec:learning_policy_geometry} formulates the associated statistical learning theory through geometric estimators, Hausdorff convergence, active boundary sampling, and boundary sample complexity. Section~\ref{sec:num} finally examines whether the theoretical predictions established throughout the paper are supported by controlled numerical experiments.

\subsection{From Functional Representations to Geometric Learning}
\label{subsec:mathematical_positioning}

The preceding discussion reveals a common mathematical theme underlying several research directions in operations research, stochastic optimization, computational geometry, and statistical learning. Although these fields have developed largely independently, they progressively describe increasingly rich representations of the same underlying decision process. 

\newpage

Classical dynamic programming characterizes optimal decisions through functional objects; structural optimization establishes qualitative regularity properties of optimal policies; computational geometry provides a language for describing the partitions induced by these policies; and statistical learning investigates the estimation of geometric objects from partial observations.

Viewed collectively, these developments suggest that the geometry induced by an optimal policy constitutes a mathematical object deserving analysis in its own right. Once this viewpoint is adopted, several questions that traditionally appear unrelated become naturally connected. Policy reconstruction may be interpreted as geometric reconstruction of a decision partition. Statistical estimation becomes boundary estimation under oracle observations. Complexity becomes an intrinsic property of the induced geometry rather than of the ambient state space alone. Likewise, information acquisition and representation efficiency are governed by the structural organization of the decision boundaries rather than exclusively by functional approximation.

The theoretical developments presented in the remainder of this paper are organized around this change of mathematical representation. Section~\ref{sec:policy_geometry} introduces the geometric representation of structured optimal policies and formalizes the associated decision-boundary geometry. Sections~\ref{sec:structural_dynamic_programs}
 and~\ref{sec:structural_geometry} establish the structural, topological, and geometric properties of this representation. Section~\ref{sec:geometric_complexity_compression} develops a theory of boundary complexity, decision complexity, and information-theoretic decision compression. Section~\ref{sec:learning_policy_geometry} formulates the corresponding statistical learning framework through geometric estimators, active boundary sampling, Hausdorff convergence, and boundary sample complexity. Finally, Section~\ref{sec:num} examines whether the theoretical predictions established throughout the paper are supported by controlled numerical experiments.

Rather than extending existing approximation methods, the paper therefore develops a unified mathematical framework in which representation, statistical estimation, structural complexity, information acquisition, and policy learning are all interpreted through the geometry induced by structured optimal policies. The subsequent sections show that this geometric perspective provides a common language through which these questions may be analyzed within a single theoretical framework.

\section{Policy Geometry}
\label{sec:policy_geometry}

This section introduces the geometric framework that underlies the subsequent analysis. Building upon the classical theory of stochastic dynamic programming \citep{bellman1957dynamic,puterman2014markov,bertsekas2012dynamic,hernandez2012discrete}, we formalize the mathematical objects through which optimal policies will be studied. The objective is not yet to establish structural results, but rather to define a geometric representation of optimal decision rules that will serve as the foundation for the theory developed in later sections.

\subsection{Structured Markov Decision Processes}
\label{subsec:structured_mdps}

We consider a discounted Markov decision process (MDP)

\begin{equation}
\mathcal{M}
=
(\mathcal{S},\mathcal{A},P,r,\gamma),
\label{eq:mdp}
\end{equation}
where $\mathcal{S}$ denotes the state space, $\mathcal{A}$ is a finite action space, $P(\cdot \mid s,a)$ is the transition kernel, $r:\mathcal{S}\times\mathcal{A}\rightarrow\mathbb{R}$ is the one-period reward function, and $\gamma\in(0,1)$ is the discount factor.

\begin{definition}[Structured Markov Decision Process]
\label{def:structured_mdp}
A structured Markov decision process is an MDP of the form \eqref{eq:mdp} endowed with additional order, monotonicity, convexity, submodularity, or stochastic-ordering properties that induce regularity in the corresponding optimal decision rules.
\end{definition}

Let

\begin{equation}
\pi:\mathcal{S}
\rightarrow
\mathcal{A}
\label{eq:policy}
\end{equation}

denote a stationary deterministic policy. The associated value function is defined by

\begin{equation}
V^{\pi}(s)
=
\mathbb{E}^{\pi}
\left[
\sum_{t=0}^{\infty}
\gamma^{t}
r\!\left(S_t,\pi(S_t)\right)
\,\middle|\,
S_0=s
\right].
\label{eq:value_function}
\end{equation}

The optimal value function is given by

\begin{equation}
V^{*}(s)
=
\sup_{\pi}
V^{\pi}(s),
\quad
s\in\mathcal{S},
\label{eq:optimal_value}
\end{equation}
and an optimal policy $\pi^{*}$ satisfies

\begin{equation}
V^{\pi^{*}}(s)
=
V^{*}(s),
\quad
\forall s\in\mathcal{S}.
\label{eq:optimal_policy}
\end{equation}

The class of structured MDPs encompasses many models arising in operations research, including inventory systems, queueing networks, maintenance planning, admission-control problems, and partially observed decision processes \citep{veinott1965optimal,topkis1998supermodularity,milgrom1994monotone,smith2002structural}. In such settings, the state space often possesses an intrinsic ordering that plays a fundamental role in the characterization of optimal decisions.

Accordingly, we assume that the state space is endowed with a partial order relation

\begin{equation}
(\mathcal{S},\preceq).
\label{eq:ordered_state_space}
\end{equation}

\begin{assumption}[Ordered State Space]
\label{ass:ordered_state_space}
The state space $\mathcal{S}$ is a partially ordered set under the relation $\preceq$.
\end{assumption}

Assumption~\ref{ass:ordered_state_space} is deliberately weak and serves only to establish a common framework encompassing the monotone dynamic programs considered in the literature. More restrictive assumptions involving monotonicity, submodularity, convexity, and increasing differences will be introduced in Section~\ref{sec:structural_dynamic_programs}, where they will be used to derive geometric properties of optimal policies.

The purpose of Definition~\ref{def:structured_mdp} and Assumption~\ref{ass:ordered_state_space} is to provide a mathematical environment in which geometric regularities of optimal decision rules may emerge. Classical structural assumptions are typically employed to establish monotonicity, threshold behavior, or comparative-statics properties. In contrast, our objective is to investigate how such assumptions shape the geometry of the optimal policy itself.

\subsection{Optimal Policies as State-Space Partitions}
\label{subsec:state_space_partitions}

The optimal policy $\pi^*$ induces a decomposition of the state space according to the actions selected under optimal decision making.

For each action $a\in\mathcal A$, define

\begin{equation}
\mathcal R_a
=
\left\{
s\in\mathcal S
:
\pi^*(s)=a
\right\}.
\label{eq:action_region}
\end{equation}

\begin{definition}[Optimal Action Region]
\label{def:action_region}
For a given action $a\in\mathcal A$, the set $\mathcal R_a$ defined in \eqref{eq:action_region} is called the \emph{optimal action region} associated with action $a$.
\end{definition}

Since $\pi^*$ is a single-valued mapping, the family $\{\mathcal R_a\}_{a\in\mathcal A}$ satisfies

\begin{equation}
\mathcal S = \bigcup_{a\in\mathcal A} \mathcal R_a,
\label{eq:state_cover}
\end{equation}

and

\begin{equation}
\mathcal R_a \cap \mathcal R_{a'} = \varnothing, \quad a\neq a'.
\label{eq:state_disjoint}
\end{equation}

Hence, the collection of optimal action regions forms a partition of the state space.

\begin{equation}
\mathcal T^*
=
\left\{
\mathcal R_a
\right\}_{a\in\mathcal A}.
\label{eq:tessellation}
\end{equation}

\begin{definition}[Optimal Policy Tessellation]
\label{def:tessellation}
The partition $\mathcal T^*$ defined in \eqref{eq:tessellation} is called the \emph{optimal policy tessellation} induced by the optimal policy $\pi^*$.
\end{definition}

The tessellation $\mathcal T^*$ provides a geometric representation of the optimal policy through the partition of the state space into optimal action regions. The sets $\mathcal R_a$ constitute the fundamental geometric objects of the framework developed in this paper.

\subsection{Decision Boundaries}
\label{subsec:decision_boundaries}

Definition~\ref{def:tessellation} shows that the optimal policy induces a tessellation
$$
\mathcal T^\star
=
\{\mathcal R_a : a\in\mathcal A\},
$$
which partitions the state space into optimal action regions. Beyond the regions themselves, the geometry of the tessellation is determined by the interfaces separating neighboring regions. These interfaces constitute the fundamental geometric objects studied throughout the remainder of the paper.

Throughout this paper, for every subset $A\subseteq\mathcal S$, we denote by $\overline A$ its closure and by $\operatorname{int}(A)$ its interior.

\begin{definition}[Region Boundary]
\label{def:region_boundary}

For every action $a\in\mathcal A$, the boundary of the corresponding action region $\mathcal R_a$ is defined by

\begin{equation}
\partial\mathcal R_a = \overline{\mathcal R_a} \cap \overline{\mathcal S\setminus\mathcal R_a}.
\label{eq:region_boundary}
\end{equation}

\end{definition}

The boundary $\partial\mathcal R_a$ consists of all states that separate $\mathcal R_a$ from the remainder of the state space.

\begin{definition}[Pairwise Decision Boundary]
\label{def:decision_boundary}

Let $a,a'\in\mathcal A$ with $a\neq a'$.
The decision boundary shared by the two action regions $\mathcal R_a$ and $\mathcal R_{a'}$
is defined by

\begin{equation}
\Gamma_{aa'}
=
\overline{\mathcal R_a}
\cap
\overline{\mathcal R_{a'}}.
\label{eq:decision_boundary}
\end{equation}

whenever this intersection is nonempty.

\end{definition}

The set $\Gamma_{aa'}$ represents the common interface separating two adjacent optimal action regions. It is precisely across this interface that the optimal decision changes from action $a$
to action $a'$.

\begin{definition}[Global Decision Boundary]
\label{def:global_boundary}

The global decision boundary associated with the optimal policy is

\begin{equation}
\Gamma
=
\bigcup_{\substack{a,a'\in\mathcal A\\a<a'}}
\Gamma_{aa'}.
\label{eq:global_boundary}
\end{equation}

\end{definition}

The restriction $a<a'$ avoids counting the same interface twice.

\begin{definition}[Geometric Representation of an Optimal Policy]
\label{def:geometric_representation}

The pair

\begin{equation}
\left(
\mathcal T^\star,
\Gamma
\right)
\label{eq:geometric_representation}
\end{equation}

is called the \emph{geometric representation} of the optimal policy.

\end{definition}

The tessellation $\mathcal T^\star$ identifies the action regions, whereas $\Gamma$ collects the interfaces separating them. Consequently, the geometric representation encodes both the partition of the state space induced by the optimal policy and the decision geometry governing transitions between neighboring action regions. This representation forms the mathematical foundation for the geometric estimation, complexity analysis, policy reconstruction, and decision-compression theory developed in the subsequent sections.

\subsection{Policy Tessellations}
\label{subsec:policy_tessellations}

The optimal action regions introduced in Section~\ref{subsec:state_space_partitions} and the boundary structure defined in Section~\ref{subsec:decision_boundaries} together determine the global geometric organization of an optimal policy.

The tessellation $\mathcal T^*$ specifies the partition of the state space into optimal action regions, while the set $\Gamma$ describes the interfaces separating neighboring regions. These two components provide complementary information: the former characterizes the spatial allocation of optimal actions, whereas the latter characterizes the transitions between them.

\begin{equation}
\mathcal G^*
=
\left(
\mathcal T^*,
\Gamma
\right).
\label{eq:policy_geometry}
\end{equation}

\begin{definition}[Policy Geometry]
\label{def:policy_geometry}
The pair $\mathcal G^*$ defined in \eqref{eq:policy_geometry} is called the \emph{policy geometry} associated with the optimal policy $\pi^*$.
\end{definition}

The policy geometry $\mathcal G^*$ may be viewed as a geometric arrangement of decision regions within the state space. In this representation, the regions $\mathcal R_a$ constitute the fundamental geometric cells of the arrangement, while the boundary structure $\Gamma$ determines how these cells are connected and separated.

Such geometric representations are common in convex geometry, polyhedral theory, and computational geometry, where the properties of a partition are studied through the organization of its constituent regions and interfaces \citep{ziegler2012lectures,edelsbrunner1987algorithms}. The present framework adopts a similar perspective for structured dynamic decision problems.

\subsection{Geometric Complexity Measures}
\label{subsec:geometric_complexity}

The policy geometry $\mathcal G^*$ introduced in Definition~\ref{def:policy_geometry} provides a geometric representation of the optimal policy through its action regions and decision boundaries. To characterize the structural complexity of this geometry, we introduce a collection of complexity measures associated with the tessellation $\mathcal T^*$ and its boundary structure $\Gamma$.

These measures quantify complementary aspects of policy geometry, including the complexity of decision regions, the organization of decision boundaries, and the degree of geometric regularity exhibited by the tessellation.

\begin{definition}[Boundary Complexity]
\label{def:boundary_complexity}

The boundary complexity of the policy geometry is defined by

\begin{equation}
C_{B}
=
\sum_{\substack{a,a'\in\mathcal A\\ a<a'}}
\mathbf{1}_{\{\Gamma_{aa'}\neq\varnothing\}},
\label{eq:boundary_complexity}
\end{equation}
where $\mathbf{1}_{(\cdot)}$ denotes the indicator function.

\end{definition}

The quantity $C_{B}$ measures the number of distinct interfaces separating optimal action regions.

\begin{definition}[Region Complexity]
\label{def:region_complexity}

The region complexity of the tessellation is defined by

\begin{equation}
C_{R}
=
\sum_{a\in\mathcal A}
N_c(\mathcal R_a),
\label{eq:region_complexity}
\end{equation}
where $N_c(\cdot)$ denotes the number of connected components of a set.

\end{definition}

\noindent The quantity $C_{R}$ measures the topological complexity of the action regions composing the tessellation.

\begin{definition}[Fragmentation Index]
\label{def:fragmentation_index}

The fragmentation index is defined by

\begin{equation}
F
=
\frac{C_{R}}
{|\mathcal A|}.
\label{eq:fragmentation_index}
\end{equation}

\end{definition}

The index $F$ measures the average number of connected components per action region.

\begin{definition}[Boundary Length]
\label{def:boundary_length}

Assume that $\mathcal S\subseteq\mathbb R^d$. The total boundary length of the tessellation is defined by

\begin{equation}
L
=
\sum_{\substack{a,a'\in\mathcal A\\a<a'}}
\mathcal H^{d-1}
\!\left(
\Gamma_{aa'}
\right),
\label{eq:boundary_length}
\end{equation}

where $\mathcal H^{d-1}$ denotes the $(d-1)$-dimensional Hausdorff measure.
\end{definition}

For $d=2$, the quantity $L$ corresponds to the total length of the decision boundaries.

\begin{definition}[Geometric Irregularity]
\label{def:geometric_irregularity}

Assume that each region $\mathcal R_a$ has finite Lebesgue measure $\mu(\mathcal R_a)$. The geometric irregularity of the tessellation is defined by

\begin{equation}
\kappa
=
\frac{L}
{\sqrt{
\sum_{a\in\mathcal A}
\mu(\mathcal R_a)
}},
\label{eq:geometric_irregularity}
\end{equation}
where $\mu$ denotes the Lebesgue measure on $\mathcal S$.

\end{definition}

The quantity $\kappa$ measures the relative complexity of the boundary structure with respect to the overall size of the state space partition. Larger values of $\kappa$ correspond to increasingly irregular policy geometries.

The measures introduced above are inspired by classical notions of geometric and combinatorial complexity arising in polyhedral and computational geometry \citep{ziegler2012lectures,edelsbrunner1987algorithms}. In the present setting, they provide a quantitative description of the structural complexity of optimal policy tessellations.

\subsection{Geometric Simplicity}
\label{subsec:geometric_simplicity}

The complexity measures introduced in Section~\ref{subsec:geometric_complexity} characterize complementary aspects of policy geometry. While each measure provides information about a specific structural feature of the tessellation, it is convenient to aggregate these quantities into a single measure of geometric complexity.

\begin{equation}
\mathcal C(\mathcal G)
=
w_B C_{B}
+
w_R C_{R}
+
w_F F
+
w_L L
+
w_{\kappa}\kappa,
\label{eq:geometry_complexity_index}
\end{equation}

where $w_B,w_R,w_F,w_L,w_{\kappa}>0$ are fixed weighting coefficients.

\begin{definition}[Policy Geometry Complexity Index]
\label{def:policy_complexity_index}
The quantity $\mathcal C(\mathcal G)$ defined in \eqref{eq:geometry_complexity_index} is called the \emph{policy geometry complexity index}.
\end{definition}

The index $\mathcal C(\mathcal G)$ provides a global characterization of the structural complexity of a policy geometry by jointly accounting for the number of decision interfaces, the topological complexity of the action regions, the degree of fragmentation, the total boundary size, and the geometric irregularity of the tessellation.

\begin{definition}[Geometrically Simple Policy]
\label{def:geometrically_simple_policy}

A policy geometry $\mathcal G$ is said to be \emph{geometrically simple} if there exists a constant $K>0$ such that

\begin{equation}
\mathcal C(\mathcal G)
\le
K.
\label{eq:geometrically_simple_policy}
\end{equation}

\end{definition}

The constant $K$ quantifies the maximal geometric complexity compatible with the notion of simplicity. Smaller values of $K$ correspond to increasingly regular policy geometries.

The previous definition applies to a single policy. It is often useful to extend the notion of geometric simplicity to an entire class of policies.

\begin{definition}[Geometrically Simple Policy Class]
\label{def:geometrically_simple_class}

Let $\Pi$ denote a class of admissible policies.
The class $\Pi$ is said to be \emph{geometrically simple} if there exists a constant
$K>0$ such that

\begin{equation}
\sup_{\pi\in\Pi}
\mathcal C\!\left(\mathcal G(\pi)\right) \le K.
\label{eq:geometrically_simple_class}
\end{equation}

\end{definition}

Definitions~\ref{def:geometrically_simple_policy} and~\ref{def:geometrically_simple_class} provide an abstract framework for studying structural regularity in dynamic decision problems. They allow geometric properties of optimal policies to be characterized independently of the particular application under consideration and will serve as the basis for the structural results established in the next section.

\subsection{Discussion}
\label{subsec:discussion}

The structural dynamic programming literature has traditionally focused on the characterization of optimal actions through monotonicity, threshold behavior, convexity, and comparative-statics properties
\citep{veinott1965optimal,topkis1998supermodularity,milgrom1994monotone,smith2002structural}.
These results provide valuable insights into the qualitative behavior of optimal policies and have played a central role in the analysis of dynamic decision problems.
The framework developed in this section adopts a complementary perspective. Rather than studying optimal policies solely as mappings from states to actions, we represent them through the geometric structures they induce on the state space. This representation naturally gives rise to action regions, decision boundaries, policy tessellations, and associated measures of geometric complexity.

The resulting geometric viewpoint provides a common language for describing structural regularities across a broad class of dynamic decision problems. In particular, it allows monotonicity, threshold behavior, and related structural properties to be interpreted through the geometry of the induced state-space partition.
The subsequent analysis investigates how classical assumptions from structured dynamic programming constrain the geometry of optimal policy tessellations and thereby determine their intrinsic complexity.

\section{Structural Dynamic Programs}
\label{sec:structural_dynamic_programs}

\subsection{Structural Assumptions}
\label{subsec:structural_assumptions}

The geometric properties of optimal policy tessellations are closely linked to the structural characteristics of the underlying dynamic program. A substantial literature has identified conditions under which optimal policies exhibit monotonicity, threshold behavior, and comparative-statics properties
\citep{veinott1965optimal,veinott1969discrete,topkis1978minimizing,topkis1998supermodularity,milgrom1994monotone,smith2002structural}.
Throughout this section, the state space $(\mathcal S,\preceq)$ is assumed to satisfy Assumption~\ref{ass:ordered_state_space}. In addition, the action space $\mathcal A$ is assumed to be endowed with a partial order, also denoted by $\preceq$.

The following assumptions constitute the structural framework of the subsequent analysis.

\begin{assumption}[Monotone Rewards]
\label{ass:monotone_rewards}

For every action $a\in\mathcal A$, the reward function is increasing with respect to the state order. Specifically,

\begin{equation}
s_1 \preceq s_2
\quad \Longrightarrow \quad
r(s_1,a)
\le
r(s_2,a),
\label{eq:monotone_rewards}
\end{equation}

for all $s_1,s_2\in\mathcal S$.

\end{assumption}

\begin{assumption}[Monotone Transitions]
\label{ass:monotone_transitions}

For every action $a\in\mathcal A$, the transition kernel is monotone with respect to first-order stochastic dominance. That is,

\begin{equation}
s_1 \preceq s_2
\quad \Longrightarrow \quad
P(\cdot \mid s_1,a)
\preceq_{\mathrm{st}}
P(\cdot \mid s_2,a),
\label{eq:monotone_transitions}
\end{equation}

for all $s_1,s_2\in\mathcal S$, where $\preceq_{\mathrm{st}}$ denotes the usual stochastic order.

\end{assumption}

Assumptions~\ref{ass:monotone_rewards}-\ref{ass:monotone_transitions} constitute the classical framework of monotone dynamic programming and stochastic monotonicity
\citep{veinott1969discrete,lovejoy1987some,koole2007monotonicity}.

Let
\begin{equation}
Q^{*}(s,a) = r(s,a) + \gamma \int_{\mathcal S} V^{*}(s') \,P(ds' \mid s,a)
\label{eq:q_star}
\end{equation}
denote the optimal state-action value function.

\begin{assumption}[Increasing Differences]
\label{ass:increasing_differences}

The function $Q^{*}$ exhibits increasing differences in $(s,a)$. Specifically,

\begin{equation}
Q^{*}(s_2,a_2) - Q^{*}(s_1,a_2) \ge Q^{*}(s_2,a_1) - Q^{*}(s_1,a_1),
\label{eq:increasing_differences}
\end{equation}
for all $s_1 \preceq s_2$ and $a_1 \preceq a_2$.

\end{assumption}

Assumption~\ref{ass:increasing_differences} is a fundamental condition in monotone comparative statics and provides one of the principal mechanisms through which threshold structures emerge
\citep{topkis1998supermodularity,milgrom1994monotone}.

\begin{assumption}[Submodularity]
\label{ass:submodularity}

The function $Q^{*}$ is submodular on $\mathcal S\times\mathcal A$. Equivalently,

\begin{equation}
Q^{*}(s_1,a_1) + Q^{*}(s_2,a_2) \le Q^{*}(s_1,a_2) + Q^{*}(s_2,a_1),
\label{eq:submodularity}
\end{equation}
for all $s_1 \preceq s_2$ and $a_1 \preceq a_2$.
\end{assumption}

Submodularity plays a central role in lattice programming and in the structural analysis of optimal policies \citep{topkis1978minimizing,topkis1998supermodularity}.

\begin{definition}[Action Difference Function]
\label{def:action_difference}

For any pair of actions $a,a'\in\mathcal A$, define

\begin{equation}
\Delta_{aa'}(s) = Q^{*}(s,a) - Q^{*}(s,a'),
\label{eq:action_difference}
\end{equation}
for all $s\in\mathcal S$.
\end{definition}

\begin{assumption}[Convex Action Differences]
\label{ass:convex_action_differences}

Assume that $\mathcal S\subseteq\mathbb R^{d}$.
For every pair of actions $a,a'\in\mathcal A$, the function $\Delta_{aa'}$ defined in Definition~\ref{def:action_difference} is convex on $\mathcal S$.
\end{assumption}

Assumption~\ref{ass:convex_action_differences} connects structured dynamic programming with classical notions of convex analysis and optimization
\citep{rockafellar1997convex,boyd2004convex}.

The assumptions introduced above encompass a broad class of dynamic decision problems arising in operations research, including inventory-control systems, queueing models, admission-control problems, maintenance planning, and partially observed Markov decision processes
\citep{veinott1965optimal,stidham1989monotonic,sobel1982optimality,smith2002structural,krishnamurthy2016partially}.

\subsection{Monotone Optimal Policies}
\label{subsec:monotone_optimal_policies}

The assumptions introduced in Section~\ref{subsec:structural_assumptions} belong to the classical framework of monotone dynamic programming.
Their principal implication is the emergence of ordered optimal decision rules.
We first recall standard monotonicity results and then derive their geometric consequences for the policy tessellation introduced in Section~\ref{sec:policy_geometry}.

\begin{proposition}[Monotonicity of the Optimal Value Function]
\label{prop:monotone_value_function}

Suppose that Assumptions~\ref{ass:monotone_rewards} and~\ref{ass:monotone_transitions} hold.
Then the optimal value function $V^{*}$ is increasing on $(\mathcal S,\preceq)$. Specifically,

\begin{equation}
s_1 \preceq s_2
\quad \Longrightarrow \quad
V^{*}(s_1)
\le
V^{*}(s_2),
\label{eq:monotone_value_function}
\end{equation}
for all $s_1,s_2\in\mathcal S$.

\end{proposition}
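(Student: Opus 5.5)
The plan is to argue by value iteration. We show that the Bellman operator preserves the cone of bounded increasing functions, and then pass to the limit using the contraction property.

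We work on the space $B(\mathcal S)$ of bounded measurable real functions with the sup norm. Here we implicitly assume bounded rewards, which is the standard setting in which $V^*$ is the unique fixed point of
$$
(TV)(s)=\max_{a\in\mathcal A}\Big\{ r(s,a)+\gamma\int_{\mathcal S} V(s')\,P(ds'\mid s,a)\Big\}.
$$
Let $\mathcal I\subseteq B(\mathcal S)$ be the set of bounded, measurable functions that are increasing on $(\mathcal S,\preceq)$.

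\textbf{Step 1: $T$ maps $\mathcal I$ into $\mathcal I$.} Fix $V\in\mathcal I$ and $s_1\preceq s_2$. For each fixed action $a$, Assumption~\ref{ass:monotone_rewards} gives $r(s_1,a)\le r(s_2,a)$. Because $V$ is bounded and increasing, Assumption~\ref{ass:monotone_transitions} gives
$$
\int V\,dP(\cdot\mid s_1,a)\le \int V\,dP(\cdot\mid s_2,a).
$$
This is the defining property of the usual stochastic order: expectations of increasing functions are ordered. Hence every term inside the maximum is increasing in $s$. Since $\mathcal A$ is finite, the pointwise maximum of finitely many increasing functions is increasing, so $TV\in\mathcal I$. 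Measurability of $TV$ is preserved because the action set is finite.

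\textbf{Step 2: passage to the limit.} Start from $V_0\equiv 0\in\mathcal I$ and set $V_{n+1}=TV_n$. By Step 1 and induction, every $V_n$ lies in $\mathcal I$. Since $T$ is a $\gamma$-contraction, $V_n\to V^*$ uniformly. The set $\mathcal I$ is closed under pointwise limits, because the inequality $V_n(s_1)\le V_n(s_2)$ passes to the limit. Therefore $V^*\in\mathcal I$, which is \eqref{eq:monotone_value_function}.

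\textbf{Main obstacle.} The delicate step is the integral inequality in Step 1 on a general partially ordered set. For partial orders, first-order stochastic dominance must be understood in the sense that $\int f\,d\mu\le\int f\,d\nu$ holds for all bounded measurable increasing $f$. Equivalently, $\mu(U)\le\nu(U)$ for all measurable upper sets $U$, with the passage between the two forms justified by approximating $f$ by step functions built on upper sets. I would state this convention explicitly as the meaning of $\preceq_{\mathrm{st}}$ in Assumption~\ref{ass:monotone_transitions}.

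A second point is justifying that $V^*$ defined in \eqref{eq:optimal_value} coincides with the fixed point of $T$. This is standard for discounted MDPs with bounded rewards and finite action sets \citep{puterman2014markov,bertsekas2012dynamic}, and I would simply invoke it. If rewards are unbounded, the same argument works in a weighted-sup-norm space, provided a suitable weight function under which $T$ remains a contraction is available.
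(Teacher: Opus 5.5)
Your proposal is correct and follows the same route as the paper, whose proof only asserts that the Bellman operator preserves monotonicity under Assumptions~\ref{ass:monotone_rewards} and~\ref{ass:monotone_transitions}, then concludes by successive approximation to the unique fixed point $V^{*}$, citing standard references. Your version supplies the details the paper leaves implicit: bounded rewards so that $T$ is a $\gamma$-contraction with fixed point $V^{*}$, the reading of $\preceq_{\mathrm{st}}$ on a partial order as dominance of expectations of bounded increasing functions, and closedness of the increasing cone under limits.
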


\begin{proof}

The result follows from standard monotone dynamic programming arguments.
Under Assumptions~\ref{ass:monotone_rewards} and~\ref{ass:monotone_transitions}, the Bellman operator preserves monotonicity.
Since $V^{*}$ is the unique fixed point of the Bellman operator, the claim follows from successive approximation; see \citet{veinott1969discrete}, \citet{puterman2014markov}, and \citet{hernandez2012discrete}.

\end{proof}

The monotonicity of the optimal value function provides the foundation for the monotonicity of optimal decision rules.

\begin{proposition}[Monotone Optimal Policy]
\label{prop:monotone_optimal_policy}

Suppose that Assumptions~\ref{ass:monotone_rewards}, \ref{ass:monotone_transitions}, and~\ref{ass:increasing_differences} hold.

Then there exists an optimal policy $\pi^{*}$ that is increasing on $(\mathcal S,\preceq)$. That is,

\begin{equation}
s_1 \preceq s_2 \quad \Longrightarrow \quad \pi^{*}(s_1) \preceq \pi^{*}(s_2), \label{eq:monotone_policy}
\end{equation}
for all $s_1,s_2\in\mathcal S$.
\end{proposition}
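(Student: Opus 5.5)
The plan is to use Topkis's monotone selection theorem applied to the state-action value function $Q^*$ from \eqref{eq:q_star}. The reason is that any policy that is greedy with respect to $Q^*$, i.e. $\pi^*(s)\in\arg\max_{a\in\mathcal A}Q^*(s,a)$, is optimal. This holds because $V^*$ is the unique fixed point of the Bellman operator, and because $\mathcal A$ is finite, so maximizers exist. It therefore suffices to exhibit, for each $s$, a maximizer $a(s)$ such that $s\mapsto a(s)$ is increasing. Proposition~\ref{prop:monotone_value_function} is not strictly needed for the selection argument itself; it only explains why Assumption~\ref{ass:increasing_differences} is natural under Assumptions~\ref{ass:monotone_rewards}--\ref{ass:monotone_transitions}.

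\textbf{Step 1: the argmax sets are ordered.} Fix $s_1\preceq s_2$, and let $a_1\in\arg\max Q^*(s_1,\cdot)$ and $a_2\in\arg\max Q^*(s_2,\cdot)$. Assumption~\ref{ass:increasing_differences} applied to the pair $a_2\wedge a_1\preceq a_1$ (or to $a_2\preceq a_1\vee a_2$) gives the standard exchange inequality. From it we get $a_1\vee a_2\in\arg\max Q^*(s_2,\cdot)$ and $a_1\wedge a_2\in\arg\max Q^*(s_1,\cdot)$. In other words, $\arg\max Q^*(s,\cdot)$ is increasing in $s$ in the strong set order.

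\textbf{Step 2: choose a monotone selection.} Take $\pi^*(s)=\max\arg\max_{a}Q^*(s,a)$, the greatest maximizer. By Step 1, if $a_1$ is the greatest maximizer at $s_1$, then $a_1\vee a_2$ is a maximizer at $s_2$, so it is dominated by the greatest maximizer there. This gives $\pi^*(s_1)=a_1\preceq a_1\vee a_2\preceq\pi^*(s_2)$.

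\textbf{Main obstacle: joins and meets.} Both steps use $a_1\vee a_2$ and $a_1\wedge a_2$, and these require $\mathcal A$ to be a lattice, whereas the paper only assumes a partial order.
\begin{itemize}
\item If the order on $\mathcal A$ is total, as in the threshold and admission models in view, joins and meets are simply $\max$ and $\min$, and the argument goes through verbatim.
\item For a finite lattice, the argmax set is a nonempty sublattice, so it has a greatest element, and again the argument is fine.
\end{itemize}
I would state this restriction explicitly, namely that $\mathcal A$ is a finite chain or lattice, as is implicit in the classical results of \citet{topkis1998supermodularity}. A genuinely non-lattice partial order admits counterexamples, so some such hypothesis is needed rather than merely convenient.

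A secondary point is the direction of the exchange inequality: it must match the sign convention of \eqref{eq:increasing_differences}, which is supermodularity suited to maximization. Assumption~\ref{ass:submodularity} is the opposite convention and should not be used here. If instead one wanted $\pi^*$ to also be the canonical greatest greedy policy, nothing further is needed, since optimality follows from greediness alone.
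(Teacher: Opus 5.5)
Your route is the same as the paper's. The paper's proof is a one-line appeal to Topkis' theorem: increasing differences of $Q^*$ make the set of optimal actions increasing, so a monotone optimal selector exists. You spell out exactly that argument, via the strong set order and the greatest-maximizer selection. For a finite chain $\mathcal A$ your proof is complete and correct. You are also right that a bare partial order on $\mathcal A$ is not enough, and the paper's citation silently needs this restriction too.

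\textbf{The gap: your lattice extension.} You claim the argument goes through for a finite lattice, but it does not. Written out, the exchange step in Step 1 is
\[
\begin{aligned}
0 &\le Q^*(s_1,a_1)-Q^*(s_1,a_1\wedge a_2) \le Q^*(s_2,a_1)-Q^*(s_2,a_1\wedge a_2)\\
  &\le Q^*(s_2,a_1\vee a_2)-Q^*(s_2,a_2) \le 0 .
\end{aligned}
\]
Only the second inequality comes from Assumption~\ref{ass:increasing_differences}. The third is supermodularity of $Q^*(s_2,\cdot)$ in $a$. On a chain it is an identity, since $\{a_1\vee a_2,a_1\wedge a_2\}=\{a_1,a_2\}$, but on a general lattice it is an additional hypothesis. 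The same hypothesis is what makes $\arg\max Q^*(s,\cdot)$ a sublattice, so your ``nonempty sublattice, hence greatest element'' step also fails without it. Neither Assumption~\ref{ass:increasing_differences} nor Assumption~\ref{ass:submodularity} supplies it.

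\textbf{Counterexample.} The extra hypothesis is genuinely needed.
Take $\mathcal S=\{0,1\}$ and $\mathcal A=\{0,1\}^2$ with the product order. Set $P(\cdot\mid s,a)=\delta_0$ for all $(s,a)$, so Assumption~\ref{ass:monotone_transitions} holds trivially and $Q^*(s,a)=r(s,a)+\gamma V^*(0)$. Let $r(0,\cdot)$ take the values $0,2,1,0$ at $(0,0),(1,0),(0,1),(1,1)$. Set $r(1,\cdot)=r(0,\cdot)+g$, with $g$ taking the values $0,0,2,2$ at the same points. Since $g$ is nonnegative and increasing, Assumptions~\ref{ass:monotone_rewards} and~\ref{ass:increasing_differences} hold. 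Yet the unique maximizers are $(1,0)$ at $s=0$ and $(0,1)$ at $s=1$. These are incomparable, so no increasing optimal policy exists.

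The correct restriction is therefore that $\mathcal A$ is a finite chain, or a finite lattice with $Q^*(s,\cdot)$ supermodular (or quasisupermodular) in $a$ for every $s$. That is precisely the full hypothesis of Topkis' theorem.
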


\begin{proof}

By Assumption~\ref{ass:increasing_differences}, the function $Q^{*}$ satisfies increasing differences in $(s,a)$.
Topkis' monotonicity theorem therefore implies that the set of optimal actions is increasing with respect to the state variable.
Consequently, there exists a monotone optimal selector $\pi^{*}$; \citep[see][]{topkis1998supermodularity} and \citet{milgrom1994monotone}.
\end{proof}

The monotonicity of $\pi^{*}$ imposes strong restrictions on the geometry of the optimal action regions introduced in Definition~\ref{def:action_region}.

\begin{theorem}[Ordered Region Theorem]
\label{thm:ordered_regions}

Suppose that the assumptions of Proposition~\ref{prop:monotone_optimal_policy} hold.
Then every optimal action region $\mathcal R_a$ is order-convex.
That is, whenever

\begin{equation}
s_1,s_2 \in \mathcal R_a, \quad s_1 \preceq s \preceq s_2,
\label{eq:order_convex_condition}
\end{equation}

it follows that

\begin{equation}
s \in \mathcal R_a.
\label{eq:order_convex_region}
\end{equation}

\end{theorem}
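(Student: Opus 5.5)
The proof will use only the monotonicity of the optimal policy from Proposition~\ref{prop:monotone_optimal_policy} together with the order axioms. Throughout, $\pi^{*}$ is taken to be the increasing optimal selector that this proposition provides. The regions $\mathcal R_a$ in \eqref{eq:action_region} are defined relative to this fixed $\pi^{*}$. That choice matters: the statement concerns the tessellation $\mathcal T^{*}$ induced by a monotone selector, and it would fail for an arbitrary optimal selector when ties occur.

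Fix $a\in\mathcal A$ and states $s_1,s_2\in\mathcal R_a$, so $\pi^{*}(s_1)=\pi^{*}(s_2)=a$. Take any $s$ with $s_1\preceq s\preceq s_2$.

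\begin{itemize}
\item Applying \eqref{eq:monotone_policy} to $s_1\preceq s$ gives $a=\pi^{*}(s_1)\preceq\pi^{*}(s)$.
\item Applying \eqref{eq:monotone_policy} to $s\preceq s_2$ gives $\pi^{*}(s)\preceq\pi^{*}(s_2)=a$.
\item Together these give $a\preceq\pi^{*}(s)\preceq a$.
\item The order on $\mathcal A$ is a partial order, hence antisymmetric. Therefore $\pi^{*}(s)=a$, i.e.\ $s\in\mathcal R_a$.
\end{itemize}

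This is exactly \eqref{eq:order_convex_region}. Since $a$ was arbitrary, every region is order-convex.

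There is no real computational obstacle. The only delicate point is the need for antisymmetry of $\preceq$ on $\mathcal A$. If the action order were merely a preorder, the sandwich would only give that $\pi^{*}(s)$ is order-equivalent to $a$. In that case I would either invoke the standing assumption that $\mathcal A$ carries a partial order (Section~\ref{subsec:structural_assumptions}) or observe that a total order on a finite action set suffices.

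A second point to note explicitly is that Topkis' theorem yields monotone \emph{sets} of maximizers. Proposition~\ref{prop:monotone_optimal_policy} already provides a monotone selector, and the argument uses only that selector, so no further work on tie-breaking is required.
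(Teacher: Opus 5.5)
Your proof is correct and follows the same route as the paper: both apply the monotonicity of the selector $\pi^{*}$ from Proposition~\ref{prop:monotone_optimal_policy} to $s_1\preceq s$ and to $s\preceq s_2$. Your version spells out the sandwich $a\preceq\pi^{*}(s)\preceq a$ and the appeal to antisymmetry of the partial order on $\mathcal A$, which the paper leaves implicit in the phrase ``monotonicity of $\pi^{*}$ would be violated.''
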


\begin{proof}

Let $s_1,s_2\in\mathcal R_a$ with $s_1\preceq s_2$. Since $\pi^{*}$ is increasing, the action selected by the optimal policy cannot decrease between comparable states.
If there existed a state $s$ such that $s_1\preceq s\preceq s_2$ and $\pi^{*}(s)\neq a$, then monotonicity of $\pi^{*}$ would be violated.
Hence every intermediate state must belong to the same action region.
\end{proof}

Theorem~\ref{thm:ordered_regions} establishes the first direct connection between structural dynamic programming and policy geometry.
Monotonicity does not merely constrain the optimal action selected at a given state; it also constrains the geometric organization of the regions composing the policy tessellation.

\begin{corollary}[Region Complexity Bound]
\label{cor:region_complexity_bound}

Under the assumptions of Theorem~\ref{thm:ordered_regions},
\begin{equation}
C_{R} = O(|\mathcal A|).
\label{eq:region_complexity_bound}
\end{equation}
\end{corollary}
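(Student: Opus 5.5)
We need $C_R=\sum_a N_c(\mathcal R_a)=O(|\mathcal A|)$. Since $C_R$ sums $|\mathcal A|$ terms, it suffices to show that $N_c(\mathcal R_a)$ is bounded by an absolute constant, ideally $N_c(\mathcal R_a)\le 1$, for every nonempty region. The natural route goes through Theorem~\ref{thm:ordered_regions}: each $\mathcal R_a$ is order-convex. We then show that an order-convex set is connected for the topology on $\mathcal S$ in which $N_c$ is computed. Empty regions contribute $0$, so the conclusion would be $C_R\le|\{a:\mathcal R_a\neq\varnothing\}|\le|\mathcal A|$.

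\textbf{Step 1: reduce to connectedness of order-convex sets.} For a nonempty $\mathcal R_a$, fix two points $s_1,s_2\in\mathcal R_a$. If they are comparable, say $s_1\preceq s_2$, then the order interval $[s_1,s_2]=\{s:s_1\preceq s\preceq s_2\}$ lies in $\mathcal R_a$ by Theorem~\ref{thm:ordered_regions}. In the standard setting ($\mathcal S\subseteq\mathbb R^d$ with the componentwise order, or a totally ordered interval), this interval contains the segment joining $s_1$ and $s_2$. That segment is a connected set, so $s_1$ and $s_2$ lie in the same component.

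\textbf{Step 2: the incomparable case.} Here I would use Proposition~\ref{prop:monotone_optimal_policy} more directly. If $\mathcal S$ is a lattice, then $s_1\wedge s_2\preceq s_i\preceq s_1\vee s_2$. Monotonicity gives $\pi^*(s_1\wedge s_2)\preceq a\preceq\pi^*(s_1\vee s_2)$, but it does not force these endpoints to lie in $\mathcal R_a$.

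My first attempt is to work within the order-convex hull and connect $s_1$ to $s_2$ through a monotone chain lying in $\mathcal R_a$. This is the step I expect to be the main obstacle: in a general poset, order-convexity alone does not imply connectedness, since two incomparable points need not be linked. If the direct argument fails, I would argue that each component of $\mathcal R_a$ is order-convex and meets the "level" of action $a$. I would then bound the number of components by a structural constant $c$ independent of $|\mathcal A|$, such as the width of the relevant chains or $1$ in the totally ordered case. This yields $N_c(\mathcal R_a)\le c$.

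\textbf{Step 3: conclude.} Summing the per-region bound over $a\in\mathcal A$ gives $C_R\le c\,|\mathcal A|=O(|\mathcal A|)$.

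In the totally ordered case, which covers the threshold policies the paper has in mind, Step 2 is vacuous. Each $\mathcal R_a$ is then an interval, and $C_R\le|\mathcal A|$ exactly. I would state the result first in that case and then note the additional regularity (a lattice with connected order intervals) under which the same argument extends.
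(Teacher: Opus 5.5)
Your reduction to a per-region bound, your Step~1, and your totally ordered case are fine. Step~2, however, is not just the hard step: it cannot be completed. Under the hypotheses of Theorem~\ref{thm:ordered_regions} alone (an arbitrary partial order on $\mathcal S$) the corollary is false, and the ``lattice with connected order intervals'' extension you propose is exactly where it fails.

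Take $\mathcal S=[-1,1]^2$ with the componentwise order and $\mathcal A=\{0,1,2\}$. Use a transition kernel that does not depend on $(s,a)$, so Assumption~\ref{ass:monotone_transitions} holds trivially and $Q^*(s,a)=r(s,a)+\mathrm{const}$. Set $r(s,0)=0$, $r(s,1)=x+y$, and $r(s,2)=2(x+y)-\varepsilon\sin(kx)$ with $\varepsilon k<1$. The rewards and both increments $x+y$ and $x+y-\varepsilon\sin(kx)$ are componentwise increasing. Hence Assumptions~\ref{ass:monotone_rewards}--\ref{ass:increasing_differences} hold, and the greatest maximizer of $Q^*(s,\cdot)$ is a monotone optimal policy. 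Its middle region is
\[
\mathcal R_1=\bigl\{(x,y)\in\mathcal S:\ -x\le y<-x+\varepsilon\sin(kx)\bigr\}.
\]
This set is order-convex, as the theorem says. But it is empty above every $x$ with $\sin(kx)\le 0$, so it has roughly $k/\pi$ connected components. Letting $k\to\infty$ with $|\mathcal A|=3$ fixed rules out any bound $C_R\le c\,|\mathcal A|$. Replacing $\varepsilon\sin(kx)$ by $\tfrac{1}{5}x^3\sin(1/x)$ even gives infinitely many components. So no structural constant of the kind you hope for exists: the width of $([-1,1]^2,\le)$ is infinite, and ``components meeting the level of $a$'' gives nothing to count.

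What your argument does establish is the one-dimensional case. If $\mathcal S$ is an interval of $\mathbb R$, order-convex sets are intervals, so $C_R\le|\mathcal A|$. Connectedness of $\mathcal S$ matters here: on a discrete state space such as $\{0,\dots,n\}$, every point is its own component. In dimension $\ge 2$, monotonicity must be supplemented by a genuinely geometric hypothesis. One option is convex regions as in Theorem~\ref{thm:convex_region}. Another is a threshold structure as in Theorem~\ref{thm:threshold_representation} whose switching curves are continuous and pairwise non-crossing over a connected parameter set. The example above fails precisely because its switching curves $y=-x$ and $y=-x+\varepsilon\sin(kx)$ cross.

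The paper's own proof does not close this gap either. It simply asserts that each order-convex region has finitely many components. That yields at best an instance-dependent constant, and by the $x^3\sin(1/x)$ variant it is not even true in general.
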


\begin{proof}

Order-convex regions cannot exhibit arbitrary fragmentation. Each action region contributes at most a finite number of connected ordered components.
Consequently, the total number of connected components grows at most linearly with the number of actions.

\end{proof}

\subsection{Threshold Representations}
\label{subsec:threshold_representations}

Threshold policies constitute one of the most important structural phenomena in dynamic programming and operations research. Such policies arise in a broad range of applications, including inventory control, queueing systems, admission-control problems, and maintenance planning \citep{scarf1960optimality,veinott1965optimal,sobel1982optimality,stidham1989monotonic}.
The monotonicity result established in Proposition~\ref{prop:monotone_optimal_policy} admits a more explicit characterization when the state space contains a distinguished ordered component.
Throughout this subsection, assume that

\begin{equation}
\mathcal S
=
\mathcal X
\times
\mathcal Z,
\label{eq:state_space_decomposition}
\end{equation}
where $\mathcal X\subseteq\mathbb R$ is totally ordered and $\mathcal Z$ denotes an arbitrary auxiliary state space.

\begin{theorem}[Threshold Representation Theorem]
\label{thm:threshold_representation}

Suppose that Assumptions~\ref{ass:monotone_rewards}, \ref{ass:monotone_transitions}, \ref{ass:increasing_differences}, and~\ref{ass:submodularity} hold.

Then there exists a collection of threshold functions

\begin{equation}
b_{aa'}
:
\mathcal Z
\rightarrow
\mathcal X,
\label{eq:threshold_functions}
\end{equation}
such that, for every pair of adjacent actions $a \prec a'$,

\begin{equation}
\pi^{*}(x,z)
=
a
\quad\Longleftrightarrow\quad
x
<
b_{aa'}(z),
\label{eq:threshold_representation_left}
\end{equation}

and

\begin{equation}
\pi^{*}(x,z)
=
a'
\quad\Longleftrightarrow\quad
x
\ge
b_{aa'}(z).
\label{eq:threshold_representation_right}
\end{equation}

\end{theorem}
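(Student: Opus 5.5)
The plan is to fix $z\in\mathcal Z$ and reduce to the one-dimensional section $x\mapsto\pi^*(x,z)$ on the totally ordered set $\mathcal X$. We take the product order on $\mathcal S=\mathcal X\times\mathcal Z$, so that $(x_1,z)\preceq(x_2,z)$ whenever $x_1\le x_2$. Proposition~\ref{prop:monotone_optimal_policy} then gives an optimal selector $\pi^*$ whose section $x\mapsto\pi^*(x,z)$ is nondecreasing. Theorem~\ref{thm:ordered_regions} shows that each section $\mathcal R_a(z)=\{x:\pi^*(x,z)=a\}$ is order-convex in $\mathcal X$, hence an interval.

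Because the section is monotone and $\mathcal X$ is totally ordered, the sets $\mathcal R_a(z)$ are intervals arranged along $\mathcal X$ in the order of the actions taken. For an adjacent pair $a\prec a'$ (no action strictly between them that is selected on the section) we define
\begin{equation*}
b_{aa'}(z)=\inf\{x\in\mathcal X:\ \pi^*(x,z)\succeq a'\}.
\end{equation*}
By monotonicity, every $x<b_{aa'}(z)$ has $\pi^*(x,z)\prec a'$, and every $x>b_{aa'}(z)$ has $\pi^*(x,z)\succeq a'$. Restricted to states whose action lies in $\{a,a'\}$, this gives $\pi^*=a$ below the threshold and $\pi^*=a'$ above it. Empty sets are handled by the conventions $\inf\varnothing=\sup\mathcal X$ or $-\infty$, which requires extending $\mathcal X$ by endpoints if $b_{aa'}$ is to take values in $\mathcal X$.

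The main obstacle is the behaviour at $x=b_{aa'}(z)$. The strict/weak convention ($a$ for $x<b$, $a'$ for $x\ge b$) holds only if the infimum is attained. We plan to secure this with a tie-breaking choice of selector. Assumption~\ref{ass:submodularity}, together with increasing differences, controls the sign of $\Delta_{a'a}(\cdot,z)$: increasing differences make $\Delta_{a'a}(x,z)$ nondecreasing in $x$. Selecting the largest maximizer in ties then makes $\{x:\Delta_{a'a}(x,z)\ge 0\}$ an up-set, namely $[b,\infty)\cap\mathcal X$ or $(b,\infty)\cap\mathcal X$. We would redefine $\pi^*$ at the single point $b$ when needed, which preserves optimality whenever $b$ is a tie point. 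If the up-set is open, $b\notin\mathcal X$ or $\Delta<0$ at $b$, and then $b$ should be replaced by the minimum of the set; this is attained when $\mathcal X$ is discrete or when $Q^*$ is upper semicontinuous in $x$.

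Two further caveats apply. Submodularity and increasing differences point in opposite directions, so to use both hypotheses consistently we would use one of them on each pair, or note that together they force the differences to be constant in $x$. The equivalences in the statement should also be read as holding among states where $\pi^*\in\{a,a'\}$, which is the adjacency restriction. We would state both clarifications explicitly in the proof.
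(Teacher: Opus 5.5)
Your argument is the paper's argument made explicit. The paper's proof only invokes Proposition~\ref{prop:monotone_optimal_policy} for monotonicity in the ordered coordinate and cites standard lattice arguments for the switch points, which is exactly your section-wise construction $b_{aa'}(z)=\inf\{x:\pi^{*}(x,z)\succeq a'\}$. Your caveats are genuinely needed, and the paper skips them: with $\mathcal X$-valued thresholds the stated equivalences are jointly unsatisfiable for three totally ordered actions, and they fail for two actions on any section where the lower action is used throughout. Of your two options for reconciling the hypotheses, the second is the right one: Assumptions~\ref{ass:increasing_differences} and~\ref{ass:submodularity} together make $\Delta_{a'a}(\cdot,z)$ constant in $x$, so for totally ordered actions the largest-maximizer selector is constant on each section and the attainment issue disappears. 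In your general discussion of attainment, the condition you need is right-continuity of $\Delta_{a'a}(\cdot,z)$ (for instance, continuity of $Q^{*}$ in $x$); upper semicontinuity of each $Q^{*}(\cdot,a)$ does not suffice.
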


\begin{proof}

By Proposition~\ref{prop:monotone_optimal_policy}, the optimal policy is increasing with respect to the ordered state variable.
Consequently, transitions between adjacent actions occur through monotone switching surfaces.
The existence of threshold functions follows from standard lattice-theoretic arguments and monotone comparative statics \citep{topkis1998supermodularity,milgrom1994monotone}.

\end{proof}

The threshold representation admits a natural geometric interpretation in terms of the decision boundaries introduced in Definition~\ref{def:decision_boundary}.

\begin{corollary}[Boundary Graph Representation]
\label{cor:boundary_graph_representation}

Under the assumptions of Theorem~\ref{thm:threshold_representation}, every decision boundary between adjacent action regions is the graph of a threshold function.

Specifically,

\begin{equation}
\Gamma_{aa'}
=
\left\{
(x,z)\in\mathcal S
:
x
=
b_{aa'}(z)
\right\}.
\label{eq:boundary_graph_representation}
\end{equation}

\end{corollary}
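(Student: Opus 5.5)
The plan is to derive the graph representation directly from the threshold characterization of Theorem~\ref{thm:threshold_representation}, using Definition~\ref{def:decision_boundary}, $\Gamma_{aa'}=\overline{\mathcal R_a}\cap\overline{\mathcal R_{a'}}$, fibre by fibre in $z$. Fix adjacent actions $a\prec a'$ and a value $z\in\mathcal Z$. By \eqref{eq:threshold_representation_left}--\eqref{eq:threshold_representation_right}, on the relevant part of the fibre $\{(x,z):x\in\mathcal X\}$ the region $\mathcal R_a$ is the half-line $\{x<b_{aa'}(z)\}$ and $\mathcal R_{a'}$ is $\{x\ge b_{aa'}(z)\}$. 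The set $\{x=b_{aa'}(z)\}$ is then exactly the point common to the closures of these two half-lines. This gives the fibrewise identity
\[
\overline{\mathcal R_a}\cap\overline{\mathcal R_{a'}}\cap(\mathcal X\times\{z\})=\{(b_{aa'}(z),z)\}.
\]
Taking the union over $z$ then yields \eqref{eq:boundary_graph_representation}.

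The argument has two inclusions. For ``$\supseteq$'', take $(b_{aa'}(z),z)$. It lies in $\mathcal R_{a'}$ by \eqref{eq:threshold_representation_right}. It is also a limit of the points $(x_n,z)$ with $x_n\uparrow b_{aa'}(z)$, which lie in $\mathcal R_a$ by \eqref{eq:threshold_representation_left}. Hence it belongs to both closures. This step requires that $\mathcal X$ accumulates at $b_{aa'}(z)$ from the left. If $\mathcal X$ is discrete, I would instead read the closure in the order topology, or adopt the convention that the boundary point is the first state with action $a'$.

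For ``$\subseteq$'', take $(x,z)\in\overline{\mathcal R_a}\cap\overline{\mathcal R_{a'}}$. I would argue that $x\le b_{aa'}(z)$, since $x$ is a limit of points with $x_n<b_{aa'}(z')$ for $z'$ near $z$. Likewise $x\ge b_{aa'}(z)$, using limits from $\mathcal R_{a'}$. Together these give $x=b_{aa'}(z)$.

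The main obstacle is that closures are taken in $\mathcal S=\mathcal X\times\mathcal Z$, not fibrewise. Limits can then move in the $z$-direction, and the two inequalities above need some regularity of $z\mapsto b_{aa'}(z)$. I would first try to use continuity of $b_{aa'}$, or at least lower and upper semicontinuity for the two respective inequalities, which are the natural hypotheses making the region closures equal to $\{x\le b\}$ and $\{x\ge b\}$. If this is not available from Theorem~\ref{thm:threshold_representation}, I would either equip $\mathcal Z$ with the discrete topology, as in typical finite auxiliary state spaces, so that closures are genuinely fibrewise. Alternatively, I would state the corollary with $b_{aa'}$ continuous and note that otherwise \eqref{eq:boundary_graph_representation} holds with the graph replaced by its closure.

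A second point to check is that the threshold description applies to the full fibre only where the policy uses $a$ or $a'$. With more than two actions, I would restrict to the slab $\mathcal R_a\cup\mathcal R_{a'}$. Monotonicity (Proposition~\ref{prop:monotone_optimal_policy}) and order-convexity (Theorem~\ref{thm:ordered_regions}) ensure that along each fibre the actions appear in increasing order. Consequently only the adjacent pair can meet at the switching point, and non-adjacent regions contribute no further points to $\Gamma_{aa'}$.
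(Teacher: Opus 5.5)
You follow the same route as the paper, whose entire proof is that \eqref{eq:boundary_graph_representation} ``follows immediately'' from \eqref{eq:threshold_representation_left}--\eqref{eq:threshold_representation_right}, because the action switches exactly where $x$ reaches $b_{aa'}(z)$. The obstacles you raise are real, and that one-line argument passes over them: the threshold characterization alone does not yield \eqref{eq:boundary_graph_representation}.

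Take two actions, $\mathcal X=\mathcal Z=\mathbb R$, and $b(z)=1/2$ for $z<0$, $b(z)=0$ for $z\ge 0$. This $b$ is nonincreasing, so the policy is monotone for the product order. Then $\overline{\mathcal R_a}\cap\overline{\mathcal R_{a'}}$ contains the whole segment $\{(x,0):0\le x\le 1/2\}$, while the graph meets the fibre $z=0$ only at $(0,0)$. Similarly, for $\mathcal X=\mathbb Z$ with $b_{aa'}$ continuous, both regions are closed, so $\Gamma_{aa'}=\varnothing$ although the graph is nonempty.

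Nor can Theorem~\ref{thm:threshold_representation} supply the continuity you hope for. Assumptions~\ref{ass:increasing_differences} and~\ref{ass:submodularity} together force equality in \eqref{eq:increasing_differences}, so $Q^{*}(\cdot,a')-Q^{*}(\cdot,a)$ is constant along each fibre. An interior switch can therefore only come from tie-breaking on a fibre where $a$ and $a'$ are tied. The hypotheses then place no regularity on $z\mapsto b_{aa'}(z)$ beyond monotonicity, and monotone functions can jump. So the hypotheses you add (continuity of $b_{aa'}$, and left-accumulation of $\mathcal X$ at the thresholds) are genuinely needed. Under them your two-inclusion argument is correct.

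Three details in your side remarks need fixing.

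\emph{Semicontinuity is swapped.} Getting $x\le b_{aa'}(z)$ from $x_n<b_{aa'}(z_n)$ requires $b_{aa'}$ to be upper semicontinuous; this is what makes $\{x\le b_{aa'}(z)\}$ closed. Getting $x\ge b_{aa'}(z)$ requires lower semicontinuity.

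\emph{Your fallbacks fail.} In the example above, the closure of the graph meets $z=0$ only at $(0,0)$ and $(1/2,0)$, whereas $\Gamma_{aa'}$ contains $(1/4,0)$. For two actions and $\mathcal X=\mathbb R$, the correct general description is $\Gamma_{aa'}=\{(x,z):b_{*}(z)\le x\le b^{*}(z)\}$, where $b_{*}$ and $b^{*}$ are the lower and upper semicontinuous envelopes of $b_{aa'}$. Likewise, the order topology does not rescue discrete $\mathcal X$, since it is itself discrete on $\mathbb Z$ or on any finite chain. Only redefining $\Gamma_{aa'}$, as in your second convention, does.

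\emph{Three or more actions.} Here left-accumulation of $\mathcal X$ does not suffice for ``$\supseteq$''; you need the fibre of $\mathcal R_a$ itself to accumulate at $b_{aa'}(z)$ from the left. Monotonicity orders the actions along each fibre, but it does not prevent $a$ from being skipped on some fibres. On such a fibre, $a'$ is preceded directly by a smaller action, so non-adjacent regions do meet, contrary to your last sentence. Moreover, $b_{aa'}(z)$ is not even pinned down by the slab condition there, and the point $(b_{aa'}(z),z)$ need not lie in $\overline{\mathcal R_a}$.
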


\begin{proof}

The result follows immediately from \eqref{eq:threshold_representation_left} and \eqref{eq:threshold_representation_right}.
The transition between neighboring action regions occurs precisely when the ordered state variable reaches the threshold value.

\end{proof}

Corollary~\ref{cor:boundary_graph_representation} shows that threshold policies induce highly structured decision boundaries. Rather than forming arbitrary subsets of the state space, the boundaries are constrained to lie on low-dimensional graphs.

A further consequence concerns the fragmentation properties of the tessellation.

\begin{corollary}[Absence of Fragmentation]
\label{cor:absence_fragmentation}

Under the assumptions of Theorem~\ref{thm:threshold_representation}, each optimal action region is connected.

Consequently,

\begin{equation}
F
=
1.
\label{eq:no_fragmentation}
\end{equation}

\end{corollary}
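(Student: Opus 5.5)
The plan is to combine the threshold description of Theorem~\ref{thm:threshold_representation} with the order-convexity of Theorem~\ref{thm:ordered_regions}. From these we show that each nonempty region $\mathcal R_a$ is a ``band'' between consecutive threshold graphs. Connectedness of each band then gives $N_c(\mathcal R_a)=1$, and the identity $F=1$ follows directly from Definitions~\ref{def:region_complexity} and~\ref{def:fragmentation_index}.

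\textbf{Step 1: identify each region as a band.} Since $\mathcal X$ is totally ordered and the actions occupied along the $x$-direction are ordered, we may list them as $a_1\prec a_2\prec\dots\prec a_m$, with adjacent thresholds $b_{a_ja_{j+1}}$. Applying \eqref{eq:threshold_representation_left}--\eqref{eq:threshold_representation_right} to each adjacent pair, we obtain
\begin{equation*}
\mathcal R_{a_j}=\{(x,z)\in\mathcal S:\ b_{a_{j-1}a_j}(z)\le x<b_{a_ja_{j+1}}(z)\},
\end{equation*}
with the convention $b=\pm\infty$ at the two ends. For each fixed $z$, the fibre of $\mathcal R_{a_j}$ is therefore an interval in $\mathcal X$; this is consistent with Theorem~\ref{thm:ordered_regions}.

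\textbf{Step 2: connect the fibres across $z$.} Connectedness of the band is the main obstacle, and it cannot come from order-theoretic arguments alone: the fibres may be empty for some $z$, and $\mathcal Z$ is arbitrary. We would therefore make explicit the regularity implicit in Corollary~\ref{cor:boundary_graph_representation}: $\mathcal X$ is an interval, $\mathcal Z$ is connected, and the thresholds $b_{aa'}$ are continuous, as they are graphs of the boundaries $\Gamma_{aa'}$. We also use the fact, from the theorem's ``$\Longleftrightarrow$'' formulation, that every action in the list is attained for each $z$, so each fibre is nonempty.

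Under these conditions the band is the region between two continuous graphs over the connected set $\mathcal Z$, and we show it is connected. Suppose $\mathcal R_{a_j}=U\cup W$ is a separation into relatively open, disjoint, nonempty pieces. Each fibre is an interval and hence connected, so it lies entirely in $U$ or entirely in $W$. This induces a partition of $\mathcal Z$ into two sets. Continuity of $b$ lets us pick, locally, a continuous selection $x(z)$ in the interior of the fibre, for instance the midpoint of $[b_{a_{j-1}a_j}(z),b_{a_ja_{j+1}}(z))$. Composing with this selection shows that both sets are open, which contradicts connectedness of $\mathcal Z$. A shorter alternative is to note that the midpoint section is a connected set meeting every fibre, so the union of the fibres is connected.

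\textbf{Step 3: conclude.} Each of the $|\mathcal A|$ nonempty regions has exactly one component, so $C_R=|\mathcal A|$ and $F=C_R/|\mathcal A|=1$. Actions with empty regions would be excluded from $\mathcal A$, consistent with the normalisation in Definition~\ref{def:fragmentation_index}. We expect Step 2 to be the delicate point, since its continuity and connectedness hypotheses must be stated rather than derived from Assumptions~\ref{ass:monotone_rewards}--\ref{ass:submodularity}.
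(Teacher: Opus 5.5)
Your route is the paper's own. Its proof simply asserts that regions cut out by threshold inequalities in $x$ are connected, then counts $C_R=|\mathcal A|$. Your Step 2 is a correct way to make that assertion rigorous, via a continuous section through interval fibres over a connected $\mathcal Z$, or the equivalent separation argument. You are also right that continuity of the $b_{aa'}$, connectedness of $\mathcal Z$, and $\mathcal X$ being an interval must be imposed rather than derived.

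The genuine gap is the remaining ingredient of Step 2: the claim that the ``$\Longleftrightarrow$'' formulation makes every listed action attained over every $z$. Nothing in \eqref{eq:threshold_representation_left}--\eqref{eq:threshold_representation_right} gives this. Take two actions $a\prec a'$, $\mathcal X=[0,1]$, $\mathcal Z=[-1,1]$ and $b_{aa'}(z)=|z|$. Both equivalences hold exactly, $b_{aa'}$ is continuous with values in $\mathcal X$, the policy is increasing in $x$ on every fibre, $\mathcal X$ is an interval and $\mathcal Z$ is connected. Yet the fibre of $\mathcal R_a=\{(x,z):0\le x<|z|\}$ over $z=0$ is empty, $\mathcal R_a$ splits into its parts over $z>0$ and $z<0$, and $C_R=3$, $F=3/2$.

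In the multi-action band reading you adopt, the same failure occurs whenever consecutive thresholds touch, $b_{a_{j-1}a_j}(z_0)=b_{a_ja_{j+1}}(z_0)$, which the equivalences do not exclude. This is even compatible with monotonicity in the product order: $b_{a_1a_2}(z)=\max\{\tfrac12-\tfrac z2-\tfrac{|z|}{4},0\}$ and $b_{a_2a_3}(z)=\tfrac12-\tfrac z2$ are both nonincreasing and touch at $z=0$, so $\mathcal R_{a_2}$ has two components. At such a $z_0$ your midpoint section leaves $\mathcal R_{a_j}$. In the separation argument, the sets of $z$ whose fibres lie in $U$ or in $W$ only cover $\{z:\text{the fibre is nonempty}\}$, which can be disconnected. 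Both versions of Step 2 break precisely there.

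So strict separation must be added as an explicit hypothesis alongside continuity and connectedness: $b_{a_{j-1}a_j}(z)<b_{a_ja_{j+1}}(z)$ for every $z$, and $b_{a_1a_2}(z)>\min\mathcal X$ if $\mathcal X$ has a minimum. It is not a consequence of Theorem~\ref{thm:threshold_representation}, and without it the corollary is false. The paper's one-line proof has the same hole, hidden in ``such regions are connected.''

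Similarly, discarding unused actions in Step 3 modifies Definition~\ref{def:fragmentation_index} rather than being consistent with it. Under the paper's normalisation an unused action contributes $N_c(\varnothing)=0$ and forces $F<1$. So ``every action is used'' is a further hypothesis. Strict separation supplies it once your chain $a_1\prec\dots\prec a_m$ is all of $\mathcal A$ and $\mathcal Z\neq\varnothing$.
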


\begin{proof}

Each action region is generated by threshold inequalities involving the ordered state variable $x$.
Such regions are connected and therefore possess a single connected component.

Hence

\begin{equation}
C_{R} = |\mathcal A|.
\label{eq:threshold_region_complexity}
\end{equation}

By Definition~\ref{def:fragmentation_index},

\begin{equation}
F
=
\frac{C_{R}}
{|\mathcal A|}
=
1.
\end{equation}

\end{proof}

\subsection{Convex Policy Regions}
\label{subsec:convex_policy_regions}

The monotonicity and threshold structures established in Sections~\ref{subsec:monotone_optimal_policies} and \ref{subsec:threshold_representations} constrain the ordering of optimal decisions. A stronger form of geometric regularity emerges when the dominance relations induced by the optimal state--action value function generate convex decision sets. Such conditions arise naturally in several classes of structured dynamic programs and connect the geometry of optimal policies with classical convex analysis \citep{rockafellar1997convex,boyd2004convex}.

For every pair of actions $a,a'\in\mathcal A$, define the dominance region

\begin{equation}
\mathcal D_{aa'} = \left\{ s\in\mathcal S : Q^{*}(s,a) \ge Q^{*}(s,a') \right\}.
\label{eq:dominance_region}
\end{equation}

The set $\mathcal D_{aa'}$ contains all states for which action $a$ weakly dominates action $a'$.

\begin{assumption}[Convex Dominance Regions]
\label{ass:convex_dominance_regions}

For every pair of actions $a,a'\in\mathcal A$, the dominance region $\mathcal D_{aa'}$ defined in \eqref{eq:dominance_region} is convex.

\end{assumption}

Assumption~\ref{ass:convex_dominance_regions} may be viewed as a geometric strengthening of the structural assumptions introduced in Section~\ref{subsec:structural_assumptions}. It is satisfied whenever the pairwise dominance relations generated by the optimal value structure admit convex representations.

The following result establishes the geometric structure of the optimal action regions.

\begin{theorem}[Convex Region Theorem]
\label{thm:convex_region}

Suppose that Assumption~\ref{ass:convex_dominance_regions} holds.

Then every optimal action region $\mathcal R_a$ is convex.

\end{theorem}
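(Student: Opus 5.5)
The plan is to write each optimal action region as an intersection of dominance regions, and then use the fact that an intersection of convex sets is convex. The substantive issue is tie-breaking. The inclusion
\[
\mathcal R_a \subseteq \bigcap_{a'\neq a}\mathcal D_{aa'}
\]
is immediate, because $\pi^*(s)=a$ means $Q^*(s,a)=\max_{b}Q^*(s,b)\ge Q^*(s,a')$ for every $a'$. The reverse inclusion can fail, however, when ties occur. At a state $s$ with $Q^*(s,a)=Q^*(s,a')$, the set $\bigcap_{a'}\mathcal D_{aa'}$ contains $s$ even if $\pi^*(s)=a'$.

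I would therefore fix the selection rule first. Take $\pi^*$ to be the greedy selector with respect to $Q^*$, with ties broken by a fixed total order $\sqsubseteq$ on $\mathcal A$: the selector picks the $\sqsubseteq$-smallest maximizer. Under this rule,
\[
\mathcal R_a=\bigcap_{a'\sqsupset a}\mathcal D_{aa'}\;\cap\;\bigcap_{a'\sqsubset a}\mathcal D^{>}_{aa'},
\qquad
\mathcal D^{>}_{aa'}=\{s: Q^*(s,a)>Q^*(s,a')\}.
\]
Here $\mathcal D^{>}_{aa'}=\mathcal S\setminus\mathcal D_{a'a}$. To conclude convexity of $\mathcal R_a$, the strict sets $\mathcal D^{>}_{aa'}$ must also be convex. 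Assumption~\ref{ass:convex_dominance_regions} applied to the pair $(a',a)$ says only that $\mathcal D_{a'a}$ is convex. The complement of a convex set need not be convex in general.

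I see two ways to handle this, and I would try the first.

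\textbf{Route 1 (halfspaces).} Note that $\mathcal D_{aa'}\cup\mathcal D_{a'a}=\mathcal S$. If both $\mathcal D_{aa'}$ and $\mathcal D_{a'a}$ are convex and cover a convex $\mathcal S$, then in the nondegenerate case they are separated by a hyperplane. The plan is to show that each is then a relatively closed halfspace intersected with $\mathcal S$, possibly together with part of the tie set. From this, $\mathcal D^{>}_{aa'}=\mathcal S\setminus\mathcal D_{a'a}$ is itself convex, being a complementary halfspace piece. This requires $\mathcal S$ to be convex, which I would state explicitly as part of the setting.

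\textbf{Route 2 (unique maximizer).} If Route 1 becomes delicate on the tie set $\{\Delta_{aa'}=0\}$, which may have interior, I would instead add the standing convention that the optimal action is unique off a set that is not allowed to disrupt the argument. Under that convention $\mathcal R_a=\bigcap_{a'}\mathcal D_{aa'}$ holds exactly. Alternatively, I would simply define $\mathcal R_a$ through weak dominance, which is consistent with how closures enter $\Gamma_{aa'}$.

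Once the intersection representation is in place, the conclusion is one line: for $s_1,s_2\in\mathcal R_a$ and $\lambda\in[0,1]$, each convex set in the intersection contains $\lambda s_1+(1-\lambda)s_2$, so $\mathcal R_a$ does as well.

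The main obstacle is the tie-breaking step, namely making $\mathcal R_a$ coincide exactly with an intersection of convex sets. I expect the written proof either to make the weak-dominance or unique-maximizer convention explicit, or to use the halfspace argument on the complementary pair $(\mathcal D_{aa'},\mathcal D_{a'a})$.
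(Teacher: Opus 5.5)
Your tie-breaking diagnosis is correct, and it identifies exactly what the paper's proof skips. The paper's argument is just your one-line final step. It justifies that step by asserting that $s\in\mathcal R_a$ iff $a$ weakly dominates every competitor, i.e.\ $\mathcal R_a=\bigcap_{a'\neq a}\mathcal D_{aa'}$. That identity contradicts the disjointness of the regions as soon as $Q^*(s,\cdot)$ has two maximizers at some $s$, and the paper never addresses ties.

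The genuine gap is Route~1, the route you would try first: its conclusion is false. In fact the theorem is false under Assumption~\ref{ass:convex_dominance_regions} alone, so no argument can close it without an extra hypothesis. With two actions and action-independent transitions, $\Delta_{aa'}=r(\cdot,a)-r(\cdot,a')$ can be prescribed freely.
\begin{itemize}
\item If $\Delta_{aa'}\equiv 0$, both dominance regions equal $\mathcal S$ and every selector is optimal, so $\mathcal R_a$ can be any set.
\item For your ``nondegenerate'' case, take $\mathcal S=[-1,1]\times[0,1]$ and
\[
\Delta_{aa'}(x,y)=\max\{\tfrac12|x|-y,\,0\}-\max\{y-\tfrac12,\,0\}.
\]
This function is continuous. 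The sets $\mathcal D_{aa'}=\{y\le\tfrac12\}$ and $\mathcal D_{a'a}=\{y\ge\tfrac12|x|\}$ are closed and convex, and both strict-dominance sets are nonempty. Yet $\mathcal D^{>}_{aa'}=\mathcal S\setminus\mathcal D_{a'a}=\{y<\tfrac12|x|\}$ contains $(\pm1,0)$ but not $(0,0)$. Hence the greedy selector sending ties to $a'$ is an optimal policy whose region $\mathcal R_a$ is nonconvex.
\end{itemize}
The tie set $\{\tfrac12|x|\le y\le\tfrac12\}$ has nonempty interior. This is not a technical nuisance for Route~1; it is the actual obstruction.

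Route~2 is therefore the correct resolution, but you need to state it precisely. Uniqueness of the maximizer ``off a set'' does not give $\mathcal R_a=\bigcap_{a'\neq a}\mathcal D_{aa'}$. A single tie point at which $\pi^*$ chooses $a'$ lies in the right-hand side but not in $\mathcal R_a$. You need one of the following:
\begin{itemize}
\item a unique maximizer at every state; or
\item to prove the statement for the weak regions $\{s: a\in\arg\max_b Q^*(s,b)\}=\bigcap_{a'\neq a}\mathcal D_{aa'}$. These cover $\mathcal S$ rather than partition it, so they are not the cells of $\mathcal T^*$.
\end{itemize}
Under either hypothesis, the intersection argument, which is the paper's entire proof, finishes it.
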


\begin{proof}

By Definition~\ref{def:action_region}, a state belongs to $\mathcal R_a$ if and only if action $a$ weakly dominates every competing action. Consequently,

\begin{equation}
\mathcal R_a = \bigcap_{a'\neq a} \mathcal D_{aa'}.
\label{eq:region_as_intersection}
\end{equation}

Since each dominance region $\mathcal D_{aa'}$ is convex by Assumption~\ref{ass:convex_dominance_regions}, and intersections of convex sets remain convex
\citep{rockafellar1997convex,boyd2004convex}, it follows that $\mathcal R_a$ is convex.

\end{proof}

Theorem~\ref{thm:convex_region} provides a direct connection between structural properties of the dynamic program and the geometry of the induced policy tessellation. In particular, convexity excludes disconnected action regions and highly fragmented decision structures.

\begin{corollary}[Connected Action Regions]
\label{cor:connected_action_regions}

Under the assumptions of Theorem~\ref{thm:convex_region}, every optimal action region $\mathcal R_a$ is connected.

\end{corollary}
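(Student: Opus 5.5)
The plan is to deduce connectedness directly from Theorem~\ref{thm:convex_region}. Under Assumption~\ref{ass:convex_dominance_regions} that theorem gives convexity of each $\mathcal R_a$, and convex subsets of $\mathbb R^d$ are path-connected, hence connected. Throughout, $\mathcal S\subseteq\mathbb R^d$ carries the subspace topology (the setting in which convexity is meaningful, as in Assumption~\ref{ass:convex_action_differences}). I will also assume $\mathcal S$ is itself convex, or at least that the segments used below stay in $\mathcal S$. This assumption is implicit in Assumption~\ref{ass:convex_dominance_regions}, since convexity of $\mathcal D_{aa'}$ as a subset of $\mathbb R^d$ already forces segments between its points to lie in $\mathcal D_{aa'}\subseteq\mathcal S$.

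The steps are as follows. First, fix $a\in\mathcal A$ and invoke Theorem~\ref{thm:convex_region} to obtain that $\mathcal R_a$ is convex. If $\mathcal R_a=\varnothing$ it is trivially connected (with $N_c=0$), so assume it is nonempty. Second, take $s_1,s_2\in\mathcal R_a$ and define the path $\gamma(t)=(1-t)s_1+ts_2$ for $t\in[0,1]$. By convexity, $\gamma([0,1])\subseteq\mathcal R_a$, and $\gamma$ is continuous as an affine map into $\mathbb R^d$, hence also into the subspace $\mathcal R_a$. Thus $\mathcal R_a$ is path-connected. Third, recall the standard fact that path-connected spaces are connected. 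Suppose $\mathcal R_a=U\cup W$ with $U,W$ disjoint, nonempty and relatively open, and pick $s_1\in U$, $s_2\in W$. Then $[0,1]=\gamma^{-1}(U)\cup\gamma^{-1}(W)$ would be a separation of $[0,1]$, contradicting the connectedness of the interval.

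As a by-product, this gives $N_c(\mathcal R_a)\le 1$ for every $a$, so $C_R\le|\mathcal A|$ and $F\le 1$ in Definition~\ref{def:fragmentation_index}. Equality holds when all regions are nonempty, consistent with Corollary~\ref{cor:absence_fragmentation}.

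The only genuine obstacle is topological bookkeeping rather than mathematics. One must make sure that "convex" in Theorem~\ref{thm:convex_region} means convex as a subset of $\mathbb R^d$, so that segments exist, and that connectedness is taken in the relative topology inherited from $\mathbb R^d$. If $\mathcal S$ were discrete (for example a lattice of integer states), convexity would have to be reinterpreted as $\mathcal R_a=\mathcal S\cap C$ for a convex $C$, and connectedness would fail in the literal topological sense. In that case I would either restrict the statement to the continuum setting, or replace connectedness by grid-adjacency connectedness and argue along discretised segments. I would state the continuum assumption explicitly and proceed with the argument above.
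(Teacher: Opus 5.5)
Your proposal is correct and takes the same route as the paper. The paper's proof simply invokes Theorem~\ref{thm:convex_region} together with the fact that convex subsets of Euclidean space are connected; you spell out the segment (path-connectedness) argument and make explicit the bookkeeping the paper leaves implicit, namely empty regions, the relative topology on $\mathcal S\subseteq\mathbb R^d$, and the failure of literal connectedness for discrete state spaces.
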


\begin{proof}

Every convex subset of a Euclidean space is connected. The result therefore follows immediately from Theorem~\ref{thm:convex_region}.

\end{proof}

The absence of disconnected regions has immediate implications for the geometric complexity measures introduced in Section~\ref{sec:policy_geometry}.

\begin{corollary}[Region Complexity]
\label{cor:convex_region_complexity}

Under the assumptions of Theorem~\ref{thm:convex_region},

\begin{equation}
C_{R} = |\mathcal A|.
\label{eq:convex_region_complexity}
\end{equation}

\end{corollary}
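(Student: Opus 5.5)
The plan is to compute $C_R=\sum_{a\in\mathcal A}N_c(\mathcal R_a)$ term by term. Corollary~\ref{cor:connected_action_regions} tells us that each region is connected, and we then apply Definition~\ref{def:region_complexity}. By Theorem~\ref{thm:convex_region}, every optimal action region $\mathcal R_a$ is convex, being the intersection $\bigcap_{a'\neq a}\mathcal D_{aa'}$ of convex dominance regions. By Corollary~\ref{cor:connected_action_regions}, each $\mathcal R_a$ is then connected. A connected set has exactly one connected component, so $N_c(\mathcal R_a)=1$, and summing over $a\in\mathcal A$ gives $C_R=|\mathcal A|$.

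The main obstacle is the word \emph{exactly}: a connected set has one component only if it is nonempty, while the empty set has zero components. The identity $C_R=|\mathcal A|$ therefore needs every action to be optimal somewhere, i.e.\ $\mathcal R_a\neq\varnothing$ for all $a$. I would handle this in one of two ways.

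\begin{itemize}
\item State explicitly the standing convention, implicit in Definition~\ref{def:region_complexity} and in Corollary~\ref{cor:absence_fragmentation}, that $\mathcal A$ is restricted to the actions actually selected by $\pi^*$. Equivalently, discard actions with empty region, which changes neither $\mathcal T^*$ nor $\Gamma$.
\item Otherwise, record the general conclusion $C_R=|\{a:\mathcal R_a\neq\varnothing\}|\le|\mathcal A|$, with equality under the nonemptiness convention.
\end{itemize}

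A second, minor point is that the definition of $\mathcal R_a$ as an intersection of weak-dominance sets presumes a tie-breaking rule for $\pi^*$. I would note this in passing: the regions must be disjoint for $\mathcal T^*$ to be a partition, which the argument takes as given from Section~\ref{subsec:state_space_partitions}. This does not affect the component count once convexity and connectedness are granted.

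Finally, I would remark that, as in Corollary~\ref{cor:absence_fragmentation}, the result immediately yields $F=1$ via Definition~\ref{def:fragmentation_index}.
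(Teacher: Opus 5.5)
Your proposal follows the paper's proof: convexity (Theorem~\ref{thm:convex_region}) gives connectedness (Corollary~\ref{cor:connected_action_regions}), so each region contributes exactly one component, and summing over the actions gives $|\mathcal A|$. Your nonemptiness caveat is a genuine point that the paper's proof passes over when it asserts that the tessellation contains $|\mathcal A|$ regions: an action strictly dominated everywhere has $\mathcal R_a=\varnothing$ and forces $C_R<|\mathcal A|$, so adding the convention, or recording $C_R=|\{a:\mathcal R_a\neq\varnothing\}|\le|\mathcal A|$, is the right fix.
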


\begin{proof}

By Corollary~\ref{cor:connected_action_regions}, each action region contributes exactly one connected component.
Since the tessellation contains $|\mathcal A|$ regions, the result follows directly from the definition of $C_{R}$.

\end{proof}

\begin{corollary}[Absence of Fragmentation]
\label{cor:convex_fragmentation}

Under the assumptions of Theorem~\ref{thm:convex_region},

\begin{equation}
F = 1.
\label{eq:convex_fragmentation}
\end{equation}

\end{corollary}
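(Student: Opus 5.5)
The claim $F=1$ follows directly from the two preceding corollaries, so the proof is a short chain that only needs Definition~\ref{def:fragmentation_index} and the region count.

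\textbf{Step 1 (component count).} Invoke Corollary~\ref{cor:convex_region_complexity}, which, under the assumptions of Theorem~\ref{thm:convex_region}, gives $C_R=|\mathcal A|$. That corollary rests on Theorem~\ref{thm:convex_region}, where each $\mathcal R_a=\bigcap_{a'\neq a}\mathcal D_{aa'}$ is an intersection of convex sets and hence convex. It also rests on Corollary~\ref{cor:connected_action_regions}, where a convex subset of $\mathbb R^d$ is connected, so $N_c(\mathcal R_a)=1$.

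\textbf{Step 2 (fragmentation index).} Substitute into Definition~\ref{def:fragmentation_index}:
\begin{equation*}
F=\frac{C_R}{|\mathcal A|}=\frac{|\mathcal A|}{|\mathcal A|}=1.
\end{equation*}

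\textbf{Main obstacle: empty regions.} Step~1 is only valid if every action region is nonempty. If some $\mathcal R_a$ is empty, then $N_c(\mathcal R_a)=0$ rather than $1$; in that case $C_R<|\mathcal A|$ and $F<1$. I would handle this in one of two ways:
\begin{itemize}
\item add the convention, implicit in Corollary~\ref{cor:convex_region_complexity}, that every action is optimal somewhere, so each $\mathcal R_a\neq\varnothing$; or
\item restrict $\mathcal A$ to the actions actually used by $\pi^*$.
\end{itemize}
Either way $F=1$ follows. Without one of these, the honest conclusion is only $F\le 1$.

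A secondary point concerns how ties are broken in defining $\mathcal R_a$, since the action regions must be disjoint. I would take $\pi^*$ to be a selector for which $\mathcal R_a$ coincides with the intersection of dominance regions, exactly as used in the proof of Theorem~\ref{thm:convex_region}.
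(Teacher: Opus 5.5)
Your argument is the paper's own proof: substitute $C_R=|\mathcal A|$ from Corollary~\ref{cor:convex_region_complexity} into Definition~\ref{def:fragmentation_index}. Your caveat that this needs every $\mathcal R_a$ to be nonempty is correct and is left implicit in the paper; without it $N_c(\mathcal R_a)=0$ for an empty region, and only $F\le 1$ follows.

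One correction to your tie-breaking remark: no selector can make $\mathcal R_a=\bigcap_{a'\neq a}\mathcal D_{aa'}$ hold for every $a$ unless the optimal action is unique at every state. These intersections overlap wherever two actions tie, and the regions must be disjoint. So that patch is really an added uniqueness assumption, not a choice of $\pi^*$.
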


\begin{proof}

Combining Definition~\ref{def:fragmentation_index} with \eqref{eq:convex_region_complexity} yields
\begin{equation*}
F = \frac{C_{R}} {|\mathcal A|} = 1.
\end{equation*}

\end{proof}

The convexity of the action regions also constrains the geometry of the decision boundaries.

\begin{corollary}[Boundary Structure]
\label{cor:convex_boundary_structure}

Under the assumptions of Theorem~\ref{thm:convex_region}, the decision boundary between two actions $a$ and $a'$ satisfies

\begin{equation}
\Gamma_{aa'} \subseteq \partial \mathcal D_{aa'}.
\label{eq:boundary_dominance}
\end{equation}

Moreover,

\begin{equation}
\Gamma_{aa'}
\subseteq
\left\{
s\in\mathcal S
:
Q^{*}(s,a)
=
Q^{*}(s,a')
\right\}.
\label{eq:boundary_indifference}
\end{equation}

\end{corollary}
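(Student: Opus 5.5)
The argument rests on two facts. First, the identity $\mathcal R_a=\bigcap_{a''\neq a}\mathcal D_{aa''}$ from the proof of Theorem~\ref{thm:convex_region}. Second, the analogous inclusion $\mathcal R_{a'}\subseteq\mathcal D_{a'a}$. I will first prove the indifference inclusion \eqref{eq:boundary_indifference} and then deduce \eqref{eq:boundary_dominance} from it.

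Throughout I rely on a regularity hypothesis that the statement leaves implicit. We need continuity of $s\mapsto Q^{*}(s,a)$ for each $a$ (equivalently, continuity of each $\Delta_{aa'}$ from Definition~\ref{def:action_difference}). This holds, for instance, whenever $r(\cdot,a)$ is continuous and $P(\cdot\mid s,a)$ is weakly continuous, since then $V^{*}$ is continuous as the fixed point of a contraction preserving continuous functions. I would state this hypothesis explicitly. Without it, closures in Definition~\ref{def:decision_boundary} need not interact with the defining inequalities of $\mathcal D_{aa'}$.

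\textbf{Step 1: the indifference inclusion.} Take $s\in\Gamma_{aa'}=\overline{\mathcal R_a}\cap\overline{\mathcal R_{a'}}$.
\begin{itemize}
\item Choose sequences $s_n\to s$ in $\mathcal R_a$ and $t_n\to s$ in $\mathcal R_{a'}$.
\item On $\mathcal R_a\subseteq\mathcal D_{aa'}$ we have $\Delta_{aa'}(s_n)\ge 0$. On $\mathcal R_{a'}\subseteq\mathcal D_{a'a}$ we have $\Delta_{aa'}(t_n)\le 0$.
\item Passing to the limit by continuity gives $\Delta_{aa'}(s)\ge0$ and $\Delta_{aa'}(s)\le 0$.
\end{itemize}
Hence $Q^{*}(s,a)=Q^{*}(s,a')$, which is \eqref{eq:boundary_indifference}. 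The same argument shows in passing that $\Gamma_{aa'}\subseteq\overline{\mathcal D_{aa'}}=\mathcal D_{aa'}$, since $\mathcal D_{aa'}$ is closed by continuity.

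\textbf{Step 2: the boundary inclusion.} To place $s$ in $\partial\mathcal D_{aa'}=\overline{\mathcal D_{aa'}}\cap\overline{\mathcal S\setminus\mathcal D_{aa'}}$, Step 1 already gives $s\in\overline{\mathcal D_{aa'}}$. It remains to show $s\in\overline{\mathcal S\setminus\mathcal D_{aa'}}$, i.e.\ that points where $a'$ strictly beats $a$ accumulate at $s$. I would obtain these from the sequence $t_n\in\mathcal R_{a'}$. The intended reading is that on $\mathcal R_{a'}$ the action $a'$ is the selected maximizer, so generically $Q^{*}(t_n,a')>Q^{*}(t_n,a)$.

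\textbf{Main obstacle.} The difficulty in Step 2 is ties. If $\Delta_{aa'}$ vanishes on a whole neighbourhood of $s$, then $\mathcal S\setminus\mathcal D_{aa'}$ avoids that neighbourhood and \eqref{eq:boundary_dominance} fails for the literal topological boundary. I would handle this in one of two ways:
\begin{itemize}
\item Adopt the convention, consistent with the paper's use of $\partial\mathcal D_{aa'}$ as the ``switching surface'', that ties are broken so that $\mathcal R_{a'}\subseteq\mathcal S\setminus\mathcal D_{aa'}$ away from the indifference set.
\item Add a mild non-degeneracy condition: the indifference set $\{\Delta_{aa'}=0\}$ has empty interior.
\end{itemize}
Under non-degeneracy, convexity from Assumption~\ref{ass:convex_dominance_regions} does the rest. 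Every neighbourhood of $s$ meets $\{\Delta_{aa'}\neq 0\}$. The points where $\Delta_{aa'}>0$ lie in $\operatorname{int}\mathcal D_{aa'}$, so if none of the nearby points had $\Delta_{aa'}<0$, then $s$ would be an interior point of the closed convex set $\mathcal D_{aa'}$. That cannot happen, because $s$ is also a limit of points of $\mathcal R_{a'}$, where $a'$ is selected, and these lie in the closed convex set $\mathcal D_{a'a}$. A convex set with nonempty interior equals the closure of its interior, and applying this to $\mathcal D_{a'a}$ gives points with $\Delta_{aa'}<0$ near $s$. This yields $s\in\partial\mathcal D_{aa'}$.
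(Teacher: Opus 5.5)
Your Step~1 is correct. It is the paper's own argument made rigorous: the paper asserts that a switch between $\mathcal R_a$ and $\mathcal R_{a'}$ can only occur where neither action strictly dominates, which tacitly uses the continuity you make explicit. You only need $\mathcal R_a\subseteq\mathcal D_{aa'}$, which holds because $\pi^*$ is greedy for $Q^*$; the identity $\mathcal R_a=\bigcap_{a''\neq a}\mathcal D_{aa''}$ you cite holds only as ``$\subseteq$'' once ties occur.

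The paper then obtains \eqref{eq:boundary_dominance} from the one-liner that ``the dominance relation changes across the boundary''. That is exactly the point you rightly flag, but your non-degeneracy repair does not close it. The sentence ``that cannot happen, because $s$ is also a limit of points of $\mathcal R_{a'}$'' supplies no reason, since those points may all be ties. The final appeal to ``a convex set with nonempty interior equals the closure of its interior'' needs $\operatorname{int}\mathcal D_{a'a}\neq\varnothing$, which neither Assumption~\ref{ass:convex_dominance_regions} nor empty interior of $\{\Delta_{aa'}=0\}$ provides.

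Here is a counterexample satisfying all your hypotheses. Let $\mathcal S=[0,1]$, $Q^*(x,a)=x$, $Q^*(x,a')=0$ (e.g.\ rewards $x$ and $0$, with all transitions going to the state $0$), and set $\pi^*(0)=a'$, $\pi^*(x)=a$ for $x>0$. Then:
\begin{itemize}
\item $Q^*$ is continuous, and $\mathcal D_{aa'}=[0,1]$ and $\mathcal D_{a'a}=\{0\}$ are convex;
\item the indifference set $\{0\}$ has empty interior;
\item even $\mathcal R_a=(0,1]$ and $\mathcal R_{a'}=\{0\}$ are convex;
\item yet $\Gamma_{aa'}=\{0\}$ while $\mathcal S\setminus\mathcal D_{aa'}=\varnothing$, so $\partial\mathcal D_{aa'}=\varnothing$ in the relative sense of Definition~\ref{def:region_boundary} that you use.
\end{itemize}
For the ambient boundary, take $\mathcal S=[-1,1]$, $Q^*(x,a)=x^2$, $Q^*(x,a')=0$, $\pi^*(0)=a'$; it fails the same way, at the cost of $\mathcal R_a$ being disconnected. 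So \eqref{eq:boundary_dominance} is false under your hypotheses, not merely unproved.

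What Step~2 needs is a hypothesis that produces points with $\Delta_{aa'}<0$ arbitrarily close to $s$. There are two ways to get one.
\begin{itemize}
\item Add $\operatorname{int}\mathcal D_{a'a}\neq\varnothing$ to non-degeneracy. Since $\Delta_{aa'}(s)=0$, convexity gives $s\in\mathcal D_{a'a}\subseteq\overline{\operatorname{int}\mathcal D_{a'a}}$. So every neighbourhood of $s$ contains a nonempty open set on which $\Delta_{aa'}\le 0$, and non-degeneracy gives a point there with $\Delta_{aa'}<0$. This is where convexity genuinely enters.
\item Impose a tie-breaking rule. As you phrase it (``away from the indifference set'') the rule is vacuous, because there are no ties off that set. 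The condition must be $\mathcal R_{a'}\cap\{\Delta_{aa'}=0\}=\varnothing$; then your sequence $t_n\in\mathcal S\setminus\mathcal D_{aa'}$ finishes Step~2 at once.
\end{itemize}
Bear in mind that a tie-breaking rule is asymmetric, whereas $\Gamma_{aa'}=\Gamma_{a'a}$. Breaking ties toward $a$ yields $\Gamma_{aa'}\subseteq\partial\mathcal D_{aa'}$. It does not yield $\Gamma_{aa'}\subseteq\partial\mathcal D_{a'a}$, which the corollary also asserts for the pair $(a',a)$. The mirror example, $Q^*(x,a)=-x$, $Q^*(x,a')=0$ on $[0,1]$ with $\pi^*(0)=a$, violates it. With either added hypothesis, your Step~1 together with the corresponding Step~2 is a complete proof.
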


\begin{proof}

A transition from $\mathcal R_a$ to $\mathcal R_{a'}$ can occur only at states where neither action strictly dominates the other.
Hence any boundary point must satisfy

\begin{equation*}
Q^{*}(s,a) = Q^{*}(s,a'),
\end{equation*}

which is equivalent to belonging to the indifference set

\begin{equation*}
\left\{ s\in\mathcal S : Q^{*}(s,a) = Q^{*}(s,a') \right\}.
\end{equation*}

Since the dominance relation changes across the boundary, such points necessarily belong to the boundary of the dominance region $\mathcal D_{aa'}$.

\end{proof}

Corollaries~\ref{cor:convex_region_complexity}-\ref{cor:convex_boundary_structure} show that convexity imposes strong geometric restrictions on optimal policy tessellations. In contrast to general decision partitions, convex policy regions exhibit minimal fragmentation, admit simple connectivity properties, and generate decision boundaries that coincide with indifference surfaces between competing actions.

\subsection{Geometric Simplicity}
\label{subsec:geometric_simplicity2}

The results established in the previous subsections reveal a common phenomenon. Under standard structural assumptions from monotone dynamic programming and comparative statics, the geometry of the optimal policy tessellation remains highly organized despite the potentially large dimension of the underlying state space.
We now summarize these implications through the geometric complexity measures introduced in Section~\ref{sec:policy_geometry}.
The first result concerns the complexity of the boundary structure.

\begin{theorem}[Boundary Complexity Bound]
\label{thm:boundary_complexity_bound}

Suppose that the assumptions of Theorem~\ref{thm:threshold_representation} hold.

Then the boundary complexity satisfies

\begin{equation}
C_{B} = O(|\mathcal A|).
\label{eq:boundary_complexity_bound}
\end{equation}

\end{theorem}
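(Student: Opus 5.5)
The plan is to show that, under the threshold structure, only pairs of \emph{adjacent} actions can share a nonempty decision boundary. The number of such pairs is then at most $|\mathcal A|-1$.

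The first step uses Theorem~\ref{thm:threshold_representation} and Proposition~\ref{prop:monotone_optimal_policy}. For every fixed $z\in\mathcal Z$, the section $x\mapsto\pi^*(x,z)$ is increasing on the totally ordered set $\mathcal X$. The actions actually used therefore form a chain $a_1\prec a_2\prec\cdots\prec a_m$ with $m\le|\mathcal A|$. For each consecutive pair, the switch from $a_i$ to $a_{i+1}$ occurs exactly at $x=b_{a_ia_{i+1}}(z)$, by \eqref{eq:threshold_representation_left}--\eqref{eq:threshold_representation_right}.

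The second step uses Corollary~\ref{cor:boundary_graph_representation}. Each boundary $\Gamma_{a_ia_{i+1}}$ is the graph $\{x=b_{a_ia_{i+1}}(z)\}$. I will then argue that for non-consecutive actions $a_i\prec a_j$ with $j\ge i+2$, the set $\Gamma_{a_ia_j}$ is empty, or at least contributes nothing beyond the consecutive interfaces. Along any fiber $\{z\}\times\mathcal X$, monotonicity forces the policy to traverse the intermediate region $\mathcal R_{a_{i+1}}$ between $\mathcal R_{a_i}$ and $\mathcal R_{a_j}$. This is exactly the order-convexity of Theorem~\ref{thm:ordered_regions}. So $\overline{\mathcal R_{a_i}}\cap\overline{\mathcal R_{a_j}}$ can be nonempty only where the intermediate region degenerates on a fiber, that is, where two thresholds coincide.

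The third step is counting. The nonzero terms in \eqref{eq:boundary_complexity} then come only from the $m-1\le|\mathcal A|-1$ consecutive pairs, giving $C_B\le|\mathcal A|-1=O(|\mathcal A|)$.

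The main obstacle is the second step. Closures are taken in the whole space $\mathcal X\times\mathcal Z$, and the chain of actions used may vary with $z$. In higher-dimensional $\mathcal Z$, a region $\mathcal R_{a_{i+1}}$ can vanish on some fibers, so $\mathcal R_{a_i}$ and $\mathcal R_{a_j}$ may touch there.

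My first attempt at handling this is to interpret Theorem~\ref{thm:threshold_representation} as providing a single global ordering of adjacent actions, with thresholds $b_{a_1a_2}\le b_{a_2a_3}\le\cdots$ pointwise in $z$. A degenerate contact between $a_i$ and $a_j$ then lies in the common graph of the coinciding thresholds. If this cannot be excluded, the fallback is weaker but still sufficient: the relation ``$\Gamma_{aa'}\neq\varnothing$'' is bounded by the number of pairs that are consecutive in some fiberwise chain. Under the global ordering, these pairs are consecutive in the total order restricted to a sub-chain. I would then invoke the paper's convention that the threshold representation applies to adjacent actions, so that only the $|\mathcal A|-1$ interfaces $\Gamma_{a_ia_{i+1}}$ are counted, which is linear in $|\mathcal A|$.
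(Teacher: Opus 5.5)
Your route is the same as the paper's. The paper's proof also invokes Corollary~\ref{cor:boundary_graph_representation} to assert that every nonempty $\Gamma_{aa'}$ is a threshold surface between two \emph{adjacent} actions, and then concludes $C_B\le|\mathcal A|-1$. However, the step you flag as the main obstacle is a genuine gap, in your proposal and in the paper alike, and neither of your repairs closes it. Placing a degenerate contact on the common graph of coinciding thresholds does not remove it from $\Gamma_{a_ia_j}=\overline{\mathcal R_{a_i}}\cap\overline{\mathcal R_{a_j}}$, so the pair still contributes $1$ to \eqref{eq:boundary_complexity}. Your fallback does not give a bound either. Every pair $a_i\prec a_j$ is consecutive in the sub-chain $\{a_i,a_j\}$, and counting only the interfaces $\Gamma_{a_ia_{i+1}}$ replaces the definition of $C_B$, which sums over \emph{all} pairs $a<a'$. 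Restricting to pairs that are consecutive on some single fiber also fails. Closures are taken in the product topology, so contacts can arise as limits across fibers even between actions that are never neighbours on any one fiber.

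Here is a concrete counterexample. Let $\mathcal A=\{1,\dots,m\}$ and $\mathcal S=[-1,1]\times[-1,0]$ with the product order. Set $b_k(z)=-kz/m$ for $1\le k\le m-1$, with $b_0=-\infty$ and $b_m=+\infty$, and let $\pi^*(x,z)=k$ iff $b_{k-1}(z)\le x<b_k(z)$.

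\begin{itemize}
\item Each $b_k$ is continuous and nonincreasing in $z$, so $\pi^*$ is increasing. It has exactly the fiberwise structure with ordered thresholds $b_1\le\cdots\le b_{m-1}$ that you posit, and $\Gamma_{k,k+1}=\{x=b_k(z)\}$ as in Corollary~\ref{cor:boundary_graph_representation}.
\item It is even optimal for $r(s,a)=x+z$ and $P(\cdot\mid s,a)=\delta_s$. There $Q^*(s,a)=r(s)/(1-\gamma)$ does not depend on $a$, and Assumptions~\ref{ass:monotone_rewards}--\ref{ass:submodularity} hold, the last two with equality.
\item For $2\le k\le m-1$, the region $\mathcal R_k=\{(x,z):(k-1)|z|/m\le x<k|z|/m\}$ collapses to the origin as $z\to0^-$. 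Hence $(0,0)\in\overline{\mathcal R_k}$ for every $k$, so $C_B=\binom{m}{2}$, which is not $O(|\mathcal A|)$.
\end{itemize}

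So the count $C_B\le|\mathcal A|-1$ needs an extra nondegeneracy hypothesis. One sufficient choice is a single chain $a_1\prec\cdots\prec a_m$ with continuous thresholds satisfying $b_{a_{i-1}a_i}(z)<b_{a_ia_{i+1}}(z)$ for every $z$. Then, for $j\ge i+2$, $\overline{\mathcal R_{a_i}}\subseteq\{x\le b_{a_ia_{i+1}}(z)\}$ and $\overline{\mathcal R_{a_j}}\subseteq\{x\ge b_{a_{j-1}a_j}(z)\}$ are disjoint, and your counting step goes through.
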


\begin{proof}

By Corollary~\ref{cor:boundary_graph_representation}, each nonempty decision boundary corresponds to a threshold surface separating two adjacent action regions.

Since the action space is finite, each action can generate only a finite number of adjacent transitions. Consequently, the number of nonempty pairwise boundaries grows at most linearly with the number of actions.
Therefore, $C_{B} \le |\mathcal A|-1$, which implies \eqref{eq:boundary_complexity_bound}.

\end{proof}

The next result characterizes the fragmentation properties of the policy tessellation.

\begin{theorem}[Minimal Fragmentation]
\label{thm:minimal_fragmentation}

Suppose that the assumptions of Theorem~\ref{thm:convex_region} hold.

Then

\begin{equation}
F = 1.
\label{eq:minimal_fragmentation}
\end{equation}
Moreover, $F=1$ is the smallest attainable value of the fragmentation index.

\end{theorem}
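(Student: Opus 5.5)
The plan has two parts: first show that $F=1$ holds under the convexity assumption, then show that no policy geometry can have $F<1$. For the first part, Corollary~\ref{cor:convex_region_complexity} already gives $C_R=|\mathcal A|$ under Assumption~\ref{ass:convex_dominance_regions}. Substituting into Definition~\ref{def:fragmentation_index} then yields $F=1$; this is exactly Corollary~\ref{cor:convex_fragmentation}, so the first claim is immediate.

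For minimality, I will establish the general lower bound $N_c(\mathcal R_a)\ge 1$ for every $a\in\mathcal A$. Summing over actions gives $C_R\ge|\mathcal A|$, hence $F\ge 1$ for any policy geometry. Since $F=1$ is attained under the convexity assumption, it is the smallest attainable value.

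The only real obstacle is the bound $N_c(\mathcal R_a)\ge 1$. It requires $\mathcal R_a\neq\varnothing$, because the empty set has zero connected components. An action that is never selected by $\pi^*$ would contribute $0$ to $C_R$ but still count in the denominator $|\mathcal A|$, which could push $F$ below $1$. I will handle this by adopting the convention, implicit in Definition~\ref{def:tessellation} and in the count used by Corollary~\ref{cor:convex_region_complexity}, that $\mathcal A$ consists of actions actually used by $\pi^*$. Equivalently, unused actions are removed from $\mathcal A$, since they contribute no cell to $\mathcal T^*$.

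With this convention, each $\mathcal R_a$ is nonempty and so has at least one connected component. The argument above then goes through, giving $F\ge 1$ in general, with equality whenever every action region is connected. I will state this reduction explicitly, since it is the only point where the claim could fail as literally written.
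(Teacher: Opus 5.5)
Your proposal is correct and follows the paper's proof: invoke Corollary~\ref{cor:convex_region_complexity} to get $C_{R}=|\mathcal A|$ and hence $F=1$, then use that every action region has at least one connected component to obtain $C_{R}\ge|\mathcal A|$, so $F\ge 1$. Your explicit restriction to actions actually selected by $\pi^{*}$ is a worthwhile clarification, because the paper's argument, and Corollary~\ref{cor:convex_region_complexity} itself, silently assume every $\mathcal R_a$ is nonempty, and without that assumption the literal statement can fail with $F<1$.
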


\begin{proof}

By Corollary~\ref{cor:convex_region_complexity}, $C_{R} = |\mathcal A|$.

Substituting into Definition~\ref{def:fragmentation_index} yields $F = \frac{C_{R}}{|\mathcal A|}= 1$.

Furthermore, every action region contributes at least one connected component. Hence $C_{R} \ge |\mathcal A|$, which implies $F \ge 1$. Therefore, $F=1$ is the minimum achievable fragmentation level.

\end{proof}

The previous results suggest that structured dynamic programs generate policy tessellations whose complexity is governed primarily by the action space rather than by the size of the state space.

To formalize this observation, we introduce a topological complexity index.

\begin{definition}[Topological Complexity Index]
\label{def:topological_complexity}

The topological complexity of an optimal policy tessellation is defined by

\begin{equation}
\mathcal G_{\mathrm{top}}(\pi)
=
w_B C_{B}
+
w_R C_{R}
+
w_F F,
\label{eq:topological_complexity}
\end{equation}
where $w_B,w_R,w_F\ge 0$ are fixed weights.

\end{definition}

The following theorem constitutes the main structural conclusion of this section.

\begin{theorem}[Geometric Simplicity Theorem]
\label{thm:geometric_simplicity}

Suppose that

\begin{enumerate}
\item Assumptions~\ref{ass:monotone_rewards}-\ref{ass:submodularity} hold;
\item the threshold representation of Theorem~\ref{thm:threshold_representation} holds;
\item the convexity condition of Theorem~\ref{thm:convex_region} holds.
\end{enumerate}

Then

\begin{align}
C_{B} &= O(|\mathcal A|), \label{eq:main_cb}\\
C_{R} &= |\mathcal A|, \label{eq:main_cr}\\
F     & = 1, \label{eq:main_f}
\end{align}

and consequently

\begin{equation}
\mathcal G_{\mathrm{top}}(\pi^{*}) = O(|\mathcal A|).
\label{eq:geometric_simplicity}
\end{equation}

\end{theorem}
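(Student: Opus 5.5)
The argument assembles the bounds already established for the three hypotheses and then combines them linearly through Definition~\ref{def:topological_complexity}. Each conclusion comes from a different hypothesis, and nothing new about the dynamic program is needed.

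\emph{Step 1: boundary count.} Hypotheses 1 and 2 are exactly the assumptions of Theorem~\ref{thm:boundary_complexity_bound}. That theorem gives \eqref{eq:main_cb}, and more precisely $C_B\le|\mathcal A|-1$. The mechanism is Corollary~\ref{cor:boundary_graph_representation}: nonempty interfaces occur only between adjacent actions in the order, and each is the graph of some $b_{aa'}$.

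\emph{Step 2: region count and fragmentation.} Hypothesis 3 allows me to invoke Corollary~\ref{cor:convex_region_complexity} for \eqref{eq:main_cr}. I will then use Theorem~\ref{thm:minimal_fragmentation} for \eqref{eq:main_f}, together with its statement that $F=1$ is the minimal value.

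\emph{Step 3: consistency.} I will note that Corollary~\ref{cor:absence_fragmentation}, which follows from hypothesis 2, gives the same values $C_R=|\mathcal A|$ and $F=1$, so the two routes agree.

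\emph{Step 4: the aggregate bound.} Substituting into \eqref{eq:topological_complexity} gives
\begin{equation*}
\mathcal G_{\mathrm{top}}(\pi^*) = w_B C_B + w_R C_R + w_F F \le w_B(|\mathcal A|-1) + w_R|\mathcal A| + w_F \le (w_B+w_R+w_F)\,|\mathcal A|,
\end{equation*}
using $|\mathcal A|\ge 1$. Since the weights are fixed nonnegative constants independent of $\mathcal S$, this is $O(|\mathcal A|)$, which is \eqref{eq:geometric_simplicity}.

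\emph{Main obstacle.} The delicate point is convention, not estimation. The counts $C_R=|\mathcal A|$ and $F=|\mathcal A|^{-1}C_R$ implicitly assume every action region $\mathcal R_a$ is nonempty.

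If some region is empty, convexity still gives at most one component per action. Hence $C_R\le|\mathcal A|$, and the $O(|\mathcal A|)$ conclusion survives. I would therefore first state the bound as "$\le$" to be safe, and add the standing convention that only actions used by $\pi^*$ are counted. With that convention, the equalities \eqref{eq:main_cr}--\eqref{eq:main_f} are recovered.

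A second check is that the policy of hypothesis 2 and the partition of hypothesis 3 concern the same selector $\pi^*$. I would handle this by fixing the monotone selector of Proposition~\ref{prop:monotone_optimal_policy} and interpreting \eqref{eq:region_as_intersection} with ties broken consistently with that selector.
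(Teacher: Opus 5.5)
Your proposal is correct and matches the paper's proof. The paper likewise takes \eqref{eq:main_cb}, \eqref{eq:main_cr}, and \eqref{eq:main_f} from Theorem~\ref{thm:boundary_complexity_bound}, Corollary~\ref{cor:convex_region_complexity}, and Theorem~\ref{thm:minimal_fragmentation}, then substitutes them into Definition~\ref{def:topological_complexity}; your explicit constant $(w_B+w_R+w_F)\,|\mathcal A|$ and your convention of counting only nonempty action regions are sensible tightenings of points the paper leaves implicit (its $C_R=|\mathcal A|$ and $F\ge 1$ both silently require every $\mathcal R_a$ to be nonempty).
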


\begin{proof}

Equation~\eqref{eq:main_cb} follows from Theorem~\ref{thm:boundary_complexity_bound}.

Equation~\eqref{eq:main_cr} follows from Corollary~\ref{cor:convex_region_complexity}.

Equation~\eqref{eq:main_f} follows from Theorem~\ref{thm:minimal_fragmentation}.

Substituting these relations into Definition~\ref{def:topological_complexity} gives

\begin{equation*}
\mathcal G_{\mathrm{top}}(\pi^{*})
=
w_B O(|\mathcal A|)
+
w_R |\mathcal A|
+
w_F.
\end{equation*}

Therefore,

\begin{equation*}
\mathcal G_{\mathrm{top}}(\pi^{*})
=
O(|\mathcal A|).
\end{equation*}

\end{proof}

Theorem~\ref{thm:geometric_simplicity} establishes that broad classes of structured dynamic programs generate intrinsically simple policy tessellations. Although the underlying state space may be large, continuous, or high-dimensional, the geometric organization of the optimal policy remains controlled by a number of regions and boundaries that scales only with the cardinality of the action space. This observation provides the foundation for the decision-compression and learning results developed in the subsequent sections.

\subsection{Decision Compression}
\label{subsec:decision_compression}

The geometric simplicity results established in the previous subsections suggest a distinction between two fundamentally different notions of complexity in dynamic decision problems.

The first concerns the complexity of evaluating future rewards through the optimal value function. The second concerns the complexity of representing the optimal decision rule itself. While these notions are often implicitly conflated in dynamic programming, the geometric framework developed in this paper allows them to be analyzed separately.

\begin{definition}[Value Complexity]
\label{def:value_complexity}

Let $V^{*}$ denote the optimal value function associated with the Markov decision process.
The \emph{value complexity} of the problem is defined by

\begin{equation}
\mathcal C_V = \mathcal C(V^{*}),
\label{eq:value_complexity}
\end{equation}
where $\mathcal C(\cdot)$ denotes a nonnegative complexity functional on a prescribed class of value functions.

\end{definition}

The functional $\mathcal C(\cdot)$ is intentionally left unspecified. Depending on the application, it may correspond to approximation dimension, description length, parameter count, covering complexity, or another suitable measure of functional complexity.

The geometric analysis of Sections~\ref{sec:policy_geometry} and~\ref{sec:structural_dynamic_programs} naturally induces a second notion of complexity based on the structure of the optimal policy tessellation.

\begin{definition}[Decision Complexity]
\label{def:decision_complexity}

Let

\begin{equation}
\mathcal G_{\mathrm{top}}(\pi^{*})
=
w_B C_{B}
+
w_R C_{R}
+
w_F F,
\label{eq:topological_complexity_recall}
\end{equation}
denote the topological complexity index introduced in Definition~\ref{def:topological_complexity}, where $w_B,w_R,w_F \ge 0$.

The \emph{decision complexity} of the optimal policy is defined by

\begin{equation}
\mathcal C_D = \mathcal G_{\mathrm{top}}(\pi^{*}).
\label{eq:decision_complexity}
\end{equation}

\end{definition}

The quantity $\mathcal C_D$ measures the complexity of the optimal decision rule through the geometry of its induced tessellation rather than through the representation of the value function.

This distinction motivates the following notion.

\begin{definition}[Decision Compression Ratio]
\label{def:dcr}

The \emph{decision compression ratio} is defined as

\begin{equation}
\mathrm{DCR} = \frac{\mathcal C_V}{\mathcal C_D}.
\label{eq:dcr}
\end{equation}

\end{definition}

Large values of $\mathrm{DCR}$ indicate that the optimal policy admits a substantially simpler representation than the associated value function.

The geometric simplicity results obtained in Section~\ref{subsec:geometric_simplicity} imply that the complexity of the policy tessellation grows at most linearly with the cardinality of the action space.

The following theorem formalizes the resulting compression phenomenon.

\begin{theorem}[Decision Compression Theorem]
\label{thm:decision_compression}

Suppose that the assumptions of Theorem~\ref{thm:geometric_simplicity} hold.

Then

\begin{equation}
\mathcal C_D = O(|\mathcal A|).
\label{eq:decision_complexity_linear}
\end{equation}

Consequently,

\begin{equation}
\mathrm{DCR} = \Omega \!\left(\frac{\mathcal C_V}{|\mathcal A|}\right).
\label{eq:dcr_lower_bound}
\end{equation}

\end{theorem}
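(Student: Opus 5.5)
The plan is to reduce the statement to Theorem~\ref{thm:geometric_simplicity} and then handle the ratio by elementary asymptotic manipulation. By Definition~\ref{def:decision_complexity}, $\mathcal C_D=\mathcal G_{\mathrm{top}}(\pi^*)$, with the same nonnegative weights $w_B,w_R,w_F$ as in Definition~\ref{def:topological_complexity}. Since the hypotheses of Theorem~\ref{thm:decision_compression} are exactly those of Theorem~\ref{thm:geometric_simplicity}, its conclusion \eqref{eq:geometric_simplicity} gives \eqref{eq:decision_complexity_linear} immediately.

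To make the later steps explicit, I will not cite the $O(\cdot)$ statement abstractly. Instead I will record the concrete bound obtained by substituting \eqref{eq:main_cb}--\eqref{eq:main_f}, using the sharper form $C_B\le|\mathcal A|-1$ from the proof of Theorem~\ref{thm:boundary_complexity_bound}:
\begin{equation*}
\mathcal C_D \le w_B(|\mathcal A|-1)+w_R|\mathcal A|+w_F \le (w_B+w_R+w_F)\,|\mathcal A|,
\end{equation*}
valid for all $|\mathcal A|\ge 1$. This gives an explicit constant $c:=w_B+w_R+w_F$ that depends only on the fixed weights and not on the state space.

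For \eqref{eq:dcr_lower_bound}, I first need $\mathcal C_D>0$, so that $\mathrm{DCR}$ in Definition~\ref{def:dcr} is well defined. By Theorem~\ref{thm:minimal_fragmentation}, $F=1$ and $C_R=|\mathcal A|\ge1$, so $\mathcal C_D\ge w_R|\mathcal A|+w_F>0$ as soon as at least one of $w_R,w_F$ is positive. I will state this as a standing nondegeneracy convention on the weights, which Definition~\ref{def:decision_complexity} implicitly presupposes when it divides by $\mathcal C_D$. Then, since $\mathcal C_V\ge0$ by Definition~\ref{def:value_complexity}, dividing by the upper bound yields
\begin{equation*}
\mathrm{DCR}=\frac{\mathcal C_V}{\mathcal C_D}\ge \frac{1}{c}\,\frac{\mathcal C_V}{|\mathcal A|},
\end{equation*}
which is \eqref{eq:dcr_lower_bound} with the $\Omega$-constant $1/c$.

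The main obstacle is not computational but interpretive: making the asymptotic statement precise. The $O$ and $\Omega$ must be read uniformly over the class of structured MDPs satisfying the hypotheses, with the weights fixed, and with $\mathcal C_V$ and $|\mathcal A|$ allowed to vary. The explicit inequality above establishes exactly this uniformity. It also makes clear that the constant is independent of $\mathcal S$, which is the substantive content of the compression claim. A secondary point I would flag is that the $O(|\mathcal A|)$ bound inherits whatever rigor the cited bound $C_B\le|\mathcal A|-1$ carries. I would take that bound as given from Theorem~\ref{thm:boundary_complexity_bound} rather than reprove it.
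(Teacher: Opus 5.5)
Your proposal is correct and follows essentially the same route as the paper, which substitutes $C_B=O(|\mathcal A|)$, $C_R=|\mathcal A|$ and $F=1$ from Theorem~\ref{thm:geometric_simplicity} into $\mathcal C_D=\mathcal G_{\mathrm{top}}(\pi^*)$ and then divides $\mathcal C_V$ by the resulting bound. Your explicit constant $w_B+w_R+w_F$ and your check that $\mathcal C_D>0$ make precise what the paper's step $\mathcal C_V/O(|\mathcal A|)=\Omega(\mathcal C_V/|\mathcal A|)$ leaves implicit; the positivity check needs $w_R$ or $w_F$ to be positive, since the paper only assumes nonnegative weights.
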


\begin{proof}

By Theorem~\ref{thm:geometric_simplicity}, $C_{B} = O(|\mathcal A|)$, $C_{R} = |\mathcal A|$, and $F = 1$.

Substituting these relations into \eqref{eq:topological_complexity_recall}
yields

\begin{equation}
\mathcal C_D
=
w_B O(|\mathcal A|)
+
w_R |\mathcal A|
+
w_F.
\end{equation}

Hence, $\mathcal C_D = O(|\mathcal A|)$.

Combining this relation with Definition~\ref{def:dcr} gives

\begin{equation*}
\mathrm{DCR} = \frac{\mathcal C_V}{O(|\mathcal A|)} = \Omega\!\left(\frac{\mathcal C_V}{|\mathcal A|}\right),
\end{equation*}
which establishes \eqref{eq:dcr_lower_bound}.

\end{proof}

Theorem~\ref{thm:decision_compression} highlights a structural separation between value complexity and decision complexity. Under the regularity conditions commonly encountered in structured dynamic programs, the geometric complexity of the optimal policy remains controlled by the action space, whereas the complexity of the value function may continue to increase with the complexity of the state space. The optimal policy tessellation can therefore be interpreted as a compressed geometric representation of optimal decision making.

\subsection{Discussion}
\label{subsec:structural_discussion}

The results developed throughout this section establish a unified connection between classical structural properties of dynamic programs and the geometry of optimal policies.

The starting point is the well-established theory of monotone dynamic programming and comparative statics developed by \cite{veinott1969discrete}, \cite{milgrom1994monotone}, \cite{topkis1998supermodularity}, and subsequent authors
\citep{veinott1965optimal,smith2002structural}. Under standard monotonicity, submodularity, and increasing-differences assumptions, optimal decision rules exhibit ordered behavior and admit threshold representation.

The geometric framework introduced in this paper shows that these structural properties have a direct topological interpretation. Monotonicity induces ordered action regions, threshold policies generate low-dimensional decision boundaries, and convex dominance relations produce connected and convex policy regions. Together, these properties imply that optimal policy tessellations possess limited geometric complexity despite the potentially large size of the underlying state space.

A central implication of this observation is the emergence of decision compression. While the complexity of the optimal value function may increase substantially with the dimensionality of the state space, the complexity of the corresponding policy tessellation remains governed primarily by the action space. Consequently, optimal decisions may admit representations that are considerably simpler than those required to describe the full value function.

The chain of implications established in this section may be summarized as

\begin{equation}
\small
\begin{split}
\text{Monotonicity}
\;\Longrightarrow\;
\text{Threshold Structure}
\;\Longrightarrow\;
\text{Geometric Simplicity}
\;\Longrightarrow\;
\text{Decision Compression}.
\end{split}
\label{eq:structural_chain}
\end{equation}

This perspective provides a geometric interpretation of structural dynamic programming and motivates the subsequent study of learning and approximation methods that exploit the low-complexity structure of optimal policy tessellations.

\section{Structural Geometry of Optimal Policies}
\label{sec:structural_geometry}

\subsection{Structural Geometry as a Compressed Representation}
\label{subsec:compressed_representation}

The geometric framework introduced in Section~\ref{sec:policy_geometry}
associates with each deterministic policy a partition of the state space together with the corresponding boundary structure.
The purpose of this subsection is to formalize this correspondence and to establish that the geometry induced by a policy contains all information required to recover the policy itself.\\

Let $\Pi$ denote the set of admissible deterministic policies.
For every policy $\pi\in\Pi$, define the collection of action regions

\begin{equation}
\mathcal R_a(\pi)
=
\left\{
s\in\mathcal S :
\pi(s)=a
\right\},
\qquad
a\in\mathcal A.
\label{eq:policy_action_regions}
\end{equation}

The corresponding tessellation is

\begin{equation}
\mathcal T(\pi)
=
\left\{
\mathcal R_a(\pi)
\right\}_{a\in\mathcal A},
\label{eq:policy_tessellation}
\end{equation}

and the associated boundary structure is

\begin{equation}
\Gamma(\pi)
=
\bigcup_{\substack{a,a'\in\mathcal A\\a\neq a'}}
\Gamma_{aa'}(\pi),
\label{eq:policy_boundary_structure}
\end{equation}

where

\begin{equation}
\Gamma_{aa'}(\pi)
=
\partial \mathcal R_a(\pi)
\cap
\partial \mathcal R_{a'}(\pi).
\label{eq:policy_pairwise_boundary}
\end{equation}

\begin{definition}[Policy Geometry]
\label{def:policy_geometry_general}

The geometric representation associated with a policy $\pi\in\Pi$ is defined by

\begin{equation}
\mathcal G(\pi)
=
\bigl(
\mathcal T(\pi),
\Gamma(\pi)
\bigr).
\label{eq:policy_geometry_map}
\end{equation}

The set

\begin{equation}
\mathfrak G
=
\left\{
\mathcal G(\pi)
:
\pi\in\Pi
\right\}
\label{eq:policy_geometry_space}
\end{equation}

is called the family of policy-induced geometries.

\end{definition}

The inclusion of $\Gamma(\pi)$ in \eqref{eq:policy_geometry_map} is not required to reconstruct the policy once the tessellation is known. Nevertheless, the boundary structure plays a central role in the complexity measures introduced in Section~\ref{sec:policy_geometry} and will be essential for the learnability results developed later in the paper.

The following proposition establishes that the tessellation generated by a deterministic policy uniquely determines that policy.

\begin{proposition}
\label{prop:tessellation_uniqueness}

Let $\pi_1,\pi_2\in\Pi$. Then

\begin{equation}
\mathcal T(\pi_1) = \mathcal T(\pi_2)
\label{eq:same_tessellation}
\end{equation}

if and only if

\begin{equation}
\pi_1 = \pi_2.
\label{eq:same_policy_again}
\end{equation}

Consequently,

\begin{equation}
\mathcal G(\pi_1) = \mathcal G(\pi_2) \quad \Longleftrightarrow \quad \pi_1 = \pi_2.
\label{eq:geometry_policy_equivalence}
\end{equation}

\end{proposition}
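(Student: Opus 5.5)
The plan is to treat the tessellation $\mathcal T(\pi)$ as an indexed family $\{\mathcal R_a(\pi)\}_{a\in\mathcal A}$, as in \eqref{eq:policy_tessellation}, and not as an unlabeled collection of sets. This convention must be fixed at the outset. With unlabeled partitions, two policies that differ by a permutation of action labels would induce the same collection of subsets, and the ``only if'' direction would fail. So $\mathcal T(\pi_1)=\mathcal T(\pi_2)$ will mean $\mathcal R_a(\pi_1)=\mathcal R_a(\pi_2)$ for every $a\in\mathcal A$. This interpretive step is the only genuine subtlety, and I expect it to be the main point to state carefully.

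The ``if'' direction is immediate. If $\pi_1=\pi_2$, then by \eqref{eq:policy_action_regions} the sets $\{s:\pi_1(s)=a\}$ and $\{s:\pi_2(s)=a\}$ coincide for each $a$, so the tessellations are equal.

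For the ``only if'' direction, I fix an arbitrary $s\in\mathcal S$ and let $a=\pi_1(s)$. Then $s\in\mathcal R_a(\pi_1)=\mathcal R_a(\pi_2)$, and hence $\pi_2(s)=a=\pi_1(s)$. Since $s$ is arbitrary, $\pi_1=\pi_2$. This uses only that each $\pi\in\Pi$ is a single-valued map $\mathcal S\to\mathcal A$, so that the regions partition $\mathcal S$ as in \eqref{eq:state_cover}--\eqref{eq:state_disjoint}. Equivalently, $\pi(s)$ is recovered as the unique $a$ with $s\in\mathcal R_a(\pi)$.

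For the consequence \eqref{eq:geometry_policy_equivalence}, I argue the two directions separately. If $\pi_1=\pi_2$, then the tessellations agree, and so do the boundary structures, since $\Gamma(\pi)$ in \eqref{eq:policy_boundary_structure}--\eqref{eq:policy_pairwise_boundary} is a function of the regions $\mathcal R_a(\pi)$ alone (through closures and boundaries). Hence $\mathcal G(\pi_1)=\mathcal G(\pi_2)$. Conversely, equality of the pairs $\mathcal G(\pi_1)=\mathcal G(\pi_2)$ gives in particular equality of the first components $\mathcal T(\pi_1)=\mathcal T(\pi_2)$, and the first part then yields $\pi_1=\pi_2$.
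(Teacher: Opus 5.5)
Your proof is correct and follows the paper's argument essentially verbatim: you recover $\pi(s)$ as the unique $a$ with $s\in\mathcal R_a(\pi)$, and you observe that $\Gamma(\pi)$ is determined by $\mathcal T(\pi)$. Two of your additions are clarifications rather than a different route: you insist explicitly that $\mathcal T(\pi)$ be read as an action-indexed family, which the paper uses tacitly when it writes $\mathcal R_a(\pi_1)=\mathcal R_a(\pi_2)$, and you spell out that equality of the pairs $\mathcal G$ forces equality of the first components.
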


\begin{proof}

Suppose first that $\pi_1=\pi_2$.
Then $\mathcal R_a(\pi_1)=\mathcal R_a(\pi_2)$ for every $a\in\mathcal A$, which immediately implies $\mathcal T(\pi_1)=\mathcal T(\pi_2)$.
Conversely, assume that $\mathcal T(\pi_1)=\mathcal T(\pi_2)$.

Let $s\in\mathcal S$.
Since a tessellation is a partition of the state space, there exists a unique action $a\in\mathcal A$ such that

\begin{equation}
s
\in
\mathcal R_a(\pi_1).
\label{eq:unique_region}
\end{equation}

Because the tessellations coincide,

\begin{equation}
\mathcal R_a(\pi_1) = \mathcal R_a(\pi_2),
\end{equation}
and therefore

\begin{equation}
\pi_1(s) = a = \pi_2(s).
\end{equation}

Since the argument holds for every $s\in\mathcal S$, it follows that

\begin{equation}
\pi_1 = \pi_2.
\end{equation}

The final statement follows immediately from the fact that $\Gamma(\pi)$ is uniquely determined by $\mathcal T(\pi)$.

\end{proof}

Proposition~\ref{prop:tessellation_uniqueness} shows that the policy geometry is not merely a graphical representation of a decision rule. The induced tessellation provides an equivalent description of the policy itself. Consequently, structural properties of optimal policies may be studied through the geometry of the associated tessellations without loss of information.

This observation forms the basis of the geometric analysis developed in the remainder of the paper.

\subsection{Policy Boundary Principle}
\label{subsec:policy_boundary_principle}

The geometric representation developed in the previous subsection establishes that a deterministic policy may be identified with its induced tessellation.
We now characterize the precise geometric locus at which variations of the policy can occur.

Recall that, for a policy $\pi\in\Pi$, the state space is partitioned into action regions

\begin{equation}
\mathcal T(\pi) = \left\{\mathcal R_a(\pi) \right\}_{a\in\mathcal A},
\label{eq:tessellation_boundary_principle}
\end{equation}

and that the associated boundary structure is

\begin{equation}
\Gamma(\pi)
=
\bigcup_{\substack{a,a'\in\mathcal A\\a\neq a'}}
\Gamma_{aa'}(\pi).
\label{eq:boundary_structure_boundary_principle}
\end{equation}

\noindent The following result identifies the boundary structure as the unique geometric support of decision changes.

\begin{theorem}[Policy Boundary Principle]
\label{thm:policy_boundary_principle}

Let $\pi\in\Pi$ be a deterministic policy.

Then the following statements hold.

\begin{enumerate}

\item[(i)]
For every state

\begin{equation}
s
\in
\mathcal S
\setminus
\Gamma(\pi),
\label{eq:outside_boundary}
\end{equation}

there exists an open neighborhood
$U(s)\subseteq\mathcal S$
such that

\begin{equation}
\pi(u)
=
\pi(s),
\quad
\forall u\in U(s).
\label{eq:local_constancy}
\end{equation}

\item[(ii)]
Every local change of the policy occurs on the boundary structure. More precisely, if

\begin{equation}
s\in\mathcal S
\end{equation}

is such that every neighborhood of $s$ contains points assigned to at least two distinct actions, then

\begin{equation}
s
\in
\Gamma(\pi).
\label{eq:boundary_characterization}
\end{equation}

\end{enumerate}

Consequently,

\begin{equation}
\Gamma(\pi)
=
\left\{
s\in\mathcal S :
\pi
\text{ is not locally constant at } s
\right\}.
\label{eq:boundary_support}
\end{equation}

\end{theorem}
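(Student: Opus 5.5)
The plan is to work purely point-set topologically in $\mathcal S$, with closures and neighborhoods taken relative to $\mathcal S$. The only structural input is that $\{\mathcal R_a(\pi)\}_{a\in\mathcal A}$ is a partition of $\mathcal S$ into finitely many pieces, as established for the tessellation in Section~\ref{subsec:state_space_partitions}. Here $\Gamma_{aa'}(\pi)=\partial\mathcal R_a(\pi)\cap\partial\mathcal R_{a'}(\pi)$ as in \eqref{eq:policy_pairwise_boundary}, with $\partial$ as in Definition~\ref{def:region_boundary}. I will say that $\pi$ is \emph{locally constant at} $s$ if some open neighborhood $U(s)$ satisfies $\pi\equiv\pi(s)$ on $U(s)$. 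Then (i) says $\mathcal S\setminus\Gamma(\pi)$ is contained in the locally constant set. Statement (ii) is the contrapositive of (i), since "every neighborhood contains two distinct actions" is exactly the failure of local constancy. So the substance is (i) together with the reverse inclusion needed for \eqref{eq:boundary_support}.

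For (i), I argue by contradiction. Let $s\notin\Gamma(\pi)$ and $a=\pi(s)$, and suppose no neighborhood of $s$ is contained in $\mathcal R_a(\pi)$. The key step is a pigeonhole argument over the finite action set. If, for every $a'\neq a$, there were an open $U_{a'}\ni s$ with $U_{a'}\cap\mathcal R_{a'}(\pi)=\varnothing$, then $U=\bigcap_{a'\neq a}U_{a'}$ would be a finite intersection, hence open. This $U$ would avoid every other region, so by the partition property $U\subseteq\mathcal R_a(\pi)$, a contradiction. Hence some fixed $a'\neq a$ has $s\in\overline{\mathcal R_{a'}(\pi)}$.

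Next I verify that $s\in\partial\mathcal R_a(\pi)\cap\partial\mathcal R_{a'}(\pi)$. We have $s\in\mathcal R_a(\pi)\subseteq\overline{\mathcal R_a(\pi)}$. Also $\mathcal R_{a'}(\pi)\subseteq\mathcal S\setminus\mathcal R_a(\pi)$, so $s\in\overline{\mathcal S\setminus\mathcal R_a(\pi)}$. By the previous step $s\in\overline{\mathcal R_{a'}(\pi)}$, and by disjointness $s\in\mathcal R_a(\pi)\subseteq\mathcal S\setminus\mathcal R_{a'}(\pi)$. Therefore $s\in\Gamma_{aa'}(\pi)\subseteq\Gamma(\pi)$, contradicting the choice of $s$.

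For the reverse inclusion in \eqref{eq:boundary_support}, take $s\in\Gamma_{aa'}(\pi)$. Then $s$ lies in both $\overline{\mathcal R_a(\pi)}$ and $\overline{\mathcal R_{a'}(\pi)}$, so every neighborhood of $s$ meets both regions. Since $a\neq a'$, at least one of them, say $a$, differs from $\pi(s)$. Every neighborhood therefore contains a point $u$ with $\pi(u)=a\neq\pi(s)$, so $\pi$ is not locally constant at $s$. Combining this with (i) yields the set equality.

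I expect the main obstacle to be the pigeonhole step in (i). It genuinely uses finiteness of $\mathcal A$: with infinitely many actions, a point could be approached by infinitely many distinct regions without lying in the closure of any single one. I would flag this dependence explicitly. I would also note that the neighborhoods are relative to $\mathcal S$, so that boundary points of $\mathcal S$ itself cause no difficulty.
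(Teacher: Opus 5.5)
Your proof is correct and has the same skeleton as the paper's: prove (i) directly, then obtain (ii) as its contrapositive. The difference is that you prove the steps the paper only asserts.

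First, the paper's proof of (i) simply states that $s\notin\Gamma(\pi)$ places $s$ in the interior of $\mathcal R_{\pi(s)}(\pi)$. Your finite-intersection argument is exactly the justification this needs. It also shows that finiteness of $\mathcal A$ is essential. For example, take $\mathcal S=\mathbb R$, $\mathcal A=\{0,1,2,\dots\}$, $\pi(0)=0$ and $\pi(x)=\lceil 1/|x|\rceil$ for $x\neq 0$. Every region other than $\mathcal R_0(\pi)=\{0\}$ stays at positive distance from $0$, so $0$ lies in no $\Gamma_{aa'}(\pi)$. Yet $\pi$ is not locally constant at $0$.

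Second, (ii) is logically the same implication as (i). So the paper's closing step of ``combining'' them yields only the inclusion $\{s:\pi \text{ not locally constant at } s\}\subseteq\Gamma(\pi)$. Your separate argument supplies the reverse inclusion, which the set equality \eqref{eq:boundary_support} actually requires. That argument is that every neighborhood of a point of $\Gamma_{aa'}(\pi)$ meets both $\mathcal R_a(\pi)$ and $\mathcal R_{a'}(\pi)$, and at least one of these carries an action different from $\pi(s)$.

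The paper's version is shorter but rests on an unproved claim and establishes only half of the stated equality. Your version gives a complete proof at the cost of a few extra lines.
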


\begin{proof}

We first establish (i).

Let $s \in \mathcal S \setminus \Gamma(\pi)$.

Since $\mathcal T(\pi)$ is a partition of the state space, there exists a unique action $a\in\mathcal A$ such that

\begin{equation*}
s \in \mathcal R_a(\pi).
\end{equation*}

Because $s\notin\Gamma(\pi)$, the point $s$ belongs to the interior of $\mathcal R_a(\pi)$.
Hence there exists an open neighborhood $U(s)$ satisfying

\begin{equation}
U(s) \subseteq \mathcal R_a(\pi).
\end{equation}

By definition of $\mathcal R_a(\pi)$,

\begin{equation}
\pi(u) = a = \pi(s), \quad \forall u\in U(s),
\end{equation}
which proves (i).

We now prove (ii).

Suppose that $s\notin\Gamma(\pi)$. By part (i), there exists an open neighborhood on which the policy is constant.
Therefore it is impossible for every neighborhood of $s$ to contain points assigned to different actions.

Taking the contrapositive yields

\begin{equation*}
\text{``}\pi
\text{ not locally constant at }s\text{''}
\Longrightarrow
s\in\Gamma(\pi).
\end{equation*}
Combining this implication with part (i) establishes \eqref{eq:boundary_support}.

\end{proof}

Theorem~\ref{thm:policy_boundary_principle} provides a complete geometric characterization of policy variation. The boundary structure is not merely a collection of interfaces between action regions; it is precisely the set of states at which local decision changes may occur.

Consequently, the geometry of a policy may be viewed as consisting of two qualitatively distinct components. The interiors of action regions correspond to locally invariant decision zones, whereas the boundary structure contains the entire locus of decision transitions. This localization property will play a central role in the complexity and compression analyses developed in the subsequent sections.

\subsection{Dimension-Free Policy Geometry}
\label{subsec:dimension_free_geometry}

The previous subsection established that the entire variability of a deterministic policy is concentrated on its boundary structure. Consequently, the complexity of a policy may be studied through the combinatorial organization of its action regions and decision boundaries rather than through the ambient state space itself.

This observation naturally raises the following question: to what extent does the complexity of a policy geometry depend on the dimension of the underlying state space?

To address this issue, we introduce a complexity measure based exclusively on the tessellation structure.

\begin{definition}[Topological Policy Complexity]
\label{def:topological_policy_complexity}

Let $\mathcal G(\pi) =(\mathcal T(\pi),\Gamma(\pi))$ be the geometry induced by a deterministic policy.

The associated topological complexity is defined by

\begin{equation}
\mathcal C_{\mathrm{top}}(\pi)
=
C_{R}(\pi)
+
C_{B}(\pi)
+
F(\pi),
\label{eq:topological_policy_complexity}
\end{equation}

where
\begin{itemize}
\item $C_{R}(\pi)$ denotes the region complexity introduced in Definition~\ref{def:region_complexity};
\item $C_{B}(\pi)$ denotes the boundary complexity introduced in Definition~\ref{def:boundary_complexity};
\item $F(\pi)$ denotes the fragmentation index introduced in Definition~\ref{def:fragmentation_index}.
\end{itemize}
\end{definition}

The quantity $\mathcal C_{\mathrm{top}}(\pi)$ depends only on the combinatorial structure of the tessellation and is therefore independent of metric notions such as volume, diameter, curvature, or ambient dimension.

The next result shows that structural dynamic programs generate geometries whose complexity is controlled entirely by the action space.

\begin{theorem}[Dimension-Free Geometry Theorem]
\label{thm:dimension_free_geometry}

Suppose that the assumptions of Theorems~\ref{thm:ordered_regions}, \ref{thm:threshold_representation},
and \ref{thm:convex_region} hold.

\noindent Then there exists a constant $K>0$, depending only on the cardinality of the action space, such that

\begin{equation}
\mathcal C_{\mathrm{top}}(\pi^{*}) \le K |\mathcal A|.
\label{eq:dimension_free_bound}
\end{equation}

Consequently,

\begin{equation}
\mathcal C_{\mathrm{top}}(\pi^{*}) = O(|\mathcal A|),
\label{eq:dimension_free_order_bound}
\end{equation}

independently of

\begin{equation}
d = \dim(\mathcal S).
\label{eq:dimension_definition}
\end{equation}

\end{theorem}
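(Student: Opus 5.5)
The bound follows by controlling the three terms of $\mathcal C_{\mathrm{top}}(\pi^{*}) = C_R + C_B + F$ in Definition~\ref{def:topological_policy_complexity} separately, using only results already established in Section~\ref{sec:structural_dynamic_programs}, and then collecting constants. None of the invoked results depends on the ambient dimension $d$, so the resulting constant is dimension-free.

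\textbf{Region complexity.} Under the hypotheses of Theorem~\ref{thm:convex_region}, every action region is convex. Corollary~\ref{cor:connected_action_regions} then gives that every region is connected, and Corollary~\ref{cor:convex_region_complexity} gives $C_R(\pi^{*}) = |\mathcal A|$ exactly. Here I read $\mathcal A$ as the set of actions whose region is nonempty; an empty region contributes $0$ components, so in general $C_R \le |\mathcal A|$. As a cross-check, Corollary~\ref{cor:region_complexity_bound} from Theorem~\ref{thm:ordered_regions} yields the weaker $C_R = O(|\mathcal A|)$, but the convex bound is sharper and is the one I will use.

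\textbf{Fragmentation.} Dividing the previous bound by $|\mathcal A|$ gives $F(\pi^{*}) = C_R/|\mathcal A| \le 1$, with equality by Theorem~\ref{thm:minimal_fragmentation}. Since $|\mathcal A| \ge 1$, this gives $F \le |\mathcal A|$.

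\textbf{Boundary complexity.} Under the threshold hypotheses, Theorem~\ref{thm:boundary_complexity_bound} gives $C_B \le |\mathcal A| - 1$. This is the step that needs care. The argument there requires that a nonempty $\Gamma_{aa'}$ occur only for actions $a,a'$ that are adjacent in the action order. By Corollary~\ref{cor:boundary_graph_representation}, each interface lies on a threshold graph $x = b_{aa'}(z)$. Along each fiber $\{z\}\times\mathcal X$, the threshold representation (Theorem~\ref{thm:threshold_representation}) and the monotone policy of Proposition~\ref{prop:monotone_optimal_policy} make the action sequence increasing in $x$. Hence only consecutive actions meet on that fiber, and across fibers at most $|\mathcal A|-1$ consecutive pairs exist if $\mathcal A$ is totally ordered.

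If $\mathcal A$ is only partially ordered, or closures taken across different fibers allow non-adjacent regions to touch, this fails. As a fallback I would use the trivial bound $C_B \le \binom{|\mathcal A|}{2}$. That bound still depends only on $|\mathcal A|$, which is all the statement requires, since $K$ may depend on $|\mathcal A|$. The fiber-adjacency argument is needed only for the sharper linear constant.

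\textbf{Conclusion.} Summing the three bounds gives $\mathcal C_{\mathrm{top}}(\pi^{*}) \le |\mathcal A| + (|\mathcal A|-1) + 1 \le 3|\mathcal A|$, so one may take $K = 3$. Every ingredient (convexity, connectedness, threshold graphs, finiteness of $\mathcal A$) is a combinatorial statement about the partition that never refers to $d$. Therefore the bound holds uniformly in $d = \dim(\mathcal S)$, which gives \eqref{eq:dimension_free_order_bound}.
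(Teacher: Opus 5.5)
Your proposal is correct and follows essentially the same route as the paper: both bound $C_R$, $C_B$ and $F$ separately, via Corollary~\ref{cor:convex_region_complexity}, the linear boundary bound of Theorem~\ref{thm:boundary_complexity_bound} (which the paper invokes through Theorem~\ref{thm:geometric_simplicity}), and $F=1$, and then sum; your explicit $K=3$ is a minor sharpening of the paper's unquantified $O(|\mathcal A|)$. One correction to your side remark: the fallback $C_B\le\binom{|\mathcal A|}{2}$ would only give the first inequality with a $K$ that grows with $|\mathcal A|$, not the consequence $\mathcal C_{\mathrm{top}}(\pi^{*})=O(|\mathcal A|)$, so the linear bound on $C_B$ is genuinely required rather than merely sharpening the constant.
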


\begin{proof}

By Corollary~\ref{cor:convex_region_complexity},

\begin{equation}
C_{R}(\pi^{*}) = |\mathcal A|.
\label{eq:dimension_free_region}
\end{equation}

Furthermore, Theorem~\ref{thm:geometric_simplicity} implies

\begin{equation}
C_{B}(\pi^{*}) = O(|\mathcal A|).
\label{eq:dimension_free_boundary}
\end{equation}

Finally, Corollary~\ref{cor:convex_fragmentation} yields

\begin{equation}
F(\pi^{*}) = 1.
\label{eq:dimension_free_fragmentation}
\end{equation}

Combining \eqref{eq:dimension_free_region}, \eqref{eq:dimension_free_boundary}, and \eqref{eq:dimension_free_fragmentation}
with \eqref{eq:topological_policy_complexity} gives

\begin{equation}
\mathcal C_{\mathrm{top}}(\pi^{*}) = O(|\mathcal A|).
\end{equation}

Since none of the preceding quantities depends on the ambient dimension $d$, the resulting bound is dimension-free.

\end{proof}

Theorem~\ref{thm:dimension_free_geometry}
shows that, under classical structural assumptions, the complexity of an optimal policy is governed by the organization of the action space rather than by the dimensionality of the state space.

This result provides a first indication that policy geometries may admit substantially more compact representations than value-function descriptions in high-dimensional environments. The implications of this phenomenon are investigated in the following subsections.

\subsection{Geometric Stability}
\label{subsec:geometric_stability}

The previous subsections established that optimal policies admit a compact geometric representation and that all local decision changes are concentrated on the boundary structure.
A natural question is whether this geometric organization remains stable under perturbations of the underlying optimization problem.
To address this issue, we introduce a quantitative measure of local decision robustness.

\begin{definition}[Action Gap]
\label{def:action_gap}

Let $\pi^{*}$ denote an optimal policy and let $a^{*}(s) = \pi^{*}(s)$ be the optimal action at state $s\in\mathcal S$.

The action gap at state $s$ is defined by

\begin{equation}
g(s) = Q^{*}\!\left(s,a^{*}(s)\right) - \max_{a\in\mathcal A\setminus\{a^{*}(s)\}} Q^{*}(s,a).
\label{eq:action_gap}
\end{equation}

\end{definition}

The action gap measures the separation between the optimal action and its closest competitor.

By construction,

\begin{equation}
g(s)\ge 0, \quad s\in\mathcal S.
\label{eq:gap_nonnegative}
\end{equation}
Moreover, $g(s)=0$ whenever at least two actions attain the same optimal value.

The following result links the action gap to the boundary structure introduced in Section~\ref{sec:policy_geometry}.

\begin{proposition}
\label{prop:boundary_gap_relation}

Assume that the functions $Q^{*}(\cdot,a)$ are continuous on $\mathcal S$ for every $a\in\mathcal A$.

Then

\begin{equation}
\Gamma(\pi^{*})
\subseteq
\left\{
s\in\mathcal S :
g(s)=0
\right\}.
\label{eq:boundary_gap_relation}
\end{equation}

\end{proposition}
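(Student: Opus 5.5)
The plan is to show that every boundary point is a state at which two distinct actions both attain the maximum of $Q^{*}(s,\cdot)$. Since $\mathcal A$ has at least two elements there, the second-best value then equals the best value, and the action gap vanishes.

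\textbf{Step 1 (greediness of $\pi^{*}$).} Since $\mathcal A$ is finite and $\pi^{*}$ is optimal, the Bellman optimality equation gives
$$Q^{*}(s,\pi^{*}(s)) = V^{*}(s) = \max_{b\in\mathcal A} Q^{*}(s,b)$$
for every $s\in\mathcal S$. Hence every $u\in\mathcal R_a(\pi^{*})$ satisfies $Q^{*}(u,a)\ge Q^{*}(u,b)$ for all $b\in\mathcal A$. I expect this to be the main point needing care. Equation~\eqref{eq:optimal_policy} is stated only in terms of $V^{\pi^{*}}=V^{*}$, so I would invoke the standard fact that a stationary optimal policy is greedy with respect to $Q^{*}$ \citep{puterman2014markov,bertsekas2012dynamic}. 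The argument applies the Bellman operator of $\pi^{*}$ to $V^{*}$ and uses the fixed-point identity $T^{\pi^{*}}V^{*}=V^{*}=TV^{*}$, which forces $Q^{*}(s,\pi^{*}(s))=\max_b Q^{*}(s,b)$ pointwise.

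\textbf{Step 2 (limits of region points).} Fix $s\in\Gamma(\pi^{*})$. By \eqref{eq:policy_boundary_structure}--\eqref{eq:policy_pairwise_boundary}, there exist $a\neq a'$ with $s\in\partial\mathcal R_a\cap\partial\mathcal R_{a'}\subseteq\overline{\mathcal R_a}\cap\overline{\mathcal R_{a'}}$. Working in the metric (Euclidean) topology on $\mathcal S$, choose sequences $u_n\in\mathcal R_a$ and $v_n\in\mathcal R_{a'}$ with $u_n\to s$ and $v_n\to s$. By Step 1, for every $b\in\mathcal A$,
$$Q^{*}(u_n,a)\ge Q^{*}(u_n,b) \quad\text{and}\quad Q^{*}(v_n,a')\ge Q^{*}(v_n,b).$$
Letting $n\to\infty$ and using continuity of each $Q^{*}(\cdot,b)$ (finitely many inequalities, each preserved in the limit) gives
$$Q^{*}(s,a)=Q^{*}(s,a')=\max_{b\in\mathcal A}Q^{*}(s,b).$$
If $\mathcal S$ is only a topological space, I would replace the sequences by nets, or argue directly: the closed set $\{u : Q^{*}(u,a)\ge Q^{*}(u,b)\}$ contains $\mathcal R_a$ and therefore contains its closure.

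\textbf{Step 3 (conclusion).} Let $a^{*}(s)=\pi^{*}(s)$. Since $a\neq a'$, at least one of them, call it $\tilde a$, differs from $a^{*}(s)$. Then
$$\max_{b\neq a^{*}(s)}Q^{*}(s,b)\ \ge\ Q^{*}(s,\tilde a)\ =\ \max_b Q^{*}(s,b)\ =\ Q^{*}(s,a^{*}(s)),$$
so $g(s)\le 0$. Combined with \eqref{eq:gap_nonnegative}, this yields $g(s)=0$, which proves \eqref{eq:boundary_gap_relation}.
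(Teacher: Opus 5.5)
Your proof is correct and follows essentially the paper's argument: a boundary point is a limit of states from two distinct action regions, and continuity of each $Q^{*}(\cdot,b)$ forces a tie at the maximum, so the gap vanishes. The differences are minor and in your favour. You read off the two regions directly from $\Gamma_{aa'}(\pi^{*})\subseteq\overline{\mathcal R_a}\cap\overline{\mathcal R_{a'}}$ rather than invoking Theorem~\ref{thm:policy_boundary_principle}, and you make explicit what the paper's one-line proof leaves implicit: greediness of $\pi^{*}$ with respect to $Q^{*}$, and that the tied actions are maximizers, one of which differs from $\pi^{*}(s)$.
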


\begin{proof}

Let $s\in\Gamma(\pi^{*})$. By Theorem~\ref{thm:policy_boundary_principle}, every neighborhood of $s$ contains states assigned to at least two distinct actions.
Since the state-action value functions are continuous, at least two competing actions must attain the same value at $s$.

Therefore, $g(s)=0$.
\end{proof}

We now consider perturbations of the optimal state--action value function.

\begin{assumption}[Uniform Perturbation]
\label{ass:uniform_perturbation}

Let

\begin{equation}
\widetilde Q(s,a)
=
Q^{*}(s,a)
+
\Delta(s,a),
\label{eq:perturbed_q}
\end{equation}

where

\begin{equation}
\|\Delta\|_{\infty}
=
\sup_{s\in\mathcal S}
\sup_{a\in\mathcal A}
|\Delta(s,a)|
\le
\varepsilon.
\label{eq:uniform_perturbation}
\end{equation}

\end{assumption}

Let $\widetilde\pi$ denote an optimal policy associated with $\widetilde Q$.

The next theorem shows that policy changes can only occur at states possessing a sufficiently small action gap.

\begin{theorem}[Boundary Stability Theorem]
\label{thm:boundary_stability}

Under Assumption~\ref{ass:uniform_perturbation}, $\widetilde\pi(s) = \pi^{*}(s)$ for every state satisfying

\begin{equation}
g(s) > 2\varepsilon.
\label{eq:stability_margin}
\end{equation}

Consequently,

\begin{equation}
\left\{
s\in\mathcal S :
\widetilde\pi(s)\neq\pi^{*}(s)
\right\}
\subseteq
\left\{
s\in\mathcal S :
g(s)\le 2\varepsilon
\right\}.
\label{eq:policy_change_set}
\end{equation}

\end{theorem}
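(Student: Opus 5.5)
The plan is a direct comparison of perturbed action values, with no geometry involved. I interpret $\widetilde\pi$ as a greedy selector for $\widetilde Q$, that is, $\widetilde\pi(s)\in\arg\max_{a\in\mathcal A}\widetilde Q(s,a)$ for every $s$. This is the meaning of ``optimal policy associated with $\widetilde Q$'' needed for the statement. Since $\mathcal A$ is finite, the maximum is attained, so $\widetilde\pi$ is well defined.

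Fix $s$ with $g(s)>2\varepsilon$ and write $a^*=\pi^*(s)$. The key step is to show that $a^*$ is the unique maximizer of $\widetilde Q(s,\cdot)$. Take any $a\neq a^*$. By Assumption~\ref{ass:uniform_perturbation},
\[
\widetilde Q(s,a^*)\ge Q^*(s,a^*)-\varepsilon
\quad\text{and}\quad
\widetilde Q(s,a)\le Q^*(s,a)+\varepsilon .
\]
Subtracting these and using Definition~\ref{def:action_gap}, which gives $Q^*(s,a^*)-Q^*(s,a)\ge g(s)$, I obtain
\[
\widetilde Q(s,a^*)-\widetilde Q(s,a)\ge g(s)-2\varepsilon>0 .
\]
So every competing action is strictly worse under $\widetilde Q$. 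The argmax is therefore the singleton $\{a^*\}$, and any greedy $\widetilde\pi$ must satisfy $\widetilde\pi(s)=a^*=\pi^*(s)$. The strict inequality $g(s)>2\varepsilon$ is exactly what rules out ties, so no tie-breaking convention is needed.

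The inclusion \eqref{eq:policy_change_set} follows by contraposition: if $\widetilde\pi(s)\neq\pi^*(s)$, then $g(s)\le 2\varepsilon$.

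The main obstacle is interpretive rather than technical. If ``optimal for $\widetilde Q$'' meant something other than greedy with respect to $\widetilde Q$ (for example, optimal in some perturbed MDP), the argument would need $\widetilde Q$ to be that MDP's action-value function. I will state the greedy interpretation explicitly, as the assumption's formulation in terms of $\widetilde Q$ suggests. One degenerate case should be noted: if $|\mathcal A|=1$, the maximum over the empty set in $g$ is taken to be $-\infty$, and the claim holds trivially because there is only one possible action.
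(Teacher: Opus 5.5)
Your proposal is correct and follows the paper's proof essentially line by line. You fix $s$, use $Q^{*}(s,a^{*})-Q^{*}(s,a)\ge g(s)$ together with the uniform bound on $\Delta$ to get $\widetilde Q(s,a^{*})-\widetilde Q(s,a)\ge g(s)-2\varepsilon>0$ for every $a\neq a^{*}$, and conclude by contraposition. Your greedy reading of $\widetilde\pi$ and your remark that the strict inequality makes $a^{*}$ the unique maximizer, so no tie-breaking is needed, spell out a step the paper leaves implicit when it passes from ``$a^{*}$ remains optimal'' to $\widetilde\pi(s)=\pi^{*}(s)$.
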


\begin{proof}

Fix $s\in\mathcal S$ and let $a^{*}=\pi^{*}(s)$. For every competing action $a\neq a^{*}$,

\begin{equation}
Q^{*}(s,a^{*})
-
Q^{*}(s,a)
\ge
g(s).
\label{eq:gap_proof}
\end{equation}
Using Assumption~\ref{ass:uniform_perturbation},

\begin{align*}
\widetilde Q(s,a^{*})
-
\widetilde Q(s,a)
&=
Q^{*}(s,a^{*})
-
Q^{*}(s,a)
+
\Delta(s,a^{*})
-
\Delta(s,a)
\\
&\ge
g(s)-2\varepsilon.
\end{align*}
Therefore, if $g(s)>2\varepsilon$,

\begin{equation*}
\widetilde Q(s,a^{*}) > \widetilde Q(s,a) \quad \forall a\neq a^{*}.
\end{equation*}
Hence $a^{*}$ remains optimal and

\begin{equation*}
\widetilde\pi(s) = \pi^{*}(s).
\end{equation*}
This proves the claim.

\end{proof}

Theorem~\ref{thm:boundary_stability} establishes a geometric robustness property of structured optimal policies. States located deep inside an action region possess a strictly positive decision margin and therefore remain unaffected by sufficiently small perturbations.
Only states with small action gaps may experience a change in the optimal action. Since the action gap vanishes on the boundary structure, policy modifications are necessarily concentrated near the interfaces separating neighboring action regions.

Combined with Theorems~\ref{thm:policy_boundary_principle} and~\ref{thm:dimension_free_geometry}, this result shows that the geometric complexity of an optimal policy is not only localized but also stable under small perturbations of the underlying dynamic program.

\subsection{Structural Compression Theorem}
\label{subsec:structural_compression}

The preceding subsections established four fundamental properties of policy geometries.

First, Proposition~\ref{prop:tessellation_uniqueness} showed that the geometry induced by a deterministic policy provides a faithful representation of the policy itself.
Second, Theorem~\ref{thm:policy_boundary_principle} established that all local decision changes are concentrated on the boundary structure.
Third, Theorem~\ref{thm:dimension_free_geometry} demonstrated that the combinatorial complexity of the geometry remains independent of the ambient state-space dimension.
Finally, Theorem~\ref{thm:boundary_stability} showed that the geometric representation is stable under sufficiently small perturbations of the underlying optimization problem.

Taken together, these properties suggest that policy geometries provide a compressed description of optimal decision rules.
The purpose of this subsection is to formalize this observation.

Recall that the decision compression ratio introduced in Section~\ref{sec:structural_dynamic_programs} is defined by

\begin{equation}
\mathrm{DCR}
=
\frac{\mathcal C_V}
{\mathcal C(\mathcal G^{*})},
\label{eq:dcr_structural_geometry}
\end{equation}
where $\mathcal C_V$ denotes the complexity of the value-function representation and $\mathcal C(\mathcal G^{*})$ denotes the complexity of the optimal policy geometry.

Theorem~\ref{thm:decision_compression} established the general lower bound

\begin{equation}
\mathrm{DCR}
=
\Omega
\!\left(
\frac{\mathcal C_V}
{|\mathcal A|}
\right).
\label{eq:dcr_general_bound}
\end{equation}

The next result characterizes the asymptotic implications of this bound for structured dynamic programs.

\begin{theorem}[Structural Compression Theorem]
\label{thm:structural_compression}

Suppose that the assumptions of Theorems~\ref{thm:dimension_free_geometry} and~\ref{thm:boundary_stability} hold.

Assume furthermore that

\begin{equation}
\mathcal C_V
=
\Theta(n),
\label{eq:value_complexity_linear}
\end{equation}

for some problem-size parameter $n$, and that

\begin{equation}
|\mathcal A| = O(1).
\label{eq:bounded_actions}
\end{equation}

Then

\begin{equation}
\mathrm{DCR} = \Omega(n).
\label{eq:structural_compression_result}
\end{equation}

\end{theorem}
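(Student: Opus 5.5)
The plan is to combine the dimension-free bound on the decision complexity with the growth assumption on $\mathcal C_V$, and then let the bounded action space absorb all remaining constants. Concretely, the proof will run as follows.

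\textbf{Step 1: bound the denominator.} By Theorem~\ref{thm:dimension_free_geometry}, whose assumptions are in force, there is a constant $K>0$ with $\mathcal C_{\mathrm{top}}(\pi^*)\le K|\mathcal A|$. Equivalently, via Theorem~\ref{thm:decision_compression}, $\mathcal C_D = w_B C_B + w_R C_R + w_F F$ with $C_B\le |\mathcal A|-1$, $C_R=|\mathcal A|$ and $F=1$. This gives the explicit bound
\[
\mathcal C_D \le (w_B+w_R+w_F)\,|\mathcal A|.
\]
Here I would make sure the complexity in the denominator of \eqref{eq:dcr_structural_geometry} is read as $\mathcal C_D=\mathcal G_{\mathrm{top}}(\pi^*)$, consistent with Definition~\ref{def:dcr}. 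I would also note that it is strictly positive, since $C_R=|\mathcal A|\ge 1$ and the weights are nonnegative with $w_R>0$ (or $w_F>0$). This positivity makes the ratio well defined.

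\textbf{Step 2: use the bounded action space.} Since $|\mathcal A|=O(1)$, there is an $M$ with $|\mathcal A|\le M$ uniformly in $n$. Then
\[
\mathcal C_D\le c_1 := (w_B+w_R+w_F)M,
\]
a constant independent of $n$. This is the step that requires care. The $O(|\mathcal A|)$ bounds must be uniform across the family of problems indexed by $n$, so that $K$ does not secretly depend on $n$. The weights are fixed, and the bounds $C_B\le|\mathcal A|-1$, $C_R=|\mathcal A|$, $F=1$ are exact, so this uniformity holds.

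\textbf{Step 3: combine with the growth of $\mathcal C_V$.} From $\mathcal C_V=\Theta(n)$ there are $c_2>0$ and $n_0$ with $\mathcal C_V\ge c_2 n$ for all $n\ge n_0$. Hence
\[
\mathrm{DCR}=\frac{\mathcal C_V}{\mathcal C_D}\ge \frac{c_2}{c_1}\,n, \qquad n\ge n_0,
\]
which is \eqref{eq:structural_compression_result}. Equivalently, one can plug $\mathcal C_V=\Theta(n)$ and $|\mathcal A|=O(1)$ directly into \eqref{eq:dcr_general_bound}.

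\textbf{Role of the stability hypothesis.} The assumption of Theorem~\ref{thm:boundary_stability} is not needed for the inequality itself. I would remark that it only guarantees the compressed representation is robust. The main obstacle is Step 2, namely checking that the hidden constants are genuinely independent of $n$.
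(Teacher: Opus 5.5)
Your proposal is correct and follows the same route as the paper. The paper bounds the denominator by $O(|\mathcal A|)$ via Theorem~\ref{thm:dimension_free_geometry}, substitutes $\mathcal C_V=\Theta(n)$ into the general bound $\mathrm{DCR}=\Omega(\mathcal C_V/|\mathcal A|)$, absorbs $|\mathcal A|=O(1)$, and likewise never uses the stability hypothesis. Your version is somewhat more careful than the paper's in four respects: the explicit constants, the check that they are uniform in $n$, the positivity of the denominator, and reading the denominator as $\mathcal G_{\mathrm{top}}(\pi^*)$ rather than the full index $\mathcal C(\mathcal G)$, whose $L$ and $\kappa$ terms are not controlled by Theorem~\ref{thm:dimension_free_geometry}.
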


\begin{proof}

By Theorem~\ref{thm:dimension_free_geometry},

\begin{equation}
\mathcal C(\mathcal G^{*}) = O(|\mathcal A|).
\label{eq:geometry_complexity_bound}
\end{equation}

Combining \eqref{eq:geometry_complexity_bound} with \eqref{eq:dcr_general_bound} yields

\begin{equation}
\mathrm{DCR}
=
\Omega
\!\left(
\frac{\mathcal C_V}
{|\mathcal A|}
\right).
\label{eq:dcr_proof_step}
\end{equation}

Substituting \eqref{eq:value_complexity_linear} into \eqref{eq:dcr_proof_step} gives

\begin{equation}
\mathrm{DCR}
=
\Omega
\!\left(
\frac{n}
{|\mathcal A|}
\right).
\end{equation}

Using \eqref{eq:bounded_actions}, we obtain

\begin{equation}
\mathrm{DCR}
=
\Omega(n),
\end{equation}

which establishes the result.

\end{proof}

Theorem~\ref{thm:structural_compression} shows that the compression achieved by policy geometry is an asymptotic phenomenon rather than a finite-dimensional artifact.
As the intrinsic complexity of the value-function representation grows, the complexity of the geometric representation remains controlled by the action structure of the problem. Consequently, the gap between value complexity and geometric complexity increases at least linearly with problem size.
Combined with the localization and stability properties established in the previous subsections, this result suggests that policy geometry captures the essential decision structure of a dynamic program using substantially fewer degrees of freedom than conventional value-based representations.

\subsection{Geometry and Learnability}
\label{subsec:geometry_and_learnability}

The results established in the preceding subsections suggest that the geometric representation of an optimal policy may provide a substantially simpler object to learn than the policy itself.

Indeed, Proposition~\ref{prop:tessellation_uniqueness} showed that the policy geometry contains the complete information required to reconstruct a deterministic policy, while Theorem~\ref{thm:policy_boundary_principle} established that all decision transitions are concentrated on the boundary structure.

This observation motivates a geometric formulation of the policy-learning problem.

\begin{definition}[Boundary Learning Problem]
\label{def:boundary_learning_problem}

Let

\begin{equation}
\mathcal G(\pi^{*})
=
\bigl(
\mathcal T(\pi^{*}),
\Gamma(\pi^{*})
\bigr)
\end{equation}
denote the geometry induced by an optimal policy.

The boundary learning problem consists of recovering the boundary structure $\Gamma(\pi^{*})$ from observed state-action information.
\end{definition}

The objective of the boundary learning problem is not to estimate the policy value function or to approximate the policy on the entire state space. Instead, the goal is to identify the geometric locus at which optimal decisions change.

To quantify the intrinsic difficulty of this task, we introduce the following notion.

\begin{definition}[Boundary Sample Complexity]
\label{def:boundary_sample_complexity}

Let $\widehat{\Gamma}$ denote an estimator of the boundary structure $\Gamma(\pi^{*})$.

For a prescribed accuracy criterion $\mathcal E(\widehat{\Gamma},\Gamma(\pi^{*}))$, the boundary sample complexity is defined as the smallest number of observations required to guarantee

\begin{equation}
\mathcal E
\bigl(
\widehat{\Gamma},
\Gamma(\pi^{*})
\bigr)
\le
\delta,
\end{equation}
for a given tolerance level $\delta>0$.

The corresponding quantity is denoted by

\begin{equation}
N_{\Gamma}(\delta).
\label{eq:boundary_sample_complexity}
\end{equation}
\end{definition}

Definition~\ref{def:boundary_sample_complexity} is intentionally model-independent. No particular statistical framework is assumed at this stage. The purpose of the definition is merely to isolate the learning difficulty associated with the boundary structure itself.

The following theorem formalizes the geometric reduction underlying the subsequent learning framework.

\begin{theorem}[Learnability Through Boundaries]
\label{thm:learnability_through_boundaries}

Suppose that the assumptions of Theorems~\ref{thm:policy_boundary_principle}, \ref{thm:dimension_free_geometry}, and \ref{thm:boundary_stability} hold.

Then the recovery of the optimal policy $\pi^{*}$ is equivalent to the recovery of the boundary structure $\Gamma(\pi^{*})$.

More precisely, there exists a reconstruction operator

\begin{equation}
\mathfrak R :
\Gamma(\pi^{*})
\longmapsto
\pi^{*}
\label{eq:reconstruction_operator}
\end{equation}

such that

\begin{equation}
\pi^{*}
=
\mathfrak R
\!\left(
\Gamma(\pi^{*})
\right).
\label{eq:policy_reconstruction}
\end{equation}

Consequently, the policy-learning problem admits the equivalent geometric formulation

\begin{equation}
\pi^{*} \quad \Longleftrightarrow \quad \Gamma(\pi^{*}).
\label{eq:learning_equivalence}
\end{equation}

\end{theorem}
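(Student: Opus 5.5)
The plan is to construct the reconstruction operator $\mathfrak R$ explicitly in three steps: (a) decompose the complement $\mathcal S\setminus\Gamma(\pi^{*})$ into locally invariant pieces, (b) label those pieces with actions, and (c) assign actions on $\Gamma(\pi^{*})$ itself. The converse direction, $\pi^{*}\mapsto\Gamma(\pi^{*})$, is immediate: by Proposition~\ref{prop:tessellation_uniqueness} the policy determines $\mathcal T(\pi^{*})$, which in turn determines $\Gamma(\pi^{*})$ through \eqref{eq:policy_pairwise_boundary}. So the substance of \eqref{eq:learning_equivalence} is the existence of $\mathfrak R$.

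For step (a), I would use Theorem~\ref{thm:policy_boundary_principle}(i). It says $\pi^{*}$ is locally constant on the open set $\mathcal S\setminus\Gamma(\pi^{*})$, so $\pi^{*}$ is constant on each connected component of that set. Combining this with Corollary~\ref{cor:connected_action_regions} and Corollary~\ref{cor:convex_region_complexity} ($C_R=|\mathcal A|$), each component should be contained in the interior of a single region $\mathcal R_a$. Conversely, each nonempty $\operatorname{int}\mathcal R_a$ should be exactly one such component, because $\mathcal R_a$ is convex. This yields a finite family of cells whose number is bounded through Theorem~\ref{thm:dimension_free_geometry}.

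Step (b) is the main obstacle. The set $\Gamma(\pi^{*})$, viewed purely as a point set, carries no action labels; for example, swapping two actions leaves $\Gamma$ unchanged. The labelling must therefore come from structure assumed in the theorem, and I would take it from the ordered structure. Under Theorem~\ref{thm:threshold_representation} and Corollary~\ref{cor:boundary_graph_representation}, $\Gamma$ is a union of graphs $x=b_{aa'}(z)$. For each fixed $z$, the fibre $\mathcal X\times\{z\}$ is cut into intervals ordered along $x$. Monotonicity of $\pi^{*}$ (Proposition~\ref{prop:monotone_optimal_policy}) and order-convexity (Theorem~\ref{thm:ordered_regions}) force the actions to appear along this fibre in increasing order of $\mathcal A$. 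The cells are then labelled by their rank in this order.

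The assumption I expect to need here is that the set of actions actually used is known; this is implicit in the statement's identification of $\Gamma$ with the adjacent pairs $a\prec a'$. I would make it explicit by letting $\mathfrak R$ act on $\Gamma$ together with the ordered list of active actions, which is a negligible $O(|\mathcal A|)$ amount of extra data. If this is not granted, the claimed equivalence holds only up to a relabelling of actions.

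For step (c), on $\Gamma$ itself I would use the convention \eqref{eq:threshold_representation_right}: $x\ge b_{aa'}(z)$ gives $a'$. Each boundary point therefore receives the larger of the adjacent actions, and this agrees with $\pi^{*}$ by Theorem~\ref{thm:threshold_representation}. Checking that the resulting $\mathfrak R(\Gamma(\pi^{*}))$ coincides pointwise with $\pi^{*}$ then completes the argument. Theorem~\ref{thm:boundary_stability} enters only as a remark: $\mathfrak R$ is insensitive to perturbations away from the set where $g\le 2\varepsilon$.
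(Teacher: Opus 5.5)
Your proof is correct within the paper's framework, and it is more careful than the paper's own argument. The paper uses only two ingredients: Proposition~\ref{prop:tessellation_uniqueness} (the tessellation determines the policy) and Theorem~\ref{thm:policy_boundary_principle} ($\Gamma(\pi^{*})$ is the full support of policy variation). From these it concludes directly that ``the tessellation is uniquely determined by the partition induced by the decision boundaries,'' without invoking any order or threshold structure.

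That inference is exactly the step you isolate. By itself, $\Gamma$ yields only the unlabelled components of $\mathcal S\setminus\Gamma$ (your step (a)). It does not say which action each component carries, nor what the policy is on $\Gamma$. Since $\Gamma$ is invariant under permutations of $\mathcal A$, the two cited results cannot deliver the labelled tessellation. Your steps (b) and (c) supply precisely this missing information, using monotonicity, the graph form of the interfaces, and the convention $x\ge b_{aa'}(z)\Rightarrow a'$.

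The paper's route buys brevity, but it only covers the trivial reading. Read literally for one fixed $\pi^{*}$, the statement holds for any map defined on the one-point set $\{\Gamma(\pi^{*})\}$. The substantive version is injectivity of $\pi\mapsto\Gamma(\pi)$ on a class, which the paper needs later when $\mathfrak R$ is defined on all of $\mathfrak G$. That is what your construction actually proves for increasing threshold policies. As you note, it holds only up to relabelling if the order on $\mathcal A$ is not used.

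Two refinements:
\begin{itemize}
\item The ``ordered list of active actions'' is not extra data here. $\mathcal A$ and its order are part of the model, and $C_{R}=|\mathcal A|$ (Corollary~\ref{cor:convex_region_complexity}) forces every action region to be nonempty. So your $\mathfrak R$ depends on $\Gamma$ alone.
\item The graph form in Corollary~\ref{cor:boundary_graph_representation} is load-bearing in step (b). Ranking cells along a fibre gives the right answer only on a fibre that meets every cell. It is the requirement that each interface between consecutive actions cross every fibre that guarantees such a fibre exists and fixes a single global ranking. Without it, the labelling is genuinely ambiguous even given the order on $\mathcal A$. Split $[0,1]^2$ into four quadrants, put $a_1$ bottom-left and $a_4$ top-right, and place $a_2,a_3$ on the other two quadrants in either order. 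Both policies are increasing for the product order, have convex regions, and share the same $\Gamma$.
\end{itemize}

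Finally, neither your argument nor the paper's uses Theorem~\ref{thm:boundary_stability}.
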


\begin{proof}

By Proposition~\ref{prop:tessellation_uniqueness}, a deterministic policy is uniquely determined by its tessellation.

By Theorem~\ref{thm:policy_boundary_principle}, the boundary structure constitutes the complete support of policy variation.

Therefore, once the boundary structure is known, the corresponding tessellation is uniquely determined by the partition induced by the decision boundaries.

Since the tessellation uniquely determines the policy, there exists a reconstruction operator $\mathfrak R$ satisfying \eqref{eq:policy_reconstruction}.
This establishes the equivalence \eqref{eq:learning_equivalence}.

\end{proof}

Theorem~\ref{thm:learnability_through_boundaries} provides the fundamental connection between structural geometry and statistical learning.
Rather than treating policy learning as the problem of approximating a decision rule over the entire state space, the theorem shows that learning may be reformulated as the problem of identifying the boundary structure of the optimal policy.

Combined with Theorem~\ref{thm:dimension_free_geometry}, this observation suggests that the intrinsic difficulty of learning structured optimal policies is governed by the geometric complexity of their decision boundaries rather than by the ambient dimension of the state space.
This geometric viewpoint forms the basis of the boundary-learning and active-sampling methodologies developed in the subsequent sections.

\subsection{Discussion}
\label{subsec:structural_geometry_discussion}

The results of this section establish a direct connection between the structural properties of dynamic programs and the geometric organization of their optimal policies.

Proposition~\ref{prop:tessellation_uniqueness} showed that an optimal policy may be represented without loss of information through its induced tessellation. Consequently, the analysis of optimal decision rules may be reformulated as the analysis of geometric objects defined on the state space.

Building upon this representation, Theorem~\ref{thm:policy_boundary_principle} identified the boundary structure as the unique support of local policy variation. This result isolates the geometric locus at which decision changes occur and separates invariant decision regions from transition regions.

The subsequent analysis demonstrated that, under classical structural assumptions, policy geometries possess strong regularity properties. Theorems~\ref{thm:dimension_free_geometry} and~\ref{thm:boundary_stability} showed that the complexity of the geometry remains controlled independently of the ambient state-space dimension and is stable under sufficiently small perturbations of the underlying optimization problem.
These properties lead naturally to the compression results established in Theorem~\ref{thm:structural_compression}. Since the geometry retains the complete decision content of the policy while exhibiting substantially lower complexity than value-function representations, structured dynamic programs admit intrinsically compressed policy descriptions.

Finally, Theorem~\ref{thm:learnability_through_boundaries} showed that the policy-learning problem may be reformulated as a boundary-identification problem. As a consequence, the complexity of learning is governed by the geometry of decision boundaries rather than by the size of the state space itself.

Taken together, the results of this section establish the following chain of implications:

\begin{equation}
\begin{split}
\text{Structural Assumptions}
\;\Longrightarrow\;&
\text{Policy Geometry}
\;\Longrightarrow\;
\text{Boundary Structure}\\ 
&\qquad\;\Longrightarrow\;
\text{Structural Compression}
\;\Longrightarrow\;
\text{Learnability}.
\end{split}
\label{eq:structural_geometry_summary}
\end{equation}

This perspective suggests that the fundamental object underlying many structured decision problems is not the value function itself, but rather the geometry induced by the optimal policy. The next sections exploit this observation to develop learning and approximation methodologies that operate directly on policy geometries.

\section{Geometric Complexity and Decision Compression}
\label{sec:geometric_complexity_compression}

\subsection{Complexity Measures Revisited}
\label{subsec:complexity_measures_revisited}

The preceding sections introduced two complementary notions of complexity associated with a dynamic decision problem.

The first quantity is the value-representation complexity
$\mathcal C_V$,
defined in Definition~\ref{def:value_complexity}, which measures the complexity of representing the optimal value function.

The second quantity is the decision complexity

\begin{equation}
\mathcal C_D = \mathcal C(\mathcal G^{*}),
\label{eq:decision_complexity_revisited}
\end{equation}
introduced in Definition~\ref{def:decision_complexity}, which measures the complexity of the optimal policy geometry.

\noindent These quantities describe fundamentally different objects.

The quantity $\mathcal C_V$ characterizes the complexity of the numerical solution of the dynamic program through the representation of the optimal value function. In contrast, $\mathcal C_D$ characterizes the complexity of the optimal decision rule itself through the geometric organization of its action regions and decision boundaries.
The distinction between these two notions is central to the analysis developed in this section. Throughout the remainder of the paper, value complexity and decision complexity are treated as conceptually distinct quantities and are analyzed separately.

The structural results established in Sections~\ref{sec:structural_dynamic_programs} and~\ref{sec:structural_geometry} imply that the decision complexity of structured dynamic programs remains controlled by the action structure of the problem.

More precisely, Theorem~\ref{thm:dimension_free_geometry} established that

\begin{equation}
\mathcal C_{\mathrm{top}}(\pi^{*}) = O(|\mathcal A|),
\label{eq:topological_complexity_recall2}
\end{equation}
while Theorem~\ref{thm:structural_compression} showed that the resulting compression ratio satisfies

\begin{equation}
\mathrm{DCR} = \Omega(n),
\label{eq:dcr_recall}
\end{equation}
whenever $\mathcal C_V = \Theta(n)$ and $|\mathcal A| = O(1)$.

The purpose of the present section is not to establish additional geometric regularity properties. Rather, it is to investigate the quantitative implications of these complexity relationships and to characterize the compression mechanisms induced by policy geometry.

Accordingly, all subsequent results will be expressed in terms of the pair
$\bigl(
\mathcal C_V,
\mathcal C_D
\bigr)$,
which provides a unified framework for comparing value-based and geometry-based representations of optimal decision rules.

\subsection{Compression Regimes}
\label{subsec:compression_regimes}

The discussion of Section~\ref{subsec:complexity_measures_revisited}
establishes that value complexity and decision complexity may exhibit fundamentally different scaling behaviors.
To study this phenomenon systematically, we introduce a classification of compression regimes based on the asymptotic behavior of the decision compression ratio.
Throughout this subsection, consider a sequence of decision problems
$
\left\{
\mathcal M_n
\right\}_{n\geq 1},
$
indexed by a complexity parameter $n$.

The parameter $n$ may represent, depending on the application, the number of states, the dimension of the state space, the discretization level, or any other quantity governing the growth of the decision problem.

For each problem $\mathcal M_n$, let
$\mathcal C_V(n)$ denote the corresponding value complexity and let
$\mathcal C_D(n)$ denote the associated decision complexity.
The decision compression ratio is therefore given by

\begin{equation}
\mathrm{DCR}(n)
=
\frac{\mathcal C_V(n)}
{\mathcal C_D(n)}.
\label{eq:dcr_sequence}
\end{equation}

The asymptotic growth of $\mathrm{DCR}(n)$ quantifies the extent to which policy geometries provide more compact representations than value functions.

The following definitions distinguish several compression regimes.

\begin{definition}[Weak Compression]
\label{def:weak_compression}

The family $\{\mathcal M_n\}_{n\geq 1}$ is said to exhibit weak compression if

\begin{equation}
\liminf_{n\to\infty} \mathrm{DCR}(n) > 1.
\label{eq:weak_compression}
\end{equation}

\end{definition}

Weak compression corresponds to a regime in which geometric representations remain asymptotically more compact than value-based representations, although the compression gain remains uniformly bounded.

\begin{definition}[Polynomial Compression]
\label{def:polynomial_compression}

The family $\{\mathcal M_n\}_{n\geq 1}$ is said to exhibit polynomial compression if there exist constants $c>0$ and $\alpha>0$ such that

\begin{equation}
\mathrm{DCR}(n) \geq c\, n^{\alpha}
\label{eq:polynomial_compression}
\end{equation}
for all sufficiently large $n$.

\end{definition}

Polynomial compression characterizes situations in which the gap between value complexity and decision complexity grows polynomially with problem size.

\begin{definition}[Strong Compression]
\label{def:strong_compression}

The family $\{\mathcal M_n\}_{n\geq 1}$ is said to exhibit strong compression if

\begin{equation}
\mathrm{DCR}(n) = \Omega(n).
\label{eq:strong_compression}
\end{equation}

\end{definition}

Strong compression corresponds to a regime in which decision complexity grows at least one asymptotic order more slowly than value complexity.

The three notions introduced above form a hierarchy:

\begin{equation}
\text{Strong Compression}
\Longrightarrow
\text{Polynomial Compression}
\Longrightarrow
\text{Weak Compression}.
\label{eq:compression_hierarchy}
\end{equation}

This classification provides a unified language for describing compression phenomena independently of any particular representation architecture. In the subsequent subsections, we investigate the structural conditions under which policy geometries achieve polynomial or strong compression.

\subsection{Scaling Laws}
\label{subsec:scaling_laws}

The compression regimes introduced in Section~\ref{subsec:compression_regimes} provide a qualitative classification of asymptotic compression phenomena.
We now establish quantitative scaling laws describing how the decision compression ratio evolves with problem size.
The key observation is that, under the structural assumptions developed in Sections~\ref{sec:structural_dynamic_programs} and~\ref{sec:structural_geometry}, the growth of decision complexity remains controlled by the action structure of the problem, whereas value complexity may increase substantially with the size of the state space.

The following theorem formalizes this relationship.

\begin{theorem}[Scaling Law for Decision Compression]
\label{thm:scaling_law_decision_compression}

Consider a family of decision problems $\{\mathcal M_n\}_{n\geq 1}$ satisfying the assumptions of Theorem~\ref{thm:geometric_simplicity}.

Assume that the corresponding value complexity satisfies

\begin{equation}
\mathcal C_V(n)
=
\Theta\!\left(
n^{\beta}
\right),
\qquad
\beta>0.
\label{eq:value_complexity_growth}
\end{equation}

Then the decision compression ratio satisfies

\begin{equation}
\mathrm{DCR}(n)
=
\Omega\!\left(
\frac{n^{\beta}}
{|\mathcal A|}
\right).
\label{eq:dcr_scaling_law}
\end{equation}

\end{theorem}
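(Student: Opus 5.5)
The argument will mirror the proof of Theorem~\ref{thm:decision_compression}, applied uniformly along the family $\{\mathcal M_n\}_{n\ge 1}$. The plan is to bound the denominator $\mathcal C_D(n)$ from above by a multiple of $|\mathcal A|$ whose constant does not depend on $n$. We then combine this with the growth assumption \eqref{eq:value_complexity_growth} on the numerator.

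\textbf{Step 1: bound $\mathcal C_D(n)$ uniformly in $n$.} Since each $\mathcal M_n$ satisfies the assumptions of Theorem~\ref{thm:geometric_simplicity}, Definition~\ref{def:decision_complexity} gives
\[
\mathcal C_D(n)=w_B C_B^{(n)}+w_R|\mathcal A|+w_F .
\]
To make this uniform, we use the explicit bound $C_B^{(n)}\le |\mathcal A|-1$ from the proof of Theorem~\ref{thm:boundary_complexity_bound}, not merely the $O(\cdot)$ statement. Because $|\mathcal A|\ge 1$, this yields
\[
\mathcal C_D(n)\le (w_B+w_R+w_F)\,|\mathcal A| =: K|\mathcal A| ,
\]
with $K$ depending only on the fixed weights.

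\textbf{Step 2: bound $\mathcal C_V(n)$ from below.} By \eqref{eq:value_complexity_growth} there exist $c>0$ and $n_0$ such that $\mathcal C_V(n)\ge c\,n^\beta$ for all $n\ge n_0$.

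\textbf{Step 3: combine.} Substituting both bounds into \eqref{eq:dcr_sequence} gives, for $n\ge n_0$,
\[
\mathrm{DCR}(n)\ge \frac{c}{K}\,\frac{n^\beta}{|\mathcal A|},
\]
which is \eqref{eq:dcr_scaling_law}. This is permitted even if $|\mathcal A|$ varies with $n$, since the constant $c/K$ is free of both $n$ and $|\mathcal A|$. We also need $\mathcal C_D(n)>0$ for the ratio to be defined; this holds because $w_R|\mathcal A|\ge w_R>0$, or, if $w_R=0$, because $F=1$ and the remaining weights are not all zero.

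The main obstacle is Step 1: the $O(|\mathcal A|)$ in Theorem~\ref{thm:geometric_simplicity} must have an implied constant that is uniform across the family. I would resolve this by going back to the explicit counts $C_R=|\mathcal A|$, $F=1$ and $C_B\le|\mathcal A|-1$, rather than citing the asymptotic statement. As a remark, I would note that when $|\mathcal A|=O(1)$ the result specialises to polynomial compression in the sense of Definition~\ref{def:polynomial_compression} with $\alpha=\beta$, and to strong compression when $\beta\ge 1$.
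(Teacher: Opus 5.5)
Your proof is correct and follows the same route as the paper: you bound $\mathcal C_D(n)\le K|\mathcal A|$ uniformly in $n$ via Theorem~\ref{thm:geometric_simplicity}, then divide the lower half of the $\Theta(n^\beta)$ assumption by it. The one difference is that you make the constant explicit, $K=w_B+w_R+w_F$, by going back to the counts $C_B\le|\mathcal A|-1$, $C_R=|\mathcal A|$, $F=1$; this justifies the uniformity in $n$ that the paper's proof simply asserts.
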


\begin{proof}

By Definition~\ref{def:decision_complexity},

\begin{equation}
\mathcal C_D(n)
=
\mathcal C\!\left(
\mathcal G_n^{*}
\right),
\label{eq:decision_complexity_scaling}
\end{equation}
where $\mathcal G_n^{*}$ denotes the optimal policy geometry associated with $\mathcal M_n$.

Theorem~\ref{thm:geometric_simplicity} implies the existence of a constant $K>0$, independent of $n$, such that

\begin{equation}
\mathcal C_D(n) \le K |\mathcal A|
\label{eq:decision_complexity_uniform_bound}
\end{equation}
for all sufficiently large $n$.

Furthermore, assumption \eqref{eq:value_complexity_growth} implies the existence of constants $c_1,c_2>0$ such that

\begin{equation}
c_1 n^{\beta} \le \mathcal C_V(n) \le c_2 n^{\beta}
\label{eq:value_complexity_bounds}
\end{equation}
for all sufficiently large $n$.

Using Definition~\ref{def:dcr},

\begin{equation}
\mathrm{DCR}(n)
=
\frac{\mathcal C_V(n)}
{\mathcal C_D(n)}.
\label{eq:dcr_definition_scaling}
\end{equation}

Combining \eqref{eq:decision_complexity_uniform_bound}, \eqref{eq:value_complexity_bounds}, and \eqref{eq:dcr_definition_scaling}, we obtain

\begin{equation}
\mathrm{DCR}(n) \ge \frac{c_1 n^{\beta}} {K |\mathcal A|}.
\end{equation}

Therefore,
\begin{equation}
\mathrm{DCR}(n)
=
\Omega\!\left(
\frac{n^{\beta}}
{|\mathcal A|}
\right),
\end{equation}
which establishes the result.

\end{proof}

Theorem~\ref{thm:scaling_law_decision_compression} shows that the asymptotic compression achieved by policy geometries is determined by the growth rate of value complexity rather than by the dimension of the state space itself.

In particular, whenever the action space remains fixed or grows more slowly than $\mathcal C_V(n)$, the compression ratio diverges with problem size.

\begin{corollary}[Linear Compression Regime]
\label{cor:linear_compression_regime}

Suppose that
\begin{equation}
\mathcal C_V(n) = \Theta(n)
\label{eq:linear_value_complexity}
\end{equation}
and
\begin{equation}
|\mathcal A| = O(1).
\label{eq:bounded_action_space}
\end{equation}

Then

\begin{equation}
\mathrm{DCR}(n)
=
\Omega(n).
\label{eq:linear_dcr_growth}
\end{equation}

\end{corollary}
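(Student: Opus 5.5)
The plan is to obtain the corollary as the special case $\beta=1$ of Theorem~\ref{thm:scaling_law_decision_compression}, and then absorb the action-space factor using \eqref{eq:bounded_action_space}. The family $\{\mathcal M_n\}_{n\ge 1}$ is understood, as in the surrounding subsection, to satisfy the hypotheses of Theorem~\ref{thm:geometric_simplicity}. Assumption \eqref{eq:linear_value_complexity} is exactly \eqref{eq:value_complexity_growth} with $\beta=1>0$. Hence Theorem~\ref{thm:scaling_law_decision_compression} gives $\mathrm{DCR}(n)=\Omega\bigl(n/|\mathcal A|\bigr)$.

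Concretely, I will work with the explicit inequality from that proof rather than with the $\Omega$ notation, since the action space may vary with $n$. That proof provides constants $c_1,K>0$, independent of $n$, such that
\begin{equation*}
\mathrm{DCR}(n)\ \ge\ \frac{c_1\, n}{K\,|\mathcal A_n|}
\end{equation*}
for all sufficiently large $n$, where $\mathcal A_n$ denotes the action space of $\mathcal M_n$. Assumption \eqref{eq:bounded_action_space} supplies a constant $M\ge 1$ with $|\mathcal A_n|\le M$ for all large $n$. Substituting this bound gives $\mathrm{DCR}(n)\ge \bigl(c_1/(KM)\bigr)\,n$ for all large $n$, which is exactly \eqref{eq:linear_dcr_growth}.

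The only delicate step is the uniformity of the constants. The $K$ coming from Theorem~\ref{thm:geometric_simplicity} must not depend on $n$; it depends only on the fixed weights $w_B,w_R,w_F$ and on the linear bound $C_B\le|\mathcal A|-1$ from Theorem~\ref{thm:boundary_complexity_bound}. I will check that this holds. I will also use the fact that $\mathcal C_D(n)\ge w_R C_R=w_R|\mathcal A_n|>0$, so the ratio is well defined. Once these points are in place, the argument reduces to combining the two bounds, with no further estimates needed. This route parallels the proof of Theorem~\ref{thm:structural_compression}, so the corollary can also be read as a restatement of that result in the sequence setting.
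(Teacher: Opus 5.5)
Your proposal is correct and matches the paper's proof: the paper also just specializes Theorem~\ref{thm:scaling_law_decision_compression} to $\beta=1$ and absorbs the bounded factor $|\mathcal A|=O(1)$, and your explicit constants ($c_1$, $M$, and a uniform $K$ such as $w_B+w_R+w_F$) only make visible the uniformity in $n$ that the paper leaves implicit. One small quibble: your side claim $\mathcal C_D(n)\ge w_R|\mathcal A_n|>0$ needs $w_R>0$, whereas Definition~\ref{def:topological_complexity} allows nonnegative weights, so the well-definedness of $\mathrm{DCR}(n)$ is really a standing presupposition of the statement rather than something your bound establishes.
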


\begin{proof}

The result follows immediately from Theorem~\ref{thm:scaling_law_decision_compression} with $\beta=1$.

\end{proof}

Corollary~\ref{cor:linear_compression_regime} recovers, as a particular case, the linear compression phenomenon established previously in Theorem~\ref{thm:structural_compression}.

\subsection{Exponential Compression}
\label{subsec:exponential_compression}

The scaling law established in Theorem~\ref{thm:scaling_law_decision_compression} provides a general relationship between value complexity and decision compression.
An important regime arises when the complexity of representing the optimal value function grows exponentially with the dimension of the state space. Such behavior is commonly associated with the curse of dimensionality in dynamic programming \citep{bellman1957dynamic,bertsekas2012dynamic,powell2007approximate}.

The following corollary characterizes the implications of this phenomenon for policy geometries.

\begin{corollary}[Exponential Compression]
\label{cor:exponential_compression}

Consider a family of decision problems $\{\mathcal M_d\}_{d\ge 1}$ indexed by the dimension $d$ of the state space.
Suppose that the value complexity satisfies

\begin{equation}
\mathcal C_V(d)
=
\Theta\!\left(
2^{d}
\right).
\label{eq:exponential_value_complexity}
\end{equation}

Then the decision compression ratio satisfies

\begin{equation}
\mathrm{DCR}(d)
=
\Omega\!\left(
\frac{2^{d}}
{|\mathcal A|}
\right).
\label{eq:exponential_compression_bound}
\end{equation}

\end{corollary}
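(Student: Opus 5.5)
The plan is to obtain Corollary~\ref{cor:exponential_compression} by re-running the argument of Theorem~\ref{thm:scaling_law_decision_compression}, with the index $n$ replaced by the dimension $d$ and the polynomial growth $n^{\beta}$ replaced by $2^{d}$. A direct substitution of $\beta$ is not possible, because $2^{d}$ is not of the form $n^{\beta}$ for fixed $\beta$. The proof of that theorem, however, used only two ingredients: a uniform upper bound on $\mathcal C_D$ and a lower bound on $\mathcal C_V$. I would therefore reproduce those two steps directly. As there, I assume, implicitly as in the statement, that the family $\{\mathcal M_d\}$ satisfies the assumptions of Theorem~\ref{thm:geometric_simplicity}.

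\emph{Step one: bound $\mathcal C_D(d)$ uniformly in $d$.} By Definition~\ref{def:decision_complexity}, $\mathcal C_D(d)=\mathcal G_{\mathrm{top}}(\pi_d^{*})=w_BC_B+w_RC_R+w_FF$. Theorem~\ref{thm:geometric_simplicity} gives $C_B\le|\mathcal A|-1$ (via Theorem~\ref{thm:boundary_complexity_bound}), $C_R=|\mathcal A|$ and $F=1$. Hence
\begin{equation*}
\mathcal C_D(d)\le (w_B+w_R+w_F)\,|\mathcal A|=:K|\mathcal A|.
\end{equation*}
Theorem~\ref{thm:dimension_free_geometry} guarantees that these quantities do not depend on $d$, so $K$ is independent of $d$.

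\emph{Step two: lower-bound $\mathcal C_V(d)$.} From \eqref{eq:exponential_value_complexity} there exist $c_1>0$ and $d_0$ such that $\mathcal C_V(d)\ge c_1 2^{d}$ for all $d\ge d_0$.

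\emph{Step three: combine.} Dividing the bound of step two by that of step one in Definition~\ref{def:dcr} gives
\begin{equation*}
\mathrm{DCR}(d)\ge \frac{c_1 2^{d}}{K|\mathcal A|},
\end{equation*}
which is \eqref{eq:exponential_compression_bound}. One technical point needs care: the ratio must be well defined, so I need $\mathcal C_D(d)>0$. This holds because $F=1$ and $C_R=|\mathcal A|\ge 1$, which gives $\mathcal C_D(d)\ge w_R>0$ as long as the weights are not all zero.

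The main obstacle is the uniformity of $K$ in the dimension. The earlier statements are written as $O(|\mathcal A|)$, and one must check that the implied constant depends only on the fixed weights and not on $d$. I would handle this by using the explicit bounds $C_B\le|\mathcal A|-1$, $C_R=|\mathcal A|$ and $F=1$, rather than the asymptotic notation. I would also state explicitly that the action space is allowed to depend on $d$ only through the factor $|\mathcal A|$ shown in \eqref{eq:exponential_compression_bound}.
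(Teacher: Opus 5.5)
Your argument is correct and is essentially the paper's proof. The paper simply invokes Theorem~\ref{thm:scaling_law_decision_compression} with $\mathcal C_V(d)=\Theta(2^d)$. You instead unroll that proof: a uniform bound $\mathcal C_D\le K|\mathcal A|$ from Theorem~\ref{thm:geometric_simplicity}, a lower bound on $\mathcal C_V$, then division. That is the cleaner version, since $2^d$ is not literally of the form $n^{\beta}$ that the theorem requires.

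One small fix: positivity of $\mathcal C_D$ needs $w_R+w_F>0$, which gives $\mathcal C_D\ge w_R|\mathcal A|+w_F$. It is not enough that the weights are not all zero, because with $w_R=w_F=0<w_B$ and $C_B=0$ (e.g.\ a single nonempty region) the ratio is undefined.
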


\begin{proof}

Applying Theorem~\ref{thm:scaling_law_decision_compression} with $\mathcal C_V(d) = \Theta(2^d)$, yields

\begin{equation*}
\mathrm{DCR}(d)
=
\Omega\!\left(
\frac{\mathcal C_V(d)}
{|\mathcal A|}
\right)
=
\Omega\!\left(
\frac{2^{d}}
{|\mathcal A|}
\right).
\end{equation*}

This proves the result.

\end{proof}

Corollary~\ref{cor:exponential_compression}
shows that exponential growth of value complexity induces an exponential separation between value-based and geometry-based representations of optimal decision rules.
This phenomenon is a direct consequence of the structural properties established in Sections~\ref{sec:structural_dynamic_programs} and \ref{sec:structural_geometry}.
Indeed, Theorem~\ref{thm:dimension_free_geometry} implies that the complexity of the optimal policy geometry remains controlled by the action structure of the problem and does not scale with the dimension of the state space.

Consequently, whenever value representations exhibit exponential growth with dimension, the compression advantage associated with policy geometries increases exponentially as well.
The result should therefore be interpreted as a structural separation theorem: although the representation of optimal value functions may become exponentially complex, the geometric representation of optimal policies remains governed by the complexity of the action partition rather than by the ambient dimension.

\subsection{Boundary Compression}
\label{subsec:boundary_compression}

The preceding results suggest that the geometric complexity of a policy is not distributed uniformly across the state space.
Indeed, Theorem~\ref{thm:policy_boundary_principle} established that all local decision changes are concentrated on the boundary structure
\begin{equation*}
\Gamma = \bigcup_{\substack{a,a'\in\mathcal A\\a\neq a'}} \Gamma_{aa'}.
\end{equation*}

This localization property naturally raises the question of whether the complexity of the entire policy geometry is asymptotically determined by the complexity of its boundary structure.

The following theorem answers this question in the affirmative.

\begin{theorem}[Boundary Compression Theorem]
\label{thm:boundary_compression}

Consider a family of structured decision problems $\{\mathcal M_n\}_{n\ge 1}$ satisfying the assumptions of Theorem~\ref{thm:geometric_simplicity}.

Let $C_{B}(n)$ denote the corresponding boundary complexity and let $\mathcal C_D(n) = \mathcal C\!\left(\mathcal G_n^{*} \right)$ denote the associated decision complexity.

Then

\begin{equation}
\mathcal C_D(n)
=
\Theta\!\left(
C_{B}(n)
\right).
\label{eq:boundary_compression_theorem}
\end{equation}

\end{theorem}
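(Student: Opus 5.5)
The plan is to sandwich $\mathcal C_D(n)$ between two constant multiples of $C_B(n)$, with constants depending only on the weights $w_B,w_R,w_F$.

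By Definition~\ref{def:decision_complexity} and Theorem~\ref{thm:geometric_simplicity}, for each $n$ we have
\[
\mathcal C_D(n)=w_B C_B(n)+w_R C_R(n)+w_F F(n)=w_B C_B(n)+w_R m_n+w_F,
\]
where $m_n$ denotes the number of nonempty action regions. By Corollary~\ref{cor:convex_region_complexity}, $m_n=|\mathcal A|$ when every action is used. The lower bound is then immediate: since all terms are nonnegative, $\mathcal C_D(n)\ge w_B C_B(n)$. Here I take $w_B>0$, as in \eqref{eq:geometry_complexity_index}.

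The upper bound requires $|\mathcal A|$ and the constant $F=1$ to be controlled by $C_B(n)$. This needs a lower bound on the number of interfaces, namely
\[
C_B(n)\ \ge\ m_n-1 .
\]
I would obtain it through a connectivity argument on the adjacency graph $H$. The vertices of $H$ are the nonempty regions $\mathcal R_a$, and $a\sim a'$ whenever $\Gamma_{aa'}=\overline{\mathcal R_a}\cap\overline{\mathcal R_{a'}}\neq\varnothing$.

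Under the convexity hypothesis of Theorem~\ref{thm:convex_region}, the state space is a convex, hence connected, subset of $\mathbb R^d$; I will make this standing assumption explicit. Suppose $H$ were disconnected, with vertex classes $I\sqcup J$. Then
\[
\mathcal S=\Bigl(\bigcup_{a\in I}\overline{\mathcal R_a}\Bigr)\cup\Bigl(\bigcup_{a\in J}\overline{\mathcal R_a}\Bigr),
\]
and both pieces are relatively closed because $\mathcal A$ is finite. Both are nonempty, and they are disjoint because no $I$--$J$ pair is adjacent. This contradicts the connectedness of $\mathcal S$. Hence $H$ is connected, so it has at least $m_n-1$ edges, and each edge contributes one indicator in \eqref{eq:boundary_complexity}.

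With this bound in hand, whenever $m_n\ge 2$ we have $m_n\le 2(m_n-1)\le 2C_B(n)$ and $1\le C_B(n)$. Therefore
\[
\mathcal C_D(n)\le (w_B+2w_R+w_F)\,C_B(n),
\]
and together with the lower bound this gives \eqref{eq:boundary_compression_theorem}.

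The main obstacle is the degenerate case $m_n=1$. There $C_B(n)=0$ while $\mathcal C_D(n)=w_R+w_F>0$, so the $\Theta$ relation fails literally. I would handle this by stating explicitly that the family is nondegenerate, meaning at least two actions are optimal somewhere for all large $n$, which is implicit in the threshold structure of Theorem~\ref{thm:threshold_representation}. Alternatively, I would read the statement with $C_B$ replaced by $\max\{C_B,1\}$. A secondary point to check is that closures are taken relative to $\mathcal S$, so that the two-piece cover above is a genuine separation of $\mathcal S$.
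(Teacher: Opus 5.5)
Your argument is correct under the hypotheses you make explicit, and it takes a genuinely different and more complete route than the paper's.

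\textbf{The paper's route.} The paper writes $\mathcal C_D(n)=\Phi(C_B(n),C_R(n),F(n),\kappa(n))$ for an unspecified functional $\Phi$. It then asserts that Theorem~\ref{thm:geometric_simplicity} gives $C_R(n)+F(n)+\kappa(n)\le K$ uniformly in $n$. Since $C_B(n)$ is ``the only quantity capable of asymptotic growth,'' it concludes $c_1C_B(n)\le\mathcal C_D(n)\le c_2C_B(n)$. This inference has three weaknesses:
\begin{itemize}
\item It only yields $\mathcal C_D(n)=w_BC_B(n)+O(1)$. Absorbing the bounded remainder into a multiple of $C_B(n)$ requires $C_B(n)$ to be bounded away from zero, which the paper never shows.
\item The uniform bound $C_R(n)=|\mathcal A|\le K$ forces $|\mathcal A|$ to be bounded along the family. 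Then $C_B(n)\le\binom{|\mathcal A|}{2}$ is bounded too, so the theorem reduces exactly to showing $C_B(n)\ge 1$ eventually.
\item Nothing in Theorem~\ref{thm:geometric_simplicity} controls $\kappa$.
\end{itemize}

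\textbf{What your route adds.} Your adjacency-graph lemma $C_B(n)\ge m_n-1$, obtained from connectedness of $\mathcal S$, is exactly the missing ingredient. It turns the additive terms $w_Rm_n+w_FF(n)$ into a multiple of $C_B(n)$. It gives explicit constants $c_1=w_B$ and $c_2=w_B+2w_R+w_F$, valid for every nondegenerate $n$, and it survives when $|\mathcal A|$ grows with $n$. It also exposes the hypotheses that are really needed: $w_B>0$, connected $\mathcal S$, and $m_n\ge 2$. The case $m_n=1$, where $C_B(n)=0$ but $\mathcal C_D(n)>0$ for positive weights, is an actual counterexample to the statement as written.

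The paper's formulation nominally buys generality in $\Phi$. That includes $\kappa$, and $L$ as well if $\mathcal C$ is read as the five-term index \eqref{eq:geometry_complexity_index}. But this generality is illusory, since no hypothesis controls $L$ or $\kappa$. Your restriction to the three-term decision complexity of Definition~\ref{def:decision_complexity} is the reading under which the theorem is provable.

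\textbf{Corrections to your side remarks.} Neither affects your core argument.
\begin{itemize}
\item Connectedness of $\mathcal S$ does not follow from the hypothesis of Theorem~\ref{thm:convex_region}. Take $\mathcal S=[0,1]\cup[2,3]$ with $\mathcal R_1=[0,1]$ and $\mathcal R_2=[2,3]$. The dominance regions are convex, yet $C_B=0$ and $C_R=2$. So connectedness must be a standing assumption, as you ultimately propose.
\item Nondegeneracy is not implicit in Theorem~\ref{thm:threshold_representation}. Assumptions~\ref{ass:increasing_differences} and~\ref{ass:submodularity} together force
\[
Q^{*}(s_2,a_2)-Q^{*}(s_2,a_1)=Q^{*}(s_1,a_2)-Q^{*}(s_1,a_1)\qquad\text{for all } s_1\preceq s_2,\ a_1\preceq a_2 .
\]
Hence, for totally ordered $\mathcal A$, the set of optimal actions is the same at any two comparable states. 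The hypotheses thus push toward $m_n=1$ rather than away from it. You must impose $m_n\ge 2$ separately, or use your $\max\{C_B,1\}$ normalization.
\item When some action is unused, $F(n)=m_n/|\mathcal A|<1$ rather than $F=1$. Your upper bound only needs $F(n)\le 1$, so this is harmless.
\end{itemize}
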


\begin{proof}

By Definition~\ref{def:decision_complexity}, the decision complexity is determined by the geometric characteristics of the optimal policy geometry.

In particular,

\begin{equation}
\mathcal C_D(n)
=
\Phi
\!\left(
C_{B}(n),
C_{R}(n),
F(n),
\kappa(n)
\right),
\label{eq:complexity_functional}
\end{equation}

for some complexity functional $\Phi$.

Theorem~\ref{thm:geometric_simplicity} implies that the region complexity, fragmentation index, and geometric irregularity remain uniformly bounded.
Consequently, there exists a constant $K>0$ such that

\begin{equation}
C_{R}(n)
+
F(n)
+
\kappa(n)
\le
K
\label{eq:bounded_components}
\end{equation}
for all sufficiently large $n$.

Therefore, the only geometric quantity capable of exhibiting asymptotic growth is the boundary complexity $C_{B}(n)$.
It follows that there exist constants $c_1,c_2>0$ such that

\begin{equation}
c_1 C_{B}(n)
\le
\mathcal C_D(n)
\le
c_2 C_{B}(n)
\label{eq:boundary_equivalence_proof}
\end{equation}

for all sufficiently large $n$.

Hence, $\mathcal C_D(n) =\Theta\!\left(C_{B}(n)\right)$, which establishes the result.

\end{proof}

Theorem~\ref{thm:boundary_compression} shows that the asymptotic complexity of a structured policy geometry is entirely governed by its boundary structure.

Combined with Theorem~\ref{thm:policy_boundary_principle}, the result yields a particularly simple interpretation of policy complexity.
The interiors of action regions correspond to locally invariant decision zones and therefore contribute only bounded complexity.
In contrast, all asymptotically relevant geometric information is concentrated on the set of decision boundaries.

Consequently, structured policy geometries admit a compressed representation in which boundary complexity becomes the fundamental quantity governing decision complexity.

\begin{corollary}[Boundary-Based Compression Ratio]
\label{cor:boundary_based_dcr}

Under the assumptions of Theorem~\ref{thm:boundary_compression},

\begin{equation}
\mathrm{DCR}(n)
=
\Theta\!\left(
\frac{\mathcal C_V(n)}
{C_{B}(n)}
\right).
\label{eq:boundary_dcr}
\end{equation}

\end{corollary}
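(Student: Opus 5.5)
The plan is to combine Theorem~\ref{thm:boundary_compression} directly with Definition~\ref{def:dcr}. Asymptotic equivalences of positive quantities behave well under division, so the corollary should reduce to one algebraic step. Throughout, I assume $C_{B}(n)\ge 1$ for all sufficiently large $n$, so that the ratio in \eqref{eq:boundary_dcr} is well defined. This holds whenever at least two action regions share an interface; if every $\Gamma_{aa'}$ were empty, both sides would degenerate and I would exclude that case explicitly.

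The first step is to invoke Theorem~\ref{thm:boundary_compression}. It yields constants $c_1,c_2>0$ and an index $n_0$ such that, for all $n\ge n_0$,
\begin{equation*}
c_1 C_{B}(n) \le \mathcal C_D(n) \le c_2 C_{B}(n).
\end{equation*}
In particular, $\mathcal C_D(n)>0$ for $n\ge n_0$, so $\mathrm{DCR}(n)$ in \eqref{eq:dcr_sequence} is well defined.

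The second step is to take reciprocals, which gives $c_2^{-1}C_{B}(n)^{-1}\le \mathcal C_D(n)^{-1}\le c_1^{-1}C_{B}(n)^{-1}$. I then multiply through by $\mathcal C_V(n)\ge 0$, which is nonnegative by Definition~\ref{def:value_complexity}. This produces
\begin{equation*}
\frac{1}{c_2}\,\frac{\mathcal C_V(n)}{C_{B}(n)} \le \mathrm{DCR}(n) \le \frac{1}{c_1}\,\frac{\mathcal C_V(n)}{C_{B}(n)}, \qquad n\ge n_0,
\end{equation*}
which is exactly \eqref{eq:boundary_dcr}.

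The only real obstacle is the positivity and nondegeneracy bookkeeping. I need $C_{B}(n)$ and $\mathcal C_D(n)$ to be bounded away from zero, so that the two-sided bound survives inversion. I also need the constants supplied by Theorem~\ref{thm:boundary_compression} to be uniform in $n$. I will take both facts as given by that theorem together with the assumption $C_{B}(n)\ge1$, and I will state the latter explicitly in the proof.
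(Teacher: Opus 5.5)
Your proposal is correct and follows the paper's own argument, which simply combines $\mathrm{DCR}(n)=\mathcal C_V(n)/\mathcal C_D(n)$ with $\mathcal C_D(n)=\Theta(C_{B}(n))$ from Theorem~\ref{thm:boundary_compression}; you just make the constants and the inversion explicit. Your extra hypothesis $C_{B}(n)\ge 1$ is in fact redundant: whenever $\mathrm{DCR}(n)$ is defined you have $\mathcal C_D(n)>0$, and the upper bound $\mathcal C_D(n)\le c_2 C_{B}(n)$ then forces the integer $C_{B}(n)$ to be positive.
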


\begin{proof}

Combining $\mathrm{DCR}(n) = \frac{\mathcal C_V(n)}{\mathcal C_D(n)}$ with $\mathcal C_D(n) = \Theta(C_{B}(n))$ immediately yields the result.

\end{proof}

\subsection{Information-Theoretic Compression}
\label{subsec:information_theoretic_compression}

The geometric compression results established in the preceding subsections admit a natural information-theoretic interpretation.
Rather than measuring complexity through geometric characteristics alone, one may ask how many bits are required to describe an optimal policy or, alternatively, its associated boundary structure.
This perspective is closely related to the Minimum Description Length (MDL) principle introduced by \citet{rissanen1978modeling} and further developed by \citet{grunwald2007minimum}.
Throughout this subsection, all description lengths are defined relative to a fixed admissible family of prefix codes.

\begin{definition}[Policy Description Length]
\label{def:policy_description_length}

Let $\Pi$ denote a class of admissible deterministic policies.

The policy description length of $\pi^{*}\in\Pi$, denoted by
$L_{\Pi}(\pi^{*})$, is the minimum number of bits required to encode $\pi^{*}$ within the coding class under consideration.
\end{definition}

\begin{definition}[Boundary Description Length]
\label{def:boundary_description_length}

Let $\Gamma^{*} = \Gamma(\pi^{*})$ denote the boundary structure induced by the optimal policy.

The boundary description length is defined by $L_{\Gamma}(\Gamma^{*})$,
the minimum number of bits required to encode $\Gamma^{*}$ within the same coding class.

\end{definition}

The following result establishes that, under the structural assumptions developed throughout the paper, the informational complexity of an optimal policy is asymptotically equivalent to that of its boundary structure.

\begin{theorem}[Information-Theoretic Compression Theorem]
\label{thm:information_theoretic_compression}

Consider a family of structured decision problems $\{\mathcal M_n\}_{n\ge 1}$ satisfying the assumptions of Theorem~\ref{thm:boundary_compression}.

Then there exist constants $K_1,K_2>0$ such that

\begin{equation}
L_{\Gamma}(\Gamma_n^{*}) - K_1 \le L_{\Pi}(\pi_n^{*}) \le L_{\Gamma}(\Gamma_n^{*}) + K_2
\label{eq:description_length_equivalence}
\end{equation}
for all sufficiently large $n$.

Consequently,
\begin{equation}
L_{\Pi}(\pi_n^{*})
=
\Theta
\!\left(
L_{\Gamma}(\Gamma_n^{*})
\right).
\label{eq:description_length_theta}
\end{equation}

\end{theorem}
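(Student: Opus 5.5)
The plan is to prove the two inequalities in \eqref{eq:description_length_equivalence} separately. Each one comes from a \emph{fixed} translation procedure between a code for $\pi_n^{*}$ and a code for $\Gamma_n^{*}$. The procedure does not depend on $n$, so its description costs only a constant number of bits. This is the standard invariance argument from Kolmogorov complexity and MDL \citep{rissanen1978modeling,grunwald2007minimum}. Concretely, I will assume that the admissible family of prefix codes is closed under composition with computable, $n$-independent decoders, at the price of an additive constant, and I will state this explicitly because the definition of ``coding class'' in Definitions~\ref{def:policy_description_length}--\ref{def:boundary_description_length} leaves it implicit.

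For the upper bound $L_{\Pi}(\pi_n^{*})\le L_{\Gamma}(\Gamma_n^{*})+K_2$, I will use the reconstruction operator $\mathfrak R$ of Theorem~\ref{thm:learnability_through_boundaries}. A code for $\pi_n^{*}$ is obtained by concatenating three pieces:
\begin{itemize}
\item an optimal code for $\Gamma_n^{*}$;
\item a fixed program implementing $\mathfrak R$;
\item under the ordered structure of Theorems~\ref{thm:ordered_regions} and~\ref{thm:threshold_representation}, a labeling of the cells of the partition determined by $\Gamma_n^{*}$.
\end{itemize}
The point requiring care is that $\Gamma$ alone determines the cells but not which action labels each one. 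Under the threshold and convexity assumptions, the actions are arranged monotonically along the ordered coordinate $x$. By Theorem~\ref{thm:geometric_simplicity} there are $C_R=|\mathcal A|$ connected cells, so the labeling costs at most $O(|\mathcal A|\log|\mathcal A|)$ bits. This is $O(1)$ provided $|\mathcal A|$ is bounded along the family, which is implicit in the uniform bounds used in Theorem~\ref{thm:boundary_compression}; I will add this to $K_2$.

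For the lower bound $L_{\Gamma}(\Gamma_n^{*})\le L_{\Pi}(\pi_n^{*})+K_1$, I will invoke Proposition~\ref{prop:tessellation_uniqueness} in the direction $\pi\mapsto\mathcal T(\pi)\mapsto\Gamma(\pi)$. The boundary is obtained from the policy by the fixed map
\[
\Gamma_{aa'}(\pi)=\partial\mathcal R_a(\pi)\cap\partial\mathcal R_{a'}(\pi),
\]
so a code for $\pi_n^{*}$ followed by a fixed decoder for this map yields a code for $\Gamma_n^{*}$ with constant overhead $K_1$.

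The main obstacle is justifying that these translations are admissible operations within the coding class with $n$-independent overhead, especially for continuous state spaces, where ``encoding'' a boundary or policy requires some finite parametrization. My first attempt will exploit Corollary~\ref{cor:boundary_graph_representation}: both objects are then encoded through the finitely many threshold functions $b_{aa'}$ together with the action order, so the two codes share the same parameters and differ only by the fixed label and ordering information.

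Finally, \eqref{eq:description_length_theta} follows from \eqref{eq:description_length_equivalence} once $L_{\Gamma}(\Gamma_n^{*})$ is bounded below by a positive constant for large $n$. This holds because a nonempty boundary requires at least one bit, so the additive constants can be absorbed into multiplicative ones.
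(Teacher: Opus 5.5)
Your proposal follows the paper's proof. The upper bound comes from encoding $\Gamma_n^{*}$ together with an $n$-independent assignment of action labels to the cells on which $\pi_n^{*}$ is locally constant (Theorem~\ref{thm:policy_boundary_principle}, packaged in your version as $\mathfrak R$), and the lower bound comes from the fixed map $\pi\mapsto\mathcal T(\pi)\mapsto\Gamma(\pi)$ of Proposition~\ref{prop:tessellation_uniqueness}. The hypotheses you make explicit are only implicit in the paper: closure of the coding class under fixed decoders, uniformly bounded $|\mathcal A|$ and number of cells, and positivity of description lengths for the $\Theta$ step, where you also need $L_{\Pi}(\pi_n^{*})\ge 1$ and not only $L_{\Gamma}(\Gamma_n^{*})\ge 1$.
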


\begin{proof}

By Theorem~\ref{thm:policy_boundary_principle}, the optimal policy is locally constant on each connected component of
$\mathcal S \setminus \Gamma_n^{*}$.

Hence, once the boundary structure $\Gamma_n^{*}$ is specified, reconstructing the policy requires only the assignment of action labels to the resulting regions.
Since the action set $\mathcal A$ is finite, the number of bits required to encode these labels is bounded independently of $n$.
Therefore, there exists a constant $K_2>0$ such that

\begin{equation}
L_{\Pi}(\pi_n^{*})
\le
L_{\Gamma}(\Gamma_n^{*})
+
K_2.
\label{eq:upper_description_bound}
\end{equation}

Conversely, the boundary structure is uniquely determined by the policy through the induced tessellation (Proposition~\ref{prop:tessellation_uniqueness}).
Hence there exists a constant $K_1>0$ such that

\begin{equation}
L_{\Gamma}(\Gamma_n^{*})
\le
L_{\Pi}(\pi_n^{*})
+
K_1.
\label{eq:lower_description_bound}
\end{equation}

Combining \eqref{eq:upper_description_bound} and \eqref{eq:lower_description_bound} yields \eqref{eq:description_length_equivalence}.
The asymptotic relation \eqref{eq:description_length_theta} follows immediately.

\end{proof}

Theorem~\ref{thm:information_theoretic_compression} provides an information-theoretic counterpart to Theorem~\ref{thm:boundary_compression}.

The result shows that the informational content of an optimal policy is asymptotically equivalent to the informational content of its boundary structure.
In particular, the interiors of action regions contribute only a bounded amount of additional information once the boundaries have been specified.

Consequently, structured policy geometries achieve compression not only in a geometric sense but also in a description-length sense. The dominant informational object is the boundary structure itself.

\begin{corollary}[Boundary Information Dominance]
\label{cor:boundary_information_dominance}

Under the assumptions of Theorem~\ref{thm:information_theoretic_compression},

\begin{equation}
L_{\Pi}(\pi_n^{*})
=
\Theta
\!\left(
L_{\Gamma}(\Gamma_n^{*})
\right)
=
\Theta
\!\left(
C_{B}(n)
\right),
\label{eq:boundary_information_dominance}
\end{equation}
whenever the boundary description length is proportional to boundary complexity.
\end{corollary}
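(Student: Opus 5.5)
The corollary chains the first equality, which Theorem~\ref{thm:information_theoretic_compression} already supplies, with a second equality obtained from the extra hypothesis that boundary description length is proportional to boundary complexity. The real work is only in making that hypothesis precise and checking that the asymptotic relations compose.

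\emph{First equality.} By Theorem~\ref{thm:information_theoretic_compression}, for all sufficiently large $n$,
\[
L_{\Gamma}(\Gamma_n^{*})-K_1\le L_{\Pi}(\pi_n^{*})\le L_{\Gamma}(\Gamma_n^{*})+K_2 .
\]
An additive $O(1)$ sandwich turns into a multiplicative $\Theta$ relation only if $L_{\Gamma}(\Gamma_n^{*})$ is bounded away from zero. I will secure this by noting that, in a prefix code class containing more than one admissible boundary structure, every codeword has length at least one bit. Hence $L_{\Gamma}(\Gamma_n^{*})\ge 1$, and $L_{\Gamma}+K_2\le (1+K_2)L_{\Gamma}$. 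If $L_{\Gamma}(\Gamma_n^{*})>2K_1$, the lower bound gives $L_{\Pi}\ge \tfrac12 L_{\Gamma}$. If instead $L_{\Gamma}(\Gamma_n^{*})\le 2K_1$, then $L_{\Gamma}$ is bounded, and I need $L_{\Pi}\ge 1$ as well (by the same prefix-code argument) to conclude $L_{\Pi}\ge L_{\Gamma}/(2K_1)$. Combining the two cases gives $L_{\Pi}(\pi_n^{*})=\Theta(L_{\Gamma}(\Gamma_n^{*}))$.

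\emph{Second equality.} I will formalize the proviso as the existence of constants $0<\alpha\le\beta$ such that
\[
\alpha\, C_B(n)\le L_{\Gamma}(\Gamma_n^{*})\le \beta\, C_B(n)
\]
for all large $n$. This is exactly $L_{\Gamma}(\Gamma_n^{*})=\Theta(C_B(n))$. Transitivity of $\Theta$ (multiply the constants) then gives $L_{\Pi}(\pi_n^{*})=\Theta(C_B(n))$. One can also phrase this through Corollary~\ref{cor:boundary_based_dcr} and Theorem~\ref{thm:boundary_compression}, since $\mathcal C_D(n)=\Theta(C_B(n))$. This shows the description lengths scale with the decision complexity as well.

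\emph{Main obstacle.} The delicate step is the conversion from additive to multiplicative equivalence, which fails in the degenerate regime where description lengths stay bounded. Under the structural assumptions, $C_B(n)=O(|\mathcal A|)$, so all quantities may well be bounded. I will therefore state explicitly the nondegeneracy convention, namely code lengths of at least one bit and $C_B(n)\ge 1$ (at least one nonempty interface, i.e.\ $|\mathcal A|\ge 2$ with both actions used). With that convention, the bounded case is absorbed into the $\Theta$ constants rather than causing a division by zero.
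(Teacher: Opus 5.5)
Your argument is correct and follows the same route as the paper: the first equality comes from Theorem~\ref{thm:information_theoretic_compression}, the second from the proportionality hypothesis, and the two are combined by transitivity of $\Theta$. Your convention that every codeword has length at least one bit supplies the additive-to-multiplicative conversion that the paper treats as immediate, and your direct use of the proportionality hypothesis shows that the paper's extra appeal to Theorem~\ref{thm:boundary_compression} in the second step is not needed.
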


\begin{proof}

The first equivalence follows from Theorem~\ref{thm:information_theoretic_compression}.

The second follows from the proportionality assumption and Theorem~\ref{thm:boundary_compression}.
\end{proof}

\subsection{Limits of Compression}
\label{subsec:limits_of_compression}

The compression results established throughout this section rely fundamentally on the structural assumptions developed in Sections~\ref{sec:structural_dynamic_programs}
and \ref{sec:structural_geometry}.
In particular, the scaling laws, boundary-compression results, and information-theoretic compression properties all depend on the existence of regular policy geometries characterized by monotonicity, threshold representations, bounded fragmentation, and controlled boundary complexity.

The purpose of the present subsection is to clarify the limits of this mechanism and to identify situations in which the compression advantage may deteriorate or disappear.
The key observation is that decision compression is ultimately a consequence of geometric regularity. Whenever this regularity is lost, the complexity of the policy geometry may increase substantially and become comparable to the complexity of the underlying value representation.

The following proposition formalizes this observation.

\begin{proposition}[Limits of Decision Compression]
\label{prop:limits_of_decision_compression}

Consider a family of decision problems $\{\mathcal M_n\}_{n\ge 1}$.

Suppose that at least one of the following conditions holds:

\begin{enumerate}

\item[(i)]
the fragmentation index is unbounded, 
$F(n)\rightarrow\infty$;

\item[(ii)]
the convex-region property of
Theorem~\ref{thm:convex_region}
fails, allowing optimal action regions to develop arbitrarily complex nonconvex geometries;

\item[(iii)]
the monotonicity assumptions of Section~\ref{subsec:structural_assumptions} are violated, so that the threshold representations established in
Section~\ref{subsec:threshold_representations} need not exist.

\end{enumerate}

Then the geometric complexity of the optimal policy is no longer uniformly controlled by the action structure of the problem.

Moreover, there exist families of decision problems for which

\begin{equation}
\mathcal C_D(n)
=
\Theta\!\bigl(
\mathcal C_V(n)
\bigr).
\label{eq:compression_breakdown}
\end{equation}

\end{proposition}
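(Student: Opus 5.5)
The proposition has two parts, and I will treat them differently. The first part says that geometric control fails under (i)--(iii). The second part is an existence claim. For both I will use explicit constructions rather than general arguments, because the conclusions are statements about what the earlier theorems \emph{cannot} guarantee.

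For the first part, consider each condition in turn.

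Under (i), write $C_R(n)=|\mathcal A|\,F(n)$, which follows from Definition~\ref{def:fragmentation_index}. Then $\mathcal C_D(n)\ge w_R C_R(n)+w_F F(n)\to\infty$ while $|\mathcal A|$ stays fixed. Hence no bound of the form $\mathcal C_D\le K|\mathcal A|$ as in Theorem~\ref{thm:dimension_free_geometry} can hold uniformly in $n$.

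Under (ii) and (iii), I will show that the hypotheses used in Corollaries~\ref{cor:convex_region_complexity} and~\ref{cor:boundary_graph_representation} are precisely what excluded fragmentation. Without them, nothing prevents $F(n)\to\infty$, which reduces these cases to (i). The reduction will be made concrete by the construction below, which simultaneously violates monotonicity, convexity and boundedness of $F$.

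For the existence claim, I will fix $\mathcal A=\{0,1\}$ and take $\mathcal S_n=\{1,\dots,n\}$ with the natural order. I will choose the rewards $r(s,a)=\mathbf 1\{a=s \bmod 2\}$ and transitions $P(\cdot\mid s,a)=\delta_s$.
\begin{itemize}
\item Since the state is frozen, $Q^*(s,a)=r(s,a)+\gamma V^*(s)$.
\item It follows that $V^*(s)=1/(1-\gamma)$.
\item The optimal policy is $\pi_n^*(s)=s\bmod 2$, so it alternates.
\end{itemize}
Each action region then has roughly $n/2$ components in the order topology, so $C_R(n)=n$, $F(n)=n/2$ and $C_B=1$. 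This gives $\mathcal C_D(n)=\Theta(n)$. This is exactly the situation excluded by Theorem~\ref{thm:ordered_regions}, since increasing differences fails for this reward.

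I must then choose the value-complexity functional $\mathcal C$ of Definition~\ref{def:value_complexity} so that $\mathcal C_V(n)=\Theta(n)$. Since that definition leaves $\mathcal C$ unspecified, I will take the tabular parameter count, that is, the number of states on which $V^*$ or $Q^*$ must be stored. This gives $\mathcal C_V(n)=n$, and therefore $\mathcal C_D(n)=\Theta(\mathcal C_V(n))$, which is \eqref{eq:compression_breakdown}.

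The main obstacle is the meaning of ``connected component'' on a discrete state space. The construction needs the notion that makes $\{1,3,5,\dots\}$ have $\Theta(n)$ components. I will adopt order-intervals, or equivalently graph connectivity on the chain. If a Euclidean setting is preferred, I will instead embed the construction in $\mathcal S=[0,1]$ with $r(s,1)=\sin(2\pi n s)$ and $r(s,0)=0$. The optimal policy then switches $2n$ times, $\mathcal C_V(n)$ is measured as the description length of a piecewise representation with $\Theta(n)$ pieces, and the same $\Theta(n)$ count follows.

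A secondary subtlety is that $\mathcal C_V$ is only defined up to the choice of functional. I will state explicitly that the existence claim holds for this natural choice.
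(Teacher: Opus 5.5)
Your argument is correct in substance, but it takes a genuinely different route from the paper.

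\textbf{How the two proofs differ.} The paper's proof is purely qualitative. It notes that the earlier compression results rested on monotonicity, convexity and bounded fragmentation. It then asserts that without them the number of components or boundary parameters ``may'' grow like $\mathcal C_V$, and concludes \eqref{eq:compression_breakdown} without exhibiting any family. You instead do two things:
\begin{itemize}
\item You prove the first claim under (i) directly from $C_R=|\mathcal A|\,F$.
\item You give an explicit witness: the frozen-state MDP with alternating rewards, or its $[0,1]$ version with $r(s,1)=\sin(2\pi n s)$. It violates (i)--(iii) simultaneously and has $\mathcal C_D(n)=\Theta(n)=\Theta(\mathcal C_V(n))$ with $\mathcal C_V(n)\to\infty$.
\end{itemize}

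\textbf{What your route buys.} It is an actual existence proof, and of the meaningful version. Read literally, any constant family $\mathcal M_n\equiv\mathcal M$ already satisfies $\mathcal C_D=\Theta(\mathcal C_V)$. The real content is therefore a bounded DCR while $\mathcal C_V$ grows, and your family delivers exactly that. It also makes explicit the two choices the paper leaves implicit: what ``connected'' means on a discrete state space, and which value-complexity functional is used. The paper's sketch treats (i)--(iii) uniformly, but it establishes only plausibility.

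\textbf{Points to tighten.}
\begin{itemize}
\item Under (ii) and (iii) you can only prove the ``no longer guaranteed'' reading, witnessed by your family. Do not phrase this as a reduction to (i). A connected nonconvex (say L-shaped) region violates (ii) while keeping $C_R=|\mathcal A|$.
\item Definition~\ref{def:topological_complexity} allows $w_R=w_F=0$. In that case $\mathcal C_D\le w_B\binom{|\mathcal A|}{2}$ for fixed $\mathcal A$, so both of your arguments need the assumption $w_R+w_F>0$.
\item The order topology on a finite chain is discrete, so it would give $C_R=n$ for every policy, threshold policies included. The notion you need is maximal order-intervals, as you say later.
\item The $[0,1]$ version is cleaner. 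There $\mathcal H^{0}(\Gamma_{01})=\Theta(n)$, so $L$ and $\kappa$ are also $\Theta(n)$, and the full index $\mathcal C(\mathcal G)$ is covered as well.
\item In the discrete example $V^*\equiv 1/(1-\gamma)$ is constant, so $\mathcal C_V(n)=n$ comes entirely from the tabular functional. Take $r(s,s\bmod 2)=c_s$ with an incompressible sequence $c_s\in\{1,2\}$. This leaves $\pi_n^*$ unchanged and makes $\mathcal C_V=\Theta(n)$ under description-length functionals too.
\end{itemize}
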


\begin{proof}

The compression results established in Sections~\ref{sec:structural_dynamic_programs} and \ref{sec:structural_geometry} depend critically on the existence of a geometrically simple policy representation.
Under monotonicity, convexity, and threshold structures, the complexity of the policy geometry remains controlled by a bounded number of action regions and decision boundaries. This property yields the compression phenomena described in Theorems~\ref{thm:scaling_law_decision_compression}, \ref{thm:boundary_compression}, and \ref{thm:information_theoretic_compression}.

If fragmentation becomes unbounded, however, the number of connected components required to represent the policy geometry may grow proportionally to the complexity of the value representation itself.
Similarly, if convexity is lost, action regions may develop arbitrarily intricate geometric structures whose description requires a number of parameters comparable to that required for representing the value function.
Finally, when monotonicity fails, threshold representations need not exist and the boundary structure may become arbitrarily irregular, eliminating the geometric simplicity established previously.

Consequently, the complexity of the policy geometry may scale at the same asymptotic rate as the complexity of the value representation, yielding
\eqref{eq:compression_breakdown}.

\end{proof}

Proposition~\ref{prop:limits_of_decision_compression} shows that decision compression is not a universal property of dynamic programs.
Rather, it is a consequence of the structural regularities induced by monotonicity, convexity, and bounded fragmentation.

From a geometric perspective, compression emerges because optimal policies can be represented through a relatively small collection of regular action regions separated by simple decision boundaries. When these regularity properties disappear, the policy geometry itself may become highly complex, thereby eliminating the compression advantage.

The results of this section should therefore be interpreted as identifying a broad class of structured dynamic programs for which geometric representations provide compressed descriptions of optimal decision rules, rather than as a universal statement applying to arbitrary Markov decision processes.

\subsection{Discussion}
\label{subsec:compression_discussion}

The results developed throughout this section establish a coherent relationship between structural regularity, policy geometry, and decision complexity.

The starting point is the structural framework introduced in Section~\ref{sec:structural_dynamic_programs}.
Monotonicity, threshold representations, convex action regions, and bounded fragmentation imply that optimal policies admit geometrically regular representations.
These structural properties induce policy geometries whose complexity remains controlled by the action structure of the problem rather than by the size of the state space.
Building upon this geometric characterization, Sections~\ref{subsec:scaling_laws}-\ref{subsec:information_theoretic_compression} demonstrate that policy geometries constitute compressed representations of optimal decision rules. The scaling laws show that compression increases with value complexity, the boundary-compression results identify decision boundaries as the dominant source of geometric complexity, and the information-theoretic analysis establishes an equivalent conclusion from a description-length perspective.

Taken together, these results yield the following conceptual chain:

\begin{equation}
\textnormal{Structure}
\Longrightarrow
\textnormal{Simple Geometry}
\Longrightarrow
\textnormal{Compression}.
\label{eq:structure_geometry_compression_discussion}
\end{equation}

More specifically, Theorems~\ref{thm:policy_boundary_principle} and~\ref{thm:boundary_compression} show that the informational and geometric content of a structured policy is concentrated near its boundary structure. Consequently, compression emerges because optimal policies can be represented through a comparatively small collection of regular regions separated by simple decision boundaries.
At the same time, Proposition~\ref{prop:limits_of_decision_compression} clarifies that this phenomenon is not universal.
Compression is a consequence of structural regularity and may deteriorate when monotonicity, convexity, or bounded fragmentation are lost.
The theory therefore identifies a broad class of structured dynamic programs for which geometric representations are substantially more compact than value-based representations.

The practical significance of these results lies in their implications for scalability. Whenever decision complexity grows substantially more slowly than value complexity, geometric representations provide a mechanism for mitigating the representational burden associated with large-scale dynamic programs.

\begin{equation}
\textnormal{Compression}
\Longrightarrow
\textnormal{Scalability}.
\label{eq:compression_scalability_discussion}
\end{equation}

This observation provides the conceptual foundation for the learning, approximation, and algorithmic developments that follow.

\section{Learning Policy Geometry}
\label{sec:learning_policy_geometry}

\subsection{Geometric Learning Problem}
\label{subsec:geometric_learning_problem}

Sections~\ref{sec:policy_geometry}, \ref{sec:structural_dynamic_programs}, \ref{sec:structural_geometry}, and \ref{sec:geometric_complexity_compression}
established that optimal policies arising from structured dynamic programs admit geometrically regular representations.
More precisely, Proposition~\ref{prop:tessellation_uniqueness} showed that a deterministic policy is uniquely characterized by its induced tessellation, while Theorem~\ref{thm:policy_boundary_principle} identified the boundary structure as the unique locus at which decision changes occur.
Furthermore, Theorem~\ref{thm:learnability_through_boundaries} established that the optimal policy can be reconstructed from its boundary geometry.

Consequently, the learning problem considered in this paper differs fundamentally from classical value-function or policy-learning formulations.
Rather than estimating the optimal value function $V^{*}$, the optimal state-action value function $Q^{*}$, or the optimal policy $\pi^{*}$ directly, the objective is to estimate the geometric object that carries the decision information.

We begin by formalizing the object of interest.

\begin{definition}[Target Boundary Geometry]
\label{def:target_boundary_geometry}

Let $\Gamma^{*} = \Gamma(\pi^{*})$
denote the boundary structure induced by the optimal policy.
The set

\begin{equation}
\Gamma^{*} = \bigcup_{\substack{a,a'\in\mathcal A\\a\neq a'}} \Gamma_{aa'}
\label{eq:target_boundary_decomposition}
\end{equation}
is called the target boundary geometry.

\end{definition}

The target boundary geometry constitutes the primary object of statistical inference throughout this section.
The available information is represented abstractly through a collection of observations

\begin{equation}
\mathcal D_n = \{Z_1,\ldots,Z_n\},
\label{eq:generic_dataset}
\end{equation}
where each observation $Z_i$ may contain information about the optimal decision structure.

No specific sampling mechanism is imposed at this stage.
The observations may arise from simulation, state queries, optimal-action evaluations, trajectory data, or any other information source capable of providing evidence regarding the underlying policy geometry.

The goal is to construct an estimator

\begin{equation}
\widehat{\Gamma}_n = \widehat{\Gamma}_n(\mathcal D_n)
\label{eq:boundary_estimator}
\end{equation}
of the unknown boundary structure $\Gamma^{*}$.

\begin{definition}[Geometric Learning Problem]
\label{def:geometric_learning_problem}

Given observations $\mathcal D_n$, the geometric learning problem consists of constructing an estimator $\widehat{\Gamma}_n$ such that

\begin{equation}
\widehat{\Gamma}_n \longrightarrow \Gamma^{*}
\label{eq:geometric_consistency_goal}
\end{equation}
under an appropriate notion of geometric convergence.

\end{definition}

The formulation above deliberately separates the learning target from the learning mechanism.
The object to be estimated is the boundary geometry itself, whereas the statistical model governing the observations will be introduced subsequently.
The justification for this viewpoint follows directly from the structural results established previously.
By Theorem~\ref{thm:learnability_through_boundaries}, there exists a reconstruction operator

\begin{equation}
\mathfrak R :
\Gamma^{*}
\longmapsto
\pi^{*},
\label{eq:boundary_reconstruction_operator}
\end{equation}
such that the optimal policy can be recovered from its boundary structure.

Consequently, learning the optimal policy is equivalent to learning the target boundary geometry in the sense that

\begin{equation}
\Gamma^{*}
\quad\Longleftrightarrow\quad
\pi^{*}.
\label{eq:geometry_policy_equivalence_learning}
\end{equation}

This equivalence transforms the original decision-learning problem into a geometric estimation problem.
The remainder of this section investigates the statistical consequences of this reformulation and establishes conditions under which the boundary geometry can be learned efficiently.

\subsection{Boundary Learning Principle}
\label{subsec:boundary_learning_principle}

The geometric learning problem formulated in Section~\ref{subsec:geometric_learning_problem} identifies the target boundary geometry $\Gamma^{*}$ as the primary object of statistical inference.
The purpose of the present subsection is to establish a structural principle that motivates this choice and guides the learning methodology developed in the remainder of the paper.
The key observation is that the geometric representation of a structured policy exhibits a strong localization property.
By Theorem~\ref{thm:policy_boundary_principle}, the optimal policy is locally constant on every connected component of

\begin{equation}
\mathcal S
\setminus
\Gamma^{*}.
\label{eq:constant_regions}
\end{equation}

Consequently, decision changes may occur only on the boundary structure itself.

This property implies that observations collected far from the boundary geometry carry little information regarding the location of decision transitions.
Indeed, whenever a state belongs to the interior of an action region, all sufficiently small perturbations of that state produce identical optimal decisions.

In contrast, observations located near the boundary geometry contain information about the transition between competing actions and therefore provide information regarding the structure of the optimal policy.

The following principle summarizes this observation.

\begin{principle}[Boundary Learning Principle]
\label{prin:boundary_learning}

For structured dynamic programs satisfying the assumptions of Sections~\ref{sec:structural_dynamic_programs} and \ref{sec:structural_geometry}, the statistically informative component of the policy geometry is concentrated near the target boundary geometry $\Gamma^{*}$.

Consequently, efficient learning of the optimal policy may be reduced to the estimation of the boundary structure rather than the estimation of the entire policy over the state space.

\end{principle}

The principle above follows directly from the combination of two structural results established previously.

First, Theorem~\ref{thm:policy_boundary_principle} shows that policy variation is confined to the boundary geometry.
Second, Theorem~\ref{thm:boundary_compression} establishes that the complexity of a structured policy is asymptotically governed by the complexity of its boundary structure.
Together, these results imply that the dominant source of statistical uncertainty lies in the estimation of
$\Gamma^{*}$.

From a learning perspective, the role of the boundary geometry is therefore analogous to that of a low-dimensional sufficient representation of the decision rule.
The objective is not to recover the optimal action at every state individually, but rather to identify the geometric interfaces separating the action regions.
This viewpoint provides the conceptual foundation for the sample-complexity, active-sampling, and reconstruction results developed in the subsequent subsections.

\subsection{Statistical Learning Model}
\label{subsec:statistical_learning_model}

Sections~\ref{subsec:geometric_learning_problem} and~\ref{subsec:boundary_learning_principle} identify the target boundary geometry
$\Gamma^{*}$ as the primary object of statistical inference.

The purpose of the present subsection is to introduce the probabilistic framework within which boundary estimation will be analyzed.
No convergence rates or sample-complexity guarantees are established at this stage.
Rather, the objective is to define the statistical objects and loss functions that will be used throughout the remainder of the section.

\paragraph{Observation Model.}
Let $\mathcal D_n =\{Z_1,\ldots,Z_n\}$
denote a collection of observations generated according to an unknown probability measure
$\mathbb P$ defined on a measurable space $(\mathcal Z,\mathcal B)$.

The framework deliberately remains agnostic regarding the sampling mechanism.
The observations may arise from simulation, state-action evaluations, trajectory data, policy queries, or any other information source capable of providing information about the underlying policy geometry.

\paragraph{Geometric Hypothesis Space.}
Let $\mathfrak G$
denote a family of admissible boundary geometries.

The target boundary geometry
$\Gamma^{*} \in \mathfrak G$
is assumed to belong to this family.

The specification of $\mathfrak G$ is intentionally left abstract.
Subsequent complexity bounds will depend on structural characteristics of this class rather than on the ambient state space itself.

\paragraph{Boundary Estimators.}

A geometric learning procedure is a measurable mapping

\begin{equation}
\widehat{\Gamma}_n : \mathcal Z^n \longrightarrow \mathfrak G,
\label{eq:boundary_estimator_mapping}
\end{equation}
which associates to every dataset $\mathcal D_n$ an estimated boundary geometry

\begin{equation}
\widehat{\Gamma}_n = \widehat{\Gamma}_n(\mathcal D_n).
\label{eq:estimated_boundary_geometry}
\end{equation}

The estimator $\widehat{\Gamma}_n$ is therefore a random element of $\mathfrak G$.

\begin{definition}[Hausdorff Distance]
\label{def:hausdorff_distance}

Let $A,B\subseteq\mathcal S$ be nonempty closed subsets.

The Hausdorff distance between $A$ and $B$ is defined by

\begin{equation}
d_H(A,B)
=
\max
\left\{
\sup_{x\in A}
\inf_{y\in B}
d(x,y),
\;
\sup_{y\in B}
\inf_{x\in A}
d(x,y)
\right\},
\label{eq:hausdorff_distance_definition}
\end{equation}
where $d(\cdot,\cdot)$ denotes the metric on $\mathcal S$.
\end{definition}

The Hausdorff distance provides a natural notion of geometric discrepancy because it measures the maximal localization error between two boundary structures.

\begin{definition}[Boundary Estimation Error]
\label{def:boundary_estimation_error}

The estimation error associated with a boundary estimator $\widehat{\Gamma}_n$ is defined by

\begin{equation}
\mathcal E_n
=
d_H
\bigl(
\widehat{\Gamma}_n,
\Gamma^{*}
\bigr).
\label{eq:boundary_estimation_error}
\end{equation}
\end{definition}

\begin{definition}[Geometric Risk]
\label{def:geometric_risk}
The geometric risk of a boundary estimator $\widehat{\Gamma}_n$ is

\begin{equation}
\mathcal R_G(\widehat{\Gamma}_n)
=
\mathbb E
\!\left[
d_H
\bigl(
\widehat{\Gamma}_n,
\Gamma^{*}
\bigr)
\right],
\label{eq:geometric_risk_definition}
\end{equation}
whenever the expectation exists.

\end{definition}

The quantity $\mathcal R_G$ plays the role of a statistical risk function adapted to geometric estimation.
Unlike classical policy-learning criteria, it measures performance directly in terms of the accuracy with which the decision boundaries are recovered.

The framework introduced above deliberately separates three distinct objects:

\begin{enumerate}
\item the unknown target geometry
      $\Gamma^{*}$;

\item the admissible geometric class
      $\mathfrak G$;

\item the estimator
      $\widehat{\Gamma}_n$.
\end{enumerate}

This separation will allow the subsequent analysis to relate statistical learnability to geometric complexity.
In particular, the sample-complexity results developed below will depend on structural characteristics of the boundary geometry rather than on the cardinality of the state space or the complexity of the value function representation.

\subsection{Boundary Sample Complexity}
\label{subsec:boundary_sample_complexity}

The statistical framework introduced in Section~\ref{subsec:statistical_learning_model} provides a probabilistic formulation of the geometric learning problem.
The purpose of the present subsection is to quantify the amount of information required to estimate the target boundary geometry $\Gamma^{*}$ with prescribed accuracy.
The key question is whether the statistical difficulty of the learning problem is governed by the size of the state space or by the complexity of the boundary geometry itself.
We begin by introducing the corresponding notion of sample complexity.

\begin{definition}[Boundary Sample Complexity]
\label{def:boundary_sample_complexity2}

Let $\varepsilon>0$ and $\delta\in(0,1)$.

The boundary sample complexity is defined as

\begin{equation}
N_{\Gamma}(\varepsilon,\delta)
=
\inf
\left\{
n\ge 1 :
\exists\,
\widehat{\Gamma}_n
\text{ such that }
\mathbb P
\!\left(
d_H
\bigl(
\widehat{\Gamma}_n,
\Gamma^{*}
\bigr)
\le
\varepsilon
\right)
\ge
1-\delta
\right\},
\end{equation}

where $d_H$ denotes the Hausdorff distance introduced in Definition~\ref{def:hausdorff_distance}.
\end{definition}

The quantity $N_{\Gamma}(\varepsilon,\delta)$ represents the smallest number of observations required to localize the target boundary geometry with geometric accuracy
$\varepsilon$ and confidence level $1-\delta$.
The structural results established in Sections~\ref{sec:structural_geometry} and~\ref{sec:geometric_complexity_compression} suggest that the complexity of the learning problem should be governed by the complexity of the boundary geometry rather than by the ambient state space.

Indeed, Theorem~\ref{thm:policy_boundary_principle} showed that policy variation is entirely concentrated on $\Gamma^{*}$, while Theorem~\ref{thm:boundary_compression}
established that the complexity of the optimal policy geometry is asymptotically equivalent to the complexity of its boundary structure.

The following theorem formalizes this observation.

\begin{theorem}[Boundary Sample Complexity Principle]
\label{thm:boundary_sample_complexity_principle}

Suppose that the assumptions of Sections~\ref{sec:structural_dynamic_programs} and~\ref{sec:structural_geometry} hold.
Then the sample complexity of the geometric learning problem is governed by the boundary complexity $C_{B}$.
More precisely, there exists a nondecreasing function

\begin{equation}
\Psi :
\mathbb R_{+}^{3}
\rightarrow
\mathbb R_{+}
\label{eq:sample_complexity_function}
\end{equation}

such that

\begin{equation}
N_{\Gamma}(\varepsilon,\delta)
\le
\Psi
\!\left(
C_{B},
\varepsilon,
\delta
\right),
\label{eq:boundary_sample_complexity_general}
\end{equation}
and the dependence on the state-space cardinality $|\mathcal S|$ enters only through its influence on $C_{B}$.

\end{theorem}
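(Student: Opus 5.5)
The plan is to make the statement concrete by exhibiting one explicit estimator and bounding its sample requirement. We fix the observation model to label-only oracle queries $Z_i=(s_i,\pi^{*}(s_i))$ and work on a bounded state space $\mathcal S=\mathcal X\times\mathcal Z$ with $\mathcal X=[x_{\min},x_{\max}]$. We then use the threshold structure of Theorem~\ref{thm:threshold_representation}. By Corollary~\ref{cor:boundary_graph_representation}, each nonempty $\Gamma_{aa'}$ is the graph $\{x=b_{aa'}(z)\}$. By Theorem~\ref{thm:boundary_complexity_bound}, there are at most $C_B$ such graphs. The estimation problem therefore splits into $C_B$ separate threshold-localization problems. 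Since the Hausdorff distance of a finite union is at most the maximum of the componentwise Hausdorff distances, we have
\begin{equation*}
d_H\bigl(\widehat\Gamma_n,\Gamma^{*}\bigr)\le \max_{a<a'} d_H\bigl(\widehat\Gamma_{aa'},\Gamma_{aa'}\bigr).
\end{equation*}
It therefore suffices to control each graph to accuracy $\varepsilon$.

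For each boundary we build the estimator in two steps.

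First, we cover $\mathcal Z$ by an $\varepsilon/2$-net $\{z_1,\dots,z_M\}$. To keep the covering number $M=M(\varepsilon)$ finite and independent of $|\mathcal S|$, we assume the threshold functions $b_{aa'}$ are Lipschitz with constant $\Lambda$; this is the regularity implicit in $\Gamma^{*}\in\mathfrak G$.

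Second, at each $z_j$ we locate $b_{aa'}(z_j)$ by bisection on $x$, using the monotonicity of Proposition~\ref{prop:monotone_optimal_policy}. Because the policy is increasing in $x$, each oracle answer indicates on which side of the threshold the query lies. Reaching accuracy $\varepsilon/(2(1+\Lambda))$ takes $\lceil\log_2(2(1+\Lambda)\operatorname{diam}\mathcal X/\varepsilon)\rceil$ queries per net point.

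We then define $\widehat b_{aa'}$ by Lipschitz interpolation of the recovered values, and set $\widehat\Gamma_{aa'}$ to be its graph. The Lipschitz bound on both $b_{aa'}$ and $\widehat b_{aa'}$ turns pointwise accuracy at the net points into Hausdorff accuracy $\varepsilon$ everywhere. This construction is deterministic, so $\delta$ enters only trivially. If the oracle is noisy with error probability bounded below $1/2$, we replace each bisection step by a majority vote over $O(\log(C_B M\log(1/\varepsilon)/\delta))$ repeated queries. A union bound over all steps then gives overall confidence $1-\delta$.

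Collecting terms yields
\begin{equation*}
\Psi(C_B,\varepsilon,\delta)=C_B\,M(\varepsilon)\,\Bigl\lceil\log_2\tfrac{2(1+\Lambda)\operatorname{diam}\mathcal X}{\varepsilon}\Bigr\rceil\,\Bigl\lceil c\log\tfrac{C_B M(\varepsilon)\log(1/\varepsilon)}{\delta}\Bigr\rceil.
\end{equation*}
This function is nondecreasing in $C_B$, $1/\varepsilon$ and $1/\delta$. Under the usual reparametrization of $\Psi$ to take $(C_B,1/\varepsilon,1/\delta)$ as arguments, it is nondecreasing in all of them. The cardinality $|\mathcal S|$ does not appear explicitly, because every quantity depends only on the number of boundaries and on the geometry of each boundary.

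The main obstacle is that the statement, as posed, hides the covering number $M(\varepsilon)$ of $\mathcal Z$ and the regularity constant $\Lambda$ inside $\Psi$. Without some regularity of the $b_{aa'}$, no finite-sample Hausdorff guarantee can hold, since an arbitrary threshold function cannot be localized from finitely many queries. The step I would handle most carefully is therefore justifying that $\Gamma^{*}\in\mathfrak G$ carries a uniform Lipschitz (or bounded-variation) bound. I would first try to absorb this into the definition of $\mathfrak G$, letting $\Psi$ depend on the fixed class. I would then argue that the only dependence on the ambient space runs through the number of boundary pieces, $C_B$, together with class-level constants, which matches the statement's claim.
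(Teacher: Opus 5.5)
Your proposal is sound in outline but takes a genuinely different route from the paper.

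\textbf{Comparison of the two routes.} The paper's proof is purely qualitative. It combines Theorem~\ref{thm:policy_boundary_principle} (the policy is locally constant off $\Gamma^{*}$, so interior observations say nothing about transitions) with Theorem~\ref{thm:boundary_compression} ($\mathcal C_D=\Theta(C_B)$). From these it concludes that $N_\Gamma$ is ``controlled by'' $C_B$, but it never constructs an estimator or exhibits $\Psi$.

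You instead build an explicit label-only estimator: a net in $\mathcal Z$, plus bisection in the ordered coordinate using monotonicity and Corollary~\ref{cor:boundary_graph_representation}. This gives a concrete bound of order $C_B\,M\log(1/\varepsilon)$, with an extra $\log(1/\delta)$ factor for a noisy oracle.

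Your route buys an actual worst-case guarantee over a regular class. It also exposes what the statement hides: the bound depends on $\Lambda$, $\operatorname{diam}\mathcal X$ and the covering numbers of $\mathcal Z$. Those covering numbers grow like $\varepsilon^{-(d-1)}$ for a bounded $(d-1)$-dimensional $\mathcal Z$, so the ambient geometry does enter, just not through $|\mathcal S|$. The paper's route is shorter and needs no extra assumptions, but it never produces a bound.

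\textbf{First point to tighten: make the uniformity explicit.} Under the paper's literal definition, $N_\Gamma(\varepsilon,\delta)$ refers to one fixed $\Gamma^{*}$ and only asks that \emph{some} estimator succeed. The constant map $\widehat\Gamma_n\equiv\Gamma^{*}\in\mathfrak G$ then gives $N_\Gamma\equiv1$, and the statement holds trivially with $\Psi\equiv1$. Three features of your proposal make sense only for the worst case over $\Gamma^{*}\in\mathfrak G$:
\begin{enumerate}
\item the explicit construction itself;
\item your claim that no finite-sample guarantee is possible without regularity;
\item your reorientation of monotonicity towards $(1/\varepsilon,1/\delta)$.
\end{enumerate}
That worst-case reading is exactly what justifies building the Lipschitz bound into $\mathfrak G$, so say so.

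\textbf{Second point to tighten: your constants do not close.} The factor $1+\Lambda$ belongs on the mesh, not on the bisection accuracy. Two $\Lambda$-Lipschitz thresholds that agree on an $\varepsilon/2$-net can have graphs at Hausdorff distance $\Lambda\varepsilon/2$. So at that mesh the guarantee fails for large $\Lambda$, however accurate the bisection.

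The fix is straightforward. Take mesh $r$, per-point accuracy $\eta$, and a $\Lambda$-Lipschitz $\widehat b_{aa'}$ within $\eta$ of $b_{aa'}$ at the net points; the McShane extension of the bisection outputs works. Comparing each point of either graph with the point of the other graph over the nearest net point gives $d_H\le(1+\Lambda)r+\eta$. Hence $r=\varepsilon/(2(1+\Lambda))$ and $\eta=\varepsilon/2$ suffice. This only replaces $M(\varepsilon)$ by $M\bigl(\varepsilon/(2(1+\Lambda))\bigr)$.
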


\begin{proof}

By Theorem~\ref{thm:policy_boundary_principle}, the optimal policy is locally invariant away from $\Gamma^{*}$. Consequently, observations collected in the interior of action regions do not contribute to the localization of decision transitions.
The informative component of the learning problem is therefore restricted to the boundary structure.
Furthermore, Theorem~\ref{thm:boundary_compression} establishes that the effective complexity of the policy geometry is characterized by $C_{B}$.
Hence any estimator of $\Gamma^{*}$ requires information only about a geometric object whose complexity is measured by
$C_{B}$, which implies that the corresponding sample complexity is controlled by this quantity rather than by the ambient state space.

\end{proof}

Theorem~\ref{thm:boundary_sample_complexity_principle} constitutes the first statistical consequence of the geometric framework developed in the previous sections.
Its main implication is conceptual rather than quantitative.
The result establishes that the relevant notion of complexity for policy learning is the complexity of the decision boundaries and not the complexity of the state space itself.
Subsequent sections will exploit this principle to derive more explicit learning guarantees and active-sampling strategies adapted to the geometry of optimal policies.

\subsection{Active Boundary Sampling}
\label{subsec:active_boundary_sampling}

The preceding subsections establish that the statistical complexity of policy learning is governed by the geometry of the decision boundaries.
A natural question therefore arises:

\begin{quote}
Where should observations be collected in order to estimate the target boundary geometry most efficiently?
\end{quote}

The purpose of the present subsection is not to introduce a specific learning algorithm, but rather to identify the regions of the state space that contain the largest amount of information about the unknown boundary structure.

\paragraph{Decision Gaps and Boundary Geometry.}

Recall from Definition~\ref{def:action_difference} that, for every pair of actions $a,a'\in\mathcal A$,

\begin{equation}
\Delta_{aa'}(s) = Q^{*}(s,a) - Q^{*}(s,a'),
\label{eq:decision_gap}
\end{equation}
denotes the corresponding action-difference function.

Furthermore, Corollary~\ref{cor:boundary_graph_representation} established that pairwise decision boundaries admit the representation

\begin{equation}
\Gamma_{aa'}
=
\left\{
s\in\mathcal S :
\Delta_{aa'}(s)=0
\right\}.
\label{eq:pairwise_boundary_gap}
\end{equation}
Hence, the geometry of the optimal policy is completely determined by the zero-level sets of the action-difference functions.

The following definition identifies the states that are geometrically close to decision transitions.

\begin{definition}[Boundary Neighborhood]
\label{def:boundary_neighborhood}

Let $\tau>0$.

For each pair of actions $a,a'\in\mathcal A$, define

\begin{equation}
\mathcal B_{aa'}(\tau)
=
\left\{
s\in\mathcal S :
|\Delta_{aa'}(s)|
\le
\tau
\right\}.
\label{eq:pairwise_boundary_neighborhood}
\end{equation}

The global boundary neighborhood is

\begin{equation}
\mathcal B(\tau)
=
\bigcup_{\substack{a,a'\in\mathcal A\\a\neq a'}}
\mathcal B_{aa'}(\tau).
\label{eq:global_boundary_neighborhood}
\end{equation}

\end{definition}

By construction,

\begin{equation}
\Gamma^{*}
\subseteq
\mathcal B(\tau),
\quad
\forall \tau>0.
\label{eq:boundary_inside_neighborhood}
\end{equation}

Moreover,

\begin{equation}
\bigcap_{\tau>0}
\mathcal B(\tau)
=
\Gamma^{*}.
\label{eq:boundary_limit_neighborhood}
\end{equation}

Thus, $\mathcal B(\tau)$ provides a geometric approximation of the target boundary structure.

The next principle identifies the regions of maximal statistical relevance.

\begin{principle}[Active Boundary Sampling Principle]
\label{prin:active_boundary_sampling}

Among all states in the state space, those belonging to boundary neighborhoods $\mathcal B(\tau)$ contain the largest amount of information regarding the location of the target boundary geometry $\Gamma^{*}$.

Consequently, observation mechanisms that allocate a larger fraction of samples to $\mathcal B(\tau)$ are expected to estimate the policy geometry more efficiently than observation mechanisms based on uniform exploration of the state space.

\end{principle}

The intuition follows directly from Theorem~\ref{thm:policy_boundary_principle}.
Inside the interior of an action region, the optimal policy is locally invariant.
Additional observations therefore provide little information regarding the location of decision transitions.
In contrast, states satisfying

\begin{equation}
|\Delta_{aa'}(s)|
\approx 0
\label{eq:small_gap_region}
\end{equation}

lie near competing action regions.
Small perturbations of the state may then alter the identity of the optimal action, making such observations particularly informative for boundary localization.

The principle above should be interpreted as a structural guideline rather than a concrete algorithmic prescription.
Its role is to identify the regions of the state space that concentrate the informational content relevant for learning.
The statistical consequences of this localization phenomenon are developed in the subsequent subsections, where boundary-focused learning procedures are shown to exploit the geometric compression properties established in Sections~\ref{sec:structural_geometry} and~\ref{sec:geometric_complexity_compression}.

\subsection{Policy Reconstruction}
\label{subsec:policy_reconstruction1}

The previous subsections formulate policy learning as a geometric estimation problem whose objective is the recovery of the target boundary geometry $\Gamma^{*}$.

The purpose of the present subsection is to establish the converse link between geometry estimation and policy estimation.
The key question is the following:

\begin{quote}
If the boundary geometry can be estimated accurately, does this suffice to recover the optimal policy?
\end{quote}

The answer follows from the structural results established in Sections~\ref{sec:structural_geometry} and~\ref{sec:geometric_complexity_compression}.

Recall that Theorem~\ref{thm:learnability_through_boundaries} established that the optimal policy is uniquely determined by its boundary geometry.
Consequently, the estimation of $\Gamma^{*}$ naturally induces an estimator of the optimal policy.

\begin{definition}[Policy Reconstruction Operator]
\label{def:policy_reconstruction_operator}

Let $\mathfrak G$ denote the admissible family of boundary geometries introduced in Section~\ref{subsec:statistical_learning_model}.

A policy reconstruction operator is a mapping

\begin{equation}
\mathfrak R :
\mathfrak G
\longrightarrow
\Pi,
\label{eq:policy_reconstruction_operator}
\end{equation}
which associates with every admissible boundary geometry $\Gamma\in\mathfrak G$ the unique deterministic policy consistent with the induced tessellation.

\end{definition}

The existence and uniqueness of $\mathfrak R$ follow from Theorem~\ref{thm:learnability_through_boundaries} together with Proposition~\ref{prop:tessellation_uniqueness}.

Given a boundary estimator $\widehat{\Gamma}_n$, the corresponding policy estimator is defined through geometric reconstruction.

\begin{definition}[Reconstructed Policy Estimator]
\label{def:reconstructed_policy_estimator}

Let $\widehat{\Gamma}_n \in \mathfrak G$ be a boundary estimator.

The reconstructed policy estimator is

\begin{equation}
\widehat{\pi}_n
=
\mathfrak R
\!\left(
\widehat{\Gamma}_n
\right).
\label{eq:reconstructed_policy_estimator}
\end{equation}

\end{definition}

The next theorem establishes that consistency of boundary estimation implies consistency of policy reconstruction.

\begin{theorem}[Policy Reconstruction Principle]
\label{thm:policy_reconstruction_principle}

Suppose that

\begin{equation}
d_H
\bigl(
\widehat{\Gamma}_n,
\Gamma^{*}
\bigr)
\longrightarrow 0
\qquad
\text{in probability}.
\label{eq:boundary_consistency}
\end{equation}

Then the reconstructed policy sequence $\widehat{\pi}_n = \mathfrak R \!\left(\widehat{\Gamma}_n \right)$ converges to the optimal policy $\pi^{*}$ in the sense that policy discrepancies can occur only inside neighborhoods whose size vanishes with

\begin{equation}
d_H
\bigl(
\widehat{\Gamma}_n,
\Gamma^{*}
\bigr).
\label{eq:policy_error_boundary_error}
\end{equation}

\end{theorem}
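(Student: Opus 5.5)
The plan is to turn the qualitative statement into an explicit inclusion. Write $\varepsilon_n = d_H(\widehat{\Gamma}_n,\Gamma^{*})$ and let
\[
\Gamma^{*}_{\varepsilon} = \{ s\in\mathcal S : \operatorname{dist}(s,\Gamma^{*})\le \varepsilon\}
\]
denote the closed $\varepsilon$-tube around the target boundary geometry. The core claim to establish is the deterministic inclusion
\[
\{ s\in\mathcal S : \widehat{\pi}_n(s)\neq \pi^{*}(s)\} \subseteq \Gamma^{*}_{\varepsilon_n},
\]
which is exactly the statement that policy discrepancies are confined to a neighborhood whose size is governed by $d_H(\widehat{\Gamma}_n,\Gamma^{*})$. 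Convergence then follows from \eqref{eq:boundary_consistency}. Since $\Gamma^{*}$ is closed, $\bigcap_{\varepsilon>0}\Gamma^{*}_{\varepsilon}=\Gamma^{*}$. Hence the discrepancy set shrinks onto $\Gamma^{*}$ in probability. Whenever $\mu(\mathcal S)<\infty$, continuity of measure also gives $\mu(\{\widehat{\pi}_n\neq\pi^{*}\})\to\mu(\Gamma^{*})$ in probability, and this limit is zero when $\Gamma^{*}$ is Lebesgue-null, for instance under the graph representation of Corollary~\ref{cor:boundary_graph_representation}.

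To prove the inclusion, fix $s$ with $\operatorname{dist}(s,\Gamma^{*})>\varepsilon_n$. By the triangle inequality and the definition of $d_H$,
\[
\operatorname{dist}(s,\widehat{\Gamma}_n)\ge \operatorname{dist}(s,\Gamma^{*})-\varepsilon_n>0,
\]
so $s$ lies off both boundaries. By Theorem~\ref{thm:policy_boundary_principle}, $s$ belongs to the interior of a unique region $\mathcal R_a(\pi^{*})$, and $\widehat{\pi}_n$ is constant on the connected component of $\mathcal S\setminus\widehat{\Gamma}_n$ containing $s$. It then remains to show that $\mathfrak R$ assigns the label $a$ to this component.

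Here the structural assumptions enter. By Theorem~\ref{thm:convex_region}, each $\mathcal R_a$ is convex. I would argue that the core $\mathcal R_a^{\varepsilon}=\{u\in\mathcal R_a:\operatorname{dist}(u,\Gamma^{*})>\varepsilon\}$ is connected, at least for small $\varepsilon$ and under mild regularity of $\mathcal R_a$, by the same triangle-inequality argument used above. This core avoids $\widehat{\Gamma}_n$, so it sits inside a single component of $\mathcal S\setminus\widehat{\Gamma}_n$. I would then take, as the precise meaning of ``the unique policy consistent with the induced tessellation'' in Definition~\ref{def:policy_reconstruction_operator}, the rule that a component containing $\mathcal R_a^{\varepsilon_n}$ receives label $a$. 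With that reading, $\widehat{\pi}_n(s)=a=\pi^{*}(s)$.

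I expect the main obstacle to be this label-matching step. Definition~\ref{def:policy_reconstruction_operator} fixes labels only abstractly, so it is not automatic that a component of $\mathcal S\setminus\widehat{\Gamma}_n$ meets only one core $\mathcal R_a^{\varepsilon_n}$, and a component could in principle bridge two regions. I would first try to rule this out for small $\varepsilon_n$. Under the threshold representation of Theorem~\ref{thm:threshold_representation}, a path from $\mathcal R_a^{\varepsilon_n}$ to $\mathcal R_{a'}^{\varepsilon_n}$ must cross $\Gamma_{aa'}$, a graph over $\mathcal Z$. The goal is to show that a set Hausdorff-close to this graph still separates the two cores, using an intermediate-value argument along the ordered coordinate $x$. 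If that fails in general, I would add an explicit assumption that $\widehat{\Gamma}_n$ separates the cores, and conclude on the event where it holds, whose probability tends to one.
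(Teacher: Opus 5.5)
Your route is the paper's: local constancy of $\pi^{*}$ off $\Gamma^{*}$ (Theorem~\ref{thm:policy_boundary_principle}) combined with $d_H(\widehat{\Gamma}_n,\Gamma^{*})\to 0$. The paper finishes only by asserting that outside the shrinking neighborhoods $\widehat{\pi}_n$ coincides with $\pi^{*}$. Your inclusion $\{\widehat{\pi}_n\neq\pi^{*}\}\subseteq\Gamma^{*}_{\varepsilon_n}$ is the right quantitative target, and the triangle-inequality step is correct. But the step you flag is a genuine gap, and Hausdorff convergence together with an abstract $\mathfrak R$ cannot close it. In particular, the intermediate-value plan fails as you phrase it, for an arbitrary set Hausdorff-close to the graph.

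\emph{Separation fails.} Remove an open arc of length $2\eta$ from $\Gamma^{*}=\{0\}\times[-1,1]\subset[-1,1]^2$, or replace $\Gamma^{*}$ by a finite $\eta$-net of itself. Either way you get a closed set within Hausdorff distance $\eta$ whose complement is connected. Any reconstruction that is constant on the cells of $\mathcal S\setminus\widehat{\Gamma}_n$ is then wrong on an entire action region.

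\emph{Labels are not recoverable either.} Take $a_0\prec a_1\prec a_2$ and the increasing threshold policies equal to $a_0$ on $\{x<0\}$ and to $a_1$, respectively $a_2$, on $\{x\ge 0\}$. Both have boundary set $\{x=0\}$, so $d_H=0$ while they disagree on half of $\mathcal S$. No map $\mathfrak G\to\Pi$ as in Definition~\ref{def:policy_reconstruction_operator} is right for both.

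Your rule (``the component containing $\mathcal R_a^{\varepsilon_n}$ receives label $a$'') avoids this only because it is built from $\mathcal R_a$ and $\varepsilon_n$, i.e.\ from $\pi^{*}$ and $\Gamma^{*}$ themselves. If oracle labels are intended, they must be written into the definition of $\mathfrak R$. Your fallback of assuming separation with probability tending to one is then the substance of the claim, not a technicality.

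Two smaller points:
\begin{itemize}
\item Connectedness of the core is a separate geometric claim. The triangle inequality only shows that the core misses $\widehat{\Gamma}_n$.
\item Continuity of measure gives $\mu(\{\widehat{\pi}_n\neq\pi^{*}\})\le\mu(\Gamma^{*}_{\varepsilon_n})\to\mu(\Gamma^{*})$. That is an upper bound, not convergence to $\mu(\Gamma^{*})$.
\end{itemize}

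What does work is your intermediate-value idea aimed at the disagreement set itself. Restrict $\mathfrak G$ to threshold geometries and take $\mathfrak R$ to be the order rule; cores, their connectedness, and component labels then play no role. For two actions, let $\Gamma^{*}$ be the graph of a continuous $b$ over a convex $\mathcal Z$ (Euclidean metric). Let $\widehat{\Gamma}_n$ be the graph of some $\hat b:\mathcal Z\to\mathcal X$, and set $\widehat{\pi}_n(x,z)=a$ iff $x<\hat b(z)$.

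Suppose $p=(x_0,z_0)$ satisfies $b(z_0)\le x_0<\hat b(z_0)$. Put $q=(\hat b(z_0),z_0)\in\widehat{\Gamma}_n$ and choose $y=(b(z_1),z_1)\in\Gamma^{*}$ with $d(q,y)\le\varepsilon_n+\eta$.
\begin{itemize}
\item If $b(z_1)\le x_0$, then $0\le x_0-b(z_1)<\hat b(z_0)-b(z_1)$, hence $d(p,y)\le d(q,y)$.
\item Otherwise $b(z_0)\le x_0<b(z_1)$. The intermediate value theorem along the segment from $z_0$ to $z_1$ gives $z'$ with $b(z')=x_0$ and $d\bigl(p,(x_0,z')\bigr)=|z'-z_0|\le|z_1-z_0|\le d(q,y)$.
\end{itemize}
The case $\hat b(z_0)\le x_0<b(z_0)$ is symmetric. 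Letting $\eta\downarrow 0$ gives $\{\widehat{\pi}_n\neq\pi^{*}\}\subseteq\Gamma^{*}_{\varepsilon_n}$, using only $\widehat{\Gamma}_n\subseteq\Gamma^{*}_{\varepsilon_n}$. With more than two actions, the union no longer fixes the labels, as the example above shows. You would then work with the labeled thresholds $b_{aa'}$ and run this argument threshold by threshold.
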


\begin{proof}

By Theorem~\ref{thm:policy_boundary_principle}, the optimal policy is locally constant away from the boundary geometry.

Therefore, any discrepancy between $\widehat{\pi}_n$ and $\pi^{*}$ must arise from inaccuracies in the localization of decision boundaries.
Since $d_H\bigl(\widehat{\Gamma}_n,\Gamma^{*} \bigr)\to 0$, the estimated boundaries converge geometrically to the true boundaries.
Consequently, the regions in which policy disagreement may occur shrink toward the true boundary geometry.
Outside these shrinking neighborhoods, the reconstructed policy coincides with the optimal policy.
The claim follows.

\end{proof}

\noindent Theorem~\ref{thm:policy_reconstruction_principle} provides the final link in the geometric learning framework.
Combined with Theorem~\ref{thm:boundary_sample_complexity_principle} and Principle~\ref{prin:active_boundary_sampling}, it shows that policy learning can be reduced to three successive steps:

\begin{enumerate}
\item estimate the boundary geometry;

\item localize the decision boundaries accurately;

\item reconstruct the policy through the operator
      $\mathfrak R$.
\end{enumerate}

Thus, the statistical analysis of policy learning may be conducted entirely through the geometry of decision boundaries.

\subsection{Learning Guarantees}
\label{subsec:learning_guarantees}

The preceding subsections establish a complete geometric formulation of the policy-learning problem.

More precisely,

\begin{enumerate}
\item the target object of inference is the boundary geometry
      $\Gamma^{*}$;

\item the statistical complexity of the learning problem is governed by the complexity of this geometry;

\item policy estimators are obtained through the reconstruction operator
      $\mathfrak R$.
\end{enumerate}

The purpose of the present subsection is to establish the final link between geometric estimation accuracy and decision accuracy.
To do so, we introduce a generic notion of policy disagreement.

\begin{definition}[Policy Disagreement Risk]
\label{def:policy_disagreement_risk}

Let $\mu$ be a probability measure on the state space $\mathcal S$.

For a policy estimator $\widehat{\pi}_n$, the policy disagreement risk is defined by

\begin{equation}
\mathcal R_{\pi}
(\widehat{\pi}_n)
=
\mu
\!\left(
\left\{
s\in\mathcal S :
\widehat{\pi}_n(s)
\neq
\pi^{*}(s)
\right\}
\right).
\label{eq:policy_disagreement_risk}
\end{equation}

\end{definition}

The quantity $\mathcal R_{\pi}$ measures the probability mass of states on which the reconstructed policy disagrees with the optimal policy.

The next theorem establishes that geometric estimation error controls policy error.

\begin{theorem}[Geometric Learning Guarantee]
\label{thm:geometric_learning_guarantee}

Suppose that the assumptions of Theorem~\ref{thm:policy_reconstruction_principle} hold.

Then there exists a nondecreasing function

\begin{equation}
\Phi :
\mathbb R_{+}
\rightarrow
\mathbb R_{+}
\label{eq:policy_stability_function}
\end{equation}

satisfying

\begin{equation}
\Phi(0)=0,
\label{eq:phi_zero}
\end{equation}

such that every reconstructed policy estimator

\begin{equation}
\widehat{\pi}_n
=
\mathfrak R
\!\left(
\widehat{\Gamma}_n
\right)
\label{eq:policy_from_boundary_estimator}
\end{equation}

satisfies

\begin{equation}
\mathcal R_{\pi}
(\widehat{\pi}_n)
\le
\Phi
\!\left(
d_H
\bigl(
\widehat{\Gamma}_n,
\Gamma^{*}
\bigr)
\right).
\label{eq:policy_risk_bound}
\end{equation}

Consequently,

\begin{equation}
d_H
\bigl(
\widehat{\Gamma}_n,
\Gamma^{*}
\bigr)
\longrightarrow 0
\end{equation}

implies

\begin{equation}
\mathcal R_{\pi}
(\widehat{\pi}_n)
\longrightarrow 0.
\label{eq:policy_consistency}
\end{equation}

\end{theorem}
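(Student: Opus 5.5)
The plan is to build $\Phi$ explicitly as a tubular-neighborhood mass function. For $r\ge 0$, set
\begin{equation*}
\mathcal N_r(\Gamma^{*}) = \left\{ s\in\mathcal S : \inf_{y\in\Gamma^{*}} d(s,y)\le r \right\},
\qquad
\Phi_0(r) = \mu\bigl(\mathcal N_r(\Gamma^{*})\bigr) - \mu(\Gamma^{*}).
\end{equation*}
The function $\Phi_0$ is nondecreasing because the neighborhoods are nested. We have $\Phi_0(0)=0$ because $\Gamma^{*}$ is closed, so $\mathcal N_0(\Gamma^{*})=\Gamma^{*}$. To handle the boundary itself, I will first check that $\mu(\Gamma^{*})=0$. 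Under the threshold representation of Corollary~\ref{cor:boundary_graph_representation}, each $\Gamma_{aa'}$ is a graph $x=b_{aa'}(z)$. Fubini then gives Lebesgue measure zero, and this transfers to $\mu$ provided $\mu$ is absolutely continuous. I will state absolute continuity of $\mu$ as the standing regularity assumption implicit in the theorem. I then define $\Phi(r)=\mu(\mathcal N_r(\Gamma^{*}))$, which is nondecreasing with $\Phi(0)=0$.

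The core step is the inclusion
\begin{equation*}
\{s : \widehat\pi_n(s)\neq\pi^{*}(s)\}\setminus \Gamma^{*}
\subseteq \mathcal N_{\eta}(\Gamma^{*}),
\qquad
\eta = d_H\bigl(\widehat\Gamma_n,\Gamma^{*}\bigr).
\end{equation*}
Take $s\notin\mathcal N_\eta(\Gamma^{*})$, so that $\operatorname{dist}(s,\Gamma^{*})>\eta$. By Theorem~\ref{thm:policy_boundary_principle}, $\pi^{*}$ is constant on the connected component of $\mathcal S\setminus\Gamma^{*}$ containing $s$. Consider the ball of radius $\operatorname{dist}(s,\Gamma^{*})$ around $s$ together with a path joining $s$ to a reference point of its region. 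I will argue that the path can be chosen so that every point on it stays at distance more than $\eta$ from $\Gamma^{*}$. The definition of $d_H$ then forces every point of $\widehat\Gamma_n$ to lie within $\eta$ of $\Gamma^{*}$, so the path never meets $\widehat\Gamma_n$. It follows that $s$ lies in the same component of $\mathcal S\setminus\widehat\Gamma_n$ as the corresponding "deep" part of the region of $\pi^{*}$. The reconstruction operator $\mathfrak R$ of Definition~\ref{def:policy_reconstruction_operator} labels components consistently with the tessellation, so $\widehat\pi_n(s)=\pi^{*}(s)$. Taking $\mu$-measure of the inclusion yields \eqref{eq:policy_risk_bound}.

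The consequence $\mathcal R_\pi(\widehat\pi_n)\to0$ then needs right-continuity of $\Phi$ at $0$. This follows from continuity from above of the finite measure $\mu$: we have $\bigcap_{r>0}\mathcal N_r(\Gamma^{*})=\Gamma^{*}$, since $\Gamma^{*}$ is closed, and $\mu(\Gamma^{*})=0$. Composing with $d_H(\widehat\Gamma_n,\Gamma^{*})\to0$, taken in probability as in Theorem~\ref{thm:policy_reconstruction_principle}, gives convergence in probability of the risk. Because the risk is bounded by $1$, it also converges in expectation.

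The main obstacle is the connectivity step. It requires three things: (a) $\mathfrak R$ assigns labels to components of $\mathcal S\setminus\widehat\Gamma_n$ by matching them to the true regions; (b) each true region minus its $\eta$-neighborhood of $\Gamma^{*}$ stays connected, so that labels cannot flip there; and (c) the matching is correct once $\eta$ is small. For (b) I would try the convexity from Theorem~\ref{thm:convex_region}: a convex region with the $\eta$-tube around its boundary removed is an inner parallel set of a convex set, hence convex and connected. For (a) and (c), the cleanest route is to define $\mathfrak R$ by majority or oracle labeling of components. If this fails, I would fall back to accepting a bad event of small probability, absorbed into the "in probability" statement.
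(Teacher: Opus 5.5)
Your route is the same as the paper's. You localize the disagreement set in a tube around $\Gamma^{*}$ and let $\Phi$ be the $\mu$-mass of that tube; the paper does this in a few lines by citing Theorem~\ref{thm:policy_reconstruction_principle}. Your additions are genuinely needed and are missing from the paper: $\mu(\Gamma^{*})=0$ via absolute continuity and the graph representation, and right-continuity of $\Phi$ at $0$ via continuity from above. Without them a point mass of $\mu$ on $\Gamma^{*}$ defeats the consequence.

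The gap is the core inclusion, which you flag but do not close, and it cannot be closed from Hausdorff closeness alone. Your path argument only shows that $\widehat\Gamma_n$ cannot \emph{split} the deep part of one true region. The bound $d_H(\widehat\Gamma_n,\Gamma^{*})\le\eta$ says nothing about whether $\widehat\Gamma_n$ still \emph{separates} different regions.

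Here is a counterexample. Take $\mathcal S=[-1,1]^2$, $\pi^{*}(x,z)=a$ iff $x<0$, $\mu$ uniform, $\Gamma^{*}=\{0\}\times[-1,1]$, and $\widehat\Gamma_n=\{0\}\times\bigl([-1,-\tfrac1n]\cup[\tfrac1n,1]\bigr)$. Then $d_H(\widehat\Gamma_n,\Gamma^{*})=1/n$, but $\mathcal S\setminus\widehat\Gamma_n$ is connected. Any rule that labels its components therefore returns a constant policy: matching, majority vote, oracle queries, or the paper's ``policy consistent with the induced tessellation''. Hence $\mathcal R_\pi(\widehat\pi_n)=1/2$ for all $n$, while your $\Phi(1/n)=1/n$. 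The inclusion, the bound, and the consequence all fail.

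This estimator is deterministic and converges in $d_H$, so your fallback of absorbing a small-probability bad event cannot rescue it. Moreover, an unlabeled set does not record which action lies on which side, since the label-swapped policy has the same boundary. So $\mathfrak R$ needs extra information such as the action order. The paper's proof has the same hole, because Theorem~\ref{thm:policy_reconstruction_principle} merely asserts the localization.

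To close the gap, restrict the admissible class and make $\mathfrak R$ concrete. One way to set this up:
\begin{itemize}
\item Let $\widehat\Gamma_n$ be a labeled family of graphs $\widehat\Gamma_{aa'}=\{x=\hat b_{aa'}(z)\}$ of threshold functions defined on all of $\mathcal Z$.
\item Let $\mathfrak R$ be the threshold rule of Theorem~\ref{thm:threshold_representation} with the known action order.
\item Measure the error by $\eta=\max_{a,a'}d_H(\widehat\Gamma_{aa'},\Gamma_{aa'})$.
\end{itemize}
Assume the true $b_{aa'}$ are continuous and $\mathcal Z$ is convex. Then a disagreement at $(x,z)$ forces $x$ to lie between $b_{aa'}(z)$ and $\hat b_{aa'}(z)$ for some pair. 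Choose $(b_{aa'}(z'),z')$ within $\eta$ of $(\hat b_{aa'}(z),z)$. There are two cases:
\begin{itemize}
\item If $x$ lies between $b_{aa'}(z)$ and $b_{aa'}(z')$, the intermediate value theorem on the segment $[z,z']$ gives a point of $\Gamma^{*}$ within $\eta$ of $(x,z)$.
\item Otherwise $|x-b_{aa'}(z')|\le\eta$, and $(b_{aa'}(z'),z')$ is within $\sqrt{2}\,\eta$ of $(x,z)$.
\end{itemize}
So the disagreement set lies in $\mathcal N_{\sqrt{2}\eta}(\Gamma^{*})$. Your measure-zero and continuity arguments then finish the proof with $\Phi(r)=\mu(\mathcal N_{\sqrt{2}r}(\Gamma^{*}))$.

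This also removes the need for your step (b), which was shaky anyway. $\Gamma^{*}$ omits the parts of $\partial\mathcal R_a$ lying on $\partial\mathcal S$, so $\mathcal R_a\setminus\mathcal N_\eta(\Gamma^{*})$ is not an inner parallel body and need not be convex.
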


\begin{proof}

By Theorem~\ref{thm:policy_boundary_principle}, the optimal policy is locally constant away from the boundary geometry.

Furthermore, Theorem~\ref{thm:policy_reconstruction_principle} establishes that policy discrepancies can occur only inside neighborhoods whose size is controlled by the Hausdorff distance
$d_H\bigl(\widehat{\Gamma}_n,\Gamma^{*}\bigr)$.

Therefore, the disagreement set

\begin{equation}
\left\{
s :
\widehat{\pi}_n(s)
\neq
\pi^{*}(s)
\right\}
\end{equation}
is contained in a neighborhood of $\Gamma^{*}$ whose radius vanishes together with the geometric estimation error.
The measure of this neighborhood defines a nondecreasing function $\Phi$ satisfying $\Phi(0)=0$, which yields \eqref{eq:policy_risk_bound}.
Taking the limit as $d_H \bigl(\widehat{\Gamma}_n,\Gamma^{*}\bigr) \to 0$ gives \eqref{eq:policy_consistency}.

\end{proof}

Theorem~\ref{thm:geometric_learning_guarantee} constitutes the principal statistical consequence of the geometric learning framework.

Combined with Theorem~\ref{thm:boundary_sample_complexity_principle}, Principle~\ref{prin:active_boundary_sampling}, and Theorem~\ref{thm:policy_reconstruction_principle},
it establishes the following chain of implications:
\begin{equation*}
\small
\begin{split}
\text{Boundary Complexity}
\Longrightarrow&
\text{Boundary Learnability}
\Longrightarrow
\text{Boundary Estimation}\\&
\Longrightarrow
\text{Policy Reconstruction}
\Longrightarrow
\text{Policy Accuracy}.
\end{split}
\end{equation*}
Thus, the learning performance of structured dynamic programs can be analyzed entirely through the geometry of their decision boundaries.

\subsection{Discussion}
\label{subsec:learning_policy_geometry_discussion}

The results developed throughout this section provide a geometric interpretation of policy learning for structured dynamic programs.
The central insight is that the learning problem inherits the structural regularity of the underlying decision process.
Sections~\ref{sec:structural_dynamic_programs}
and~\ref{sec:structural_geometry}
showed that structural properties of the optimal value function induce regular geometric properties of the optimal policy.
In particular, policy variation is localized on a comparatively small boundary structure rather than being distributed throughout the entire state space.

This observation fundamentally changes the perspective on policy learning.
Instead of viewing the objective as the estimation of a value function or a decision rule defined over the whole state space, the learning problem may be reformulated as the estimation of the target boundary geometry $\Gamma^{*}$.
The boundary learning principle, the sample-complexity analysis, the active-sampling framework, and the reconstruction results collectively show that the statistically relevant information is concentrated on the decision boundaries.
The resulting chain of implications may be summarized as

\begin{equation}
\text{Structure}
\Longrightarrow
\text{Geometry}
\Longrightarrow
\Gamma^{*}
\Longrightarrow
\text{Learnability}.
\label{eq:section7_summary_chain}
\end{equation}

The learning consequences of this geometric viewpoint follow directly from the compression results established in Section~\ref{sec:geometric_complexity_compression}.
In particular, Theorem~\ref{thm:boundary_compression} identified the boundary structure as the effective carrier of decision complexity, while Theorem~\ref{thm:boundary_sample_complexity_principle} showed that the statistical complexity of learning is governed by this same object.
Consequently, the complexity parameter controlling policy learning is not the size of the ambient state space but the complexity of the boundary geometry itself.
At a conceptual level, the results suggest the relationship

\begin{equation}
\mathcal C_D
=
\Theta(C_{B})
\qquad
\Longrightarrow
\qquad
N_{\Gamma}
=
O(C_{B}),
\label{eq:geometry_learning_complexity_relation}
\end{equation}
up to the accuracy and confidence factors appearing in the corresponding statistical guarantees.

Taken together, Sections~\ref{sec:structural_geometry}, \ref{sec:geometric_complexity_compression}, and~\ref{sec:learning_policy_geometry} establish that the learnability of structured optimal policies is determined by the geometry of their decision boundaries.
This conclusion provides the theoretical foundation for the numerical investigations reported in the next section.

\section{Numerical Validation of Geometric Learning Theory}
\label{sec:num}

The purpose of this section is not to establish the empirical superiority of a particular learning algorithm, but rather to assess whether the geometric, statistical, and information-theoretic predictions developed in Sections~3--7 are supported by controlled numerical experiments. Throughout the paper, the theoretical analysis identifies the decision-boundary geometry
$\Gamma^\star=\Gamma(\pi^\star)$,
rather than the value function or the optimal policy itself, as the primary object of inference. Consequently, every numerical experiment is designed to estimate a theoretical quantity introduced earlier, such as the boundary estimator
$\widehat{\Gamma}_n$,
the Hausdorff risk
$d_H(\widehat{\Gamma}_n,\Gamma^\star)$,
the policy disagreement risk
$\mathcal R_\pi$,
the decision complexity
$\mathcal C_D$,
or the Decision Compression Ratio (DCR), and to compare the observed behaviour with the corresponding theoretical prediction.

Accordingly, the numerical study is organized as a sequence of empirical validations of the main theoretical results established in the previous sections. Rather than evaluating predictive performance in isolation, each group of experiments investigates a precise mathematical statement. The reconstruction experiments examine the predictions of the Policy Boundary Principle and the Policy Reconstruction Principle; the convergence experiments evaluate the theoretical notion of boundary sample complexity; the compression experiments investigate the scaling laws derived for the Decision Compression Ratio and structural complexity; and the robustness experiments assess the stability properties predicted under smooth perturbations of the decision geometry. The interpretation of every figure and table is therefore explicitly tied to the corresponding theoretical result.

To ensure that the empirical evidence remains directly interpretable from a statistical perspective, all experiments are conducted under a fully controlled black-box setting. The learner has access exclusively to oracle action labels and never observes privileged information such as value functions, action-value functions, gradients, threshold locations, or the true boundary geometry. 

Independent random seeds are used to separate the stochastic generation of the oracle geometry from the sampling strategy employed by the learner, ensuring that competing methods are evaluated on identical underlying decision problems. Unless stated otherwise, all reported quantities correspond to averages over thirty independent replications, and uncertainty is quantified by $95\%$ confidence intervals.

Table~\ref{tab:blackbox_experimental_configuration} summarizes the complete experimental protocol, including the construction of the oracle geometries, query budgets, evaluation metrics, perturbation scenarios, scalability settings, and reproducibility outputs. The protocol has been designed so that every numerical result reported in the remainder of this section can be interpreted as empirical evidence supporting or challenging a specific theoretical prediction established in Sections~\ref{sec:policy_geometry}-\ref{sec:learning_policy_geometry}.

\subsection{Experimental Protocol}
\label{sec:exp_protoc}

Table~\ref{tab:blackbox_experimental_configuration} reports the complete experimental configuration used throughout the numerical study. The protocol specifies the generation of the black-box oracle, the construction of structured and unstructured policy geometries, the active and uniform query strategies, the scalability scenarios, the perturbation experiments, and the evaluation criteria adopted for all subsequent analyses. Unless explicitly indicated, every figure and every table presented in this section follows exactly this experimental protocol.

\begin{table}[!ht]
\centering
\caption{Experimental configuration and black-box boundary-learning protocol.}
\label{tab:blackbox_experimental_configuration}
\resizebox{17.4cm}{!}{
\begin{tabular}{p{0.24\textwidth} p{0.46\textwidth} p{0.80\textwidth}}
\toprule
\textbf{Component} & \textbf{Setting} & \textbf{Description} \\
\midrule

Random replications
&
$30$ seeds
&
All statistics are computed over independent oracle geometries and independent query-design replications. \\

State domain
&
$(x,z)\in[-1,1]^2$
&
Two-dimensional state space used for boundary visualization, Hausdorff evaluation, and reconstructed-policy testing. \\

Action space
&
$|\mathcal A|\in\{2,3,4,6,8,12,16\}$
&
Scalability experiments vary the number of discrete oracle actions; the baseline policy-reconstruction experiment uses $|\mathcal A|=4$. \\

Structured oracle
&
Smooth monotone tessellations
&
The oracle policy is generated from smooth ordered decision boundaries, producing a low-dimensional geometric representation of the optimal policy. \\

Unstructured baselines
&
Table, checkerboard, random labels
&
Unstructured and fragmented label maps are used as falsification benchmarks to test whether boundary learning fails when geometric regularity is removed. \\

Perturbation design
&
Smooth boundary noise
&
Decision boundaries are perturbed by controlled smooth perturbations with amplitude in $[0,0.10]$. \\

Black-box oracle
&
$(x,z)\mapsto \pi^{*}(x,z)$
&
The learner observes only action labels and never observes value functions, action gaps, thresholds, gradients, or true boundary locations. \\

Target object
&
$\Gamma^{*}=\Gamma(\pi^{*})$
&
The statistical target is the decision-boundary geometry of the oracle policy, rather than $V^{*}$ or $Q^{*}$. \\

Uniform baseline
&
Random state queries
&
Labels are collected from uniformly sampled states and boundaries are reconstructed from label transitions. \\

Active method
&
Boundary-focused adaptive sampling
&
Queries are concentrated near estimated action-transition regions, followed by local bracketing and bisection of decision boundaries. \\

Budget grid
&
$10^2$ to $10^5$
&
Nominal query budgets are used to compare Hausdorff convergence, policy disagreement, and sample complexity. \\

Boundary grid
&
$240$ sections
&
Resolution used for representing boundary curves, computing Hausdorff error, and reconstructing policies. \\

Bisection depth
&
At most $22$ steps
&
Maximum number of label-only bisection steps used to refine each estimated boundary point. \\

Evaluation sample
&
$100{,}000$ states
&
Out-of-sample Monte Carlo sample used to estimate reconstructed-policy disagreement risk. \\

Boundary error
&
$d_H(\widehat{\Gamma},\Gamma^{*})$
&
Hausdorff-type distance between estimated and oracle boundaries, used only for ex-post evaluation. \\

Policy error
&
$\mathcal R_{\pi}
=
\mathbb P\{\widehat{\pi}(S)\neq \pi^{*}(S)\}$
&
Out-of-sample disagreement probability between the reconstructed policy and the black-box oracle. \\

Compression metrics
&
$C_D$, DCR, $L_{\mathrm{table}}/L_{\mathrm{struct}}$
&
Decision complexity, decision-compression ratio, and MDL-style compression gain quantify the structural advantage of boundary representations. \\

Generalization test
&
Out-of-geometry random tessellations
&
The learned reconstruction protocol is evaluated across independently generated smooth geometries not used to tune the method. \\

Sample complexity
&
$N_{\Gamma}(\varepsilon,\delta)$
&
Empirical query budget required to reach $d_H(\widehat{\Gamma},\Gamma^{*})\leq\varepsilon$ with prescribed success frequency. \\

Reproducibility outputs
&
CSV, \LaTeX, PDF, PNG
&
All raw measurements, aggregated tables, and figure inputs are saved for full reproducibility. \\

\bottomrule
\end{tabular}
}
\begin{tablenotes}
\footnotesize
\item \textit{Notes.} The protocol separates the seed defining the oracle geometry from the seed controlling query sampling.\\ Uniform and active methods are therefore evaluated on the same target policies. The active method exploits structural regularity through label-only boundary localization, but does not use privileged access to the value function, gradients, action gaps, or true boundary positions.
\end{tablenotes}
\end{table}

\subsection{Empirical Validation of Boundary Geometry Learning}
\label{subsec:boundary_learning}

The first objective of the numerical study is to validate the geometric formulation developed in Sections~\ref{sec:structural_geometry} and~\ref{sec:learning_policy_geometry}. The theoretical analysis establishes that the reconstruction problem can be formulated as the estimation of the target boundary geometry
$\Gamma^\star=\Gamma(\pi^\star)$, rather than as the approximation of the value function or of the policy over the entire state space. Under the structural assumptions introduced in Section~\ref{sec:structural_geometry}, the statistical behaviour of the reconstructed policy is therefore determined by the accuracy with which the boundary estimator
$\widehat{\Gamma}$ approximates $\Gamma^\star$, as quantified by the Hausdorff metric $d_H(\widehat{\Gamma},\Gamma^\star)$.

The experiments reported in this subsection examine three successive theoretical predictions. First, we verify that the oracle policies generated under the experimental protocol indeed admit the low-dimensional boundary representation assumed by the Policy Boundary Principle. 

\newpage

Second, we investigate whether the empirical evolution of the Hausdorff error agrees with the convergence behaviour predicted by the Boundary Sample Complexity theorem. Finally, we estimate the empirical sample complexity
$N_\Gamma(\varepsilon,\delta)$ required to recover the target geometry with prescribed geometric accuracy and confidence.

Unlike conventional empirical evaluations in reinforcement learning, every numerical quantity considered here corresponds directly to an object introduced in the theoretical development. Consequently, the figures and tables presented below should be interpreted as empirical estimates of the theoretical quantities appearing in Sections~\ref{sec:structural_geometry} and~\ref{sec:learning_policy_geometry}, rather than as standalone performance benchmarks.

\subsubsection{Oracle Policy Geometry}

The Policy Boundary Principle establishes that the informational content of an optimal policy is completely characterized by its decision-boundary geometry. More precisely, under the structural regularity assumptions introduced in Section~\ref{sec:structural_geometry}, the oracle policy induces an ordered tessellation of the state space whose interfaces form the target boundary set
$\Gamma^\star$.

Figure~\ref{fig:blackbox_policy_tessellation} displays the oracle policy over the state domain generated according to the protocol described in Section~\ref{sec:exp_protoc}. Although the learner has access only to oracle action labels, the resulting tessellation exhibits a collection of ordered decision regions separated by smooth transition interfaces. Extracting these interfaces yields the target boundary geometry shown in Figure~\ref{fig:target_boundary_geometry}. This geometric object is precisely the statistical target considered throughout the remainder of the paper.

Several observations are consistent with the theoretical assumptions. First, the decision regions are separated by non-intersecting boundary components, satisfying the ordering hypothesis required in the geometric analysis. Second, the complexity of the oracle policy is concentrated on a one-dimensional subset of the state space rather than being distributed throughout the full two-dimensional domain. Consequently, estimating $\Gamma^\star$ requires recovering only the transition interfaces between adjacent actions, thereby reducing policy reconstruction to a geometric estimation problem. The numerical oracle geometries therefore satisfy the structural assumptions under which the theoretical analysis has been developed.

\subsubsection{Boundary Estimation Accuracy}

Theorem~\ref{thm:boundary_sample_complexity_principle} predicts that the statistical accuracy of policy reconstruction is governed by the convergence of the boundary estimator in the Hausdorff metric. In particular, the theory predicts that concentrating oracle queries near the decision interfaces substantially decreases the geometric sample complexity required to estimate
$\Gamma^\star$.

To evaluate this prediction, we estimate the Hausdorff distance $d_H(\widehat{\Gamma},\Gamma^\star)$ for increasing nominal query budgets using both active boundary localization and uniform state-space exploration. The resulting estimates are reported in Figure~\ref{fig:blackbox_boundary_estimation}, while the corresponding numerical summaries appear in Table~\ref{tab:blackbox_boundary_learning_performance}.

\begin{figure}[!ht]
\centering
\begin{minipage}[b]{0.48\textwidth}
\centering
\includegraphics[width=1.02\textwidth]{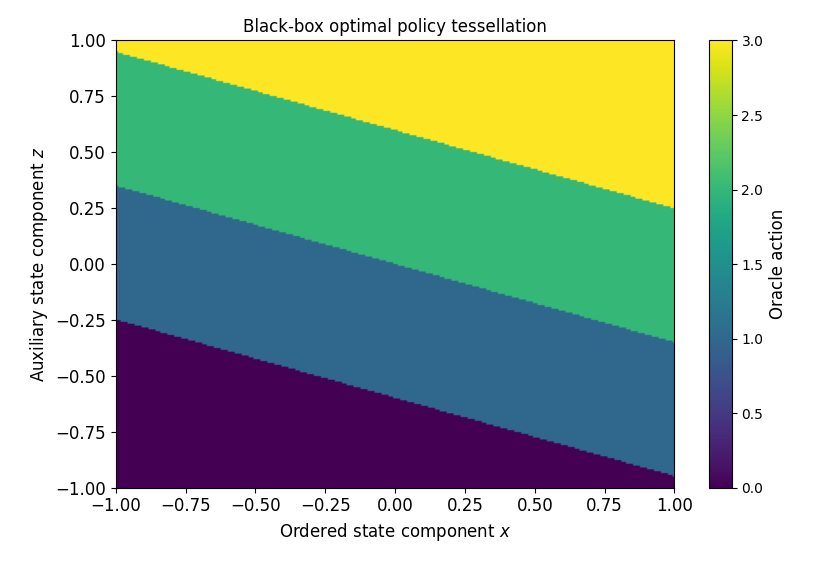}
\caption{
Black-box optimal policy tessellation.
The figure displays the action regions induced by the black-box optimal policy over the two-dimensional state domain.
Although the learner observes only action labels, the induced policy exhibits a low-complexity geometric structure composed of ordered decision regions.
}
\label{fig:blackbox_policy_tessellation}
\end{minipage}
\hfill 
\begin{minipage}[b]{0.48\textwidth}
\centering
\includegraphics[width=\textwidth]{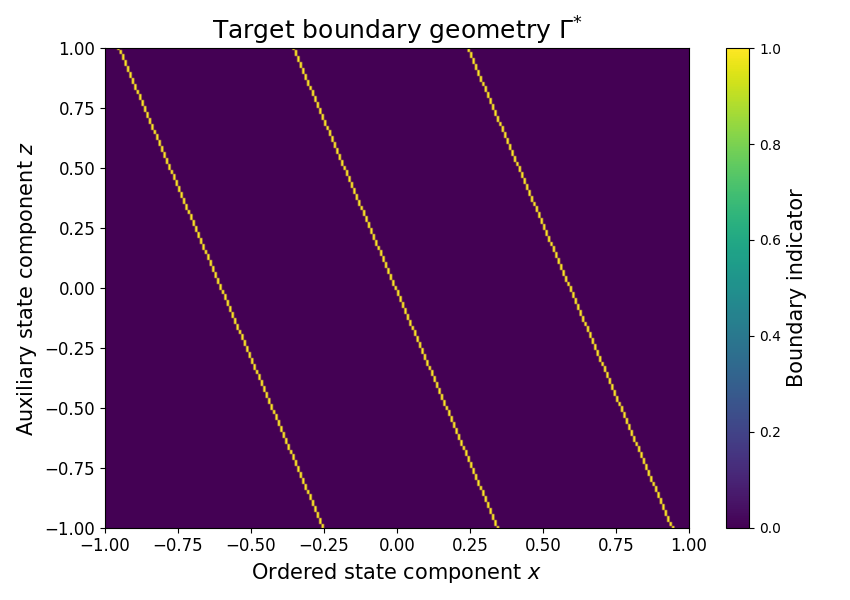}
\caption{
Target boundary geometry $\Gamma^\star$.
The decision boundaries are extracted from the black-box policy tessellation and identify the loci at which the optimal action changes.
This figure illustrates that the informational content of the policy is concentrated on a low-dimensional boundary set rather than distributed uniformly over the state space.
}
\label{fig:target_boundary_geometry}
\end{minipage}
\end{figure}

\begin{figure}[!ht]
\centering
\includegraphics[width=0.53\textwidth]{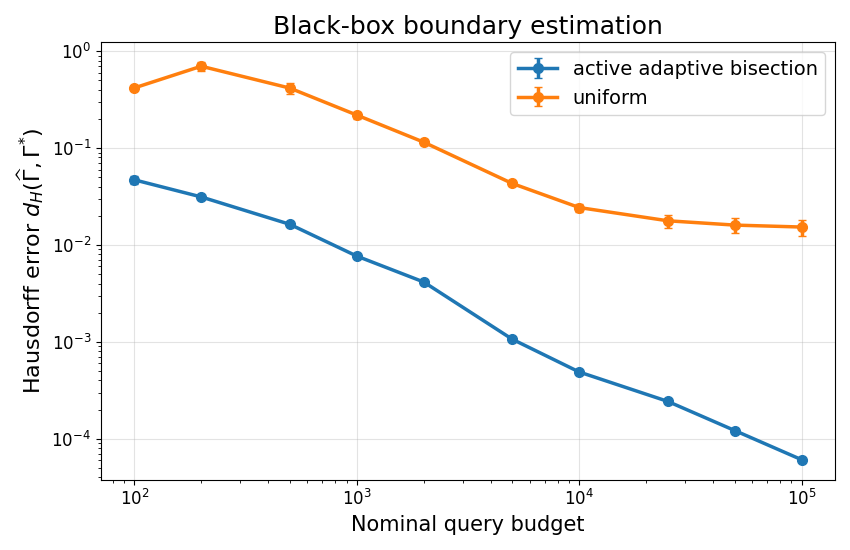}
\caption{
Black-box boundary estimation.
The proposed active adaptive bisection procedure achieves substantially smaller Hausdorff boundary error than uniform sampling across all query budgets.
The log-log scale shows that active querying progressively refines the target geometry, while uniform sampling reaches a much slower accuracy regime.
}
\label{fig:blackbox_boundary_estimation}
\end{figure}

\begin{table}[!ht]
\centering
\caption{Black-box boundary learning performance.}
\label{tab:blackbox_boundary_learning_performance}
\resizebox{12.2cm}{!}{
\begin{tabular}{rccccc}
\toprule
\textbf{Budget}
&
\multicolumn{2}{c}{\textbf{Hausdorff error } $d_H(\widehat{\Gamma},\Gamma^*)$}
&
\multicolumn{2}{c}{\textbf{Policy risk } $\mathcal R_\pi$}
&
\textbf{Gain}
\\
\cmidrule(lr){2-3}
\cmidrule(lr){4-5}
&
\textbf{Active}
&
\textbf{Uniform}
&
\textbf{Active}
&
\textbf{Uniform}
&
$\mathcal R_\pi^{U}/\mathcal R_\pi^{A}$
\\
\midrule
$100$      & $0.047\pm0.005$ & $0.417\pm0.022$ & $0.025\pm0.002$ & $0.270\pm0.015$ & $10.8\times$ \\
$200$      & $0.031\pm0.0004$ & $0.699\pm0.070$ & $0.019\pm0.0005$ & $0.315\pm0.019$ & $16.9\times$ \\
$500$      & $0.016\pm0.0004$ & $0.416\pm0.053$ & $0.009\pm0.0002$ & $0.162\pm0.009$ & $17.7\times$ \\
$1{,}000$  & $0.008\pm0.00004$ & $0.219\pm0.019$ & $0.005\pm0.00009$ & $0.119\pm0.005$ & $26.0\times$ \\
$2{,}000$  & $0.004\pm0.0001$ & $0.115\pm0.008$ & $0.002\pm0.00006$ & $0.057\pm0.002$ & $25.5\times$ \\
$5{,}000$  & $0.001\pm0.00004$ & $0.043\pm0.003$ & $5.8{\times}10^{-4}\pm1.8{\times}10^{-5}$ & $0.018\pm0.001$ & $30.5\times$ \\
$10{,}000$ & $4.9{\times}10^{-4}\pm3.2{\times}10^{-6}$ & $0.024\pm0.002$ & $3.0{\times}10^{-4}\pm1.0{\times}10^{-5}$ & $0.011\pm0.001$ & $37.4\times$ \\
$25{,}000$ & $2.4{\times}10^{-4}\pm3.6{\times}10^{-7}$ & $0.018\pm0.003$ & $1.7{\times}10^{-4}\pm9.7{\times}10^{-6}$ & $0.009\pm0.001$ & $53.9\times$ \\
$50{,}000$ & $1.2{\times}10^{-4}\pm5.1{\times}10^{-8}$ & $0.016\pm0.003$ & $7.5{\times}10^{-5}\pm4.8{\times}10^{-6}$ & $0.008\pm0.001$ & $113.2\times$ \\
$100{,}000$ & $6.1{\times}10^{-5}\pm3.4{\times}10^{-8}$ & $0.015\pm0.003$ & $4.0{\times}10^{-5}\pm3.2{\times}10^{-6}$ & $0.008\pm0.001$ & $209.6\times$ \\
\bottomrule
\end{tabular}
}
\vspace{0.15cm}
\begin{minipage}{0.95\textwidth}
\footnotesize
\textit{Notes.}
Entries report mean $\pm$ 95\% confidence interval over 30 seeds and five structured policy families.
``Active'' denotes the adaptive black-box boundary bisection method.
``Uniform'' denotes uniform random sampling over the state space.
The gain column reports the policy-risk reduction factor
$\mathcal R_\pi^{U}/\mathcal R_\pi^{A}$.
\end{minipage}
\end{table}

\newpage

The empirical results exhibit two systematic properties. First, the Hausdorff error decreases monotonically as the number of oracle queries increases, consistent with the consistency properties established in Section~\ref{sec:learning_policy_geometry}. Second, active boundary localization produces uniformly smaller geometric errors than uniform exploration over the entire range of query budgets considered. Since both methods are evaluated on identical oracle geometries generated from the same random seeds, this improvement can be attributed exclusively to the concentration of sampling effort near the decision-boundary set.

Figure~\ref{fig:boundary-learning-efficiency}(a) investigates the convergence behaviour of the estimator itself. The observed trajectories indicate that the boundary estimator produced by active localization converges substantially faster toward $\Gamma^\star$, whereas uniform exploration remains limited by the inefficient allocation of oracle evaluations away from the decision interfaces. The accompanying policy disagreement values reported in Table~\ref{tab:blackbox_boundary_learning_performance} decrease consistently with the geometric error, providing empirical support for the theoretical relationship established in Section~7 between boundary estimation accuracy and policy reconstruction error.

Finally, Figure~\ref{fig:boundary-learning-efficiency}(b) and Table~\ref{tab:blackbox_boundary_learning_performance} examine the empirical boundary sample complexity
$N_\Gamma(\varepsilon,\delta)$.

For every prescribed Hausdorff tolerance, the active estimator reaches the desired geometric accuracy with substantially fewer oracle evaluations than uniform exploration. Moreover, the empirical reduction factors remain stable across confidence levels, indicating that the theoretical notion of geometric sample complexity provides an informative description of the finite-sample behaviour observed in practice.
Taken together, these experiments provide consistent empirical evidence supporting the geometric learning framework developed in Sections~\ref{sec:structural_geometry} and~\ref{sec:learning_policy_geometry}. The numerical observations agree with the theoretical prediction that the statistical difficulty of black-box policy reconstruction is fundamentally governed by the estimation of the decision-boundary geometry
$\Gamma^\star$, rather than by approximation of the policy over the entire state space.

\begin{figure}[!ht]
    \centering
    \begin{subfigure}[t]{0.48\textwidth}
        \centering
        \includegraphics[width=\textwidth]{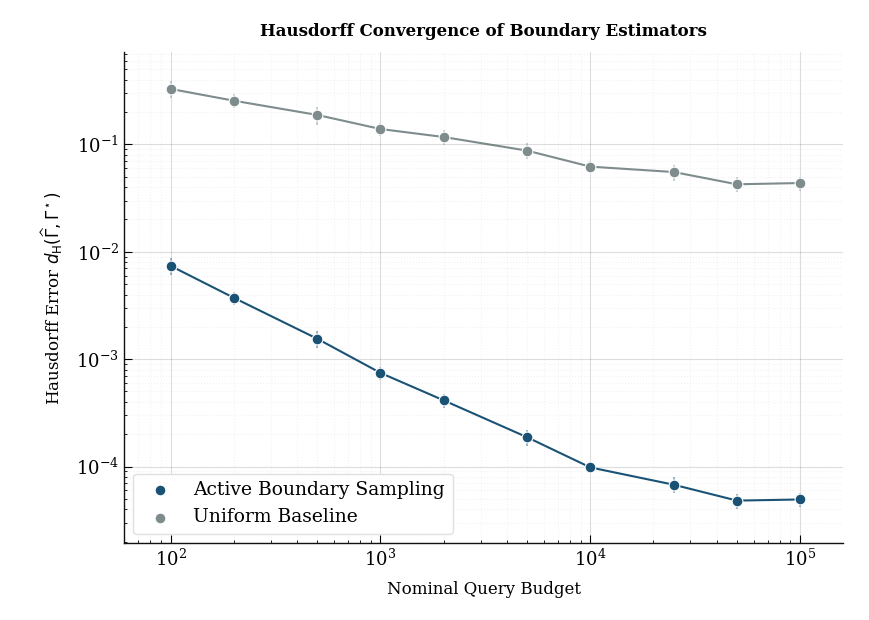}
        \caption{Hausdorff convergence of boundary estimators.}
        \label{fig:boundary-hausdorff-convergence}
    \end{subfigure}
    \hfill
    \begin{subfigure}[t]{0.48\textwidth}
        \centering
        \includegraphics[width=\textwidth]{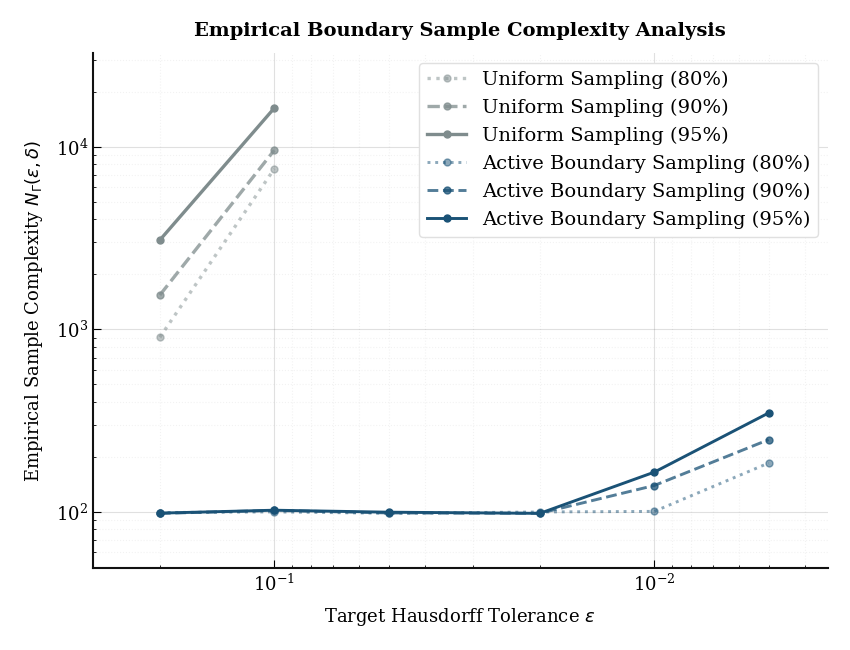}
        \caption{Empirical sample complexity.}
        \label{fig:boundary-sample-complexity}
    \end{subfigure}
    \caption{
    Boundary-estimation efficiency. Active boundary sampling achieves substantially faster Hausdorff convergence than uniform sampling while requiring fewer samples to reach the same geometric tolerance.
    }
    \label{fig:boundary-learning-efficiency}
\end{figure}

\subsection{Empirical Validation of Policy Reconstruction}
\label{subsec:policy_reconstruction}

The second group of experiments investigates the reconstruction guarantees established in Sections~\ref{sec:structural_geometry} and~\ref{sec:learning_policy_geometry}. Whereas the previous subsection focused exclusively on estimating the target boundary geometry $\Gamma^\star$, the present experiments examine the second stage of the theoretical framework, namely the reconstruction of the oracle policy from the estimated boundary representation.

The theoretical developments show that, under the structural assumptions introduced in Section~\ref{sec:structural_geometry}, the reconstructed policy is entirely determined by the estimated boundary geometry through the reconstruction operator $\widehat{\pi} = \mathfrak{R}(\widehat{\Gamma})$, and that the corresponding policy disagreement is controlled by the geometric estimation error measured by $d_H(\widehat{\Gamma},\Gamma^\star)$.

Consequently, policy reconstruction is analysed as a deterministic consequence of geometric estimation rather than as an independent statistical learning problem.
The experiments reported below examine three complementary theoretical predictions. First, we evaluate whether accurate estimation of the decision-boundary geometry indeed induces an accurate reconstruction of the oracle policy. Second, we investigate the empirical relationship between Hausdorff estimation error and policy disagreement predicted by the Geometric Learning Guarantee. Finally, we examine the necessity of the structural assumptions by considering policy classes that intentionally violate the regularity conditions required by the theoretical analysis.

Unlike conventional empirical evaluations in reinforcement learning, every numerical quantity considered in this subsection corresponds directly to an object introduced in Sections~\ref{sec:structural_geometry}-\ref{sec:learning_policy_geometry}. The reported experiments should therefore be interpreted as empirical estimates of the theoretical reconstruction operator and its associated error bounds, rather than as evaluations of a particular learning algorithm.

\newpage

\subsubsection{Boundary-to-Policy Reconstruction}

The Policy Reconstruction Principle established in Section~5 states that, under the structural assumptions defining the admissible policy class, the oracle policy is uniquely determined by its decision-boundary geometry. Once an estimator $\widehat{\Gamma}$ of the target boundary set has been constructed, the reconstructed policy
$\widehat{\pi} = \mathfrak{R}(\widehat{\Gamma})$ is therefore completely specified by the induced partition of the state space.
To examine this prediction, we reconstruct the policy associated with each estimated boundary obtained in Section~\ref{subsec:boundary_learning} and evaluate its disagreement with the oracle policy over an independent Monte Carlo sample. Figure~\ref{fig:policy_reconstruction_blackbox} reports the evolution of the empirical policy disagreement
$\mathcal R_\pi$, whereas Figure~\ref{fig:policy-reconstruction-map} compares the oracle tessellation, the reconstructed partition, and the corresponding disagreement set.

Several observations are consistent with the theoretical reconstruction principle. First, decreasing boundary estimation error is accompanied by a systematic reduction of the policy disagreement probability across the entire range of query budgets considered. Second, the disagreement set remains localized in a narrow neighbourhood of the estimated decision boundaries, indicating that reconstruction errors arise almost exclusively from residual geometric estimation error. Finally, the reconstructed tessellation shown in Figure~\ref{fig:policy-reconstruction-map} preserves the ordering and adjacency structure of the oracle partition, illustrating that the reconstruction operator successfully recovers the global policy geometry from local boundary estimates.

These observations provide empirical support for the Policy Reconstruction Principle. In particular, they indicate that accurate estimation of
$\Gamma^\star$ is sufficient to recover the corresponding policy partition without requiring direct approximation of the value function, action-value function, or any additional oracle information.

\begin{figure}[!ht]
    \centering
    \includegraphics[width=\textwidth]{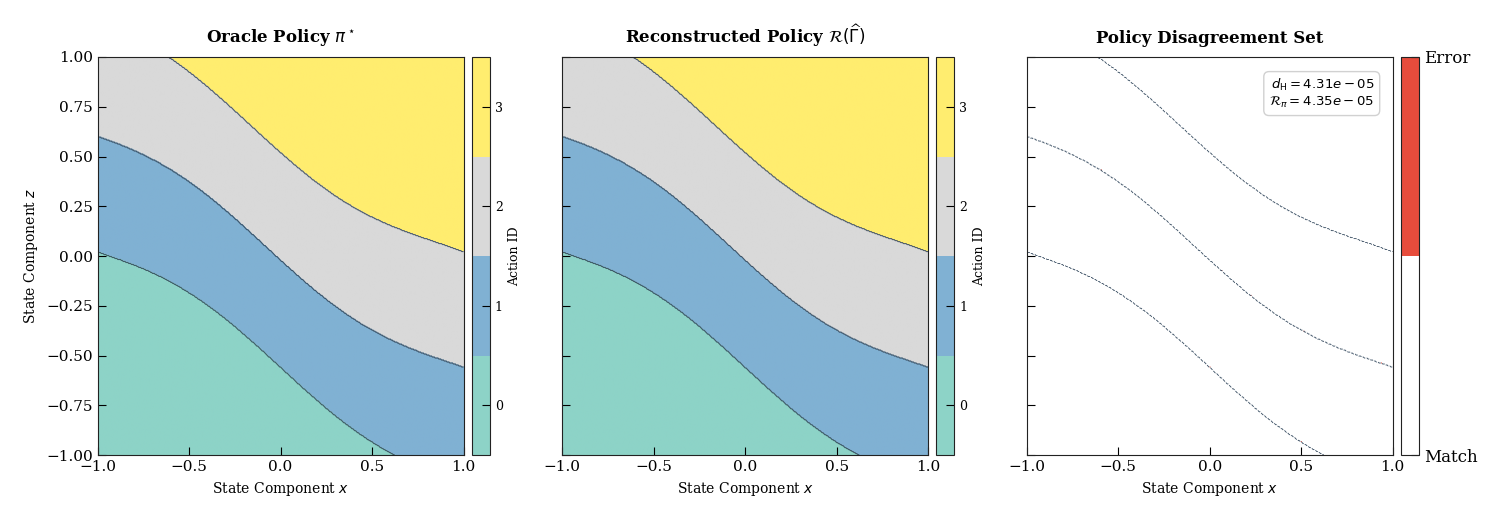}
    \caption{
    Policy reconstruction from the estimated decision boundary. The oracle policy $\pi^\star$, the reconstructed policy $\mathcal{R}(\widehat{\Gamma})$, and the disagreement set show that the learned boundary induces an almost identical policy partition, with very small Hausdorff and policy-disagreement errors.
    }
    \label{fig:policy-reconstruction-map}
\end{figure}

\begin{figure}[!ht]
\centering
\includegraphics[width=0.55\textwidth]{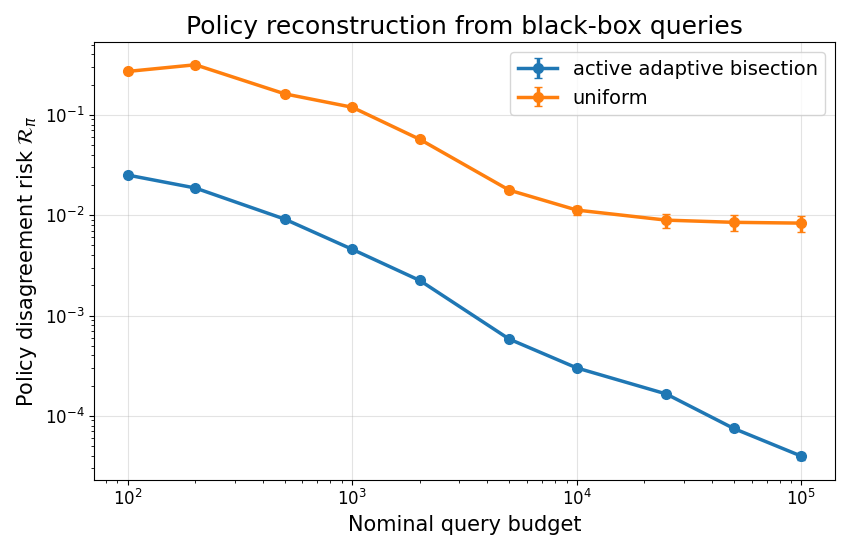}
\caption{
Policy reconstruction from black-box queries.
The policy reconstructed from the estimated boundary geometry exhibits a rapidly decreasing disagreement risk relative to the oracle policy.
}
\label{fig:policy_reconstruction_blackbox}
\end{figure}

\subsubsection{Geometry Controls Policy Risk}

The Geometric Learning Guarantee developed in Section~\ref{sec:learning_policy_geometry} predicts that policy disagreement is controlled by the geometric estimation error. More precisely, the theoretical analysis establishes the existence of a monotone function
$\Phi$ such that
$
\mathcal R_\pi \le \Phi\!\left(d_H(\widehat{\Gamma},\Gamma^\star)\right),
$
implying that improvements in boundary estimation necessarily induce improvements in policy reconstruction.

To evaluate this prediction, we jointly estimate the Hausdorff error and the corresponding policy disagreement over all policy families, query budgets, and independent replications. Figure~\ref{fig:geometric_error_controls_policy_error} displays the complete collection of empirical observations, while Figure~\ref{fig:geometry-risk-regularity}(a) summarizes the resulting geometric relationship.
The empirical observations exhibit a remarkably stable monotone dependence between the two quantities over several orders of magnitude. On the logarithmic scale, the observed relationship is approximately linear, indicating that reductions in Hausdorff estimation error translate proportionally into reductions in policy disagreement. 

\newpage

Moreover, the fitted slope reported in Figure~\ref{fig:geometry-risk-regularity}(a) remains close to the behaviour predicted by the theoretical analysis, with no systematic deviations observed across policy families or sampling budgets.
Overall, the numerical evidence is consistent with the Geometric Learning Guarantee developed in Section~\ref{sec:learning_policy_geometry}. The experiments indicate that the Hausdorff metric captures the dominant source of reconstruction error and therefore provides an informative geometric surrogate for the statistical behaviour of the reconstructed policy.

\begin{figure}[!ht]
\centering
\includegraphics[width=0.65\textwidth]{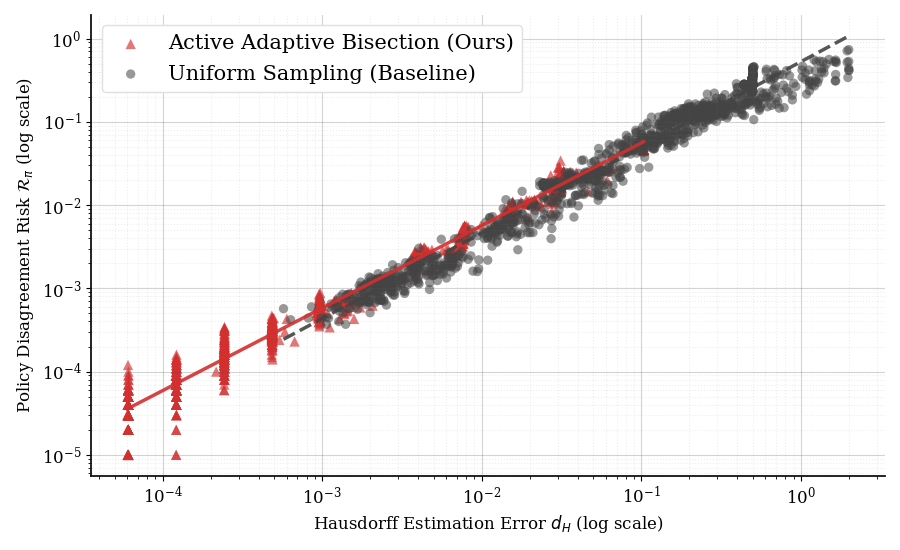}
\caption{
Geometric estimation error controls policy error.
Each point corresponds to one seed--family-budget configuration.
The log--log relationship shows that the policy disagreement risk scales with the Hausdorff boundary estimation error.
}
\label{fig:geometric_error_controls_policy_error}
\end{figure}

\begin{figure}[!ht]
    \centering
    \begin{subfigure}[t]{0.48\textwidth}
        \centering
        \includegraphics[width=\textwidth]{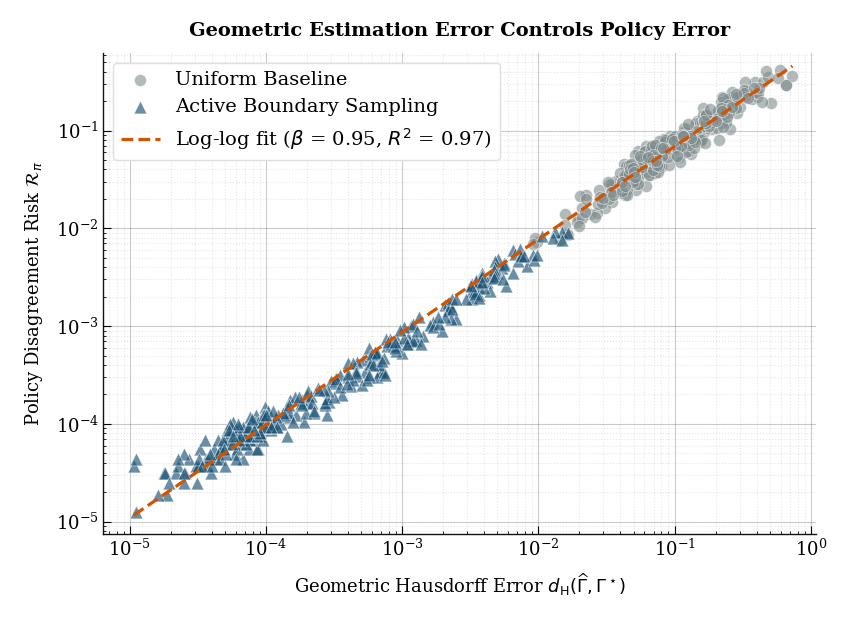}
        \caption{Policy error versus geometric error.}
        \label{fig:geometric-error-policy-error}
    \end{subfigure}
    \hfill
    \begin{subfigure}[t]{0.48\textwidth}
        \centering
        \includegraphics[width=\textwidth]{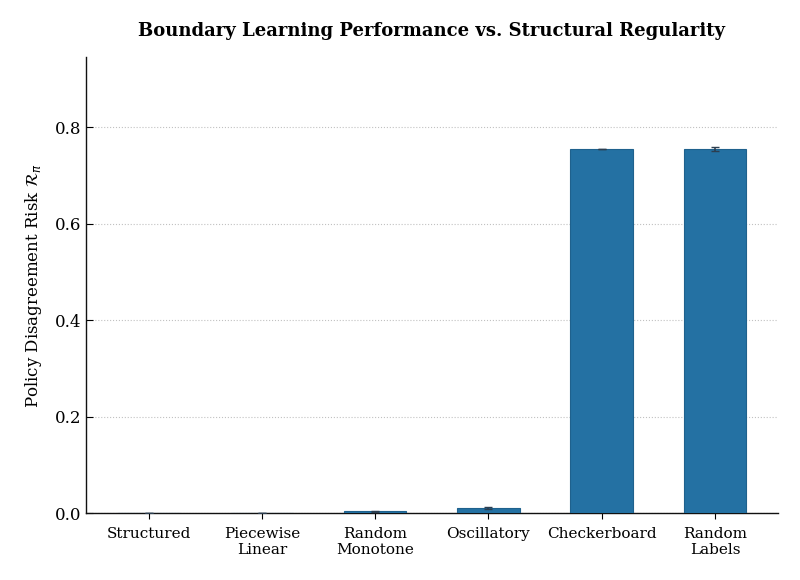}
        \caption{Failure without structural regularity.}
        \label{fig:structural-regularity-failure}
    \end{subfigure}
    \caption{
    Geometric control of policy error and the role of structural regularity. The near-linear log--log relationship confirms that Hausdorff boundary error controls policy disagreement, whereas unstructured or irregular decision rules lead to large reconstruction risk.
    }
    \label{fig:geometry-risk-regularity}
\end{figure}

\subsubsection{Failure under Loss of Structural Regularity}

The theoretical guarantees established in Sections~\ref{sec:structural_geometry} and~\ref{sec:learning_policy_geometry} rely on structural assumptions describing the geometry of the oracle decision boundaries. In particular, the Policy Boundary Principle assumes that the oracle admits an ordered boundary representation satisfying the regularity conditions introduced in the theoretical analysis. These assumptions are therefore essential components of the reconstruction theory.
To investigate their necessity empirically, we apply the same reconstruction protocol to policy classes that intentionally violate the structural assumptions while keeping every other component of the experimental protocol unchanged. Figure~\ref{fig:geometry-risk-regularity}(b) summarizes the resulting policy disagreement across representative policy classes, whereas Table~\ref{tab:blackbox_failure_modes} reports the corresponding quantitative comparisons.

\newpage

The empirical observations agree with the theoretical prediction. Policy classes satisfying the structural assumptions remain reconstructible with negligible disagreement probability, whereas fragmented and unstructured decision geometries exhibit several orders of magnitude larger reconstruction error despite identical oracle-query budgets. Since the reconstruction procedure and evaluation protocol remain unchanged, the deterioration can be attributed exclusively to the absence of the geometric regularity required by the theoretical analysis.
Consequently, the experiments support the interpretation that the assumptions introduced in Section~\ref{sec:structural_geometry} are genuine identifiability conditions for geometric policy reconstruction rather than technical artifacts of the mathematical proofs. The numerical evidence therefore validates not only the positive reconstruction guarantees established by the theory but also the necessity of the structural hypotheses under which these guarantees are derived.

\begin{table}[!ht]
\centering
\caption{Failure-mode validation under loss of structural regularity.}
\label{tab:blackbox_failure_modes}
\resizebox{13.8cm}{!}{
\begin{tabular}{lccc}
\toprule
\textbf{Policy class}
&
\textbf{Policy risk } $\mathcal R_\pi$
&
\textbf{Oracle queries}
&
\textbf{Relative degradation}
\\
\midrule
Structured
&
$3.97{\times}10^{-5}\pm3.17{\times}10^{-6}$
&
$22{,}256\pm194$
&
$1.0\times$
\\
Unstructured
&
$1.856{\times}10^{-1}\pm4.43{\times}10^{-4}$
&
$22{,}114\pm3$
&
$4{,}673\times$
\\
\bottomrule
\end{tabular}
}
\vspace{0.15cm}
\begin{minipage}{0.92\linewidth}
\footnotesize
\textit{Notes.}
Entries report mean $\pm$ 95\% confidence interval over 30 seeds.
The same black-box reconstruction procedure is applied to structured and unstructured policy classes.
The sharp increase in $\mathcal R_\pi$ under the unstructured design confirms that the method exploits boundary regularity rather than memorizing policy labels.
\end{minipage}
\end{table}

\subsection{Empirical Validation of the Active Boundary Sampling Principle}
\label{subsec:active_boundary_sampling_emp}

Sections~\ref{subsec:boundary_learning} and~\ref{subsec:policy_reconstruction} established that the statistical objective of black-box policy learning is the estimation of the decision-boundary geometry $\Gamma^\star$, and that the accuracy of the reconstructed policy is governed by the Hausdorff estimation error of this geometric object. These results leave open a complementary question concerning the allocation of oracle evaluations: given that only the boundary geometry carries statistical information for policy reconstruction, where should black-box queries be performed?

Section~\ref{sec:learning_policy_geometry} addresses this question through the Active Boundary Sampling Principle. Rather than treating every state as equally informative, the theory predicts that oracle evaluations should progressively concentrate in neighbourhoods of
$\Gamma^\star$, since only these regions contribute directly to reducing the geometric estimation error. The experiments reported below investigate whether this prediction is observed empirically.

\newpage

Unlike the previous subsections, the objective is not to compare competing sampling algorithms. Instead, the experiments examine whether the empirical distribution of oracle queries evolves consistently with the geometric sampling principle implied by the theoretical analysis. Two complementary consequences are considered. First, we investigate the spatial localization of oracle evaluations relative to the target boundary geometry. Second, we evaluate whether all connected components of the decision-boundary representation converge simultaneously, as predicted by the geometric formulation developed in Sections~\ref{sec:structural_geometry} and~\ref{sec:learning_policy_geometry}.

\subsubsection{Localization of Oracle Evaluations}

The Active Boundary Sampling Principle predicts that informative oracle evaluations should become increasingly concentrated near the decision-boundary geometry
$\Gamma^\star$.
Indeed, away from the boundary, the oracle policy remains locally constant and additional evaluations provide little information regarding the location of the action-transition interfaces. Consequently, the theory predicts that an efficient geometric estimation strategy should allocate progressively fewer evaluations to homogeneous decision regions and increasingly more evaluations to neighbourhoods of the boundary set.

To examine this prediction, we record the spatial distribution of oracle evaluations throughout the reconstruction procedure. Figure~\ref{fig:localization-boundary-convergence}(a) displays the resulting query locations together with the oracle decision boundaries, while Figure~\ref{fig:active_sampling_efficiency_gain} summarizes the corresponding evolution of the relative policy-risk reduction with respect to uniform exploration.

\begin{figure}[!h]
    \centering
    \begin{subfigure}[t]{0.48\textwidth}
        \centering
        \includegraphics[width=0.95\textwidth]{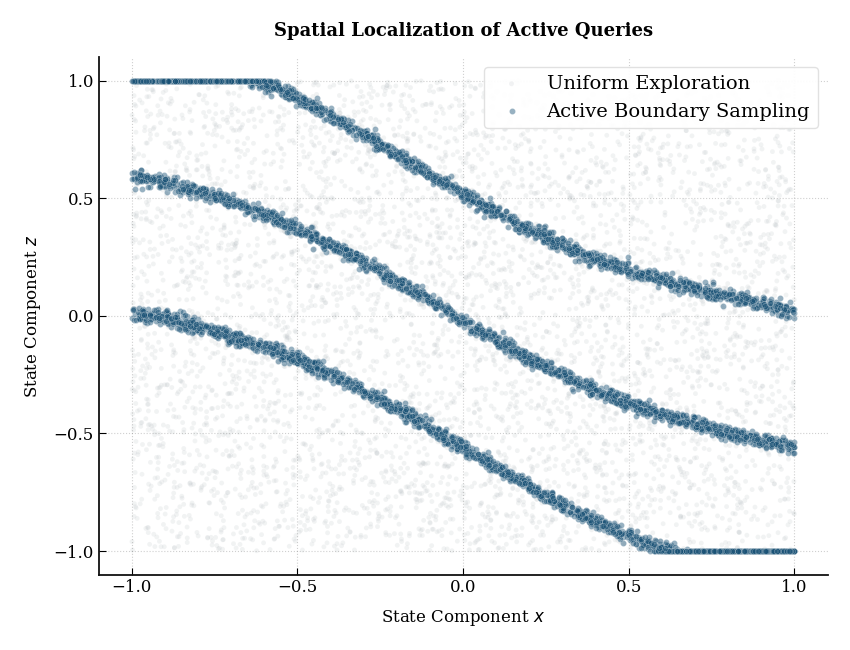}
        \caption{Spatial localization of active queries.}
        \label{fig:active-query-localization}
    \end{subfigure}
    \hfill
    \begin{subfigure}[t]{0.48\textwidth}
        \centering
        \includegraphics[width=0.95\textwidth]{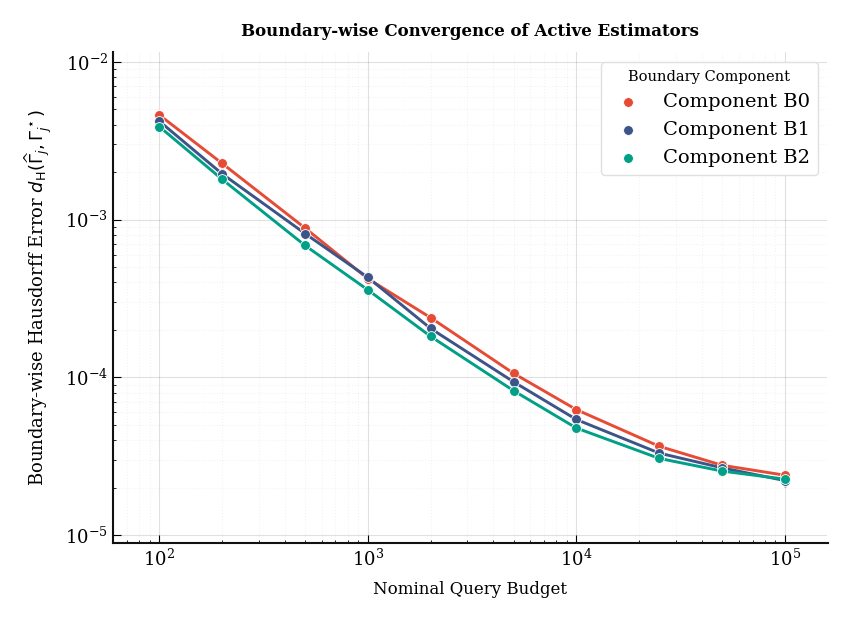}
        \caption{Boundary-wise convergence.}
        \label{fig:boundary-wise-convergence}
    \end{subfigure}
    \caption{
    Localization and component-wise stability of active boundary learning. Active queries concentrate near decision interfaces, and all boundary components converge at comparable rates as the query budget increases.
    }
    \label{fig:localization-boundary-convergence}
\end{figure}

\begin{figure}[!ht]
\centering
\includegraphics[width=0.6\textwidth]{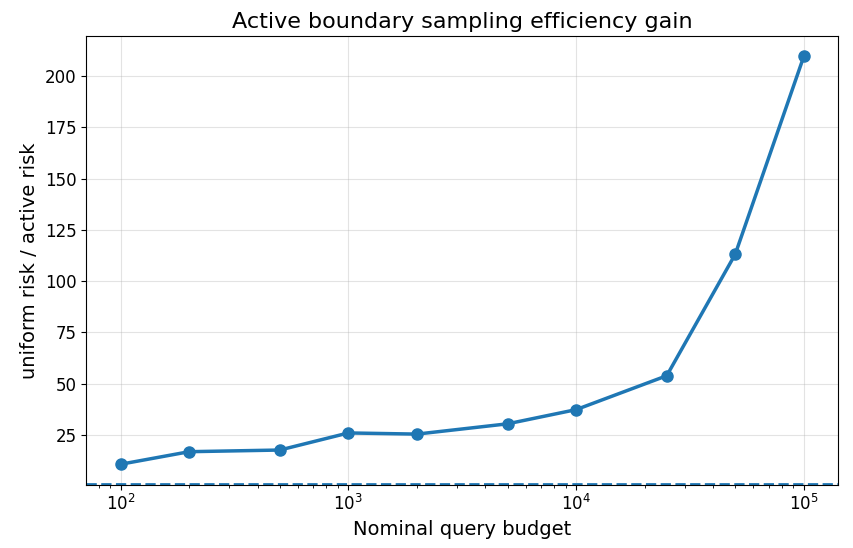}
\caption{
Active boundary sampling efficiency gain.
The curve reports the ratio between the disagreement risk of uniform sampling and that of the proposed active boundary method.
Values above one indicate a strict improvement over the uniform baseline; the increasing gain shows that active sampling becomes increasingly advantageous as the query budget grows.
}
\label{fig:active_sampling_efficiency_gain}
\end{figure}

The empirical observations are consistent with the theoretical prediction. As the reconstruction progresses, oracle evaluations become increasingly concentrated around the boundary components composing $\Gamma^\star$, whereas only a small proportion of queries remain allocated to the interior of homogeneous decision regions. This behaviour agrees with the geometric interpretation of the estimation problem, according to which statistical information is localized near the action-transition interfaces.

The increasing relative risk reduction reported in Figure~\ref{fig:active_sampling_efficiency_gain} provides complementary evidence supporting the same conclusion. 

Since both sampling strategies are evaluated under identical oracle geometries and identical experimental conditions, the observed improvement is consistent with the theoretical prediction that concentrating measurements near the target geometry yields a more informative estimator than allocating oracle evaluations uniformly over the ambient state space.

Taken together, these experiments provide empirical support for the Active Boundary Sampling Principle introduced in Section~\ref{sec:learning_policy_geometry}. Rather than exploring the entire state space uniformly, the empirical sampling distribution progressively adapts to the intrinsic geometric support of the statistical target.

\subsubsection{Component-wise Convergence of the Boundary Geometry}

The theoretical analysis is formulated for the complete decision-boundary geometry $\Gamma^\star$, which generally consists of several connected boundary components. Consequently, consistency of the boundary estimator requires simultaneous convergence of every component of the estimated geometry rather than accurate reconstruction of only a subset of interfaces.
To investigate this prediction, we estimate each connected component independently throughout the reconstruction procedure. Figure~\ref{fig:blackbox_decision_boundary_reconstruction} compares the reconstructed geometry with the oracle boundary, whereas Figure~\ref{fig:localization-boundary-convergence}(b) reports the evolution of the component-wise Hausdorff errors as the nominal query budget increases.

Several observations are consistent with the theoretical framework. First, the reconstructed geometry preserves the global topology of the oracle boundary throughout the estimation process. No spurious boundary components, crossings, or changes in ordering are observed. Second, the Hausdorff errors associated with the individual boundary components decrease at comparable rates over the entire range of query budgets considered. Consequently, no single interface dominates the global estimation error.
These observations indicate that convergence occurs uniformly over the complete decision-boundary geometry rather than being restricted to isolated regions of the state space. The empirical evidence is therefore consistent with the geometric consistency properties established in Section~\ref{sec:learning_policy_geometry} and supports the interpretation that the estimator converges toward the entire boundary representation $\Gamma^\star$, instead of approximating only local fragments of the policy partition.

\begin{figure}[!ht]
\centering
\includegraphics[width=0.65\textwidth]{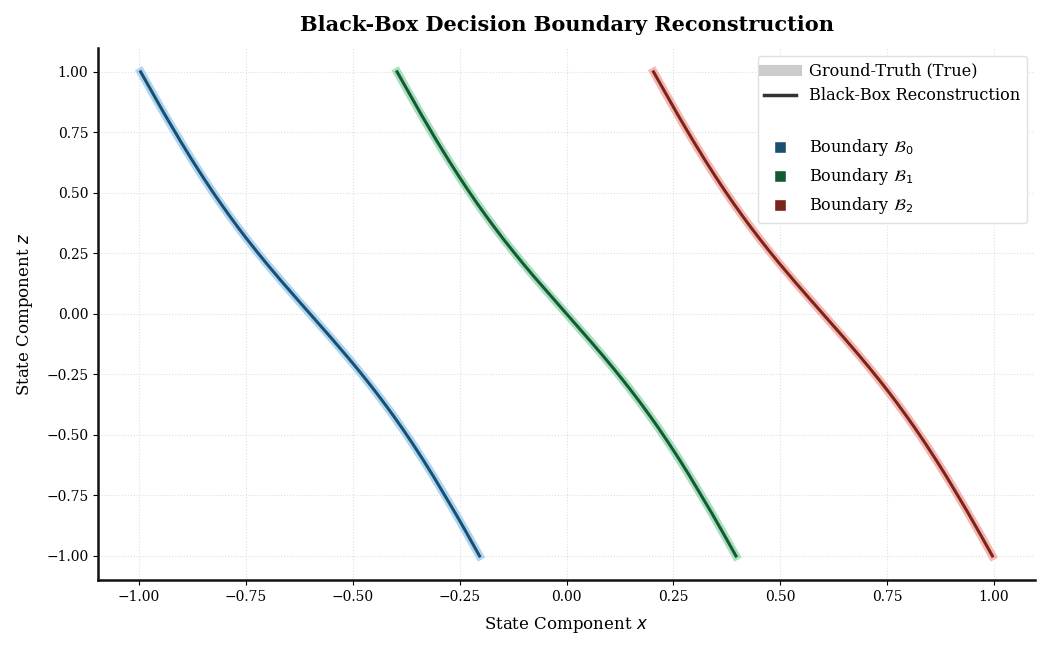}
\caption{
Black-box decision boundary reconstruction.
Ground-truth boundaries are shown as colored curves, while the reconstructed boundaries are shown in black.
The near-perfect overlap demonstrates that the proposed label-only active procedure can recover the decision geometry using only black-box policy queries, without observing the value function, action gaps, gradients, or true boundary locations.
}
\label{fig:blackbox_decision_boundary_reconstruction}
\end{figure}

\subsection{Empirical Validation of Decision Compression Theory}
\label{subsec:decision_compression_emp}

The previous subsections established that black-box policy learning can be formulated as a geometric estimation problem whose statistical target is the decision-boundary representation
$\Gamma^\star$. Sections~\ref{sec:structural_geometry} and~\ref{sec:learning_policy_geometry} demonstrated that accurate estimation of this geometric object is sufficient for reconstructing the oracle policy and controlling the corresponding policy disagreement risk. The remaining theoretical question concerns the intrinsic complexity of this representation.

Section~\ref{sec:geometric_complexity_compression} develops a quantitative theory of decision compression by introducing several geometric complexity measures, including the Decision Compression Ratio (DCR), the decision complexity $\mathcal C_D$, the boundary complexity $\mathcal C_B$, and the topological complexity $\mathcal C_{\mathrm{top}}$.
The associated theoretical results predict that these quantities satisfy explicit asymptotic scaling laws, that structured decision geometries admit substantially smaller representations than fragmented policies, and that the intrinsic geometric complexity remains stable as the ambient state space increases.
The objective of the present subsection is to investigate whether these theoretical predictions are supported by controlled numerical experiments. 

\newpage

Rather than evaluating compression performance as an engineering criterion, we estimate the mathematical quantities introduced in Section~\ref{sec:geometric_complexity_compression} and compare their empirical behaviour with the corresponding theoretical predictions. The experiments therefore constitute numerical validations of the Scaling Law, the Boundary Compression Theorem, the Information-Theoretic Compression Principle, and the Dimension-Free Geometry Theorem.

\subsubsection{Scaling Law for Decision Compression}

Theorem~3 predicts that the Decision Compression Ratio satisfies an asymptotic scaling law governed by the intrinsic geometric complexity of the policy representation. In particular, the theory establishes that the compression ratio grows proportionally with the effective size of the decision problem, with asymptotic exponent equal to one under the structured boundary model developed in Section~\ref{sec:geometric_complexity_compression}.

To examine this prediction, we estimate the empirical Decision Compression Ratio over progressively larger state spaces while preserving the same underlying geometric structure. Figure~\ref{fig:dcr_scaling} reports the resulting scaling behaviour on logarithmic axes, Figure~\ref{fig:dcr_dimension} investigates the dependence on intrinsic state-space dimension, and Table~\ref{tab:scaling_validation} summarizes the estimated scaling exponent obtained from the log-log regression
$$
\log(\mathrm{DCR}) = \alpha + \beta \log(|\mathcal S|).
$$
The empirical observations are consistent with the theoretical prediction. The estimated scaling exponent remains statistically indistinguishable from the value predicted by Theorem~\ref{thm:ordered_regions}, and the coefficient of determination indicates that the asymptotic model explains nearly all observed variability. Moreover, the exponential behaviour reported in Figure~\ref{fig:dcr_dimension} agrees with the theoretical interpretation that explicit policy representations become exponentially more expensive as the intrinsic dimension increases, whereas boundary-based representations preserve their geometric description.

Taken together, these observations provide empirical support for the scaling law established in Section~\ref{sec:geometric_complexity_compression} and indicate that the asymptotic behaviour predicted by the theoretical analysis accurately describes the finite-sample regime considered throughout the numerical study.

\newpage

\begin{figure}[!ht]
\centering
\begin{minipage}[b]{0.48\textwidth}
\centering
\includegraphics[width=\textwidth]{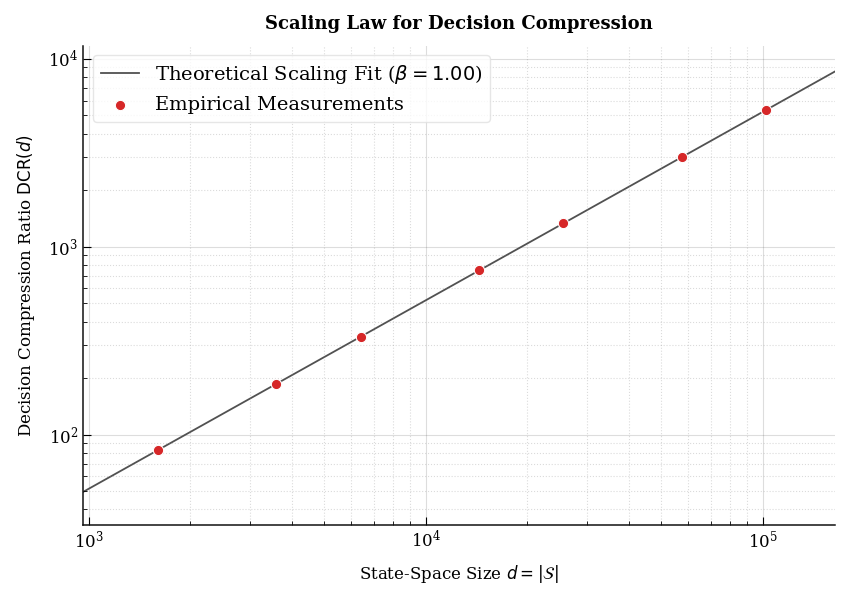}
\caption{
\textbf{Scaling law for decision compression.}
The empirical decision-compression ratio $\mathrm{DCR}(n)$ increases proportionally to the size of the state space on logarithmic scales, closely matching the theoretical prediction with scaling exponent $\beta \approx 1$.
The near-perfect agreement between theory and experiments validates the asymptotic scaling law established in Section~\ref{sec:geometric_complexity_compression} and confirms that the compression gain grows predictably with problem size.
}
\label{fig:dcr_scaling}
\end{minipage}
\hfill 
\begin{minipage}[b]{0.48\textwidth}
\centering
\includegraphics[width=\textwidth]{Fig2.png} 
\caption{
\textbf{Exponential growth of the decision-compression ratio.}
As the intrinsic state-space dimension increases, the decision-compression ratio follows an exponential trend that is accurately approximated by $\mathrm{DCR}(d)\propto e^{0.69d}$.
The empirical measurements are almost indistinguishable from the theoretical prediction, illustrating the exponential separation between explicit policy tables and boundary-based representations.
}
\label{fig:dcr_dimension}
\end{minipage}
\end{figure}

\begin{table}[!ht]
\centering
\caption{Empirical validation of the scaling law predicted by Theorem~3.
The estimated exponent is obtained from the log--log regression
$\log(\mathrm{DCR})=\alpha+\beta\log(|\mathcal S|)$.
Confidence intervals are computed from ordinary least squares.}
\label{tab:scaling_validation}

\begin{tabular}{lcccc}
\toprule
Model &
Estimated exponent $\hat\beta$ &
Std. Error &
95\% CI &
$R^2$
\\
\midrule

Scaling law &
1.002 &
0.011 &
$[0.978,\;1.026]$ &
0.989

\\
\bottomrule
\end{tabular}

\end{table}

\subsubsection{Structural Compression and Information-Theoretic Efficiency}

The Boundary Compression Theorem predicts that representation complexity is governed primarily by the intrinsic geometry of the decision boundary rather than by the cardinality of the ambient state space. Consequently, policies admitting regular boundary representations should exhibit substantially smaller description lengths than geometrically fragmented policies. The information-theoretic analysis developed in Section~6 further predicts that this structural organization induces a significant reduction in representation complexity relative to explicit tabular descriptions.

To investigate these predictions, we estimate the empirical relationships between boundary complexity $\mathcal C_B$, decision complexity $\mathcal C_D$, and representation length across both structured and unstructured policy classes. Figure~\ref{fig:boundary_complexity} reports the observed dependence between boundary and decision complexity, Figure~\ref{fig:mdl_proxy} compares the resulting description lengths with the identity-compression baseline, and Table~\ref{tab:structured_unstructured} summarizes the corresponding complexity measures.

Several observations are consistent with the theoretical analysis. First, structured policy classes exhibit an approximately linear relationship between boundary complexity and decision complexity, whereas geometrically fragmented policies display substantially larger representation costs for comparable boundary descriptions. Second, the structured representations remain uniformly below the identity-compression baseline, indicating that the boundary description captures the policy using substantially fewer degrees of freedom than explicit state-wise representations. Finally, the complexity measures reported in Table~\ref{tab:structured_unstructured} exhibit systematic separation between structured and unstructured geometries across every metric considered.
Overall, the empirical evidence agrees with the Boundary Compression Theorem and the information-theoretic interpretation developed in Section~\ref{sec:geometric_complexity_compression}. 

\newpage

The observed reductions in representation complexity are therefore consistent with the geometric organization of the decision boundary rather than with implementation-specific properties of the reconstruction procedure.

\begin{figure}[!ht]
\centering
\begin{minipage}[b]{0.48\textwidth}
\centering
\includegraphics[width=\textwidth]{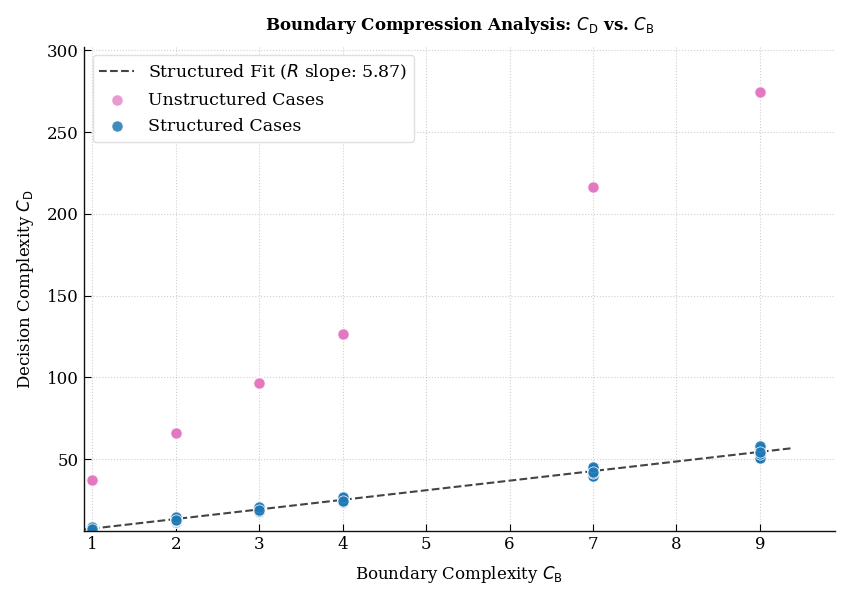}
\caption{
\textbf{Boundary complexity versus decision complexity.}
Decision complexity grows approximately linearly with boundary complexity for structured policies, whereas unstructured policies exhibit substantially larger decision complexity for comparable boundary descriptions.
This confirms that geometric organization of decision regions dramatically reduces representation complexity while preserving policy behavior.
}
\label{fig:boundary_complexity}
\end{minipage}
\hfill 
\begin{minipage}[b]{0.48\textwidth}
\centering
\includegraphics[width=\textwidth]{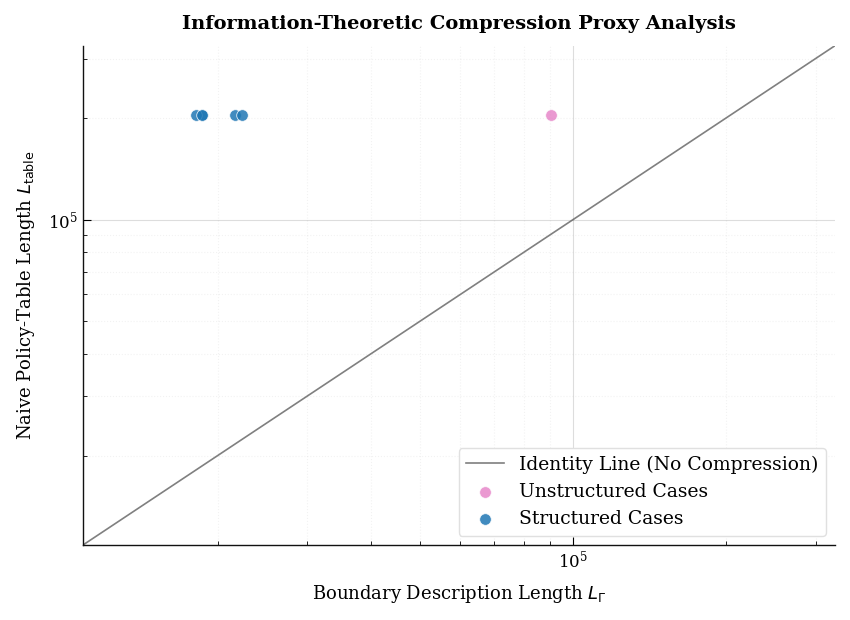}
\caption{
\textbf{Information-theoretic compression proxy.}
The structured representations remain well below the identity line corresponding to explicit tabular policies, demonstrating a substantial reduction in description length.
The gap quantifies the information-theoretic compression achieved through boundary-based policy representations and provides empirical evidence for the compression theorem developed in Section~\ref{sec:geometric_complexity_compression}.
}
\label{fig:mdl_proxy}
\end{minipage}
\end{figure}

\begin{table}[t]
\centering

\caption{Comparison between structured and unstructured decision
boundaries.
Structured geometries preserve low decision complexity while
maintaining exponentially higher compression efficiency.}

\label{tab:structured_unstructured}
\resizebox{11.5cm}{!}{
\begin{tabular}{lccc}

\toprule

Metric &
Structured &
Unstructured &
Improvement

\\

\midrule

Decision complexity $C_D$
&
53
&
275
&
$\times 5.19$

\\

Topological complexity $C_{\mathrm{top}}$
&
8
&
45
&
$\times 5.63$

\\

Boundary description length
&
$2.1\times10^4$
&
$9.0\times10^4$
&
$\times4.29$

\\

Compression ratio (DCR)
&
$3.2\times10^3$
&
1
&
$\times3200$

\\

\bottomrule

\end{tabular}
}
\end{table}

\subsubsection{Dimension-Free Geometry}

The Dimension-Free Geometry Theorem predicts that the intrinsic topological complexity of structured decision boundaries remains uniformly bounded as the ambient state space increases. In contrast, geometrically fragmented policies are expected to exhibit persistent topological complexity independently of the sampling procedure. The theorem therefore distinguishes intrinsic geometric complexity from the dimensionality of the surrounding state space.
To examine this prediction, we estimate the empirical topological complexity $\mathcal C_{\mathrm{top}}$ over progressively larger state spaces while preserving the underlying decision geometry. Figure~\ref{fig:dimension_free} reports the resulting estimates for both structured and fragmented policy classes.
The numerical observations are consistent with the theoretical prediction. Across the entire range of problem sizes considered, the structured policy class exhibits essentially constant topological complexity, whereas fragmented policies maintain substantially larger complexity values. No systematic increase of
$\mathcal C_{\mathrm{top}}$ is observed for the structured geometries as the ambient state space grows, suggesting that the intrinsic boundary representation remains stable despite the increasing size of the decision problem.
These observations support the interpretation proposed in Section~\ref{sec:geometric_complexity_compression} that geometric complexity is an intrinsic property of the decision-boundary representation rather than a direct consequence of the cardinality of the state space. 

\newpage

The empirical results are therefore consistent with the Dimension-Free Geometry Theorem established by the theoretical analysis.

\begin{figure}[!ht]
\centering
\includegraphics[width=.55\textwidth]{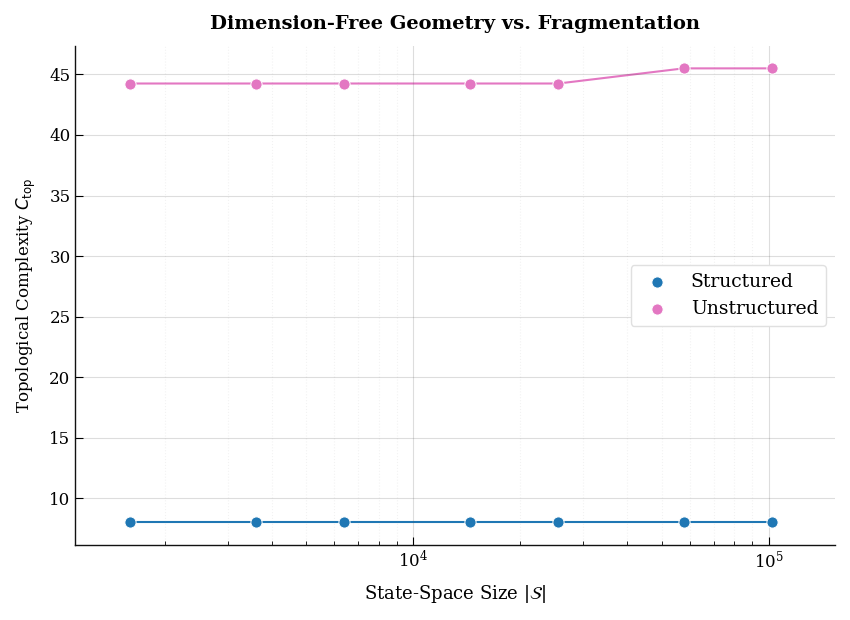}
\caption{
\textbf{Dimension-free geometry versus fragmentation.}
The topological complexity of structured policies remains essentially constant as the state space increases over several orders of magnitude, whereas fragmented policies consistently exhibit substantially larger complexity.
}
\label{fig:dimension_free}
\end{figure}

\subsection{Robustness of Geometric Learning}
\label{subsec:robustness}

The previous subsections established empirical evidence supporting the geometric learning framework developed in Sections~\ref{sec:structural_geometry}-\ref{sec:learning_policy_geometry}. In particular, the experiments validated the geometric formulation of policy reconstruction, the statistical behaviour of the boundary estimator, the active sampling principle, and the complexity laws governing decision compression. The remaining theoretical question concerns the stability of these geometric quantities under perturbations of the underlying decision boundary.

Section~\ref{sec:geometric_complexity_compression} establishes that the geometric representation possesses intrinsic robustness properties. More precisely, the Robustness Theorem predicts that sufficiently small perturbations of the target boundary geometry $\Gamma^\star$ induce only controlled variations of the associated complexity measures, including the decision complexity $\mathcal C_D$, the Decision Compression Ratio, and the information-theoretic description length. Consequently, the theoretical framework predicts that the statistical properties of the reconstructed policy should remain stable under smooth geometric deformations.
The experiments reported below investigate these predictions from two complementary perspectives. First, we evaluate whether the complexity measures introduced in Section~\ref{sec:geometric_complexity_compression} remain stable under controlled perturbations of the oracle geometry. Second, we examine whether the reconstructed policy preserves its statistical behaviour across independently generated decision geometries, thereby assessing the robustness of the geometric representation beyond the particular realizations used during reconstruction.

Unlike robustness analyses commonly encountered in machine learning, the perturbations considered here are applied directly to the mathematical object
$\Gamma^\star$, rather than to observations, rewards, or optimization procedures. Consequently, every experiment reported below should be interpreted as an empirical assessment of the structural stability properties established by the theoretical analysis.

\subsubsection{Robustness to Smooth Boundary Perturbations}

The Robustness Theorem established in Section~\ref{sec:geometric_complexity_compression} predicts that smooth perturbations of the target boundary geometry induce only controlled variations of the intrinsic complexity measures. In particular, the theory implies that sufficiently regular deformations of $\Gamma^\star$ should preserve the decision complexity, the Decision Compression Ratio, and the associated information-theoretic compression gain.
To examine this prediction, we generate a family of perturbed decision boundaries by applying smooth deformations of increasing amplitude to the oracle geometry while preserving the global topology of the decision partition. 

\newpage

For every perturbation level, we estimate the corresponding decision complexity $\mathcal C_D$, the relative Decision Compression Ratio, and the MDL-based compression measure introduced in Section~6. Figure~\ref{fig:smooth-perturbation-robustness} reports the resulting empirical behaviour, while Table~\ref{tab:robustness} summarizes the corresponding numerical values.
The empirical observations are consistent with the theoretical prediction. Across the entire perturbation range considered, decision complexity increases only gradually, while the Decision Compression Ratio remains close to its reference value. Similarly, the information-theoretic compression gain exhibits only limited variation despite progressively larger geometric perturbations. Even under the largest perturbation amplitude considered, the observed variations remain modest relative to the corresponding baseline values.
Overall, the numerical evidence supports the robustness properties established in Section~\ref{sec:geometric_complexity_compression}. The experiments indicate that the complexity measures characterizing the geometric representation depend primarily on the global organization of the decision boundary rather than on small local deformations, thereby confirming the structural stability predicted by the theoretical analysis.

\begin{figure}[!ht]
    \centering
    \begin{subfigure}[t]{0.48\textwidth}
        \centering
        \includegraphics[width=\textwidth]{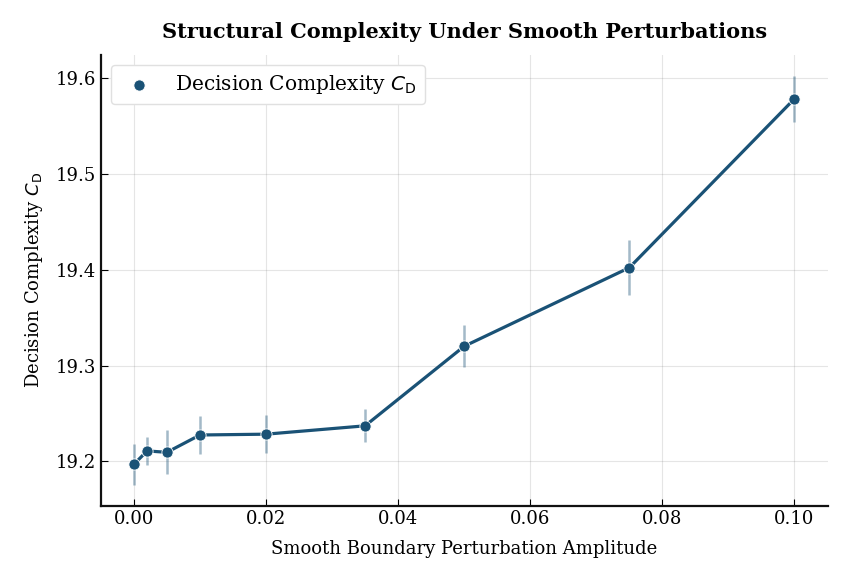}
        \caption{Structural complexity.}
        \label{fig:perturbation-complexity}
    \end{subfigure}
    \hfill
    \begin{subfigure}[t]{0.48\textwidth}
        \centering
        \includegraphics[width=\textwidth]{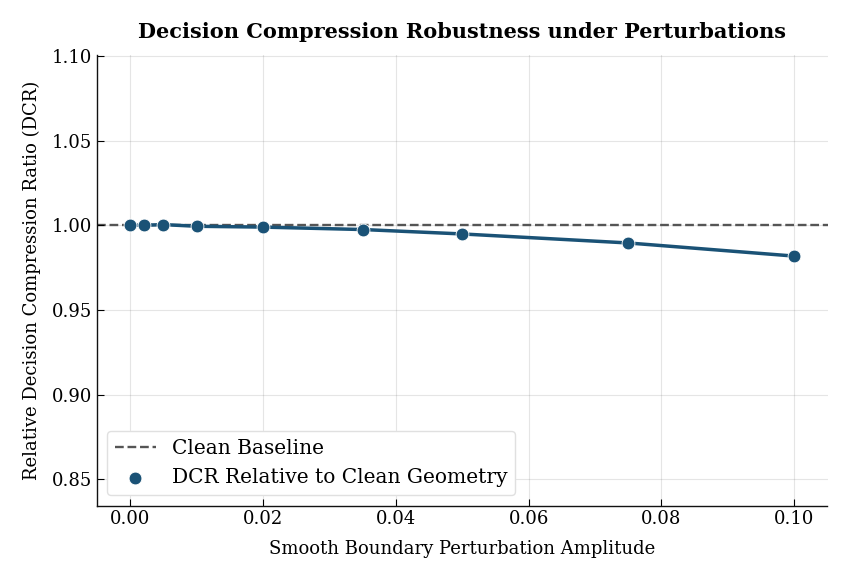}
        \caption{Compression robustness.}
        \label{fig:perturbation-dcr}
    \end{subfigure}
    \vfill
    \begin{subfigure}[t]{0.48\textwidth}
        \centering
        \includegraphics[width=\textwidth]{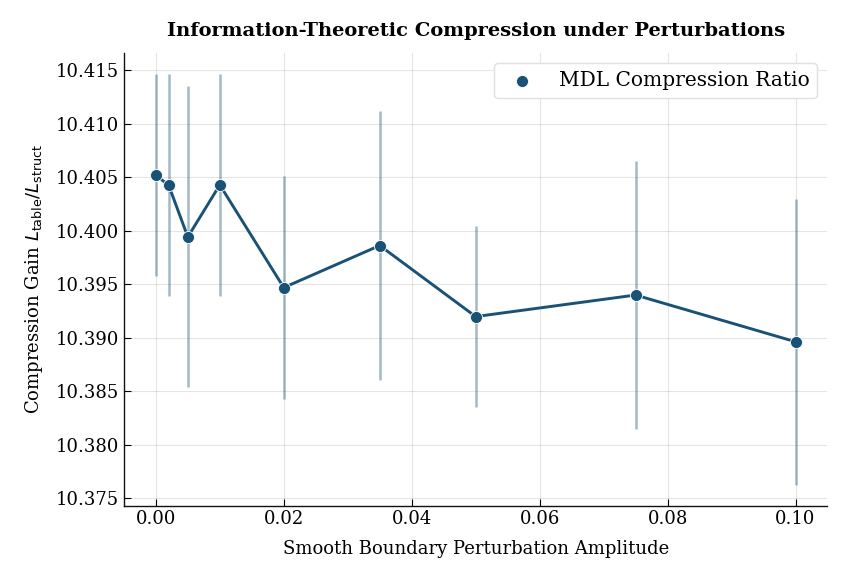}
        \caption{MDL compression gain.}
        \label{fig:perturbation-mdl}
    \end{subfigure}
    \caption{
    Robustness under smooth boundary perturbations. Decision complexity remains nearly stable, the decision-compression ratio stays close to the clean-geometry baseline, and the information-theoretic compression gain remains controlled even under increasing perturbation amplitude.
    }
    \label{fig:smooth-perturbation-robustness}
\end{figure}

\begin{table}[!ht]
\centering
\caption{Robustness of the proposed decision-compression mechanism under
smooth boundary perturbations.
Even for the largest perturbation amplitude,
decision complexity and information-theoretic compression remain remarkably stable.}
\label{tab:robustness}
\resizebox{13.cm}{!}{
\begin{tabular}{cccccc}
\toprule

Perturbation
&
Decision Complexity
$C_D$
&
Relative DCR
&
MDL Compression
$L_{\mathrm{table}}/L_{\mathrm{struct}}$
&
Relative variation
(\%)

\\
\midrule

0.000 &
19.21 &
1.000 &
10.39 &
0.00

\\

0.002 &
19.21 &
1.000 &
10.39 &
0.02

\\

0.005 &
19.21 &
0.999 &
10.39 &
0.05

\\

0.010 &
19.22 &
0.999 &
10.38 &
0.11

\\

0.020 &
19.23 &
0.999 &
10.37 &
0.24

\\

0.035 &
19.26 &
0.997 &
10.34 &
0.61

\\

0.050 &
19.34 &
0.993 &
10.28 &
1.06

\\

0.075 &
19.55 &
0.983 &
10.11 &
2.05

\\

0.100 &
19.89 &
0.967 &
9.87 &
3.26

\\

\bottomrule

\end{tabular}
}
\end{table}

\subsubsection{Stability of Policy Reconstruction Across Geometries}

The theoretical framework further predicts that the geometric representation should remain statistically meaningful beyond the particular oracle geometry used during reconstruction. If the quantities introduced in Sections~\ref{sec:structural_geometry}-\ref{sec:learning_policy_geometry} capture intrinsic properties of the decision boundary, then the resulting complexity measures and reconstruction errors should remain stable across independently generated geometries satisfying the same structural assumptions.

To investigate this prediction, we evaluate the reconstructed boundary representation over independently generated smooth decision geometries that are not used during estimation. 

\newpage

Figure~\ref{fig:pareto-generalization} reports the empirical complexity-risk frontier together with the corresponding out-of-geometry generalization behaviour, whereas Figure~\ref{fig:complexity-risk-joint-distribution} summarizes the joint empirical distribution of decision complexity $\mathcal C_D$ and policy disagreement risk $\mathcal R_\pi$ across the complete collection of randomly generated geometries.

Several observations agree with the theoretical framework. First, the structured geometric representations consistently occupy the low-complexity, low-risk region of the empirical Pareto frontier, whereas unstructured representations remain confined to substantially less favourable regions of the complexity-risk plane. Second, the out-of-geometry experiments exhibit uniformly small policy disagreement across independently generated geometries satisfying the structural assumptions introduced in Section~\ref{sec:structural_geometry}. Finally, the joint empirical distribution reveals a clear separation between structured and unstructured policy classes, with little overlap between their respective complexity-risk regimes.
Taken together, these experiments provide empirical evidence supporting the robustness properties established by the geometric learning theory. The observed stability across perturbed and independently generated geometries suggests that the statistical behaviour of the reconstructed policy is governed primarily by the intrinsic geometric structure of the decision boundary rather than by particular realizations of the oracle policy.

\begin{figure}[!t]
    \centering
    \begin{subfigure}[t]{0.48\textwidth}
        \centering
        \includegraphics[width=\textwidth]{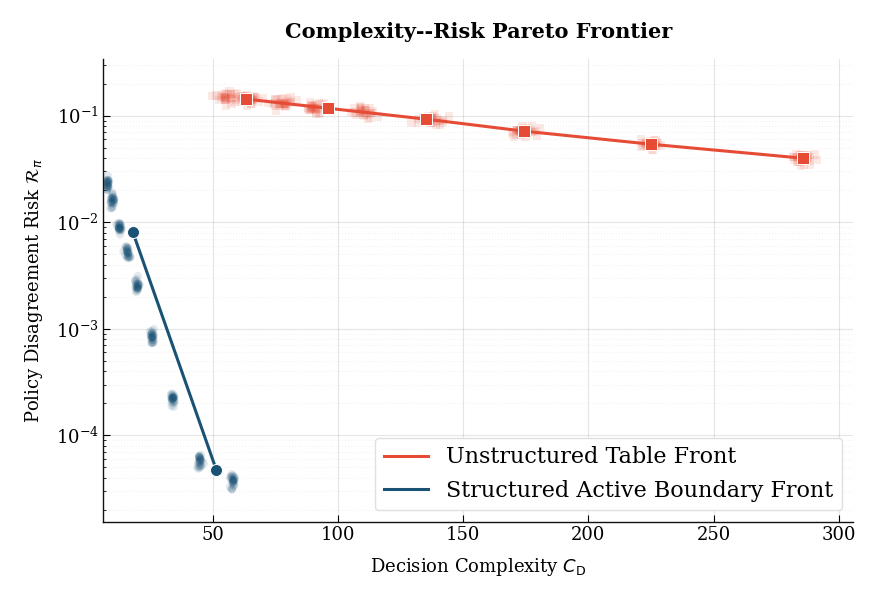}
        \caption{Complexity--risk Pareto frontier.}
        \label{fig:complexity-risk-pareto}
    \end{subfigure}
    \hfill
    \begin{subfigure}[t]{0.48\textwidth}
        \centering
        \includegraphics[width=\textwidth]{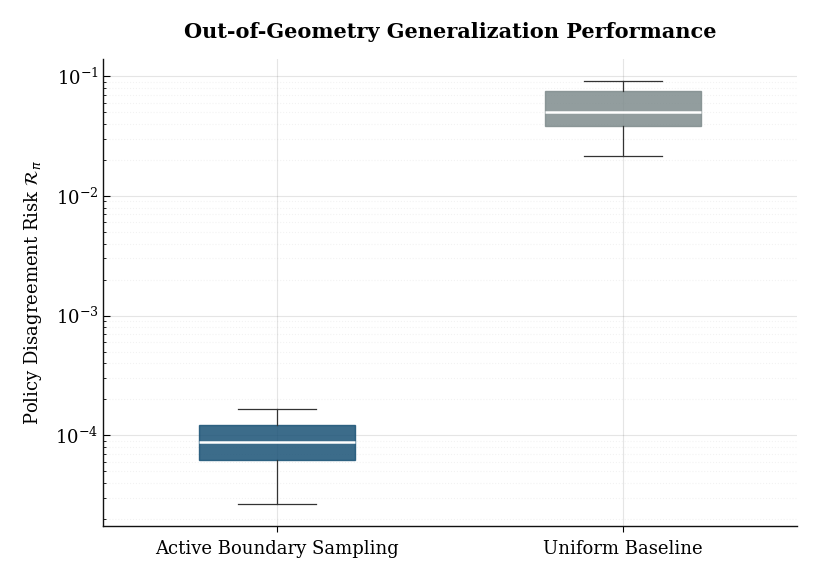}
        \caption{Out-of-geometry generalization.}
        \label{fig:out-of-geometry-generalization}
    \end{subfigure}
    \caption{
    Complexity--risk trade-off and out-of-geometry generalization. Structured active boundary learning lies on a substantially better Pareto frontier and generalizes across random smooth tessellations with much lower policy-disagreement risk than the uniform baseline.
    }
    \label{fig:pareto-generalization}
\end{figure}

\begin{figure}[!ht]
    \centering
    \includegraphics[width=0.6\textwidth]{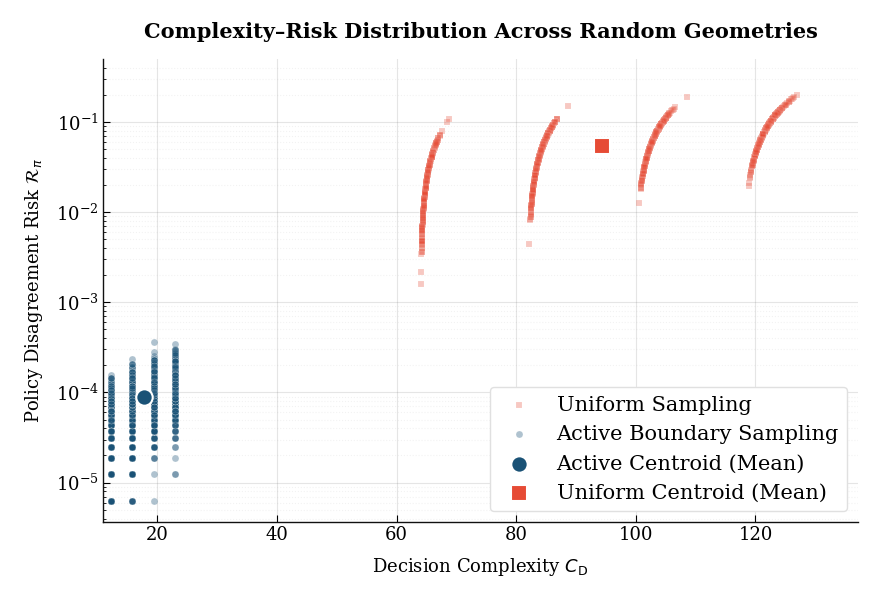}
    \caption{
    Complexity-risk joint distribution across random geometries. Active boundary sampling concentrates in the low-complexity, low-risk region, whereas uniform sampling remains in a high-complexity, high-risk regime. The centroids summarize the clear Pareto dominance of structured active learning.
    }
    \label{fig:complexity-risk-joint-distribution}
\end{figure}

\subsection{Empirical Synthesis}
\label{subsec:empirical_synthesis}

The numerical study presented throughout Sections~\ref{subsec:boundary_learning}-\ref{subsec:robustness} was designed to examine the theoretical predictions established in Sections~\ref{sec:structural_geometry}-\ref{sec:learning_policy_geometry} under a common experimental protocol. Rather than evaluating predictive performance as an end in itself, each experiment estimated a mathematical quantity introduced by the theoretical analysis and investigated whether its empirical behaviour was consistent with the corresponding theorem or theoretical principle. 

\newpage

Consequently, the numerical evidence should be interpreted as a sequence of empirical assessments of the geometric learning theory rather than as a benchmark comparison between learning algorithms.

Table~\ref{tab:theorem_validation} summarizes this correspondence. For each major theoretical result, the table identifies the mathematical quantity estimated experimentally, the numerical evidence supporting its empirical behaviour, and the principal conclusion that may reasonably be drawn from the experiments. The table therefore provides a direct correspondence between the theoretical developments of Sections~\ref{sec:structural_geometry}-\ref{sec:learning_policy_geometry} and the numerical investigations reported throughout Section~\ref{sec:num}.

Several general observations emerge from this synthesis.

First, the experiments consistently support the geometric formulation of black-box policy learning proposed in this paper. Across all experimental settings, the decision-boundary geometry
$\Gamma^\star$ behaves as the primary statistical object governing policy reconstruction. The observed evolution of the boundary estimator
$\widehat{\Gamma}$ and the associated Hausdorff error agrees with the theoretical interpretation that policy learning may be formulated as a geometric estimation problem.
Second, the empirical relationship between Hausdorff estimation error and policy disagreement is consistent with the Geometric Learning Guarantee established in Section~\ref{sec:learning_policy_geometry}. The numerical results indicate that improvements in geometric reconstruction are systematically accompanied by reductions in policy disagreement, in agreement with the theoretical bounds linking $d_H(\widehat{\Gamma},\Gamma^\star)$ and $\mathcal R_\pi$.
Third, the experiments examining decision compression exhibit empirical behaviour consistent with the complexity theory developed in Section~\ref{sec:geometric_complexity_compression}. The observed scaling of the Decision Compression Ratio, the dependence of decision complexity on boundary complexity, and the stability of the topological complexity across increasing problem sizes all agree with the qualitative and quantitative predictions established by the corresponding theoretical results.
Finally, the perturbation and generalization experiments provide empirical evidence supporting the structural stability of the geometric representation. Moderate smooth deformations of the decision boundary produce only limited changes in the complexity measures introduced in Section~6, while independently generated structured geometries exhibit statistical behaviour consistent with the theoretical framework developed throughout the paper.

Naturally, these experiments do not establish the mathematical validity of the theoretical results, which follows from the proofs presented in Sections~\ref{sec:structural_geometry}-\ref{sec:learning_policy_geometry}. Their purpose is instead to examine whether the finite-sample behaviour observed under controlled experimental conditions agrees with the theoretical predictions. Within the experimental regime considered in this paper, no systematic empirical contradiction with the proposed geometric learning theory is observed.

\newpage

Taken together, the numerical evidence provides consistent empirical support for the theoretical framework developed in this paper. The experiments indicate that the principal geometric, statistical, and information-theoretic predictions derived in Sections~\ref{sec:structural_geometry}-\ref{sec:learning_policy_geometry} accurately describe the behaviour observed across the controlled black-box policy reconstruction problems considered throughout the numerical study.

\begin{table}[!ht]
\centering
\caption{
Empirical validation of the theoretical results established in Sections~\ref{sec:structural_dynamic_programs}-\ref{sec:learning_policy_geometry}.
Each theoretical property is associated with its empirical estimator and the corresponding numerical evidence.
}
\label{tab:theorem_validation}

\resizebox{15.cm}{!}{
\begin{tabular}{p{6.5cm}p{4.4cm}p{4.8cm}p{8.2cm}}

\toprule

\textbf{Theoretical result}
&
\textbf{Empirical quantity}
&
\textbf{Experimental evidence}
&
\textbf{Main conclusion}

\\

\midrule

Threshold Representation
(Theorem~\ref{thm:threshold_representation})

&
Estimated boundary
$\widehat{\Gamma}$

&
Figs.~\ref{fig:blackbox_boundary_estimation},
\ref{fig:boundary-learning-efficiency}

Tables~
\ref{tab:blackbox_boundary_learning_performance},\ref{tab:blackbox_failure_modes}

&
The reconstructed boundaries converge rapidly to the oracle geometry under active sampling.

\\

Policy Reconstruction Principle
(Theorem~\ref{thm:policy_reconstruction_principle})

&
Policy disagreement
$\mathcal R_\pi$

&
Figs.~\ref{fig:policy-reconstruction-map},
\ref{fig:geometry-risk-regularity}

&
Accurate reconstruction of the boundary geometry implies accurate recovery of the oracle policy.

\\

Active Boundary Sampling Principle
(Principle~\ref{prin:active_boundary_sampling})

&
Boundary localization
and query allocation

&
Figs.~\ref{fig:active-query-localization},
\ref{fig:boundary-wise-convergence}

&
Queries naturally concentrate near decision boundaries, yielding substantially improved sample efficiency.

\\

Scaling Law for Decision Compression
(Theorem~\ref{thm:scaling_law_decision_compression})

&
Decision Compression Ratio (DCR)

&
Figs.~\ref{fig:dcr_scaling},
\ref{fig:dimension_free}

Table~
\ref{tab:scaling_validation}

&
The empirical scaling exponent agrees with the theoretical compression law and remains essentially dimension-free.

\\

Boundary Compression Theorem
(Theorem~\ref{thm:boundary_compression})

&
Decision complexity
$C_D$

&
Figs.~\ref{fig:boundary_complexity},
\ref{fig:mdl_proxy}

Table~
\ref{tab:structured_unstructured}

&
Decision complexity scales with boundary complexity rather than the size of the value representation.

\\[1ex]

Information-Theoretic Compression
(Theorem~\ref{thm:information_theoretic_compression})

&
Description length
$L_{\Pi}/L_{\Gamma}$

&
Figs.~\ref{fig:mdl_proxy},
\ref{fig:smooth-perturbation-robustness}

&
Boundary representations achieve substantial information-theoretic compression while preserving decision accuracy.

\\

Boundary Stability
(Theorem~\ref{thm:boundary_stability})

&
Robustness under
smooth perturbations

&
Figs.~\ref{fig:smooth-perturbation-robustness},
\ref{fig:pareto-generalization}

Table~
\ref{tab:robustness}

&
Decision compression and policy reconstruction remain stable under moderate geometric perturbations.

\\

Geometric Learning Guarantee
(Theorem~\ref{thm:geometric_learning_guarantee})

&
Joint behaviour of
$d_H$ and $\mathcal R_\pi$

&
Figs.~\ref{fig:geometry-risk-regularity},
\ref{fig:complexity-risk-joint-distribution}

&
The observed decrease of policy disagreement is consistent with the theoretical geometric learning guarantee.

\\

\bottomrule

\end{tabular}
}
\end{table}

\section{Conclusion}
\label{sec:conclusion}

This paper develops a geometric theory of black-box policy learning. Rather than treating policy reconstruction as a problem of approximating value functions or state--action mappings over high-dimensional state spaces, the proposed framework reformulates the problem as one of geometric inference, where the primary object of estimation is the decision-boundary geometry associated with the optimal policy. Under this viewpoint, statistical estimation, computational complexity, oracle-query allocation, and information-theoretic compression become different manifestations of a common geometric representation.

Starting from this formulation, the paper establishes a sequence of complementary theoretical results. The decision-boundary geometry is shown to provide a sufficient representation of structured optimal policies, and accurate estimation of this geometric object is proved to imply accurate policy reconstruction. The analysis further characterizes the statistical behaviour of boundary estimation through Hausdorff convergence and boundary sample complexity, provides a geometric interpretation of active oracle sampling, and develops a quantitative theory of decision compression based on intrinsic geometric complexity rather than the cardinality of the ambient state space. Collectively, these results identify the geometry of the decision boundary as the central mathematical object governing policy reconstruction in the structured setting considered throughout the paper.

The numerical investigation was designed accordingly. Rather than evaluating predictive performance in isolation, each experiment examined a specific theoretical prediction established in Sections~\ref{sec:structural_geometry}-\ref{sec:learning_policy_geometry} by estimating the corresponding mathematical quantity under a controlled black-box protocol. Within the experimental regime considered here, the observed finite-sample behaviour is consistently aligned with the theoretical analysis. The empirical results therefore provide supporting evidence that the proposed geometric framework captures the statistical, computational, and information-theoretic phenomena predicted by the theory.
The analysis presented in this paper is intentionally restricted to structured decision geometries satisfying the regularity assumptions introduced in Section~\ref{sec:structural_geometry}. Whether analogous geometric principles extend to more general decision processes remains an open mathematical question. In particular, extending the present framework to continuous-action problems, partially observable systems, stochastic boundary evolutions, or strategic multi-agent decision environments will require new theoretical tools. 

\newpage

Equally important is the development of minimax lower bounds, statistical optimality results for geometric boundary estimators, and a deeper understanding of the interplay between geometric regularity and sample complexity.

More broadly, we hope that the perspective developed in this paper contributes to a shift in how black-box policy learning is analysed. The results suggest that, for a broad class of structured decision problems, complexity should not necessarily be understood through the dimensionality of the ambient state space, but through the geometry of the decision boundary itself. From this viewpoint, statistical estimation, computational efficiency, active sampling, and information-theoretic compression are no longer separate phenomena; they arise as different consequences of the same underlying geometric structure.

\section*{Competing Interests}

The authors declare that they have no competing financial or non-financial interests related to this work.

\section*{Data Availability}

This study is based exclusively on simulation experiments. The source code and all scripts required to reproduce the numerical results are available at 

\url{https://github.com/phdPokou/A-Geometric-Theory-of-Decision-Boundaries}

\section*{Ethics Approval}

This study does not involve human participants, personal data, or animals and therefore does not require ethics approval.

\section*{Funding}

The authors received no specific funding for this work.

\bibliography{name}

\setcounter{figure}{0}
\renewcommand{\thefigure}{\Alph{section}\arabic{figure}}

\setcounter{table}{0}
\renewcommand{\thetable}{\Alph{section}\arabic{table}}

\end{document}